\newcommand{\PreserveBackslash}[1]{\let\temp=\\#1\let\\=\temp}
\newcolumntype{C}[1]{>{\PreserveBackslash\centering}p{#1}}
\newcolumntype{?}{!{\vrule width 1pt}}
\newcommand{\ubar}[1]{\underaccent{\bar}{#1}}
\definecolor{darkblue}{rgb}{0,0,0.95}
\def\eqdef{:=}
\def\VAR{\mathrm{Var}}
\def\Regret{\mathrm{Reg}}
\newtheorem{lemma}{Lemma}
\newtheorem{remark}{Remark}
\newcommand{\Olog}{\tilde{\mathcal{O}}}
\newtheorem{theorem-rst}[theorem]{Theorem}
\newtheorem{lemma-rst}[lemma]{Lemma}
\newtheorem{proposition-rst}[lemma]{Proposition}
\newtheorem{assumption-rst}[lemma]{Assumption}
\newtheorem{claim-rst}[lemma]{Claim}
\newtheorem{corollary-rst}[lemma]{Corollary}
\DeclarePairedDelimiter\br{(}{)}
\DeclarePairedDelimiter\brs{[}{]}
\DeclarePairedDelimiter\brc{\{}{\}}
\DeclarePairedDelimiter\abs{\lvert}{\rvert}
\DeclarePairedDelimiter\norm{\lVert}{\rVert}
\DeclarePairedDelimiter\ceil{\lceil}{\rceil}
\DeclareMathOperator*{\argmax}{arg\,max}
\newcommand{\E}{\mathbb{E}}
\newcommand{\R}{\mathbb{R}}
\newcommand{\G}{\mathbb{G}}
\newcommand{\X}{\mathcal{X}}
\newcommand{\Mcal}{\mathcal{M}}
\newcommand{\Acal}{\mathcal{A}}
\newcommand{\Ocal}{\mathcal{O}}
\newcommand{\Scal}{\mathcal{S}}
\newcommand{\Rcal}{\mathcal{R}}
\newcommand{\Lcal}{\mathcal{L}}
\newcommand{\bR}{\boldsymbol{R}}
\newcommand{\bs}{\boldsymbol{s}}
\newcommand{\bzero}{\boldsymbol{0}}
\newcommand{\Ind}[1]{\mathds{1}\brc*{#1}}
\newcommand{\printfnsymbol}[1]{%
  \textsuperscript{\@fnsymbol{#1}}%
}
\def\showComments{} 
    \newcommand{\comN}[1]{\textcolor{blue}{\{Nadav: #1\}}}
    \newcommand{\comN}[1]{}
\title{Reinforcement Learning with Lookahead Information}
\author{%
 Nadav Merlis \\
    FairPlay Joint Team, CREST, ENSAE Paris \\
    \texttt{nadav.merlis@ensae.fr}
}
\begin{document}
\doparttoc
\faketableofcontents

\maketitle

\begin{abstract}
  We study reinforcement learning (RL) problems in which agents observe the reward or transition realizations at their current state \emph{before deciding which action to take}. Such observations are available in many applications, including transactions, navigation and more. When the environment is known, previous work shows that this lookahead information can drastically increase the collected reward. However, outside of specific applications, existing approaches for interacting with unknown environments are not well-adapted to these observations. In this work, we close this gap and design provably-efficient learning algorithms able to incorporate lookahead information. To achieve this, we perform planning using the empirical distribution of the reward and transition observations, in contrast to vanilla approaches that only rely on estimated expectations. We prove that our algorithms achieve tight regret versus a baseline that also has access to lookahead information -- linearly increasing the amount of collected reward compared to agents that cannot handle lookahead information.
\end{abstract}

\section{Introduction}
In reinforcement learning (RL), agents sequentially interact with a changing environment, aiming to collect as much reward as possible. While performing actions that yield immediate rewards is enticing, agents must also bear in mind that actions influence the state of the environment, affecting the potential reward that could be collected in future steps. When the environment is unknown, agents also need to balance reward maximization based on previous data and exploration -- gathering of data that might improve future reward collection.

In the standard interaction model, at each timestep, agents first choose an action and only then observe its outcome on the rewards and state dynamics. As such, agents can only maximize the expected rewards, collected through the expected dynamics. Yet, in many applications, some information on the immediate outcome of actions is known \emph{before} actions are performed. For example, when agents interact through transactions, prices and traded goods are usually agreed upon before performing any exchange (`reward information'). Alternatively, in navigation problems, nearby traffic information is known to the agent before choosing which path to go through (`transition information').

In a recent work, \citet{merlis2024value} shows that even for agents with full statistical knowledge of the environment, such `lookahead' information can drastically increase the reward collected by agents -- by a multiplicative factor of up to $AH$ when immediate rewards are revealed in advance and $A^{H/2}$ when observing the immediate future transitions.\footnote{$A$ is the size of the action space, $S$ is the size of the state space and $H$ is the interaction length.} Intuitively, agents do not only gain from instantaneously using this information -- they can also adapt their planning to account for lookahead information being revealed in subsequent states, significantly increasing their future values. However, the work of \citet{merlis2024value} only tackles planning settings in which the model is known and does not provide algorithms or guarantees when interacting with unknown environments. 

In this work, we aim to design provably-efficient agents that learn how to interact when given immediate (`one-step lookahead') reward or transition information before choosing an action, under the episodic tabular Markov Decision Process model. While such information can always be embedded into the state of the environment, the state space becomes exponential at best, and continuous at worst, rendering most theoretically-guaranteed approaches both computationally and statistically intractable. To alleviate this, we start by deriving dynamic programming (`Bellman') equations \emph{in the original state space} that characterize the optimal lookahead policies. Inspired by these update rules, we present two variants to the MVP algorithm \citep{zhang2021reinforcement} that allow incorporating either reward or transition lookahead. In particular, we suggest a planning procedure that uses the empirical distribution of the reward/transition observations (instead of the estimated expectations), which might also be applied to other complex settings. We prove that these algorithms achieve tight regret bounds of $\Olog\br*{\sqrt{H^3SAK}}$ and $\Olog\br*{\sqrt{H^2SK}\br{\sqrt{H}+\sqrt{A}}}$  after $K$ episodes (for reward and transition lookahead, respectively), compared to a stronger baseline that also has access to lookahead information. As such, they can collect significantly more rewards than vanilla RL algorithms. 

\textbf{Outline.} We formally define RL problems with reward/transition lookahead in \Cref{section: notation} and further discuss the differences between our setting and standard RL problems in \Cref{section: comparison to standard RL}. Then, we present our results in two complementary sections: \Cref{section:reward lookahead} analyzes reward lookahead while \Cref{section:transition lookahead} analyzes transition lookahead. We end with conclusions and future directions in \Cref{section: conclusions}.

\paragraph{Related Work. }
Problems with varying lookahead information have been extensively studied in control, with model predictive control \citep[MPC,][]{camacho2007model} as the most notable example. Conceptually, when interacting with an environment that might be too complex or hard to model, it is oftentimes convenient to use a simpler model that allows accurately predicting its behavior just in the near future. MPC uses such models to repeatedly update its policy using short-term planning. In some cases, the utilized future predictions consist of additive perturbations to the dynamics \citep{yu2020power}, while other cases involve more general future predictions on the model behavior \citep{li2019online,zhang2021regret,lin2021perturbation,lin2022bounded}. To the best of our knowledge, these studies focus on comparing the performance of the controller to one with full future information (and thus, linear regret is inevitable), sometimes also considering prediction errors. They do not, however, attempt to learn the predictions. In contrast, we estimate the reward/transition distributions and leverage them to better plan, thus increasing the value gained by the agent. In addition, these works focus on continuous (mostly linear) control problems, whereas we study tabular settings; results from any one of these settings cannot be directly applied to the other.

In RL, lookahead is mostly used as a planning tool; namely, agents test the possible outcomes after performing multiple steps to decide which actions to take or to better estimate the value \citep{tamar2017learning,efroni2019combine,efroni2020online,moerland2020think,rosenberg2023planning, el2020lookahead, biedenkapp2021temporl,huang2019continuous}. Specifically, the future value at the end of the lookahead is often estimated using rollouts, and a longer lookahead is more robust to suboptimality of the rollout policy \citep{bertsekas2023course}. 
However, when agents actually interact with the environment, no additional lookahead information is observed. 
One notable exception is \citep{merlis2024value}, which analyzes the potential value increase due to multi-step reward lookahead information (and briefly mentions transition lookahead). However, they only tackle planning settings, where the model is known, and do not study learning. In this work, we continue a long line of literature on regret analysis for tabular RL \citep{jaksch2010near,jin2018q,dann2019policy,zanette2019tighter,efroni2019tight,efroni2021confidence,simchowitz2019non,zhang2021reinforcement,zhang2023settling}. Yet, we are not aware of any existing results on regret minimization with reward or transition lookahead information.  \looseness=-1

Finally, various applications that involve one-step lookahead information have been previously studied. The most notable ones are prophet problems \citep{correa2019recent}, where one-step reward lookahead is obtained, and the Canadian traveler problem with resampling \citep{nikolova2008route}, which can be formulated through one-step transition lookahead. We discuss the relation to these problems and the relevant existing results when analyzing each type of feedback, and also discuss the relation between transition lookahead and stochastic action sets \citep{boutilier2018planning}.

\vspace{-.1cm}


\section{Setting and Notations}
\label{section: notation}
We study episodic tabular Markov Decision Processes (MDPs), defined by the tuple $\Mcal=(\Scal,\Acal,H,P,\Rcal)$, where $\Scal$ is the state space  (of size $S$), $\Acal$ is the action space  (of size $A$) and $H$ is the interaction horizon. At each timestep $h\in\brc*{1,\dots,H}\triangleq[H]$ of an episode $k\in[K]$, an agent, located in state $s_h^k\in\Scal$, chooses an action $a_h^k\in\Acal$ and obtains a reward $R_h^k=R_h(s_h^k,a_h^k)\sim \Rcal_h(s_h^k,a_h^k)$. We assume that the rewards are supported by $[0,1]$ and of expectations $r_h(s,a)$. Afterward, the environment transitions to a state $s_{h+1}^k\sim P_h(\cdot\vert s_h^k,a_h^k)$ and the interaction continues until the end of the episode. We use the notation $\bR\sim \Rcal_h(s)$ (or $\bs'\sim P_h(s)$) to denote reward (next-state) samples for all actions simultaneously at step $h$ and state $s$ and assume independence between different timesteps.\footnote{This assumption is not used by our algorithms: it is only to ensure that the optimal policy is Markovian.} On the other hand, samples from different actions at a specific state/timestep are not necessarily independent. 

\paragraph{Reward Lookahead.} With one-step reward lookahead at timestep $h$ and state $s$, agents first observe the rewards for all actions $\bR_h(s)\triangleq\brc*{R_h(s,a)}_{a\in\Acal}$ and only then choose an action to perform. Formally, we define the set of reward lookahead policies as $\Pi^R=\brc*{\pi:[H]\times\Scal\times[0,1]^A\mapsto \Delta_{\Acal}}$, where $\Delta_{\Acal}$ is the probability simplex, and denote $a_h=\pi_h(s_h,\bR_h)$. The value of a reward lookahead agent is the cumulative rewards gathered by it starting at timestep $h$ and state $s$, denoted by
\begin{align*}
    V^{R,\pi}_h(s)=\E\brs*{\sum_{t=h}^HR_t(s_t,\pi_t(s_t,\bR_t(s_t))\vert s_h=s}.
\end{align*}
We also define the optimal reward lookahead value to be $V^{R,^*}_h(s)=\max_{\pi\in\Pi^R}V^{R,\pi}_h(s)$. When interacting with an unknown environment for $K$ episodes, agents sequentially choose reward lookahead policies $\pi^k\in\Pi^R$ based on all historical information and are measured by their regret,
\begin{align*}
    \Regret^R(K)=\sum_{k=1}^K\br*{V^{R,*}_1(s_1^k) - V^{R,\pi^k}_1(s_1^k)}.
\end{align*}
We allow the initial state of each episode $s_1^k$ to be arbitrarily chosen.

\paragraph{Transition Lookahead.} Denoting $s'_{h+1}(s,a)$, the future state when playing action $a$ at step $h$ and state $s$, one-step transition lookahead agents observe $\bs'_{h+1}(s)\triangleq\brc*{s'_{h+1}(s,a)}_{a\in\Acal}$ before acting. The set of transition lookahead agents is denoted by $\Pi^T=\brc*{\pi:[H]\times\Scal\times\Scal^A\mapsto \Delta_{\Acal}}$ with values 
\begin{align*}
    V^{T,\pi}_h(s)=\E\brs*{\sum_{t=h}^HR_t(s_t,\pi_t(s_t,\bs'_{t+1}(s_t)))\vert s_h=s}.
\end{align*}
The optimal value is $V^{T,^*}_h(s)=\max_{\pi\in\Pi^T}V^{T,\pi}_h(s)$, and we similarly define the regret versus optimal transition lookahead agents as $\Regret^T(K)=\sum_{k=1}^K\br*{V^{T,*}_1(s_1^k) - V^{T,\pi^k}_1(s_1^k)}.$ 

When the type of lookahead is clear from the context, we sometimes denote values by $V^\pi_h$ and $V^*_h$.

\paragraph{Other Notations.} For any $p\in\Delta_n$ and $V\in\R^n$, we define $\VAR_p(V)=\sum_{i=1}^n p_iV_i^2 - \br*{\sum_{i=1}^np_iV_i}^2$. Also, given a transition kernel $P$ and a vector $V\in\R^S$, we let $PV(s,a)=\sum_{s'\in\Scal}P(s'\vert s,a)V(s')$ and similarly define it for value or transition kernel differences. We denote by $n_h^k(s,a)$, the number of times the pair $(s,a)$ was visited at timestep $h$ up to episode $k$ (inclusive) and similarly denote $n_h^k(s)=\sum_{a\in\Acal} n_h^k(s,a)$. We also let $\hat{r}_h^k(s,a)=\frac{1}{n^k_h(s,a)}\sum_{k'=1}^k\Ind{s_h^{k'}=s,a_h^{k'}=a}R_h^{k'}$ and $\hat{P}_h(s'\vert s,a)=\frac{1}{n^k_h(s,a)}\sum_{k'=1}^k\Ind{s_h^{k'}=s,a_h^{k'}=a,s_{h+1}^{k'}=s'}$ be the empirical expected rewards and transition kernel at $(s_h,a_h)=(s,a)$ using data up to episode $k$ and assume they are initialized to be zero. Finally, we denote by $\hat{\Rcal}^{k}_h(s)$, the empirical reward distribution across all actions, and use $\hat{P}_h^k(s)$ to denote the empirical joint next-state distribution for all actions. In particular, if $k_i$ is the $i^{th}$ episode where $s$ was visited at step $h$, to sample $\bR\sim\hat{\Rcal}^{k}_h(s)$, we uniformly sample $i\sim U\br*{\brs*{n^k_h(s)}}$ and return $\bR=\brc*{R_h^{k_i}(s,a)}_{a\in\Acal}$. A sample $\bs'\sim\hat{P}_h^k(s)$ similarly returns $\bs'=\brc*{s'^{k_i}_{h+1}(s,a)}_{a\in\Acal}$.

When we want to indicate the distribution used to calculate an expectation, we sometimes state it in a subscript, e.g., write $E_{\mathcal{R}_h(s)}[R(a)]$ to indicate that $R(a)\sim\mathcal{R}_h(s,a)$ or use $\E_\Mcal$ to emphasize that all distributions are according to an environment $\Mcal$. In this paper, $\Ocal$-notation only hides absolute constants while $\Olog$ hides factors of $\textrm{polylog}(S,A,H,K,\delta)$. We also use the notation $a\vee b=\max\brc*{a,b}$.

\section{Comparing the Values of Lookahead Agents and Vanilla RL agents}
\label{section: comparison to standard RL}
In the classic RL formulation \citep[e.g.,][]{azar2017minimax}, agents only observe the reward and transition after performing an action and aim to maximize the 'no-lookahead' value, defined by 
\begin{align*}
    V^{\pi}_h(s) = \E\brs*{\sum_{t=h}^Hr_t(s_t,\pi_t(s_t)\vert s_h=s},
\end{align*}
where $\pi\in\Pi^{\Mcal}=\brc*{\pi:[H]\times\Scal\mapsto \Delta_{\Acal}}$ is a Markovian policy. The optimal value is $V^{no}_h(s)  = \max_{\pi\in\Pi^{\Mcal}}V^{\pi}_h(s)$ and the regret is classically defined as $\Regret(K) = \sum_{k=1}^K\br*{V^{no}_1(s_1^k) - V^{\pi^k}_1(s_1^k)}.$

By definition, the set of lookahead policies also includes all Markovian policies (since agents are not obliged to use reward/transition information), so the optimal lookahead values are always larger than their no-lookahead counterpart. In other words, denoting the value gain due to lookahead information by $G^R(s) = V^{R,*}_1(s) - V^{no}_1(s)$ and $G^T(s) = V^{T,*}_1(s) - V^{no}_1(s)$, it holds that $G^R(s),G^T(s)\ge0$. In terms of regret, for any fixed algorithm, we can also write
\begin{align*}
    & \Regret(K) = \Regret^R(K) - \sum_{k=1}^K G^R(s_1^k) = \Regret^T(K) - \sum_{k=1}^K G^T(s_1^k).
\end{align*}
As the value gains are non-negative, it directly implies that any regret bound w.r.t. the lookahead value also leads to the same bound for the standard regret. Even more so, in most cases, lookahead information leads to a strict improvement in the value, that is, $G^R(s),G^T(s)\ge G_0>0$. When this happens, any algorithm with sub-linear lookahead regret enjoys a \emph{negative linear} standard regret: 
\begin{center}
    \emph{If $\Regret^R(K) = o(K)$ and $G^R(s_1^k)\ge G_0$ for all $k\in[K]$, then $\Regret(K) \leq  -G_0K + o(K)$.}
\end{center}
The same also holds for transition lookahead. Conversely, any agent that suffers positive standard regret will suffer linear regret compared to the best lookahead agent, i.e., 
\begin{center}
    \emph{If $\Regret(K) \ge0$ and $G^R(s_1^k)\ge G_0$ for all $k\in[K]$, then $\Regret^R(K) \ge G_0K$.}
\end{center}
Notably, any agent that does not use lookahead information will suffer linear lookahead regret in any such environment. We now present two illustrative examples for environments where the lookahead value gain is significant, one for reward lookahead and another for transition lookahead.

\begin{wrapfigure}{r}{0.26\textwidth}
    \vspace{-1.25em}
  \centering
    \includegraphics[width=0.2\textwidth]{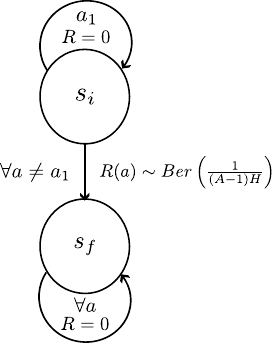}
  \vspace{-.1cm}
  \caption{Two-state\\ prophet-like problem}
  \label{figure:prophet}
\end{wrapfigure}
\paragraph{Reward lookahead.} Consider a simple 2-state environment, depicted in \Cref{figure:prophet}. Starting at $s_i$, agents can either stay there by playing $a_1$, earning no reward, or play any other action and move to the absorbing $s_f$, obtaining a Bernoulli reward $Ber\br*{\nicefrac{1}{(A-1)H}}$. Actions in the terminal state $s_f$ yield no reward. Without observing the rewards, agents will arbitrarily move from $s_i$ to $s_f$, obtaining a reward $V^{no}=\nicefrac{1}{(A-1)H}$ in expectation. On the other hand, when agents observe the rewards before acting, they should move from $s_i$ to $s_f$ only if a reward was realized for some action (and otherwise, stay in $s_i$ by playing $a_1$). Such agents will have $(A-1)H$ opportunities to observe a unit reward across all timesteps and actions, collecting in expectation $V^{R,*}=\br*{1-\nicefrac{1}{(A-1)H}}^{(A-1)H}\ge1-\nicefrac{1}{e}$. In other words, just by observing the rewards before acting, the agent's value multiplicatively increases by almost $V^{R,*}/V^{no}\approx AH$. Moreover, the additive value gain is $G^R\approx 1-\nicefrac{1}{e}$, so sub-linear lookahead regret with reward information results with a negatively-linear standard regret of $\Regret(K)\lesssim -(1-\nicefrac{1}{e})K$.

\paragraph{Transition lookahead.} Consider a chain of $H/2$ states (also described in further detail at \Cref{appendix: transition lookahead example} and depicted at \Cref{figure: chain transition lookahead}). In each state, one action deterministically keeps the agent in its current state, while all other actions move the agent one state forward w.p. $1/A$, but lead to a terminal non-rewarding state otherwise. If the reward is located at the end of the chain, any standard RL agent can collect it only at an exponentially low probability. On the other hand, transition lookahead agents would move forward only if there is an action that allows it while staying at their current state otherwise; such agents will collect the rewards at the end of the chain with constant probability. More specifically, any no-lookahead agent can collect at most $V^{no}=\Ocal(HA^{-H/2})$ rewards, while transition lookahead agents can collect $V^{T,*}=\Omega(H)$; as such, lookahead agents achieve exponential increase in value, and sublinear regret versus the best lookahead agent will yield a standard regret of $\Regret(K)\lesssim -HK$.

In the following sections, we will present agents that are guaranteed to always achieve sublinear regret compared to the best lookahead agent.


\section{Planning and Learning with One-Step Reward Lookahead}
\label{section:reward lookahead}
In this section, we analyze RL settings with one-step reward lookahead, in which immediate rewards are observed before choosing an action. One well-known example of this situation is the prophet problem \citep{correa2019recent}, where an agent sequentially observes values from known distributions. Upon observing a value, the agent decides whether to take it as a reward and stop the interaction, or discard it and continue to observe more values. This problem has numerous applications and extensions concerning auctions and posted-price mechanisms \citep{correa2017posted}. As shown in \citep{merlis2024value}, it is critical to observe the distribution values before taking a decision; otherwise, the agent's revenue can decrease by a factor of $H$. Notably, the example presented in \Cref{figure:prophet} is a small variant of the prophet problem, where the agent can either take one of $A-1$ values and finish the interaction or discard them and continue playing by staying at $s_i$; we showed that for this example, the lookahead information increases the value by a factor of $V^{R,*}/V^{no}\approx AH$.

The most natural way to tackle this setting is to extend (augment) the state space to contain the observed rewards; this way, we transition from a state and reward observations to a new state with new reward observations and return to the vanilla MDP formulation. However, this comes at a great cost. Even for Bernoulli rewards, there are $2^A$ possible reward combinations at any given state, and the augmentation increases the state space by this factor -- leading to an exponentially-large state space. Even worse, for continuous rewards, the augmented state space becomes continuous, and any performance guarantees that depend on the size of the state space immediately become vacuous. Hence, algorithms that na\"ively use this reduction are expected to be both computationally and statistically intractable. We refer to \Cref{appendix: extended MDP rewards} for further details on one such augmentation.

We take a different approach and derive Bellman equations for this setting in the \emph{original state space}.
\begin{restatable}{proposition-rst}{rewardDP}
\label{prop: reward DP}
The optimal value of one-step reward lookahead agents satisfies 
\begin{align*}
    &V^{R,*}_{H+1}(s)=0, &\forall s\in\Scal,\\
    &V^{R,*}_h(s) = \E_{\bR\sim\Rcal_h(s)}\brs*{\max_{a\in\Acal}\brc*{R_h(s,a) +\sum_{s'\in\Scal}P_h(s'\vert s,a)V^{R,*}_{h+1}(s')}},&\forall s\in\Scal, h\in[H].
\end{align*}
Also, given reward observations $\bR=\brc*{R(a)}_{a\in\Acal}$ at state $s$ and step $h$, the optimal policy is $$\pi^*_h(s,\bR)\in\argmax_{a\in\Acal}\brc*{R(a)+\sum_{s'\in\Scal}P_h(s'\vert s,a)V^{R,*}_{h+1}(s')}.$$
\end{restatable}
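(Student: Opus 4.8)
The plan is to establish both claims at once by backward induction on $h$, showing that the stated recursion evaluates to $\max_{\pi\in\Pi^R}V^{R,\pi}_h$ and that the greedy policy attains the maximum. The base case $h=H+1$ is immediate: no reward is collected beyond the horizon, so $V^{R,*}_{H+1}\equiv 0$ for every policy, and in particular for the optimal one.

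For the inductive step, fix $h\in[H]$, a state $s$, and an arbitrary $\pi\in\Pi^R$. The first move is a one-step expansion: conditioning on the reward observation $\bR_h(s)=\bR$ (which, together with the internal randomization of $\pi$, determines the action $a_h$ drawn from $\pi_h(s,\bR)$),
\begin{align*}
V^{R,\pi}_h(s) = \E_{\bR\sim\Rcal_h(s)}\brs*{R_h(s,a_h) + \E\brs*{\sum_{t=h+1}^H R_t(s_t,a_t)\,\Big|\, s_h=s,\,a_h}}.
\end{align*}
The key identity is that, because $\pi$ is Markovian and the per-step reward/transition noise is independent across timesteps, conditioning on $(s_h=s,a_h=a)$ and then on $s_{h+1}=s'$ leaves the future evolution distributed exactly as a fresh reward-lookahead interaction started from $s'$ at step $h+1$; hence $\E[\sum_{t\ge h+1}R_t\mid s_h=s,a_h=a] = \sum_{s'}P_h(s'\vert s,a)V^{R,\pi}_{h+1}(s') = P_hV^{R,\pi}_{h+1}(s,a)$. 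Substituting back yields the clean recursion $V^{R,\pi}_h(s)=\E_{\bR\sim\Rcal_h(s)}\brs*{R_h(s,a_h)+P_hV^{R,\pi}_{h+1}(s,a_h)}$ with $a_h$ drawn from $\pi_h(s,\bR)$.

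From here the two inequalities are routine. For "$\le$", use the inductive hypothesis $V^{R,\pi}_{h+1}\le V^{R,*}_{h+1}$ pointwise, then bound, for each realization of $\bR$, $R_h(s,a_h)+P_hV^{R,*}_{h+1}(s,a_h)\le\max_{a\in\Acal}\brc*{R_h(s,a)+P_hV^{R,*}_{h+1}(s,a)}$, and take the supremum over $\pi\in\Pi^R$. For "$\ge$" and the policy characterization, instantiate the recursion with the policy that plays $\pi^*_h(s,\bR)\in\argmax_{a\in\Acal}\brc*{R(a)+P_hV^{R,*}_{h+1}(s,a)}$ at step $h$ and then follows the inductively-optimal greedy policy: the inductive hypothesis gives $V^{R,\pi^*}_{h+1}=V^{R,*}_{h+1}$, the definition of the $\argmax$ turns the bracketed expression into exactly $\max_{a\in\Acal}\brc*{R_h(s,a)+P_hV^{R,*}_{h+1}(s,a)}$, and therefore $V^{R,*}_h(s)\ge V^{R,\pi^*}_h(s)$ equals the right-hand side of the recursion. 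Combining the two inequalities closes the induction and shows $\pi^*$ is optimal.

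Rather than a genuine obstacle, the argument has two points needing care: (i) the measurable-selection issue ensuring $\pi^*$ is a legitimate element of $\Pi^R$, which for a finite action set is handled by any fixed tie-breaking rule (e.g.\ smallest index); and (ii) a rigorous justification of the "restart" identity in the second paragraph, which is precisely where the cross-timestep independence assumption is used. Handling randomized $\pi$ adds nothing, since $V^{R,\pi}_h(s)$ is affine in the action distribution $\pi_h(s,\bR)$, so a deterministic greedy choice is always at least as good.
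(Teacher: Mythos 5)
Your proof is correct, but it takes a genuinely different route from the paper. The paper does not argue by direct induction in the original state space: it constructs an extended MDP $\Mcal^R$ of horizon $2H$ whose odd-step states are $s_h\times\bzero$ and whose even-step states are $s_h\times\bR_h$, observes that reward-lookahead policies in $\Mcal$ are exactly Markovian policies in $\Mcal^R$, invokes the classical Bellman optimality equations for $\Mcal^R$, and then collapses the two half-steps to recover the stated recursion and greedy policy. Your argument instead performs the backward induction directly: a one-step expansion conditioning on $\bR_h(s)$, the restart identity $\E[\sum_{t\ge h+1}R_t\mid s_h=s,a_h=a]=P_hV^{R,\pi}_{h+1}(s,a)$, and the usual pair of inequalities. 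Both are sound. What the paper's reduction buys is reuse: the same extended MDP immediately yields the law-of-total-variance variant (\Cref{lemma: ltv reward-lookahead}) and the value-difference lemma (\Cref{lemma: value-difference reward lookahead}) by citing existing results, at the cost of introducing a continuous-state MDP and leaning on classical theory there. Your approach is more elementary and self-contained, but the burden shifts to the restart identity, which is exactly where the paper's standing assumption of independence across timesteps (and the Markovian form of $\Pi^R$) must be invoked --- you correctly flag this, and your handling of randomized policies and tie-breaking is fine. One small presentational caveat: your ``$\le$'' step only needs $V^{R,\pi}_{h+1}\le V^{R,*}_{h+1}$ pointwise (true by definition of the optimal value), while the ``$\ge$'' step needs the inductive existence of a single policy that is simultaneously optimal at all states at step $h+1$; it is worth stating that the induction carries both facts.
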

We prove \Cref{prop: reward DP} in \Cref{appendix: extended MDP rewards}, where we present an equivalent environment with extended state space in which one could apply the standard Bellman equations \citep{puterman2014markov} to calculate the value with reward lookahead. In contrast to the previously discussed augmentation approach, we find it more convenient to divide the augmentation into two steps -- at odd steps $2h-1$, the augmented environment would be in a state $s_h\times\bzero$, while at even steps $2h$, the state is $s_h\times\bR_h$. Doing so creates an overlap between the values of the original and augmented environments at odd steps, simplifying the proofs. We also use this augmentation to prove a variant of the law of total variance \citep[LTV, e.g.][]{azar2017minimax} and a value-difference lemma \citep[e.g.][]{efroni2019tight}.

We remark that calculating the exact value is not always tractable -- even for $S=H=1$ (bandit problems) and Gaussian rewards, \Cref{prop: reward DP} requires calculating the expectation of the maximum of Gaussian random variables, which does not admit any simple closed-form solution. On the other hand, these equations allow approximating the value by using reward samples -- in the following, we show that it can be used to achieve tight regret bounds when the environment is unknown.


\subsection{Regret-Minimization with Reward Lookahead}

\begin{algorithm}[t]
\caption{Monotonic Value Propagation with Reward Lookahead (MVP-RL)} \label{alg: MVP reward lookahead short}
\begin{algorithmic}[1]
\STATE {\bf Require:} $\delta\in(0,1)$, bonuses $b_{k,h}^r(s), b_{k,h}^p(s,a)$
\FOR{$k=1,2,...$}
    \STATE  Initialize $\bar{V}^k_{H+1}(s)=0$ 
    \FOR{$h=H,H-1,..,1$}
        \STATE Calculate the truncated values for all $s\in\Scal$
        \begin{align*}
            &\bar{V}^k_h(s) = \min\brc*{\E_{\bR\sim\hat{\Rcal}^{k-1}_h(s)}\brs*{\max_{a\in\Acal}\brc*{R(a) + b_{k,h}^{p}(s,a) + \hat{P}^{k-1}_{h}\bar{V}^k_{h+1}(s,a)}} + b^r_{k,h}(s), H} 
        \end{align*}
    \ENDFOR
    \FOR{$h=1,2,\dots H$}
        \STATE Observe $s_h^k$ and $R_h^k(s_h^k,a)$ for all $a\in\Acal$
        \STATE Play an action 
        $a_h^k\in\argmax_{a\in\Acal}\brc*{R_h^k(s_h^k,a) + b_{k,h}^{p}(s_h^k,a) + \hat{P}^{k-1}_{h}\bar{V}^k_{h+1}(s_h^k,a)}$
        \STATE Collect the reward $R^k_h(s^k_h,a^k_h)$ and transition to the next state $s^k_{h+1}\sim P_h(\cdot\vert s_h^k,a_h^k)$
    \ENDFOR
\ENDFOR
\end{algorithmic}
\end{algorithm}
We now present a tractable algorithm that achieves tight regret bounds with one-step reward lookahead. Specifically, we modify the Monotonic Value Propagation (MVP) algorithm \citep{zhang2021reinforcement} to perform planning using the \emph{empirical reward distributions} -- instead of using the empirical reward expectations. To compensate for transition uncertainty, we add a transition bonus that uses the variance of the optimistic next-state values (w.r.t. the empirical transition kernel), designed to be monotone in the future value. Such construction permits using the variance of optimistic values for the bonus calculation while being able to later replace it with the variance of the optimal value (see discussion in  \citealt{zhang2021reinforcement}). A reward bonus is used for the value calculation, but does not affect the action choice in  the current state. Intuitively, this is because we get the same amount of information for all the actions of a state, so they have the same level of uncertainty -- there is no need for bonuses to encourage reward exploration at the action level.

A high-level description of the algorithm is presented in \Cref{alg: MVP reward lookahead short}, while the full algorithm and its bonuses are stated in \Cref{appendix: MVP for reward lookahead}. Notice that the planning requires calculating the expected maximum using the empirical distribution, whose support always contains at most $K$ elements, so both the memory and computations are polynomial. The algorithm ensures the following guarantees:
\begin{restatable}{theorem-rst}{MVPRL}
\label{theorem: regret MVP-RL}
When running MVP-RL, with probability at least $1-\delta$ uniformly for all $K\ge1$, it holds that $\Regret^R(K)\le \Ocal\br*{\sqrt{H^3SAK}\ln\frac{SAHK}{\delta} + H^3S^2A\br*{\ln\frac{SAHK}{\delta}}^2}$.
\end{restatable}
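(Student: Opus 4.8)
The plan is to follow the standard optimism-based regret analysis for MVP \citep{zhang2021reinforcement}, adapting each step to account for the reward-lookahead Bellman structure from \Cref{prop: reward DP}. Concretely, define the optimistic operator $\mathcal{T}^k_h V(s) = \min\{\E_{\bR\sim\hat{\Rcal}^{k-1}_h(s)}[\max_a\{R(a)+b^p_{k,h}(s,a)+\hat{P}^{k-1}_h V(s,a)\}]+b^r_{k,h}(s),\,H\}$, so that $\bar V^k_h$ is its fixed-point-style backward iterate. The first step is \textbf{optimism}: show that on a good event $\mathcal{G}$ (which holds with probability $\ge 1-\delta$ by concentration of $\hat r$, $\hat P$, and crucially the empirical reward \emph{distribution} $\hat{\Rcal}^{k-1}_h(s)$), we have $\bar V^k_h(s)\ge V^{R,*}_h(s)$ for all $s,h,k$. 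This is proved by backward induction on $h$: using \Cref{prop: reward DP}, $V^{R,*}_h(s)=\E_{\bR\sim\Rcal_h(s)}[\max_a\{R(a)+P_h V^{R,*}_{h+1}(s,a)\}]$, and one bounds the gap between the empirical-distribution expectation and the true one. The key observation (emphasized in the paper's algorithm discussion) is that a single reward bonus $b^r_{k,h}(s)$ at the \emph{state} level suffices because all actions at $(h,s)$ are observed simultaneously and share the same count $n^{k-1}_h(s)$; the function $\bR\mapsto\max_a\{R(a)+\text{const}_a\}$ is $1$-Lipschitz and bounded, so DKW-type / bounded-differences concentration of the empirical distribution controls $|\E_{\hat{\Rcal}}[\cdot]-\E_{\Rcal}[\cdot]|$ by something like $\sqrt{\ln(SAHK/\delta)/n^{k-1}_h(s)}$. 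The transition bonus $b^p_{k,h}(s,a)$ built from $\VAR_{\hat P^{k-1}_h}(\bar V^k_{h+1})$ handles the $\hat P_h\bar V^k_{h+1}$ vs.\ $P_h V^{R,*}_{h+1}$ gap via Bernstein, exactly as in MVP, with the monotonicity of the bonus in the future value letting us later swap $\bar V^k$ for $V^{R,*}$.

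The second step is the \textbf{regret decomposition}. On $\mathcal{G}$, $\Regret^R(K)\le\sum_k(\bar V^k_1(s^k_1)-V^{R,\pi^k}_1(s^k_1))$. I would unroll this along the trajectory of episode $k$: at step $h$, the played action $a^k_h$ is the argmax defining $\bar V^k_h$ evaluated at the \emph{observed} rewards $R^k_h(s^k_h,\cdot)$, while $V^{R,\pi^k}_h$ uses the true $P_h$ and the same action selection rule downstream. The per-step gap telescopes into a sum of (i) bonus terms $b^r_{k,h}+b^p_{k,h}$, (ii) transition-estimation martingale differences $(\hat P^{k-1}_h - P_h)(s^k_h,a^k_h)\bar V^k_{h+1}$, and (iii) a genuinely lookahead-specific term: the difference between $\E_{\bR\sim\hat{\Rcal}^{k-1}_h(s^k_h)}[\max_a\{\cdots\}]$ appearing in $\bar V^k_h$ and the realized $\max_a\{R^k_h(s^k_h,a)+\cdots\}$ actually obtained. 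This last term has, conditionally, the right mean up to the distribution-estimation error already controlled in Step 1, so it contributes another martingale-difference sequence plus bias absorbed by $b^r$. I would then apply Azuma--Hoeffding (or Freedman for the variance terms) to the martingale pieces.

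The third step is the \textbf{variance / LTV bookkeeping} that produces $\sqrt{H^3SAK}$ rather than $\sqrt{H^4SAK}$. Here I would invoke the law-of-total-variance variant the authors announce they prove via their two-step augmented MDP (\Cref{appendix: extended MDP rewards}): $\sum_{h}\E[\VAR_{P_h}(V^{R,*}_{h+1})]\le O(H^2)$ along any trajectory, so that $\sum_{k,h}\sqrt{\VAR/n}$ is bounded by Cauchy--Schwarz as $\sqrt{H^2\cdot SAK}$ with an extra $\sqrt H$ from the horizon, giving the leading $\sqrt{H^3SAK}$. The pigeonhole bound $\sum_{k,h}1/n^{k-1}_h(s^k_h,a^k_h)\lesssim SAH\ln K$ (with the standard $n^{k-1}\to n^k$ and zero-count corrections) produces the lower-order $H^3S^2A(\ln\cdot)^2$ term, and the bonus must be inflated by the usual additive $H\cdot(\text{count})^{-1}$ correction term to dominate the $\bar V^k$-vs-$V^{R,*}$ replacement error, which is what forces the $S^2$ in the second-order term. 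Finally, a union bound over $K$ (or a stopping-time / $\ln K$ argument) upgrades "for each $K$" to "uniformly for all $K\ge1$".

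I expect the \textbf{main obstacle} to be Step 1's distribution-level concentration and its interaction with optimism: unlike vanilla MVP where only $\hat r_h(s,a)$ (a scalar expectation) must concentrate, here the whole empirical reward distribution $\hat{\Rcal}^{k-1}_h(s)$ over $[0,1]^A$ must be close enough that $\E_{\hat{\Rcal}}[\max_a\{R(a)+c_a\}]$ tracks $\E_{\Rcal}[\max_a\{R(a)+c_a\}]$ uniformly over all vectors $c\in\R^A$ that can arise as $b^p+\hat P\bar V^k_{h+1}$. The clean way is to note this functional is $1$-Lipschitz in $\bR$ w.r.t.\ $\|\cdot\|_\infty$ and bounded, so a bounded-differences inequality on the empirical mean of the i.i.d.\ sampled reward vectors gives the bound \emph{without} needing uniform control over all $c$ and without any dependence on the (possibly continuous) support — this is exactly why the algorithm samples a whole past reward vector rather than per-action rewards. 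Getting the constants and the logarithmic factors right here, and making sure the same bound simultaneously powers both optimism (Step 1) and the bias term in the regret decomposition (Step 2, term (iii)), is the delicate part; everything else is a careful but routine adaptation of \citep{zhang2021reinforcement}.
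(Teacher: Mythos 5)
Your overall architecture matches the paper's proof almost exactly: optimism by backward induction using the lookahead Bellman equations of \Cref{prop: reward DP}, MVP-style variance bonuses whose monotonicity lets you swap $\bar V^k_{h+1}$ for $V^{R,*}_{h+1}$, a regret decomposition into bonus terms, transition-estimation errors, and a lookahead-specific reward-distribution error, followed by Freedman-type martingale concentration, the lookahead law of total variance to get $\sqrt{H^3SAK}$, and pigeonhole count sums for the lower-order term. The one place where your proposal has a genuine gap is precisely the step you flag as delicate, and your proposed shortcut for it does not work. You claim that because $\bR\mapsto\max_a\{R(a)+c_a\}$ is $1$-Lipschitz and bounded, a bounded-differences inequality controls $\abs{\E_{\hat{\Rcal}^{k-1}_h(s)}[\max_a\{R(a)+c_a\}]-\E_{\Rcal_h(s)}[\max_a\{R(a)+c_a\}]}$ \emph{without} uniform control over the shift vector $c$. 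But McDiarmid-type concentration applies to a fixed function of the samples, and here $c_a=b^p_{k,h}(s,a)+\hat P^{k-1}_h\bar V^k_{h+1}(s,a)$ is determined only at episode $k$, after all $n^{k-1}_h(s)$ reward samples at $(s,h)$ have been observed; it is a function of the same history (which actions were taken at $(s,h)$, hence which transition samples populate $\hat P^{k-1}_h(\cdot\vert s,a)$, hence $\bar V^k_{h+1}$) that those reward observations helped generate. So $c$ is neither fixed nor predictable with respect to the reward samples, and no martingale or single-function concentration argument applies: you genuinely need the bound to hold uniformly over all $u\in[0,2H]^A$.

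This is exactly what the paper's event $E^r(k)$ and \Cref{lemma:max concentration rewards} provide, via an $\epsilon$-net over the grid $\{0,\epsilon,\dots,2H\}^A$ combined with the $1$-Lipschitz property you correctly identified. The price is an extra $\sqrt{A}$ in the confidence width, giving $3\sqrt{AL^k_\delta/(2n^{k-1}_h(s)\vee1)}$ rather than the $\sqrt{L^k_\delta/n^{k-1}_h(s)}$ you wrote; the paper then observes that this $\sqrt{A}$ is absorbed because the denominator is the state-level count $n^{k-1}_h(s)$ rather than a state-action count, so $\sum_{k,h}\sqrt{A/n^{k-1}_h(s_h^k)}\lesssim\sqrt{SAH^2K}$ and the final rate is unchanged. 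Your intuition about why a single state-level bonus suffices is right, but the uniform covering step (and the resulting $\sqrt{A}$ factor in the bonus $b^r_{k,h}$) is not optional; without it, both the optimism proof and the change-of-measure in the regret decomposition (your term (iii)) are unjustified. The rest of your outline (including the $S^2$ in the additive term coming from the $(\hat P-P)(\bar V^k-V^*)$ correction via \Cref{lemma: transition different to next state expectation}) is consistent with the paper's argument.
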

See proof in \Cref{appendix: reward lookahead regret}. Remarkably, our upper bound matches the standard lower bound for episodic RL of $\Omega\br*{\sqrt{H^3SAK}}$ \citep{domingues2021episodic} up to log-factors; this lower bound is proved for known deterministic rewards, so in particular, it also holds for problems with reward lookahead. 

To our knowledge, the only comparable bounds in settings with reward lookahead were proven to prophet problems; as agents observe (up to) $n$ distributions at a fixed order, it can be formulated as a deterministic chain-like MDP, with $H= n$, $S=n+1$ and $A=2$. Agents start at the head of the chain and can either advance without collecting a reward or collect the observed reward and move to a terminal non-rewarding state (for more details, see \citealt{merlis2024value}). For this problem, \citep{gatmiry2024bandit} proved a regret bound of $\Olog(n^3\sqrt{K})$ (albeit requiring a weaker form of feedback), and \citep{agarwal2023semi} proved a bound of $\Olog(n\sqrt{T})$ -- slightly better than ours, but heavily relies on the ability to control which distributions to observe, which is a specific instance of deterministic transitions. We are unaware of any previous results that cover general Markovian dynamics.

\subsection{Proof Concepts}
\label{section: reward lookahead proof sketch}
When analyzing the regret of RL algorithms, a key step usually involves bounding the difference between the value of a policy in two different environments (`value-difference lemma'). In particular, for a given policy $\pi^k$, many algorithms maintain a confidence interval on the value $V_h^{\pi^k}(s)\in\brs*{\ubar{V}_h^k(s),\bar{V}_h^k(s)}$, calculated based on optimistic and pessimistic MDPs that use the empirical model with bonuses/penalties \citep{dann2019policy,zanette2019tighter,efroni2021confidence}. Then, the instantaneous regret (without lookahead) is bounded using the optimistic values by
\begin{align*}
    \bar{V}_h^k(s_h) - V_h^{\pi^k}(s_h) 
    &= \br*{\hat{r}_h^{k-1}(s_h,a_h) - r_h(s_h,a_h)} + \br*{\hat{P}_h^{k-1} - P_h}\bar{V}_h^k(s_h,a_h)
    \\
    &\quad+ P_h\br*{\bar{V}_{h+1}^k - V_{h+1}^{\pi^k}}(s_h,a_h) + \textrm{bonuses},
\end{align*}
while the pessimistic values are used either as part of the bonuses or while bounding them. 
However, when trying to perform a similar decomposition with reward lookahead, we do not have the difference of expected rewards, but rather terms of the form
\begin{align*}
    \E_{\bR\sim\hat{\Rcal}_h^{k-1}(s_h)}\brs*{R(\pi^k_h(s_h,\bR))} - \E_{\bR\sim\Rcal_h(s_h)}\brs*{R(\pi^k_h(s_h,\bR))} 
\end{align*}
(see, e.g., the last term of \Cref{lemma: value-difference reward lookahead} in the appendix). As the action can be an arbitrary function of the reward realization, this term is extremely challenging to bound. For example, one could couple both distributions while trying to relate this error term to a Wasserstein distance between the empirical and real reward distribution; however, such distances exhibit much slower error rates than standard mean estimation \citep{fournier2015rate}. Instead, we follow a different approach and show that uniformly for all possible expected next-state values $\hat{P}V\in[0,H]^A$ (as a function of the action at a given state), it holds w.h.p. that 
\begin{align}
    &\abs*{\E_{\bR\sim\hat{\Rcal}_h^{k-1}(s)}\brs*{\max_a\brc*{R(a)+\hat{P}V(s,a)}} - \E_{\bR\sim\Rcal_h(s)}\brs*{\max_a\brc*{R(a)+\hat{P}V(s,a)}} } \nonumber \\
    &\hspace{17em}\lesssim \sqrt{ \frac{A\ln\frac{1}{\delta} }{n^{k-1}_h(s)\vee 1}}. \label{eq: key proof step reward lookahead}
\end{align}
Throughout the proof, whenever we face an expectation w.r.t. the empirical rewards, we reformulate the expression to fit the form of \Cref{eq: key proof step reward lookahead} and use it as a `change of measure' tool. We remark that while this confidence interval admits an extra $A$-factor compared to standard bounds, the counts only depend on the visits to the state (and not to the state-action), which compensates for this factor.

The choice of MVP for the bonus is similarly motivated -- unlike some other bonuses (e.g., \citealt{zanette2019tighter}), MVP does not require pessimistic values -- either in the bonus itself or in its analysis. In contrast to the optimistic ones, the pessimistic values are not calculated via value iteration, but rather by following the policy $\pi^k$ in the pessimistic environment. As such, they cannot be easily manipulated to fit the form in \Cref{eq: key proof step reward lookahead}. 

The analysis of the transitions adapts the techniques in \citep{efroni2021confidence}, while requiring extra care in handling the dependence of actions in the rewards.


\section{Reinforcement Learning with One-Step Transition Lookahead}
\label{section:transition lookahead}
We now move to analyzing problems with one-step transition lookahead, where the resulting next state due to playing any of the actions is revealed before deciding which action to play. For example, consider the stochastic Canadian traveler problem with resampling \citep{nikolova2008route,boutilier2018planning}. In this problem, an agent wants to navigate on a graph as fast as possible from a source to a target, but observes which edges at a node are available only upon reaching this node. 
When edge availability is stochastic and resampled every time a node is visited, this is a clear case of one-step transition lookahead, as the information on the availability of edges is given before trying to traverse them. The example in \Cref{section: comparison to standard RL} and \Cref{appendix: transition lookahead example} is one possible formulation of this problem on a chain -- agents are awarded for arriving at the end of the chain as fast as possible, but trying to use a non-existing edge results with termination. We showed that in this particular instance, the lookahead value is exponentially larger than the standard value, and any lookahead agent with low regret would greatly surpass no-lookahead agents.

As with reward lookahead, the future states for all actions can be embedded into the state, but doing so increases the size of the state space by a factor of $S^A$, again making this approach intractable (see \Cref{appendix: extended MDP transitions} for an example for such an extension). 
We once more show that this is not necessary; the transition-lookahead optimal values can be calculated using the following Bellman equations:
\begin{restatable}{proposition-rst}{transitionDP}
\label{prop: transition DP}
The optimal value of one-step transition lookahead agents satisfies 
\begin{align*}
    &V^{T,*}_{H+1}(s)=0, &\forall s\in\Scal,\\
    &V^{T,*}_h(s) = \E_{\bs'\sim P_h(s)}\brs*{\max_{a\in\Acal}\brc*{r_h(s,a) +V^{T,*}_{h+1}(s'(s,a))}},&\forall s\in\Scal, h\in[H].
\end{align*}
Also, given next-state observations $\bs'=\brc*{s'(a)}_{a\in\Acal}$ at state $s$ and step $h$, the optimal policy is
$$\pi^*_h(s,\bs')\in\argmax_{a\in\Acal}\brc*{r_h(s,a)+V^{T,*}_{h+1}(s'(a))}.$$
\end{restatable}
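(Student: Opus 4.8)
The plan is to mimic the proof of \Cref{prop: reward DP}: build an auxiliary finite-horizon MDP $\tilde\Mcal$ with a doubled horizon on an extended state space, observe that it is a genuine Markovian MDP, apply the classical Bellman optimality equations \citep{puterman2014markov} to it, and then read the claimed recursion off by identifying the value of $\tilde\Mcal$ at its ``odd'' levels with $V^{T,*}$. First I would define $\tilde\Mcal$ with horizon $2H$: at odd levels $2h-1$ its states are pairs $(s,\bot)$, $s\in\Scal$ (no transition information has been revealed yet), and at even levels $2h$ its states are pairs $(s,\bs')$ with $s\in\Scal$, $\bs'\in\Scal^A$ (the next states for all actions have been revealed). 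From a state $(s,\bot)$ at level $2h-1$ there is a single dummy action: the environment draws $\bs'\sim P_h(s)$ and moves to $(s,\bs')$ at level $2h$ with zero reward. From a state $(s,\bs')$ at level $2h$ the agent chooses $a\in\Acal$, collects the reward $r_h(s,a)$ — only the mean matters, since in this model rewards are not observed before acting — and transitions deterministically to $(s'(a),\bot)$ at level $2h+1$. The one point requiring care is that $\tilde\Mcal$ is a bona fide MDP, i.e.\ the law of the level-$2h$ state depends only on the level-$(2h-1)$ state; this is exactly where the assumed independence across timesteps of the next-state samples $\bs'\sim P_h(s)$ is used (cf.\ the footnote in \Cref{section: notation}). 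This is the mirror image of the reward-lookahead construction, where the stochasticity instead sits on the even$\to$odd transition.

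Next I would match the two models. Any $\pi\in\Pi^T$ induces a Markov policy $\tilde\pi$ on $\tilde\Mcal$ (play $\pi_h(s,\bs')$ at the level-$2h$ state $(s,\bs')$, and the dummy action at odd levels), and conversely any Markov policy of $\tilde\Mcal$ restricts to some $\pi\in\Pi^T$. Under this identification the two processes coincide after relabeling $h\mapsto 2h-1$, with identical realized rewards, so $V^{T,\pi}_h(s)=\tilde V^{\tilde\pi}_{2h-1}(s,\bot)$ for every $h,s$. Since $\tilde\Mcal$ is a finite-horizon MDP, by \citet{puterman2014markov} its optimal value over all (a priori history-dependent) policies is attained by a Markov policy and satisfies the Bellman optimality equations; hence $V^{T,*}_h(s)=\max_{\pi\in\Pi^T}V^{T,\pi}_h(s)=\tilde V^{*}_{2h-1}(s,\bot)$, and restricting attention to $\Pi^T$ loses nothing.

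Finally I would collapse the two layers. Writing the Bellman equations for $\tilde\Mcal$, with $\tilde V^*_{2H+1}\equiv 0$, the even layer gives $\tilde V^*_{2h}(s,\bs')=\max_{a\in\Acal}\{r_h(s,a)+\tilde V^*_{2h+1}(s'(a),\bot)\}$ and the odd layer gives $\tilde V^*_{2h-1}(s,\bot)=\E_{\bs'\sim P_h(s)}[\tilde V^*_{2h}(s,\bs')]$. Substituting $\tilde V^*_{2h-1}(\cdot,\bot)=V^{T,*}_h(\cdot)$ and $\tilde V^*_{2h+1}(\cdot,\bot)=V^{T,*}_{h+1}(\cdot)$ — valid at every level, either by the identification above or by a one-line downward induction from $h=H$ — yields exactly the stated recursion, and the greedy (argmax) action in the even layer is precisely the claimed optimal policy $\pi^*_h(s,\bs')$.

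There is no deep obstacle here: the content lies entirely in setting up the reduction correctly, and the only subtleties are (i) the Markovianity of $\tilde\Mcal$, which rests on the cross-timestep independence assumption, and (ii) the optimality of $\Pi^T$ within all policies of $\tilde\Mcal$, which is immediate from Puterman's theorem. Keeping the ``pre-observation'' layer $(s,\bot)$ separate from the ``post-observation'' layer $(s,\bs')$ is what makes the identification $\tilde V^*_{2h-1}=V^{T,*}_h$ clean, exactly as in the reward-lookahead case.
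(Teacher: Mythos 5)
Your proposal is correct and follows essentially the same route as the paper's proof: the paper's Appendix constructs exactly this doubled-horizon extended MDP $\Mcal^T$ (with the arbitrary placeholder $\bs'_0$ playing the role of your $\bot$), applies the standard Bellman optimality equations from Puterman, and collapses the odd/even layers to obtain the stated recursion and greedy policy.
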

The proof can be found at \Cref{appendix: extended MDP transitions} and again relies on augmenting the state space to incorporate the transitions; this time, we divide the episode into odd steps whose extended state is $s_h\times \bs'_0$ (for an arbitrary fixed $\bs'_0\in\Scal^A$) and even steps with the state $s_h\times\bs'_{h+1}$. Beyond planning, this again allows proving a variant of the LTV and of a value-difference lemma.

One important insight is that the policy $\pi^*_h(s,\bs')$ admits the form of a \emph{list}. Namely, consider the values  $V_h^*(s,s',a) = r_h(s,a)+V^{T,*}_{h+1}(s')$ and assume some ordering of next-state-action pairs $\brc*{(s'_i,a_i)}_{i=1}^{SA}$ such that $V_h^*(s,s'_1,a_1)\ge\dots\ge V_h^*(s,s'_{SA},a_{SA})$. Then, an optimal policy would look at all realized pairs $(s'(a),a)$ and play the action with the highest location in this list. We refer the readers to \Cref{appendix: transition lookahead list representation} for an additional discussion on list representations in transition lookahead. 

Similar results could be achieved through a reduction to RL problems with stochastic action sets \citep{boutilier2018planning}. There, at every round, a subset of base actions is sampled, and only these actions are available to the agent. In particular, one could sample $A$ actions of the form $(s',a)\in\Scal\times\Acal$ and impose a deterministic transition to $s'$ given this extended action. However, since every original action must be sampled exactly once, this sampling procedure creates a dependence between pairs even when next-states at different actions are independent, adding unnecessary complications. We show that when transitions are independent between states, the expectation in \Cref{prop: transition DP} can be efficiently calculated (see \Cref{appendix: transition lookahead planning} for details), and otherwise, it can be approximated through sampling, as we do in learning settings.


\subsection{Regret-Minimization with Transition Lookahead}
Relying on similar principals as with reward lookahead, we now present MVP-TL, an adaptation of MVP to settings with one-step transition lookahead (summarized in \Cref{alg: MVP transition lookahead short}; the full details can be found at \Cref{appendix: MVP for transition lookahead}). This time, we estimate the empirical expected reward and add a standard Hoeffding-like reward bonus, while performing planning using samples from the \emph{empirical joint distribution} of the next-state for all the actions simultaneously. A variance-based transition bonus is added to the values; though this time, the variance also incorporates the rewards, namely 
{\small\begin{align*}
    b_{k,h}^{p}(s) \approx \sqrt{\frac{\VAR_{\bs'\sim\hat{P}^{k-1}_{h}(s)}(\bar{V}^k_{h}(s,\bs'))}{n^{k-1}_{h}(s)\vee 1}},  
    \quad  \bar{V}^k_h(s,\bs') = \max_{a\in\Acal}\brc*{\hat{r}_h^{k-1}(s,a) + b_{k,h}^{r}(s,a) +\bar{V}^k_{h+1}(s'(a)}.
\end{align*}}
The motivation for this modification is the technical challenges described in \Cref{section: reward lookahead proof sketch}, in the context of reward lookahead. For reward lookahead, we analyzed a value term that included both the rewards and next-state values, and used concentration arguments to move from the empirical reward distribution to the real one. For transition lookahead, similar values are analyzed, but we require variance-based concentration to obtain tighter regret bounds \citep{azar2017minimax}, so this variance naturally arises. The bonus is again designed to be monotone, as in the original MVP algorithm, and does not affect the immediate action choice -- only the optimistic lookahead value. As before, the planning relies on sampling the next-state observations at previous episodes, and so it is polynomial, even if the precise joint distribution is complex.
The algorithm enjoys the following regret bounds: 
\begin{restatable}{theorem-rst}{MVPTL}
\label{theorem: regret MVP-TL}
When running MVP-TL, with probability at least $1-\delta$ uniformly for all $K\ge1$, it holds that $\Regret^T(K)\le \Ocal\br*{\sqrt{H^2SK}\br*{\sqrt{H}+\sqrt{A}}\ln\frac{SAHK}{\delta}+H^3S^4A^3\br*{\ln\frac{SAHK}{\delta}}^2}$.
\end{restatable}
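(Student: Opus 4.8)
The plan is to follow the same architecture as the MVP-RL analysis (\Cref{theorem: regret MVP-RL}), but replacing the reward-lookahead change-of-measure step with a transition-lookahead analogue and using variance-based (Bernstein-type) concentration throughout. First I would establish optimism: by backward induction on $h$, using \Cref{prop: transition DP} together with the Hoeffding-type reward bonus $b^r_{k,h}(s,a)$ and the variance-based transition bonus $b^p_{k,h}(s)$, I would show that on the good event (a union bound over all $(s,h,k)$ of the relevant concentration inequalities) the planning values dominate the true optimal lookahead value, $\bar V^k_h(s)\ge V^{T,*}_h(s)$, for all $s,h,k$. The key new ingredient here is a uniform concentration bound, analogous to \Cref{eq: key proof step reward lookahead}, controlling
\[
  \Bigl|\E_{\bs'\sim\hat P^{k-1}_h(s)}\bigl[\max_a\{f(a)+V(s'(a))\}\bigr]-\E_{\bs'\sim P_h(s)}\bigl[\max_a\{f(a)+V(s'(a))\}\bigr]\Bigr|
\]
uniformly over all $f\in[0,H]^A$ and $V\in[0,H]^S$, by the variance of the optimistic list-value $\bar V^k_h(s,\bs')$ divided by $n^{k-1}_h(s)$, plus a lower-order term. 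Since the map $\bs'\mapsto\max_a\{f(a)+V(s'(a))\}$ is a bounded function of the joint next-state draw, this is an empirical-Bernstein statement for a single (vector-valued) sample from the joint kernel; the extra $A$ and $S$ factors are absorbed because the count is per-state $n^{k-1}_h(s)$ rather than per-state-action, and because a covering/union-bound argument over the $(SA)$-long list orderings contributes only the $S^4A^3$-type lower-order term.

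Next I would set up the regret decomposition. Using the transition-lookahead value-difference lemma (the analogue of \Cref{lemma: value-difference reward lookahead}, proved via the odd/even augmented MDP as described after \Cref{prop: transition DP}), the per-episode regret $\bar V^k_1(s_1^k)-V^{T,\pi^k}_1(s_1^k)$ telescopes along the trajectory into: (i) reward-estimation errors $\hat r^{k-1}_h-r_h$ at the played action, bounded by $b^r_{k,h}$; (ii) the change-of-measure error between $\hat P^{k-1}_h(s_h^k)$ and $P_h(s_h^k)$ applied to the optimistic list-value, bounded by the argument above, i.e.\ by $b^p_{k,h}$; (iii) a martingale term from sampling $s_{h+1}^k$; and (iv) the propagated future gap $\bar V^k_{h+1}-V^{T,\pi^k}_{h+1}$. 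Summing over $h$ and $k$, the bonus terms aggregate to $\sum_{k,h}\bigl(b^r_{k,h}(s_h^k,a_h^k)+b^p_{k,h}(s_h^k)\bigr)$ plus the accumulated martingale noise.

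The final and most delicate step is bounding $\sum_{k,h}b^p_{k,h}(s_h^k)$ via a law of total variance (LTV) for transition lookahead. Here I would invoke the variant of the LTV obtained from the augmented MDP: the sum over a trajectory of the conditional variances $\VAR_{\bs'\sim P_h(s_h)}(V^{T,*}_{h+1}(s'(s_h,\cdot)) + \text{reward terms})$ is at most $O(H^2)$ in expectation (the total reward is bounded by $H$, so its "spread" along the episode is $O(H^2)$, exactly as in the vanilla analysis of \citealt{azar2017minimax,zhang2021reinforcement}). The monotonicity built into the MVP-style bonus is what lets me replace the empirical variance of the \emph{optimistic} list-value with the variance of the true optimal value, so the LTV applies. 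A Cauchy--Schwarz step then converts $\sum_{k,h}\sqrt{\VAR/n^{k-1}_h(s_h^k)}$ into $\sqrt{(\sum \VAR)(\sum 1/n^{k-1}_h(s_h^k))}$; the visit-count sum is $\Olog(SK)$ (summing over states, not state-actions), giving the $\sqrt{H^2 S K}\cdot\sqrt H = \sqrt{H^3 SK}$ contribution from transitions, while the Hoeffding reward bonus contributes $\sqrt{H^2 S K}\cdot\sqrt A = \sqrt{H^2SAK}$; together these give the $\sqrt{H^2SK}(\sqrt H+\sqrt A)$ leading term. The $H^3S^4A^3(\ln\frac{SAHK}{\delta})^2$ remainder collects the lower-order burn-in terms: the covering of list orderings in the change-of-measure bound, the clipping of optimistic values at $H$, and the bias from approximating the joint kernel by resampled past transitions. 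I expect the main obstacle to be making the uniform change-of-measure inequality over all list orderings tight enough that its lower-order cost is only polynomial in $S,A$ (not exponential), since the number of orderings of $SA$ pairs is enormous; the resolution is that only the induced partition of $[0,H]^S$ into finitely many "best-action" regions matters, and a net of size $\mathrm{poly}(S,A,H,1/\epsilon)$ on the relevant value vectors suffices, with the discretization error handled by the $1$-Lipschitzness of $\max_a\{\cdot\}$.
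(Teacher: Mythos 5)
Your overall architecture (optimism, a Bellman-style decomposition of $\bar V^k_1-V^{\pi^k}_1$, LTV plus Cauchy--Schwarz, per-state visit counts) matches the paper's, but the central step --- the change of measure between $\hat{P}^{k-1}_h(s)$ and $P_h(s)$ --- is handled in a way that does not yield the claimed leading term. You propose a \emph{uniform} Bernstein-type bound over all $f\in[0,H]^A$ and $V\in[0,H]^S$, obtained by a covering/union-bound argument, and assert that the covering cost shows up only in the $H^3S^4A^3$ lower-order term. That is not the case: in an empirical-Bernstein bound the log-cardinality of the net multiplies the variance \emph{inside the square root}, i.e.\ the deviation is of order $\sqrt{(S+A)\ln(\cdot)\,\VAR/n^{k-1}_h(s)}$ for your separable parametrization, and $\sqrt{SA\ln(\cdot)\,\VAR/n}$ if one covers general $u\in[0,H]^{SA}$ as in \Cref{lemma:max concentration transitions}. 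After summing over $k,h$ and applying the LTV, this inflates the dominant regret term to roughly $\Olog\br*{\sqrt{H^3S(S+A)K}}$ (respectively $\Olog\br*{\sqrt{H^3S^2AK}}$), which is exactly the obstruction the paper flags in its discussion of \Cref{eq: key proof step transition lookahead}. The per-state (rather than per-state-action) counts compensate for at most a single factor of $A$, not for the extra $\sqrt{S}$, so the claimed $\sqrt{H^2SK}\br*{\sqrt{H}+\sqrt{A}}$ rate does not follow from your plan.

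The paper avoids this by never applying uniform concentration in the leading term. It performs the change of measure only on the single fixed function $V^*_h(s,\cdot,\cdot)=r_h(s,\cdot)+V^*_{h+1}(\cdot)$ (events $E^{pv1}(k),E^{pv2}(k)$, with no union over a function class and hence no dimension factor in the variance term), and controls the remaining residual $\E_{\bs'\sim\hat{P}^{k-1}_h(s)}\brs*{\bar{V}^k_h(s,\bs')-V^*_h(s,\bs')}-\E_{\bs'\sim P_h(s)}\brs*{\bar{V}^k_h(s,\bs')-V^*_h(s,\bs')}$ via the list representation: both $\bar V^k_h(s,\bs')$ and $V^*_h(s,\bs')$ depend on $\bs'$ only through the top realized entry of their respective $(s',a)$-lists, so the relevant distribution lives on $(SA)^2$ outcomes, and a multiplicative-plus-additive bound (\Cref{lemma: transition different to next state expectation}, with the multiplicative part absorbed into the $(1+1/2H)$ recursion) confines the union bound over the $(SA)!$ lists to the additive $H^2(SA)^3L^k_\delta/n$ remainder. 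This single-function-plus-residual split is the missing idea in your proposal; your remark that only the induced ``best-action'' partition matters points in the right direction for making the union bound polynomial, but it does not remove the dimension factor from the leading $\sqrt{\VAR\ln(\cdot)/n}$ term. Relatedly, the paper's optimism proof also needs no uniform bound: monotonicity of the MVP bonus reduces it to the variance of $V^*_h(s,\bs')$ alone.
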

See proof in \Cref{appendix: transition lookahead regret}. For transition lookahead, the regret bounds we provide exhibit two rates, both corresponding to a natural adaptation of known lower bounds to transition lookahead.
\begin{enumerate}[leftmargin=.5cm] 
    \item \emph{`Bandit rate'} $\Ocal(\sqrt{H^2SAK})$: this is the rate due to reward stochasticity. Consider a problem where at odd timesteps $2h-1$ and across all states, all actions have rewards of mean $\nicefrac{1}{2}-\epsilon$, except for one action of mean $\nicefrac{1}{2}$. Assuming that the state-distribution is uniform, each such timestep forms a hard instance of a contextual bandit problem with $S$ contexts, exhibiting a regret of $\Omega(\sqrt{SAK})$ \citep{auer2002nonstochastic,bubeck2012regret}. Since there are $H/2$ odd steps and we can design each step independently, the total regret would be $\Omega(H\sqrt{SAK})$. 
    The even steps can be used to `remove' the lookahead and create a uniform state distribution. To do so, we set that when taking an action at odd steps, we always transition to a fixed state $s_d$. From this state, one action $a_1$ leads uniformly to all states, while the rest of the actions lead to an absorbing non-rewarding state -- rendering them strictly suboptimal. Thus, no-regret agents will only play $a_1$, regardless of the lookahead information, and the state distribution at odd timesteps will be uniform.    
    
    \item \emph{`Transition learning rate'} $\Ocal(\sqrt{H^3SK})$: recall that the vanilla RL lower bound designs a tree with $\Omega(S)$ leaves, to which agents need to navigate at the right timing (with $\Omega(H)$ options) and take the right action (out of $A$). While all leaves might transition agents to a rewarding state, one combination of state-action-timing has a slightly higher probability of doing so \citep{domingues2021episodic}. This roughly creates a bandit problem with $SAH$ arms, constructed such that the maximal reward is $\Omega(H)$, yielding a total regret of $H\sqrt{HSAK}$. Now consider the following simple modification where in each leaf, only one action can lead to a reward (and the rest of the actions are `useless' -- never lead to rewards). Thus, the agent still needs to test all leaves at all timings, and so there are still $SH$ `arms' with a corresponding regret of $\sqrt{H^3SK}$. Moreover, to test a leaf at a certain timing, we must navigate to it, and since the agent is going to play the single useful action at the leaf, transition lookahead does not provide any additional information.
\end{enumerate}
As discussed before, transition lookahead can be formulated as an RL instance with stochastic action sets. While \citet{boutilier2018planning} prove that with stochastic action sets, Q-learning asymptotically converges, they provide no learning algorithm nor regret bounds. Therefore, to our knowledge, our result is the first to achieve sublinear regret with transition lookahead.

\begin{algorithm}[t]
\caption{Monotonic Value Propagation with Transition Lookahead (MVP-TL)} \label{alg: MVP transition lookahead short}
\begin{algorithmic}[1]
\STATE {\bf Require:} $\delta\in(0,1)$, bonuses $b_{k,h}^r(s,a), b_{k,h}^p(s)$
\FOR{$k=1,2,...$}
    \STATE  Initialize $\bar{V}^k_{H+1}(s)=0$ 
    \FOR{$h=H,H-1,..,1$}
        \STATE Calculate the truncated values for all $s\in\Scal$
        \begin{align*}
            &\bar{V}^k_h(s) = \min\brc*{\E_{\bs'\sim\hat{P}^{k-1}_h(s)}\brs*{\max_{a\in\Acal}\brc*{\hat{r}_h^{k-1}(s,a) + b_{k,h}^{r}(s,a) + \bar{V}^k_{h+1}(s'(a))}} + b^p_{k,h}(s), H} 
        \end{align*}
    \ENDFOR
    \FOR{$h=1,2,\dots H$}
        \STATE Observe $s_h^k$ and $s'^k_{h+1}(s_h^k,a)$ for all $a\in\Acal$
        \STATE Play an action 
        $a_h^k\in\argmax_{a\in\Acal}\brc*{\hat{r}_h^{k-1}(s_h^k,a) + b_{k,h}^{r}(s_h^k,a) + \bar{V}^k_{h+1}(s'^k_{h+1}(s_h^k,a))}$
        \STATE Collect the reward $R^k_h\sim\Rcal_h(s_h^k,a_h^k)$ and transition to the next state $s^k_{h+1}=s'^k_{h+1}(s_h^k,a_h^k)$
    \ENDFOR
\ENDFOR
\end{algorithmic}
\end{algorithm}

\subsection{Proof Concepts}
Transition lookahead causes similar issues as reward lookahead. Hence, it is natural to apply a similar analysis approach -- first, formulate the value as the expectation w.r.t. the next-state observations of the maximum of action-observation dependent values; then use uniform concentration as a `change of measure' tool between the empirical and real next-state distribution. In particular, if $V(s,s',a)$ represents the value  starting from state $s$, performing $a$ and transitioning to $s'$, one can show that for all $V(s,\cdot,\cdot)\in[0,H]^{SA}$ (see \Cref{lemma:max concentration transitions}),
\begin{align}
    &\abs*{\E_{\bs'\sim\hat{P}_h^{k-1}(s)}\brs*{\max_aV(s,s'(a),a)} - \E_{\bs'\sim P_h(s)}\brs*{\max_aV(s,s'(a),a)} }\nonumber\\
    &\hspace{14.5em}\lesssim \sqrt{ \frac{SA\ln\frac{1}{\delta}\VAR_{\bs'\sim\hat{P}_h^{k-1}(s)}\max_aV(s,s'(a),a) }{n^{k-1}_h(s)\vee 1}}, \label{eq: key proof step transition lookahead}
\end{align}
where the variance term stems from using a Bernstein-like concentration bound. However, in contrast to the reward lookahead, the $\sqrt{SA}$-factor propagates to the dominant term of the regret, so pursuing this approach would lead to a worse regret bound of $\Olog\br*{\sqrt{H^3S^2AK}}$.

To avoid this, we pinpoint the two locations where this change of measure is needed -- the proof that $\bar{V}^k_h$ is optimistic and the regret decomposition -- and make sure to perform this change of measure only on a single value $V^{*}_h(s,s',a) = r_h(s,a) +V^{*}_{h+1}(s')$, mitigating the need to cover all possible values and removing the additional $\sqrt{SA}$-factor. However, doing so leaves us with a residual term. Defining $V^*_h(s,\bs') = \max_{a\in\Acal}\brc*{V^*_h(s,s'(a),a)}$ and assuming a similar optimistic value $\bar{V}^k_h(s,\bs')$, this term is of the form
\begin{align*}
    \E_{\bs'\sim \hat{P}^{k-1}_h(s)}\brs*{\bar{V}^k_{h}(s,\bs') - V^*_{h}(s,\bs')} - \E_{\bs'\sim P_h(s)}\brs*{\bar{V}^k_{h}(s,\bs') - V^*_{h}(s,\bs')}.
\end{align*}
While similar terms have been analyzed before \citep[e.g.,][]{zanette2019tighter,efroni2021confidence}, the analysis leads to a constant regret term that depends on the support of the distribution in question; in our case, it is the distribution over all possible next-states -- of cardinality $S^A$. Therefore, following the same derivation would lead to an exponential additive regret term.

We overcome it by utilizing the fact that both the optimistic policy and the optimal one decide which action to take according to a list of next-state-actions $(s',a)$. In other words, instead of looking at the next-state $\bs'$ (with $S^A$ possible values) to determine a value, we look at the highest-ranked realized pair $(s',a)$ in the list that corresponds to the policy that induces the value (with $SA$ possible rankings). Since we have two values, we need to calculate the probability of being at a certain list location for both $\pi^k$ and $\pi^*$, but the cardinality of this space is $(SA)^2$: polynomial and not exponential.

\section{Conclusions and Future Work}
\label{section: conclusions}
In this work, we presented an RL setting in which immediate rewards or transitions are observed before actions are chosen. We showed how to design provably and computationally efficient algorithms for this setting that achieve tight regret bounds versus a strong baseline that also uses lookahead information. Our algorithms rely on estimating the distribution of the reward or transition observations, a concept that might be utilized in other settings. In particular, we believe that our techniques for transition lookahead could be extended to RL problems with stochastic action sets \citep{boutilier2018planning}, but leave this for future work.

One natural extension to our work would be to consider multi-step lookahead information -- observing the transition/rewards $L$ steps in advance. We conjecture that from a statistical point of view, a similar algorithmic approach that samples from the empirical observation distribution would be efficient. However, it is not clear how to perform efficient planning with such feedback.

Another possible direction would be to derive model-free algorithms \citep{jin2018q}, with the aim to improve the computation efficiency of the solutions; our model-based algorithms require at most $\Ocal(KS^2AH)$ computations per episode due to the planning stage, while model-free algorithms might potentially allow just $\Ocal(AH)$ computations per episode.

On the practical side, previous works presented RL algorithms that utilize/estimate a world model with multi-step lookahead to perform planning and learning \citep{schrittwieser2020mastering, chung2024thinker}, aiming to achieve the optimal no-lookahead value. For some of these approaches, it is quite natural to replace the simulated world behavior with lookahead information on the real future realization. We leave this adaptation and evaluation to future studies.

Finally, the notion of lookahead could be studied in various other decision-making settings (e.g., linear MDPs \citealt{jin2020provably}) and can also be generalized to situations where lookahead information can be queried under some budget constraints \citep{efroni2021confidence} or when agents only observe noisy lookahead predictions; we leave these problems for future research.

\section*{Acknowledgements}
We thank Alon Cohen and Austin Stromme for the helpful discussions. This project has received funding from the European Union’s Horizon 2020 research and innovation programme under the Marie Skłodowska-Curie grant agreement No 101034255. 

\bibliographystyle{plainnat}
\bibliography{references}


\clearpage

\addcontentsline{toc}{section}{Appendix}
\appendix
\part{}


{\hypersetup{linkcolor=black}
\parttoc}

\section{Structure of the Appendix}
Both reward and transition lookahead appendices share the following structure. First, we describe our assumption on the data generation process and analyze general properties of reward and transition lookahead. This is done by looking at an extended MDP that incorporates the lookahead information into the state. Then, we present the full algorithm and describe the relevant probabilistic events that ensure the concentration of all the empirical quantities. For transition lookahead, we require some additional notions for the event definitions (including the list representation of values and policies), which are explained in a separate subsection.

Given the concentration-related good event, we can prove that the planning procedure in the algorithm is optimistic, which we do in the subsequent subsection. Then, we define an additional good event that allows adding and removing conditional expectations in a way that will be needed for the proof.

At this point, we provided all (almost all) the results required for the regret analysis, and the proof of the main theorems is stated. The proofs also require some additional analysis for the bonuses (and especially variance terms), which is located at the end of the regret analysis.

For transition lookahead, the appendix includes one more part that further analyzes the example presented in \Cref{section: comparison to standard RL}.

At the end of the appendix, we state and prove several lemmas that will be used throughout our analysis, while also stating several existing results that will be of use.
\newpage

\section{Proofs for Reward Lookahead}
\label{appendix: reward lookahead proofs}
\subsection{Data Generation Process}
\label{appendix: reward lookahead data generation}
To simplify the proofs, we assume the following 'tabular' data-generation process: Before the game starts, a set of $K$ samples from the transition probabilities and rewards is generated for all $(s,a,h)$. Once a state $s$ at step $h$ is visited for the $i^{th}$ time, the $i^{th}$ sample from the reward distribution $\Rcal_h(s)$ is the reward realization for all action $a\in\Acal$. When a state-action pair is visited for the $i^{th}$ time, the $i^{th}$ sample from the transition kernel $P_h(\cdot\vert s,a)$ determines the next-state realization. In particular, it implies that the reward samples from the first $i$ visits to a state are i.i.d., and the same for the next-states samples and state-action visitations. Throughout this appendix, we use the notation $\bR_h^k = \brc*{R_h^k(s_h^k,a}_{a\in\Acal}$ to denote the reward observation at episode $k$ and timestep $h$ for all the actions.

For the proof, we define the following three filtrations. Let
\begin{align*}
    F_{k,h}&=\sigma\br*{\brc*{s_t^1,a_t^1,\bR_t^1}_{t\in[H]},\dots, \brc*{s_t^{k-1},a_t^{k-1},\bR_t^{k-1}}_{t\in[H]}, \brc*{s_t^{k},a_t^{k},\bR_t^k}_{t\in[h]},s_{h+1}^k},\\
    F_{k,h}^R&=\sigma\br*{\brc*{s_t^1,a_t^1,\bR_t^1}_{t\in[H]},\dots, \brc*{s_t^{k-1},a_t^{k-1},\bR_t^{k-1}}_{t\in[H]}, \brc*{s_t^{k},a_t^{k},\bR_t^k}_{t\in[h+1]}},
\end{align*}
the filtrations that contains all information until episode $k$ and step $h$, as well as the state at timestep $h+1$, or all information of time $h+1$, respectively. We make this distinction so that $F_{k,h-1}$ contains only $s_h^k$, while $F^R_{k,h-1}$ also contains $a_h^k$. We also define
\begin{align*}
    F_{k}&=\sigma\br*{\brc*{s_t^1,a_t^1,\bR_t^1}_{t\in[H]},\dots, \brc*{s_t^{k},a_t^{k},\bR_t^k}_{t\in[H]},s_{1}^{k+1}},
    \end{align*}
which contains all information up to the end of the $k^{th}$ episode, as well as the initial state at episode $k+1$.

\subsection{Extended MDP for Reward Lookahead}
\label{appendix: extended MDP rewards}
In this appendix, we present an alternative formulation of the one-step reward lookahead that falls under the vanilla (no-lookahead) model and would be helpful for the analysis. 

Throughout the section, we study the relations between MDPs with and without reward lookahead, and between different MDPs with lookahead. Therefore, for clarity, we state the concerning MDP in the value, e.g. $V^{R,\pi}(s\vert\Mcal)$. Specifically in this subsection, we distinguish between values without lookahead (denoted $V^{\pi}$) and values with lookahead (denoted $V^{R,\pi}$). In the following subsections, unless stated otherwise, we will only consider lookahead values; for brevity, and with some abuse of notations, we will then omit the $R$ in the value notation.

For any MDP $\Mcal=(\Scal,\Acal,H,P,\Rcal)$, define an equivalent extended MDP $\Mcal^R$ of horizon $2H$ that separates the state transition and reward generation as follows:
\begin{enumerate}
    \item Assume w.l.o.g. that $\Mcal$ starts at some initial state $s_1$. The extended environment starts at a state $s_1\times \bzero$, where $\bzero\in\R^{A}$ is the zeros vector.
    \item For any $h\in[H]$, at timestep $2h-1$, the environment $\Mcal^R$ transitions from state $s_h\times \bzero$ to $s_h\times \bR$, where $\bR\sim \Rcal_h(s)$ is a vector containing the rewards for all actions $a\in\Acal$. This transition occurs regardless of the action that was played. At timestep $2h$, given an action $a_h$ the environment transitions from $s_h\times \bR$ to $s_{h+1}\times \bzero$, where $s_{h+1}\sim P_h(\cdot\vert s_h,a_h)$.
    \item The reward at a state $s\times \bR$ when playing an action $a$ is $R(a)$, namely, the reward is deterministic and only obtained on even timesteps.
\end{enumerate}
We emphasize that throughout the section, we assume that $\Mcal$ and $\Mcal^R$ are coupled; that is, assume that under a policy $\pi$ in $\Mcal$, the agent visits a state $s_h$, observes $\bR_{h}$, plays an action $a_h$ and transitions to $s_{h+1}$. Then, in $\Mcal^R$, the agent starts from $s_h\times \bzero$, transitions to $s_h\times \bR$ (regardless of the action it played), takes the action $a_h$ and finally transitions to $s_{h+1}\times \bzero$.

Since the reward is embedded into the state, any state-dependent policy in $\Mcal^R$ is a one-step reward lookahead policy in the original MDP. Moreover, the policy at the odd steps of $\Mcal$ does not affect the value, and assuming that the policy at the even steps in $\Mcal^R$ is the same as the policy in $\Mcal$, we trivially get the following relation between the values
\begin{align}   
    \label{eq: extended reward MDP to standard value relation}
    & V_{2h}^\pi(s,\bR\vert \Mcal^R) = \E\brs*{\sum_{t=h}^H R_t(s_t,a_t)\vert s_h=s,R_h(s,\cdot)=\bR,\pi} \triangleq V_h^{R,\pi}(s,\bR\vert \Mcal), \nonumber\\
    & V_{2h-1}^\pi(s,\bzero\vert \Mcal^R) = \E\brs*{\sum_{t=h}^H R_t(s_t,a_t)\vert s_h=s,\pi} = V_h^{R,\pi}(s\vert \Mcal) .
\end{align}
While $\Mcal^R$ has a continuous state space, which generally makes algorithm design impractical, this representation permits applying classic results on MDPs to environments with one-step lookahead. 

As a remark, rewards could be directly embedded into the state without separating the state and reward updates. However, this creates unnecessary complications when analyzing the relations between similar environments. This is because we are mainly interested in the value given the state -- in expectation over the realized rewards. In particular, value-difference are analyzed assuming a shared initial state, but in our case, we do not want to assume the same reward realization, but rather also account for the distance between reward distributions, which the step separation enables. For similar reasons, this representation also simplifies the proof of the law of total variance \citep{azar2017minimax}.

\rewardDP*

\begin{proof}
    We prove the result in the extended MDP $\Mcal^R$ and remind the reader that in this formulation, the policy only uses state information, as in the standard RL formulation. In particular, it implies that there exists a Markovian optimal policy that uniformly maximizes the value (in the extended state space), and the optimal value is given through the dynamic-programming equations \citep{puterman2014markov}
    \begin{align}
        \label{eq: Bellman extended reward MDP}
        &V_{2H+1}^*(s,\bR\vert \Mcal^R)=0, &\forall s\in\Scal, \bR\in\R^{A},\nonumber\\
        &V_{2h}^*(s,\bR\vert \Mcal^R) = \max_a\brc*{R(a)+\sum_{s'\in\Scal}P_h(s'\vert s,a)V_{2h+1}^*(s',\bzero\vert \Mcal^R)}, &\forall h\in[H],s\in\Scal, \bR\in\R^{A},\nonumber\\
        &V_{2h-1}^*(s,\bzero\vert \Mcal^R) = \E_{\Rcal_h(s)}\brs*{V_{2h}^*(s,\bR\vert \Mcal^R)}, &\forall h\in[H],s\in\Scal.
    \end{align}
    By the equivalence between $\Mcal$ and $\Mcal^R$ for all policies, this is also the optimal value in $\Mcal$. Specifically, combining both recursion equations and substituting the relation between the original and extended values of \Cref{eq: extended reward MDP to standard value relation}, we get the desired value recursion for any $h\in[H]$ and $s\in\Scal$:
    \begin{align*}
        V^{R,*}_h(s\vert\Mcal) 
        &= V_{2h-1}^*(s,\bzero\vert \Mcal^R) \\
        & = \E_{\Rcal_h(s)}\brs*{V_{2h}^*(s,\bR\vert \Mcal^R)} \\
        & = \E_{\Rcal_h(s)}\brs*{\max_a\brc*{R(a)+\sum_{s'\in\Scal}P_h(s'\vert s,a)V_{2h+1}^*(s',\bzero\vert \Mcal^R)}} \\
        & = \E_{\Rcal_h(s)}\brs*{\max_a\brc*{R(a)+\sum_{s'\in\Scal}P_h(s'\vert s,a)V^{R,*}_{h+1}(s\vert\Mcal)}}.
    \end{align*}
    Similarly, for any $h\in[H]$, $s\in\Scal$ and $\bR\in\R^A$, the optimal policy at the even stages of the extended MDP is 
    \begin{align*}
        \pi^*_{2h}(s,\bR)\in\argmax_{a\in\Acal}\brc*{R(a)+\sum_{s'\in\Scal}P_h(s'\vert s,a)V_{2h+1}^*(s',\bzero\vert \Mcal^R)},
    \end{align*}
    alongside arbitrary actions at odd steps. Playing this policy in the original MDP will lead to an optimal one-step reward lookahead policy, as it achieves the optimal value of the original MDP. This policy directly translates to the optimal policy in the statement, by the equivalence between the original and extended MDPs and the relation $V_{2h+1}^*(s',\bzero\vert \Mcal^R)=V_{h+1}^{R,*}(s'\vert \Mcal)$.
\end{proof}
\begin{remark}
    \label{remark: reward value with reward observations}
    As in \Cref{eq: Bellman extended reward MDP}, one could also write the dynamic programming equations for any policy $\pi\in\Pi^R$, namely 
    \begin{align*}
        &V_{2h}^\pi(s,\bR\vert \Mcal^R) = R(\pi_h(s,\bR))+\sum_{s'\in\Scal}P_h(s'\vert s,\pi_h(s,\bR))V_{2h+1}^\pi(s',\bzero\vert \Mcal^R), &\forall h\in[H],s\in\Scal, \bR\in\R^{A},\nonumber\\
        &V_{2h-1}^\pi(s,\bzero\vert \Mcal^R) = \E_{\Rcal_h(s)}\brs*{V_{2h}^\pi(s,\bR\vert \Mcal^R)}, &\forall h\in[H],s\in\Scal.
    \end{align*}
    In particular, following the notation of \Cref{eq: extended reward MDP to standard value relation}, one can also write
    \begin{align*}
        &V_h^{R,\pi}(s,\bR\vert \Mcal) = R(\pi_h(s,\bR))+\sum_{s'\in\Scal}P_h(s'\vert s,\pi_h(s,\bR))V_{h+1}^{R,\pi}(s'\vert \Mcal), \quad \textrm{and,}\\
        & V_h^{R,\pi}(s\vert \Mcal) = \E_{\Rcal_h(s)}\brs*{V_{h}^{R,\pi}(s,\bR\vert \Mcal)} \\
        &\hspace{5em}=\E_{\Rcal_h(s)}\brs*{R(\pi_h(s,\bR))+\sum_{s'\in\Scal}P_h(s'\vert s,\pi_h(s,\bR))V_{h+1}^{R,\pi}(s'\vert \Mcal))}.
    \end{align*}
    We will use this notation in some of the proofs.
\end{remark}

Another useful application of the extended MDP is a variation of the law of total variance (LTV), which will be useful in our analysis

\begin{lemma}
    \label{lemma: ltv reward-lookahead} 
    For any deterministic one-step reward lookahead policy $\pi\in\Pi^R$, it holds that
\begin{align*}
    \E\brs*{\sum_{h=1}^H\VAR_{P_h(\cdot\vert s_h,a_h)}(V^{R,\pi}_{h+1}(s_{h+1}) )\vert\pi,s_1} \leq \E\brs*{\br*{\sum_{h=1}^H R_h(s_h,a_h) - V_1^{R,\pi}(s_1) }^2\vert\pi,s_1}.
\end{align*}
\end{lemma}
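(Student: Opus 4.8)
The plan is to transfer the statement to the extended MDP $\Mcal^R$ of horizon $2H$ constructed in \Cref{appendix: extended MDP rewards}, in which one-step reward lookahead policies become ordinary state-dependent policies. The crucial observation is that in $\Mcal^R$ all rewards are \emph{deterministic} functions of the (extended) state and action — they vanish at odd timesteps and equal the coordinate $R(a)$ of the reward vector embedded in the state at even timesteps — so the classical law of total variance \citep{azar2017minimax} applies to $\Mcal^R$ as an \emph{equality}, with every one-step term being the transition variance of the next-step value function. The slack that converts this equality over horizon $2H$ into the claimed inequality over horizon $H$ comes precisely from discarding the (nonnegative) variance contributions of the odd, reward-generating transitions.

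Concretely, fix a deterministic $\pi\in\Pi^R$, view it as a state-dependent policy in $\Mcal^R$ (acting as $\pi$ at even steps and arbitrarily at odd steps, which is irrelevant since the odd-step transition is action-free), and use the coupling between $\Mcal$ and $\Mcal^R$ so that the cumulative reward in $\Mcal^R$ equals $\sum_{h=1}^H R_h(s_h,a_h)$. Step one: apply the classical LTV in $\Mcal^R$ to get
\[
  \VAR\!\left[\sum_{h=1}^H R_h(s_h,a_h)\,\middle|\,\pi,s_1\right]
  = \E\!\left[\sum_{t=1}^{2H}\VAR_{\tilde P_t(\cdot\mid \tilde s_t,\tilde a_t)}\!\big(\tilde V^\pi_{t+1}(\tilde s_{t+1})\big)\,\middle|\,\pi,s_1\right],
\]
where the one-step terms are pure transition variances because the rewards of $\Mcal^R$ are deterministic. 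Step two: split the $2H$ terms by parity. The odd-index term at $t=2h-1$ is $\VAR_{\bR\sim\Rcal_h(s_h)}\!\big(\tilde V^\pi_{2h}(s_h,\bR)\big)\ge 0$ and is simply dropped. The even-index term at $t=2h$ is $\VAR_{P_h(\cdot\mid s_h,a_h)}\!\big(\tilde V^\pi_{2h+1}(s_{h+1},\bzero)\big)$, and by \Cref{eq: extended reward MDP to standard value relation} one has $\tilde V^\pi_{2h+1}(s_{h+1},\bzero)=V^{R,\pi}_{h+1}(s_{h+1})$, so this term equals $\VAR_{P_h(\cdot\mid s_h,a_h)}\!\big(V^{R,\pi}_{h+1}(s_{h+1})\big)$. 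Step three: rewrite the left-hand side via $\VAR[X\mid\pi,s_1]=\E[(X-\E[X\mid\pi,s_1])^2\mid\pi,s_1]$ together with $\E[\sum_h R_h(s_h,a_h)\mid\pi,s_1]=V^{R,\pi}_1(s_1)$ (the definition of the lookahead value). Chaining the three steps gives exactly the stated bound.

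The only real obstacle is bookkeeping rather than any deep difficulty: one must check that the value functions appearing in the LTV for $\Mcal^R$ are those of the executed extended policy, that the action-independence of the odd-step transition makes the odd-step value and its variance well defined despite the arbitrary odd-step action, and that the step-$(2h+1)$ extended value is identified with $V^{R,\pi}_{h+1}$ through \Cref{eq: extended reward MDP to standard value relation}. Once the classical LTV is legitimately invoked on $\Mcal^R$ — which hinges solely on its rewards being deterministic — the rest is a routine parity split and the removal of the nonnegative reward-transition variances.
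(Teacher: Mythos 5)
Your proposal is correct and follows essentially the same route as the paper: invoke the classical LTV in the extended MDP $\Mcal^R$ (where rewards are deterministic, so the equality holds with pure transition-variance terms), split the $2H$ one-step variances by parity, drop the nonnegative odd-step (reward-revelation) contributions, and identify the even-step terms and $V_1^\pi(s_1,\bzero\vert\Mcal^R)$ with the original lookahead quantities via \Cref{eq: extended reward MDP to standard value relation}. No gaps; the bookkeeping you flag is exactly what the paper's proof carries out.
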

\begin{proof}
    We apply the law of total variance (\Cref{lemma: ltv no-lookahead}) in the extended MDP; there, the rewards are deterministic and equal to either $0$ (at odd steps) or $R_h(s_h,a_h)$ (at even steps), so the total expected rewards are $\sum_{h=1}^HR_h(s_h,a_h)$.
    \begin{align*}
        \E&\brs*{\br*{\sum_{h=1}^H R_h(s_h,a_h) - V_{1}^\pi(s_1,\bzero\vert \Mcal^R) }^2\vert\pi,s_1} \\
        &= \E\brs*{\underbrace{\sum_{h=1}^H\VAR(V^\pi_{2h}(s_{h},\bR_h(s_{h})\vert \Mcal^R)\vert (s_h,\bzero) )}_{\textrm{Odd steps}}  + \underbrace{\sum_{h=1}^H\VAR(V^\pi_{2h+1}(s_{h+1},\bzero\vert \Mcal^R)\vert (s_h,\bR_h(s_{h})) )}_{\textrm{Even steps}}\vert\pi,s_1} \\
        & \geq \E\brs*{\sum_{h=1}^H\VAR(V^\pi_{2h+1}(s_{h+1},\bzero\vert \Mcal^R)\vert (s_h,\bR_h(s_{h})) )\vert\pi,s_1} \\
        & = \E\brs*{\sum_{h=1}^H\VAR_{P_h(\cdot\vert s_h,a_h)}(V^\pi_{2h+1}(s_{h+1},\bzero\vert \Mcal^R) )\vert\pi,s_1} \\
        & = \E\brs*{\sum_{h=1}^H\VAR_{P_h(\cdot\vert s_h,a_h)}(V^{R,\pi}_{h+1}(s_{h+1}\vert\Mcal) )\vert\pi,s_1}.
    \end{align*}
    Noting that $V_{1}^\pi(s_1,\bzero\vert \Mcal^R)=V_1^{R,\pi}(s_1\vert\Mcal)$ concludes the proof.
\end{proof}


Finally, though not needed in our analysis, we use the extended MDP to prove the following value-difference lemma, which could be of further use in follow-up works. While we prove decomposition just using the next-step values, one could recursively apply the formula until the end of the episode to immediately get another formula that does not depend on the next value.
\begin{lemma}[Value-Difference Lemma with Reward Lookahead]
\label{lemma: value-difference reward lookahead}
    Let $\Mcal_1=(\Scal,\Acal,H,P^1,\Rcal^1)$ and $\Mcal_2=(\Scal,\Acal,H,P^2,\Rcal^2)$ be two environments. For any deterministic one-step reward lookahead policy $\pi\in\Pi^R$, any $h\in[H]$ and $s\in\Scal$, it holds that
    \begin{align*}
        V_h^{R,\pi}&(s\vert \Mcal_1) -  V_h^{R,\pi}(s\vert \Mcal_2) \\
        & = \E_{\Mcal_1}\brs*{ V_{h+1}^{R,\pi}(s_{h+1}\vert \Mcal_1) - V_{h+1}^{R,\pi}(s_{h+1}\vert \Mcal_2)\vert s_h=s}  \\
        &\quad+ \E_{\Mcal_1}\brs*{ \sum_{s'\in\Scal}\br*{P_h^1(s'\vert s_h,\pi_h(s_h,\bR_h)) - P_h^2(s'\vert s_h,\pi_h(s_h,\bR_h))}V_{h+1}^{R,\pi}(s'\vert \Mcal_2) \vert s_h=s} \\
        & \quad + \E_{\Mcal_1}\brs*{ \E_{\Rcal^1_h(s)}\brs*{V_{h}^{R,\pi}(s_h,\bR\vert \Mcal_2)} - \E_{\Rcal^2_h(s)}\brs*{V_{h}^{R,\pi}(s_h,\bR\vert \Mcal_2)} \vert s_h=s} ,
    \end{align*}
    where $V_h^{R,\pi}(s,\bR\vert \Mcal) $ is the value at a state given the reward realization, defined in \Cref{eq: extended reward MDP to standard value relation} and given in \Cref{remark: reward value with reward observations}.
\end{lemma}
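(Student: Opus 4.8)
The plan is to prove this by the same one-step telescoping argument used for the classical value-difference lemma, but carried out on top of the policy-specific dynamic-programming identities recorded in \Cref{remark: reward value with reward observations}, while keeping the extra expectation over reward realizations throughout. Recall from that remark that for any $\Mcal\in\brc*{\Mcal_1,\Mcal_2}$ we have $V_h^{R,\pi}(s,\bR\vert\Mcal)=R(\pi_h(s,\bR))+\sum_{s'\in\Scal}P_h(s'\vert s,\pi_h(s,\bR))V_{h+1}^{R,\pi}(s'\vert\Mcal)$ and $V_h^{R,\pi}(s\vert\Mcal)=\E_{\Rcal_h(s)}\brs*{V_h^{R,\pi}(s,\bR\vert\Mcal)}$.

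First I would write, using the second identity for each environment,
\begin{align*}
    V_h^{R,\pi}(s\vert \Mcal_1) - V_h^{R,\pi}(s\vert \Mcal_2)
    &= \E_{\Rcal^1_h(s)}\brs*{V_{h}^{R,\pi}(s,\bR\vert \Mcal_1)} - \E_{\Rcal^2_h(s)}\brs*{V_{h}^{R,\pi}(s,\bR\vert \Mcal_2)},
\end{align*}
and then add and subtract $\E_{\Rcal^1_h(s)}\brs*{V_{h}^{R,\pi}(s,\bR\vert \Mcal_2)}$. The two terms in which the value function is held at $\Mcal_2$ but the reward distribution changes, $\E_{\Rcal^1_h(s)}\brs*{V_{h}^{R,\pi}(s,\bR\vert \Mcal_2)}-\E_{\Rcal^2_h(s)}\brs*{V_{h}^{R,\pi}(s,\bR\vert \Mcal_2)}$, are exactly the third term of the statement; the outer $\E_{\Mcal_1}[\,\cdot\vert s_h=s]$ there is vacuous because the summand depends only on $s$ and $h$.

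It then remains to treat $\E_{\Rcal^1_h(s)}\brs*{V_{h}^{R,\pi}(s,\bR\vert \Mcal_1) - V_{h}^{R,\pi}(s,\bR\vert \Mcal_2)}$. Expanding each value-given-rewards via the first identity, the immediate-reward contribution $R(\pi_h(s,\bR))$ — which is simply the $\pi_h(s,\bR)$-th coordinate of the fixed vector $\bR$ and does not depend on the environment — cancels, leaving $\sum_{s'\in\Scal}P_h^1(s'\vert s,\pi_h(s,\bR))V_{h+1}^{R,\pi}(s'\vert\Mcal_1) - \sum_{s'\in\Scal}P_h^2(s'\vert s,\pi_h(s,\bR))V_{h+1}^{R,\pi}(s'\vert\Mcal_2)$. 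I would add and subtract $\sum_{s'\in\Scal}P_h^1(s'\vert s,\pi_h(s,\bR))V_{h+1}^{R,\pi}(s'\vert\Mcal_2)$ to split this into a ``next-value difference propagated through $P^1$'' piece and a ``transition-kernel difference weighted by $V^{R,\pi}(\cdot\vert\Mcal_2)$'' piece. Finally I would push the outer $\E_{\Rcal^1_h(s)}$ through: under $\Mcal_1$ conditioned on $s_h=s$ we have $\bR_h\sim\Rcal_h^1(s)$, $a_h=\pi_h(s_h,\bR_h)$, and $s_{h+1}\sim P_h^1(\cdot\vert s_h,a_h)$, so the expectation over $\bR$ followed by the $s'$-sum with weights $P_h^1(s'\vert s,\pi_h(s,\bR))$ is precisely $\E_{\Mcal_1}[\,\cdot\vert s_h=s]$ applied to the quantity evaluated at $s_{h+1}$ and $a_h$. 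This identifies the two pieces with the first two terms of the statement, finishing the argument.

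I do not expect a real obstacle: this is essentially bookkeeping in the style of the value-difference lemmas of \citet{efroni2019tight}. The only point needing a moment of care is the cancellation of the immediate reward, which relies on $\pi$ being deterministic so that $R(\pi_h(s,\bR))$ is a single well-defined, environment-independent coordinate of $\bR$ (for randomized $\pi$ one would replace it by $\E_{a\sim\pi_h(s,\bR)}[R(a)]$, still environment-independent); and, as always with these decompositions, one must track which environment's reward and transition kernels appear under each expectation. If a fully telescoped form is wanted, one can afterwards unroll the recursion in $h$ down to $H+1$, but that is not needed here.
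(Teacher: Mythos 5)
Your proposal is correct and follows essentially the same route as the paper's proof: the paper carries out the identical one-step decomposition (the same add-and-subtract of $\E_{\Rcal^1_h(s)}\brs*{V_{h}^{R,\pi}(s,\bR\vert \Mcal_2)}$, the same cancellation of the immediate reward $R(\pi_h(s,\bR))$, and the same split of the transition term), merely phrased in the coupled extended MDP $\Mcal^R$ rather than directly through the identities of \Cref{remark: reward value with reward observations} — which are themselves derived from that extended MDP, so the two presentations coincide. Your observations about the vacuous outer expectation in the third term and the role of determinism of $\pi$ in the reward cancellation are both consistent with the paper's argument.
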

\begin{proof}
    We again work with the extended MDPs $\Mcal_1^R,
    \Mcal_2^R$. Since under the extension, both the environments and the policy are Markovian, all values obey the following Bellman equations:
    \begin{align*}
        &V_{2h}^{\pi}(s,\bR\vert \Mcal^R) = R(\pi_h(s,\bR))+\sum_{s'\in\Scal}P_h(s'\vert s,\pi(s,\bR))V_{2h+1}^{\pi}(s',\bzero\vert \Mcal^R), &\forall h\in[H],s\in\Scal, \bR\in\R^{A}\nonumber\\
        &V_{2h-1}^{\pi}(s,\bzero\vert \Mcal^R) = \E_{\Rcal_h(s)}\brs*{V_{2h}^{\pi}(s,\bR\vert \Mcal^R)}, &\forall h\in[H],s\in\Scal.
    \end{align*}
    Using the relation between the value of the original and extended MDP (\cref{eq: extended reward MDP to standard value relation}) and the Bellman equations of the extended MDP, for any $h\in[H]$, we have
    \begin{align}
    \label{eq: reward value diff basic decomp}
        &V_h^{R,\pi}(s\vert \Mcal_1) -  V_h^{R,\pi}(s\vert \Mcal_2) \nonumber\\
        & = V_{2h-1}^{\pi}(s,\bzero\vert \Mcal^R_1) -  V_{2h-1}^{\pi}(s,\bzero\vert \Mcal^R_2)  \nonumber\\
        & = \E_{\Rcal^1_h(s)}\brs*{V_{2h}^{\pi}(s,\bR\vert \Mcal^R_1)} - \E_{\Rcal^2_h(s)}\brs*{V_{2h}^{\pi}(s,\bR\vert \Mcal^R_2)}  \nonumber\\
        & = \E_{\Rcal^1_h(s)}\brs*{V_{2h}^{\pi}(s,\bR\vert \Mcal^R_1) - V_{2h}^{\pi}(s,\bR\vert \Mcal^R_2)} + \E_{\Rcal^1_h(s)}\brs*{V_{2h}^{\pi}(s,\bR\vert \Mcal^R_2)} - \E_{\Rcal^2_h(s)}\brs*{V_{2h}^{\pi}(s,\bR\vert \Mcal^R_2)}  \nonumber\\
        & = \E_{\Rcal^1_h(s)}\brs*{V_{2h}^{\pi}(s,\bR\vert \Mcal^R_1) - V_{2h}^{\pi}(s,\bR\vert \Mcal^R_2)} + \E_{\Rcal^1_h(s)}\brs*{V_{h}^{R,\pi}(s,\bR\vert \Mcal_2)} - \E_{\Rcal^2_h(s)}\brs*{V_{h}^{R,\pi}(s,\bR\vert \Mcal_2)} \nonumber\\
        & = \E_{\Mcal_1}\brs*{V_{2h}^{\pi}(s_h,\bR_h\vert \Mcal^R_1) - V_{2h}^{\pi}(s_h,\bR_h\vert \Mcal^R_2)\vert s_h=s} \nonumber\\
        &\quad+ \E_{\Rcal^1_h(s)}\brs*{V_{h}^{R,\pi}(s,\bR\vert \Mcal_2)} - \E_{\Rcal^2_h(s)}\brs*{V_{h}^{R,\pi}(s,\bR\vert \Mcal_2)} .
    \end{align}
    We now focus on the first term. Denoting $a_h=\pi_h(s_h,\bR_h)$ the action taken by the agent at environment $\Mcal_1$, We have
    \begin{align*}
        V_{2h}^{\pi}(s_h,\bR_h\vert \Mcal^R_1) &- V_{2h}^{\pi}(s_h,\bR_h\vert \Mcal^R_2) \\
        & = \br*{R_h(a_h)+\sum_{s'\in\Scal}P_h^1(s'\vert s_h,a_h)V_{2h+1}^{\pi}(s',\bzero\vert \Mcal^R_1)} \\
        &\quad- \br*{R_h(a_h)+\sum_{s'\in\Scal}P_h^2(s'\vert s_h,a_h)V_{2h+1}^{\pi}(s',\bzero\vert \Mcal^R_2)} \\
        & = \sum_{s'\in\Scal}P_h^1(s'\vert s_h,a_h)V_{h+1}^{R,\pi}(s'\vert \Mcal_1) - \sum_{s'\in\Scal}P_h^2(s'\vert s_h,a_h)V_{h+1}^{R,\pi}(s'\vert \Mcal_2) \\
        & = \sum_{s'\in\Scal}P_h^1(s'\vert s_h,a_h)\br*{V_{h+1}^{R,\pi}(s'\vert \Mcal_1) - V_{h+1}^{R,\pi}(s'\vert \Mcal_2)} \\
        &\quad+ \sum_{s'\in\Scal}\br*{P_h^1(s'\vert s_h,a_h) - P_h^2(s'\vert s_h,a_h)}V_{h+1}^{R,\pi}(s'\vert \Mcal_2) \\
        & = E_{\Mcal_1}\brs*{V_{h+1}^{R,\pi}(s_{h+1}\vert \Mcal_1) - V_{h+1}^{R,\pi}(s_{h+1}\vert \Mcal_2)\vert s_h,a_h} \\
        &\quad+ \sum_{s'\in\Scal}\br*{P_h^1(s'\vert s_h,a_h) - P_h^2(s'\vert s_h,a_h)}V_{h+1}^{R,\pi}(s'\vert \Mcal_2) .
    \end{align*}
    Substituting this back into \Cref{eq: reward value diff basic decomp}, we have 
    \begin{align*}
        V_h^{\pi}&(s\vert \Mcal_1) -  V_h^{\pi}(s\vert \Mcal_2) \\
        & = \E_{\Mcal_1}\brs*{ E_{\Mcal_1}\brs*{V_{h+1}^{R,\pi}(s_{h+1}\vert \Mcal_1) - V_{h+1}^{R,\pi}(s_{h+1}\vert \Mcal_2)\vert s_h,a_h}\vert s_h=s}\\
        &\quad+ \E_{\Mcal_1}\brs*{\sum_{s'\in\Scal}\br*{P_h^1(s'\vert s_h,a_h) - P_h^2(s'\vert s_h,a_h)}V_{h+1}^{R,\pi}(s'\vert \Mcal_2) \vert s_h=s} \\
        & \quad + \E_{\Rcal^1_h(s)}\brs*{V_{h}^{R,\pi}(s,\bR\vert \Mcal_2)} - \E_{\Rcal^2_h(s)}\brs*{V_{h}^{R,\pi}(s,\bR\vert \Mcal_2)} \\
        & = \E_{\Mcal_1}\brs*{ V_{h+1}^{R,\pi}(s_{h+1}\vert \Mcal_1) - V_{h+1}^{R,\pi}(s_{h+1}\vert \Mcal_2)\vert s_h=s}  \\
        &\quad+ \E_{\Mcal_1}\brs*{ \sum_{s'\in\Scal}\br*{P_h^1(s'\vert s_h,\pi_h(s_h,\bR_h)) - P_h^2(s'\vert s_h,\pi_h(s_h,\bR_h))}V_{h+1}^{R,\pi}(s'\vert \Mcal_2) \vert s_h=s} \\
        & \quad + \E_{\Mcal_1}\brs*{ \E_{\Rcal^1_h(s)}\brs*{V_{h}^{R,\pi}(s_h,\bR\vert \Mcal_2)} - \E_{\Rcal^2_h(s)}\brs*{V_{h}^{R,\pi}(s_h,\bR\vert \Mcal_2)} \vert s_h=s} .
    \end{align*}   
\end{proof}

\clearpage


\subsection{Full Algorithm Description for Reward Lookahead}
\label{appendix: MVP for reward lookahead}

\begin{algorithm}[ht]
\caption{Monotonic Value Propagation with Reward Lookahead (MVP-RL)} \label{alg: MVP reward lookahead}
\begin{algorithmic}[1]
\STATE {\bf Require:} $\delta\in(0,1)$, bonuses $b_{k,h}^r(s), b_{k,h}^p(s,a)$
\FOR{$k=1,2,...$}
    \STATE  Initialize $\bar{V}^k_{H+1}(s)=0$ 
    \FOR{$h=H,H-1,..,1$}   
         \FOR{$s\in\Scal$}
            \IF{$n_h^{k-1}(s)=0$}
                \STATE $\bar{V}^k_h(s) = H$
            \ELSE
                \STATE Calculate the truncated values \label{algline:VI reward lookahead}
                {\small\begin{align*}
                    &\bar{V}^k_h(s) = \min\brc*{\frac{1}{n_h^{k-1}(s)} \sum_{t=1}^{n_h^{k-1}(s)}\max_{a\in\Acal}\brc*{R_h^{k_h^t(s)}(s,a) + b_{k,h}^{p}(s,a) + \hat{P}^{k-1}_{h}\bar{V}^k_{h+1}(s,a)} + b^r_{k,h}(s), H} 
                \end{align*}}
            \ENDIF
            \STATE For any vector $\bR\in\R^A$, define the policy $\pi^k$ 
            \begin{align*}
                \pi_h^k(s,\bR) \in\argmax_{a\in\Acal}\brc*{R(a) + b_{k,h}^{p}(s,a) + \hat{P}^{k-1}_{h}\bar{V}^k_{h+1}(s,a)}
            \end{align*}
        \ENDFOR
    \ENDFOR
    \FOR{$h=1,2,\dots H$}
        \STATE Observe $s_h^k$ and $\bR_h^k = \brc*{R_h^k(s_h^k,a)}_{a\in\Acal}$
        \STATE Play an action 
        $a_h^k=\pi_h^k(s_h^k,\bR_h^k)$
        \STATE Collect the reward $R^k_h(s^k_h,a^k_h)$ and transition to the next state $s^k_{h+1}\sim P_h(\cdot\vert s_h^k,a_h^k)$
    \ENDFOR
    \STATE Update the empirical estimators and counts for all visited state-actions
\ENDFOR
\end{algorithmic}
\end{algorithm}

We use a variant of the MVP algorithm \citep{zhang2021reinforcement} while adapting their proof and the one from \citep{efroni2021confidence}. The algorithm is described in \Cref{alg: MVP reward lookahead} and uses the following bonuses:
\begin{align*}
    &b_{k,h}^r(s) = 3\sqrt{ \frac{AL^k_{\delta} }{2(n^{k-1}_{h}(s)\vee 1)}},\\
    &b_{k,h}^{p}(s,a) = \min\brc*{\frac{20}{3}\sqrt{\frac{\VAR_{\hat{P}^{k-1}_{h}(\cdot| s,a)}(\bar{V}^k_{h+1}) L^k_{\delta}}{n^{k-1}_{h}(s,a)\vee 1}} + \frac{400}{9}\frac{H L^k_{\delta}}{n^{k-1}_{h}(s,a) \vee 1}, H}  
\end{align*}
where $L^k_{\delta}=\ln \frac{144S^2AH^2 k^3(k+1)}{\delta}$, and for brevity, we shorten $\VAR_{\hat{P}^{k-1}_{h}(\cdot| s,a)}(\bar{V}^k_{h+1}(s'))$ to $\VAR_{\hat{P}^{k-1}_{h}(\cdot| s,a)}(\bar{V}^k_{h+1})$ (omitting the state from the value). 

For the optimistic value iteration, we use the notation $k_h^t(s)$ to represent the $t^{th}$ episode where the state $s$ was visited at the $h^{th}$ timestep. Thus, line \ref{algline:VI reward lookahead} of \Cref{alg: MVP reward lookahead} is the expectation w.r.t. the empirical reward distribution $\hat{\Rcal}^{k-1}_h(s)$ (when defining its realization to be zero when $n^{k-1}_{h}(s)=0$). Since the bonuses are larger than $H$ when $n^{k-1}_{h}(s)=0$, one could write the update in more concisely as
\begin{align*}
    &\bar{V}^k_h(s) = \min\brc*{\E_{\bR\sim\hat{\Rcal}^{k-1}_h(s)}\brs*{\max_{a\in\Acal}\brc*{R(a) + b_{k,h}^{p}(s,a) + \hat{P}^{k-1}_{h}\bar{V}^k_{h+1}(s,a)}} + b^r_{k,h}(s), H}.
\end{align*}
We will often use this representation in our analysis.
\clearpage


\subsection{The First Good Event -- Concentration}
We now define the first good event, which ensures that all empirical quantities are well-concentrated. For the transitions, we require each element to concentrate well, as well as both the inner product and the variance w.r.t. the optimal value function. For the reward, we make sure that the maximum of the rewards to concentrate well  (with any possible bias, that will later correspond with the next-state values). Formally, for any fixed vector $u\in\R^A$, denote
\begin{align*}
    & m_h(s,u) = \E_{\bR\sim\Rcal_h(s)}\brs*{\max_a\brc*{R_h(a)+u(a)}}, \\
    & \hat{m}_h^{k}(s,u)=\E_{\bR\sim\hat{\Rcal}^k_h(s)}\brs*{\max_a\brc*{R_h(a)+u(a)}}
\end{align*}
with the convention that $\hat{m}_h^{k}(s,u)=\max_au(a)$ if $n^{k}_h(s)=0$. We define the following good events:
\begin{align*}
    &E^p(k) = \brc*{\forall s,s',a,h:\ |P_h\br*{s'|s,a} - \hat{P}^{k-1}_h\br*{s'|s,a}| \le \sqrt{\frac{2P(s'|s,a)L^k_{\delta}}{n^{k-1}_h(s,a)\vee 1}} + \frac{L^k_{\delta}}{n^{k-1}_h(s,a)\vee 1}} \\
    &E^{pv1}(k)=\brc*{\forall s,a,h:\ \abs*{\br*{\hat{P}^{k-1}_h-P_h} V_{h+1}^*(s,a)} \leq \sqrt{\frac{2\VAR_{P_h(\cdot| s,a)}(V^*_{h+1})L^k_{\delta}}{n^{k-1}_h(s,a)\vee 1}} + \frac{HL^k_{\delta}}{n^{k-1}_h(s,a)\vee 1}}\\
    &E^{pv2}(k)=\brc*{\forall s,a,h:\ \abs*{ \sqrt{\VAR_{P_h(\cdot| s,a )}(V_{h+1}^*)} -  \sqrt{\VAR_{\hat{P}^{k-1}_h(\cdot| s,a )}(V_{h+1}^*)} } \leq 4H\sqrt{\frac{L^k_{\delta}}{n^{k-1}_h(s,a)\vee 1}} }\\
    &E^r(k) = \brc*{\forall s,h,\forall  u\in[0,2H]^A:\ \abs*{m_h(s,u) - \hat{m}_h^{k-1}(s,u) } \leq 3\sqrt{ \frac{AL^k_{\delta} }{2(n^{k-1}_h(s)\vee 1)}}} 
\end{align*}
where we again use $L^k_{\delta}=\ln \frac{144S^2AH^2 k^3(k+1)}{\delta}$. 
Then, we define the first good event as 
$$\G_1 = \bigcap_{k\geq 1} E^r(k) \bigcap_{k\geq 1} E^p(k) \bigcap_{k\geq 1} E^{pv1}(k) \bigcap_{k\geq 1} E^{pv2}(k),$$
for which, the following holds:
\begin{lemma}[The First Good Event]\label{lemma: the first good event reward lookahead}
The good event $\G_1$ holds w.p. $\Pr(\G_1)\geq 1-\delta/2$.
\end{lemma}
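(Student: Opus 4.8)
The plan is to prove Lemma~\ref{lemma: the first good event reward lookahead} by establishing that each of the four constituent events $E^p(k)$, $E^{pv1}(k)$, $E^{pv2}(k)$ and $E^r(k)$ holds with high probability for every fixed $k$, and then applying a union bound over $k\geq 1$ using the fact that $L^k_\delta = \ln\frac{144S^2AH^2k^3(k+1)}{\delta}$ was deliberately chosen so that the per-episode failure probabilities sum to at most $\delta/2$. Concretely, I would allocate a budget of roughly $\frac{\delta}{c\cdot S^2AH^2 k^3(k+1)}$ (for a suitable absolute constant absorbed into the $144$) to the failure of each event at episode $k$, so that summing the geometric-like tail $\sum_{k\geq1}\frac{1}{k^3(k+1)}$ (which is bounded by an absolute constant) and over the $S^2AH^2$ many $(s,a,s',h)$ tuples still leaves total failure probability below $\delta/2$.

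For the three transition events I would invoke the standard empirical-Bernstein / Bernstein-for-martingales toolkit applied to the tabular data-generation process of \Cref{appendix: reward lookahead data generation}: since the $i$-th visit to $(s,a,h)$ uses an i.i.d.\ sample from $P_h(\cdot|s,a)$, conditioning on the count $n_h^{k-1}(s,a)$ gives a sum of i.i.d.\ bounded random variables (or a martingale if one is careful about the stopping-time nature of the counts). $E^p(k)$ is the coordinatewise empirical-Bernstein deviation bound for a Bernoulli$(P_h(s'|s,a))$ average; $E^{pv1}(k)$ is Bernstein's inequality applied to the bounded random variable $V^*_{h+1}(s')\in[0,H]$ under $P_h(\cdot|s,a)$, whose variance is exactly $\VAR_{P_h(\cdot|s,a)}(V^*_{h+1})$; and $E^{pv2}(k)$ follows from $E^{pv1}$-type arguments together with the elementary inequality $|\sqrt{x}-\sqrt{y}|\leq\sqrt{|x-y|}$ to convert a variance-difference bound into a standard-deviation-difference bound (this is the routine step also used in \citep{azar2017minimax,zhang2021reinforcement}). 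Because $V^*_{h+1}$ is a fixed (non-random) vector, no covering argument is needed for these three events; a plain union bound over $(s,a,s',h)$ suffices. I would handle the boundary case $n_h^{k-1}(s,a)=0$ by noting the bonus/bound is vacuously $\geq H$ and hence trivially valid.

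The one genuinely non-routine piece is $E^r(k)$, which asks for a deviation bound that holds \emph{uniformly} over all bias vectors $u\in[0,2H]^A$, i.e.\ a uniform control of $\bigl|\E_{\Rcal_h(s)}[\max_a\{R(a)+u(a)\}] - \E_{\hat\Rcal_h^{k-1}(s)}[\max_a\{R(a)+u(a)\}]\bigr|$. Here I would use the structural observation that each visit to $(s,h)$ contributes one i.i.d.\ sample $\bR\in[0,1]^A$ drawn from $\Rcal_h(s)$, so $\hat m_h^{k-1}(s,u)$ is an empirical mean of $n_h^{k-1}(s)$ i.i.d.\ samples of the function $\bR\mapsto\max_a\{R(a)+u(a)\}$, which is bounded in $[0,2H]$ and — crucially — is $1$-Lipschitz in $u$ uniformly in $\bR$. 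The argument then proceeds in two stages: (i) a standard Hoeffding/Bernstein bound for each fixed $u$ on a finite $\epsilon$-net of $[0,2H]^A$, whose size is $(2H/\epsilon)^A$, contributing the factor $A$ inside the logarithm; (ii) a Lipschitz-continuity argument to extend from the net to all of $[0,2H]^A$, picking $\epsilon$ polynomially small (e.g.\ $\epsilon\sim 1/(SAHK)$) so that the discretization error is lower-order and absorbed into the constant $3$. This is exactly the mechanism that produces the $\sqrt{A\,L^k_\delta/(n_h^{k-1}(s)\vee1)}$ rate with the extra $A$-factor relative to scalar estimation, as foreshadowed in the discussion around \Cref{eq: key proof step reward lookahead}; I expect verifying that the net size and Lipschitz slack combine to give precisely the stated constant $3$ (rather than some slightly larger constant) to be the fiddliest bookkeeping, and the main conceptual obstacle to be setting up the martingale/i.i.d.\ structure correctly so that the uniform-over-$u$ bound and the stopping-time count $n_h^{k-1}(s)$ interact cleanly (handled by conditioning on the count and using a time-uniform / peeling argument over the possible values of $n_h^{k-1}(s)\in\{0,1,\dots,k-1\}$, which contributes another harmless factor inside the log).
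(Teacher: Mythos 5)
Your proposal follows essentially the same route as the paper: Bernstein-type concentration (conditioning on the visit count so that samples are i.i.d., then a union bound over $n\in[k]$, over $(s,a,s',h)$, and over $k\geq1$ with the $\frac{1}{k^2(k+1)}$-type allocation baked into $L^k_\delta$) for $E^p$, $E^{pv1}$, $E^{pv2}$, and an $\epsilon$-net over $u\in[0,2H]^A$ combined with the $1$-Lipschitzness of $u\mapsto\max_a\{R(a)+u(a)\}$ for the uniform event $E^r$ — this is exactly the paper's \Cref{lemma: concentration results} and \Cref{lemma:max concentration rewards}. One detail to fix in stage (i) of your $E^r$ argument: you describe the summands $\max_a\{R(a)+u(a)\}$ as bounded in $[0,2H]$, but applying Hoeffding with that range would introduce an extra factor of order $H$ and fail to give the $H$-free rate $3\sqrt{AL^k_\delta/(2n)}$; the paper instead observes that for each fixed $u$ the maximum lies a.s. in the \emph{unit-length} interval $[\max_a u(a),\,\max_a u(a)+1]$ (since rewards are in $[0,1]$), so Hoeffding applies with range $1$.
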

\begin{proof}  
    The proof of the first three events uses standard concentration arguments (see, e.g., \citealt{efroni2021confidence}) and is stated for completeness.     
    For any fixed $k\ge1, s,a,h$ and number of visits $n\in[k]$, we utilize \Cref{lemma: concentration results} w.r.t. the transition kernel $P_h(\cdot\vert s,a)$, the value $V^*_{h+1}\in[0,H]$ and probability $\delta'=\frac{\delta}{8SAHk^2(k+1)}$; notice that by the assumption that samples are generated i.i.d. before the game starts, given the number of visits, all samples are i.i.d., so standard concentration could be applied. By taking the union bound over all $n\in[k]$ and slightly increasing the constants to ensure that $n=0$ trivially holds, we get that the events also hold for any number of visit $n_h^{k-1}(s,a)\in\brc*{0\dots,k}$, and taking another union bound over all $k\ge1,s,a,h$ ensures that each of the events $\cap_{k\geq 1} E^p(k), \cap_{k\geq 1} E^{pv1}(k)$ and $\cap_{k\geq 1} E^{pv2}(k)$ holds w.p. at least $1-\frac{\delta}{8}$

    We now focus on bounding the probability of the event $\cap_k E^r(k)$. For any fixed $k$, $h$ and $s$, observe that the event trivially holds if $n_h^k=0$, then the event trivially holds, since for all $u\in[0,2H]^A$,
    \begin{align*}
        \abs*{m_h(s,u) - \hat{m}_h^{k-1}(s,u) } 
        = \abs*{\E_{\bR\sim\Rcal_h(s)}\brs*{\max_a\brc*{R_h(s,a)+u(a)}} - \max_a\brc*{u(a)}}
        \overset{(*)}\leq 1
        \leq 3\sqrt{ \frac{AL^k_{\delta} }{2}} ,
    \end{align*}
    where $(*)$ uses the boundedness of the rewards in $[0,1]$. Next, recall that for any fixed $n_h^{k-1}=n\in[k]$, the rewards samples at state $s$ and step $h$ are i.i.d. vectors on $[0,1]^A$. Therefore, by \Cref{lemma:max concentration rewards},     \begin{align*}
        &\Pr\brc*{n^{k-1}_h(s)=n, \forall  u\in[0,2H]^A:\ \abs*{m_h(s,u) - \hat{m}_h^{k-1}(s,u) } > 3\sqrt{ \frac{AL^k_{\delta} }{2(n^{k-1}_h(s)\vee 1)}}} \leq \frac{\delta}{8SAHk^2(k+1)}.
    \end{align*}
    Taking a union bound on all possible values of $n\in[k]$, $s$ and $h$, we get 
    \begin{align*}
        \Pr\brc*{E^r(k)}\ge 1-SAk\cdot \frac{\delta}{8SAHk^2(k+1)}
        \geq 1-\frac{\delta}{8k(k+1)}.
    \end{align*}
    By summing over all $k\ge1$, the event $\cap_k E^r(k)$ holds with a probability of at least $1-\delta/8$. Finally, taking the union bound with the other three events leads to the desired result of $\Pr(\G_1)\geq 1-\delta/2$.
\end{proof}



\subsection{Optimism of the Upper Confidence Value Functions}
In this subsection, we prove that under the good event $\G_1$, the values $\bar{V}^k$ that MVP-RL produces are optimistic.

\begin{lemma}[Optimism] \label{lemma: optimism of values MVP-RL}
Under the first good event $\G_1$, for all $k\in[K]$, $h\in [H]$ and $s\in \mathcal{S}$, it holds that $V^*_h(s)\leq \bar{V}^k_{h}(s)$.
\end{lemma}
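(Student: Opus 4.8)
The plan is to prove optimism by backward induction on $h$, from $h=H+1$ down to $h=1$, for all $k$ simultaneously. The base case $h=H+1$ is immediate since $\bar V^k_{H+1}(s) = 0 = V^*_{H+1}(s)$. For the inductive step, fix $k$, $h$, $s$ and assume $V^*_{h+1}(s') \le \bar V^k_{h+1}(s')$ for all $s'$. If $n^{k-1}_h(s) = 0$ then $\bar V^k_h(s) = H \ge V^*_h(s)$ trivially (rewards bounded in $[0,1]$ and horizon $H$), so assume $n^{k-1}_h(s) \ge 1$. Also, if the $\min$ in the definition of $\bar V^k_h(s)$ is achieved by $H$, we are again done; so assume $\bar V^k_h(s) = \E_{\bR\sim\hat\Rcal^{k-1}_h(s)}[\max_a\{R(a) + b^p_{k,h}(s,a) + \hat P^{k-1}_h\bar V^k_{h+1}(s,a)\}] + b^r_{k,h}(s)$.

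The core of the argument is a chain of inequalities starting from $V^*_h(s) = \E_{\bR\sim\Rcal_h(s)}[\max_a\{R(a) + P_h V^*_{h+1}(s,a)\}]$ (Proposition~\ref{prop: reward DP}). First I would use monotonicity of $\max$ and of expectation together with the induction hypothesis $V^*_{h+1}\le\bar V^k_{h+1}$ to replace $P_h V^*_{h+1}$ by $P_h\bar V^k_{h+1}$ inside; this requires that the transition bonus $b^p_{k,h}$ is \emph{monotone} in the future value, which is exactly the MVP design property invoked in the text (the bonus depends only on $\VAR_{\hat P}(\bar V^k_{h+1})$ and $H$, and one shows $w\mapsto \sqrt{\VAR_{\hat P}(w)} + \text{const}\cdot H$ composed appropriately is monotone, or more precisely one uses the known MVP lemma that the whole Bellman-with-bonus operator is monotone). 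Concretely, I expect to invoke the monotonicity property of the MVP operator to get $P_h V^*_{h+1}(s,a) \le \hat P^{k-1}_h \bar V^k_{h+1}(s,a) + b^p_{k,h}(s,a)$ for every $a$ — this step itself uses the transition concentration events $E^p(k)$, $E^{pv1}(k)$, $E^{pv2}(k)$ to control $(P_h - \hat P^{k-1}_h)\bar V^k_{h+1}(s,a)$ by the Bernstein-type bonus; it is the standard MVP optimism computation and I would cite it. After this replacement we have $V^*_h(s) \le \E_{\bR\sim\Rcal_h(s)}[\max_a\{R(a) + b^p_{k,h}(s,a) + \hat P^{k-1}_h\bar V^k_{h+1}(s,a)\}] = m_h(s,u)$ where $u(a) := b^p_{k,h}(s,a) + \hat P^{k-1}_h\bar V^k_{h+1}(s,a) \in [0, 2H]^A$ (since $\bar V^k_{h+1}\le H$ pointwise by truncation and the bonus is capped at $H$).

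Now the key reward-lookahead step: apply the good event $E^r(k)$ with this specific $u \in [0,2H]^A$ to change measure from $\Rcal_h(s)$ to $\hat\Rcal^{k-1}_h(s)$, giving $m_h(s,u) \le \hat m^{k-1}_h(s,u) + 3\sqrt{A L^k_\delta / (2(n^{k-1}_h(s)\vee 1))} = \E_{\bR\sim\hat\Rcal^{k-1}_h(s)}[\max_a\{R(a) + u(a)\}] + b^r_{k,h}(s)$, which is exactly $\bar V^k_h(s)$ (in the case where the $\min$ is not truncated). This closes the induction. The one subtlety to double-check is that $u$ lies in the range $[0,2H]^A$ over which $E^r(k)$ quantifies: the bonus $b^p_{k,h}(s,a)\le H$ by its own $\min$-truncation and $\hat P^{k-1}_h\bar V^k_{h+1}(s,a)\le H$ since $\bar V^k_{h+1}$ is itself truncated at $H$, so $u(a)\in[0,2H]$ — good. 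The main obstacle, and the part I would be most careful about, is the monotonicity-of-bonus step: one must verify that replacing $\bar V^k_{h+1}$-dependence consistently and using the induction hypothesis does not break the direction of the inequality, i.e. that the MVP transition bonus genuinely yields $P_h V^*_{h+1}(s,a) \le \hat P^{k-1}_h \bar V^k_{h+1}(s,a) + b^p_{k,h}(s,a)$ under $\G_1$; this is where I would lean on the original MVP analysis (the "monotone bonus" trick of \citealt{zhang2021reinforcement}) rather than reprove it from scratch, and would state it as a cited sub-lemma.
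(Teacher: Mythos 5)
Your proposal is correct and follows essentially the same argument as the paper: backward induction, the MVP monotone-bonus computation (via the cited monotonicity lemma together with the events $E^{pv1}(k)$ and $E^{pv2}(k)$) to obtain $P_hV^*_{h+1}(s,a)\le \hat P^{k-1}_h\bar V^k_{h+1}(s,a)+b^p_{k,h}(s,a)$ for every $a$, and the uniform-in-$u$ reward concentration event $E^r(k)$ as the change of measure between $\Rcal_h(s)$ and $\hat\Rcal^{k-1}_h(s)$. The only immaterial difference is that you run the chain upward from $V^*_h(s)$ and invoke $E^r(k)$ with $u(a)=b^p_{k,h}(s,a)+\hat P^{k-1}_h\bar V^k_{h+1}(s,a)\in[0,2H]^A$, whereas the paper runs it downward from $\bar V^k_h(s)$ and applies the event with $u(a)=P_hV^*_{h+1}(s,a)\in[0,H]^A$; both choices are covered since the event quantifies over all of $[0,2H]^A$.
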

\begin{proof}
    The proof follows by backward induction on $H$; see that the claim trivially holds for $h=H+1$, where both values are defined to be zero. 

    Now assume by induction that for some $k\in[K]$ and $h\in[H]$, the desired inequalities hold at timestep $h+1$ for all $s\in\Scal$; we will show that this implies that they also hold at timestep $h$. 

    At this point, we also assume w.l.o.g. that $\bar{V}^k_{h}(s)< H$, and in particular, the value is not truncated; otherwise, by the boundedness of the rewards, $V^*_h(s)\leq H=\bar{V}^k_{h}(s).$ For similar reasons, we assume w.l.o.g. that $b_{k,h}^{p}(s,a)<H$, so that it is also not truncated.

     By the optimism of the value at step $h+1$ due to the induction hypothesis and the monotonicity of the bonus (\Cref{lemma: bonus monotonicity}), under the good event, we have for all $s\in\Scal$ and $a\in\Acal$ that
    \begin{align*}
        &\hat{P}_h^{k-1}\bar{V}^k_{h+1}(s,a) + b_{k,h}^{p}(s,a)\\
        &\geq   \hat{P}_h^{k-1}\bar{V}^k_{h+1}(s,a) + \max\brc*{\frac{20}{3}\sqrt{\frac{\VAR_{\hat{P}^{k-1}_{h}(\cdot| s,a)}(\bar{V}^k_{h+1}) L^k_{\delta}}{n^{k-1}_{h}(s,a)\vee 1}},\frac{400}{9}\frac{H L^k_{\delta}}{n^{k-1}_{h}(s,a) \vee 1}}\\
        &\geq \hat{P}_h^{k-1}V^*_{h+1}(s,a) + \max\brc*{\frac{20}{3}\sqrt{\frac{\VAR_{\hat{P}^{k-1}_{h}(\cdot| s,a)}(V^*_{h+1}) L^k_{\delta}}{n^{k-1}_{h}(s,a)\vee 1}},\frac{400}{9}\frac{H L^k_{\delta}}{n^{k-1}_{h}(s,a) \vee 1}}\tag{\Cref{lemma: bonus monotonicity}}\\
        &\geq \hat{P}_h^{k-1}V^*_{h+1}(s,a) + \frac{10}{3}\sqrt{\frac{\VAR_{\hat{P}^{k-1}_{h}(\cdot| s,a)}(V^*_{h+1}) L^k_{\delta}}{n^{k-1}_{h}(s,a)\vee 1}}+\frac{200}{9}\frac{H L^k_{\delta}}{n^{k-1}_{h}(s,a) \vee 1} \\
        & \geq \hat{P}_h^{k-1}V^*_{h+1}(s,a) + \frac{10}{3}\sqrt{\frac{\VAR_{P_{h}(\cdot| s,a)}(V^*_{h+1}) L^k_{\delta}}{n^{k-1}_{h}(s,a)\vee 1}}+\frac{8H L^k_{\delta}}{n^{k-1}_{h}(s,a) \vee 1} \tag{Under $E^{pv2}(k)$}\\
        &\geq P_hV^*_{h+1}(s,a).  \tag{Under $E^{pv1}(k)$}
    \end{align*}
    Thus, under the good event and the induction hypothesis, we have that
    \begin{align*}
        \bar{V}^k_h(s) &=\E_{\bR\sim\hat{\Rcal}_h(s)}\brs*{\max_{a\in\Acal}\brc*{R(a) + b_{k,h}^{p}(s,a) +\hat{P}_h^{k-1}\bar{V}^k_{h+1}(s,a)}}+ b^r_{k,h}(s) \\
        & \geq \E_{\bR\sim\hat{\Rcal}_h(s)}\brs*{\max_{a\in\Acal}\brc*{R(a) +P_hV^*_{h+1}(s,a)}} + b^r_{k,h}(s).         
    \end{align*}
    In particular, using \Cref{prop: reward DP}, we get 
    \begin{align*}
        \bar{V}^k_h(s) - V^*_h(s)
        & \geq \E_{\bR\sim\hat{\Rcal}_h(s)}\brs*{\max_{a\in\Acal}\brc*{R(a) +P_hV^*_{h+1}(s,a)}} + b^r_{k,h}(s) \\
        &\quad- \E_{\bR\sim\Rcal_h(s)}\brs*{\max_{a\in\Acal}\brc*{R(a) +P_hV^*_{h+1}(s,a)}}\\
        &\ge0,
    \end{align*}
    where the last inequality holds under the event $E^r(k)$ with $u(a)=P_hV^*_{h+1}(s,a)\in[0,H]^A$.
\end{proof}

\clearpage

\subsection{The Second Good Event -- Martingale Concentration}
In this subsection, we present four good events that will allow us to replace the expectation over the randomizations inside each episode with their realization.

Define the following bonus-like term that will later appear in the proof due to value concentration:
\begin{align*}
    b_{k,h}^{pv1}(s,a) = \min\brc*{\sqrt{\frac{2\VAR_{P_h(\cdot| s,a)}(V^*_{h+1})L^k_{\delta}}{n^{k-1}_h(s,a)\vee 1}} + \frac{4H^2SL^k_{\delta}}{n^{k-1}_h(s,a)\vee 1},H},
\end{align*}
and let 
\begin{align*}
    &Y^k_{1 ,h} \eqdef \bar{V}_{h+1}^k(s_{h+1}^k) - V_{h+1}^{\pi^k}(s_{h+1}^k),\\
    &Y^k_{2, h} = \VAR_{P_h(\cdot|s_{t,h},a_{t,h})}( V^{\pi^k}_{h+1}),\\
    &Y^k_{3, h} = b_{k,h}^{p}(s_h^k,a_h^k) + b_{k,h}^{pv1}(s_h^k,a_h^k).
\end{align*}
The second good event is the intersection of the events $\G_2 =E^{\mathrm{diff1}} \cap E^{\mathrm{diff2}} \cap  E^{\VAR}  \cap  E^{bp}$ defined as follows.
\begin{align*}
    &E^{\mathrm{diff}1}=\brc*{\forall h\in[H], K\geq 1:\ \sum_{k=1}^K \E[Y_{1 ,h}^k|F_{k,h-1}]\leq \br*{1+\frac{1}{2H}} \sum_{k=1}^K Y_{1 ,h}^k + 18H^2 \ln\frac{8HK(K+1)}{\delta}},\\
    &E^{\mathrm{diff}2}=\brc*{\forall h\in[H], K\geq 1:\ \sum_{k=1}^K \E[Y_{1 ,h}^k|F_{k,h-1}^R]\leq \br*{1+\frac{1}{2H}} \sum_{k=1}^K Y_{1 ,h}^k + 18H^2 \ln\frac{8HK(K+1)}{\delta}},\\
    &E^{\VAR}= \brc*{ K\geq 1:\  \sum_{k=1}^K \sum_{h=1}^H Y_{2, h}^k\leq 2\sum_{k=1}^K \sum_{h=1}^H\E[Y_{2, h}^k|F_{k-1}]  + 4H^3 \ln\frac{8HK(K+1)}{\delta}},\\
    &E^{bp}= \brc*{\forall h\in[H], K\geq 1:\ \sum_{k=1}^K \E[Y_{3, h}^k|F_{k,h-1}]\leq 2\sum_{k=1}^K Y_{3 ,h}^k + 50H^2 \ln\frac{8HK(K+1)}{\delta}},
\end{align*}
We define the good event $\G=\G_1\cap\G_2$.
\begin{lemma}
    \label{lemma: good event reward lookahead}
    The good event $\G$ holds with a probability of at least $1-\delta$.
\end{lemma}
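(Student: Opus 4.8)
The plan is to bound the probability of the complement of $\G = \G_1 \cap \G_2$ by a union bound, splitting the failure probability evenly so that $\Pr(\G_1^c) \le \delta/2$ (already established in \Cref{lemma: the first good event reward lookahead}) and $\Pr(\G_2^c) \le \delta/2$, where $\G_2 = E^{\mathrm{diff}1}\cap E^{\mathrm{diff}2}\cap E^{\VAR}\cap E^{bp}$. Since each of the four events in $\G_2$ is a uniform-in-$K$ statement, I would allocate $\delta/8$ to each. The main work is therefore to show that each of $E^{\mathrm{diff}1}$, $E^{\mathrm{diff}2}$, $E^{\VAR}$, $E^{bp}$ fails with probability at most $\delta/8$.

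Each of these events has the same shape: a sum of bounded nonnegative random variables dominated (up to a multiplicative slack and an additive logarithmic term) by the sum of their conditional expectations, or vice versa. These are exactly the hypotheses of a standard multiplicative/Freedman-type martingale concentration inequality (the kind invoked in \citealt{efroni2021confidence,zhang2021reinforcement}; it should appear as one of the ``existing results'' lemmas at the end of the appendix). Concretely: for $E^{\mathrm{diff}1}$, the variables $Y^k_{1,h} = \bar V^k_{h+1}(s^k_{h+1}) - V^{\pi^k}_{h+1}(s^k_{h+1})$ are $F_{k,h}$-measurable, nonnegative (by optimism, \Cref{lemma: optimism of values MVP-RL}, $\bar V^k_{h+1} \ge V^*_{h+1} \ge V^{\pi^k}_{h+1}$), and bounded in $[0,H]$, while $\E[Y^k_{1,h}\mid F_{k,h-1}]$ is their one-step conditional mean; applying the inequality with multiplicative slack $\tfrac{1}{2H}$ and confidence $\delta/(8H)$, then union-bounding over $h\in[H]$, gives the stated bound with the $18H^2\ln\frac{8HK(K+1)}{\delta}$ additive term uniformly over all $K$ (the $\ln(K+1)$ and anytime-validity coming from a stitching/peeling argument over $K$, or directly from an anytime version of the inequality). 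The argument for $E^{\mathrm{diff}2}$ is identical but with the finer filtration $F^R_{k,h-1}$ (which additionally contains $a^k_h$); nonnegativity and boundedness are unchanged, so the same inequality applies. For $E^{\VAR}$, the variables $Y^k_{2,h} = \VAR_{P_h(\cdot\mid s^k_h,a^k_h)}(V^{\pi^k}_{h+1})$ lie in $[0,H^2]$, and one applies the inequality to the double sum over $h$ and $k$ (a single martingale difference sequence indexed by $(k,h)$ in lexicographic order) with multiplicative constant $2$ and confidence $\delta/8$, yielding the $4H^3\ln\frac{8HK(K+1)}{\delta}$ term. For $E^{bp}$, the variables $Y^k_{3,h} = b^p_{k,h}(s^k_h,a^k_h) + b^{pv1}_{k,h}(s^k_h,a^k_h)$ are nonnegative and bounded by $2H$ (each bonus is truncated at $H$), $F^R_{k,h-1}$-measurable, so the same multiplicative inequality with constant $2$ and confidence $\delta/(8H)$, union-bounded over $h$, gives the $50H^2\ln\frac{8HK(K+1)}{\delta}$ term. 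Finally, union-bounding the four $\delta/8$ failure probabilities gives $\Pr(\G_2^c)\le\delta/2$, and combining with $\Pr(\G_1^c)\le\delta/2$ yields $\Pr(\G)\ge 1-\delta$.

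The main obstacle I anticipate is bookkeeping rather than conceptual: making sure each $Y$ is measurable with respect to the filtration against which we take its conditional expectation (this is precisely why the paper introduces both $F_{k,h}$ and $F^R_{k,h}$ — e.g. $Y^k_{3,h}$ and $Y^k_{1,h}$ in $E^{\mathrm{diff}2}$/$E^{bp}$ need $a^k_h$, hence $F^R$, whereas $E^{\mathrm{diff}1}$ uses $F$), checking the correct a.s. bounds so the numerical constants ($18$, $4$, $50$) come out as claimed, and handling the uniform-over-$K$ requirement via an anytime martingale bound or a doubling/peeling argument that produces the $\ln(K+1)$ factor. I would state the underlying martingale inequality once as a cited lemma and then verify the three conditions (measurability, nonnegativity, boundedness) for each of the four applications, keeping the constant-tracking deliberately loose since the statement only needs the displayed constants to be upper bounds.
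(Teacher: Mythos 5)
Your overall strategy matches the paper's: split $\delta$ as $\delta/2$ for $\G_1$ and $\delta/8$ for each of the four events in $\G_2$, and establish each of the latter via the Freedman-type inequality of \Cref{lemma: consequences of optimism and freedman's inequality}. However, there is one genuine gap in your treatment of $E^{\mathrm{diff}1}$ and $E^{\mathrm{diff}2}$: you justify the almost-sure bound $Y^k_{1,h}\in[0,H]$ by invoking optimism (\Cref{lemma: optimism of values MVP-RL}), but optimism only holds \emph{on the event} $\G_1$; off $\G_1$ the difference $\bar{V}^k_{h+1}(s^k_{h+1})-V^{\pi^k}_{h+1}(s^k_{h+1})$ can be negative. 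Since \Cref{lemma: consequences of optimism and freedman's inequality} requires $0\le Y_t\le C$ almost surely, it cannot be applied to $Y^k_{1,h}$ directly. The paper resolves this by introducing the indicator $W_k=\Ind{\bar{V}^k_h(s)-V^{\pi^k}_h(s)\in[0,H],\ \forall h\in[H],s\in\Scal}$, which is $F_{k-1}$-measurable because $\bar{V}^k$ and $\pi^k$ are computed from data up to episode $k-1$, applying the martingale inequality to the a.s.-bounded $\tilde{Y}^k_{1,h}=W_kY^k_{1,h}$, and then observing that under $\G_1$ one has $W_k\equiv1$, so the truncated event coincides with $E^{\mathrm{diff}1}$ on $\G_1$ --- which suffices since the final claim only concerns the intersection $\G_1\cap\G_2$. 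Your proof needs this (or an equivalent truncation) to be valid.

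A secondary, smaller mismatch concerns $E^{\VAR}$: you propose a single martingale indexed lexicographically by $(k,h)$, which would yield a bound against $\sum_{k,h}\E[Y^k_{2,h}\mid F_{k,h-1}]$, whereas the event as defined compares against $\sum_{k,h}\E[Y^k_{2,h}\mid F_{k-1}]$, conditioning at the start of each episode (this is what later allows invoking the law of total variance, \Cref{lemma: ltv reward-lookahead}). The paper instead groups $\sum_{h=1}^HY^k_{2,h}\in[0,H^3]$ into one variable per episode and applies the inequality with $C=H^3$ along the filtration $F_k$; the step-level and episode-level conditional expectations are not interchangeable in the direction needed for this one-sided bound. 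The remaining points (the finer filtration $F^R_{k,h-1}$ for $E^{\mathrm{diff}2}$, the $[0,2H]$ bound for $Y^k_{3,h}$, and the union bounds over $h$ and $K$) are handled essentially as in the paper.
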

\begin{proof}
    The proof follows similarly to Lemmas 15 and 21 of \citep{efroni2021confidence}.

    First, define the random process $W_k=\Ind{\bar{V}_{h}^k(s) - V_{h}^{\pi^k}(s)\in[0,H], \forall h\in[H], s\in\Scal}$ and define $\tilde{Y}^k_{1 ,h} = W_kY^k_{1 ,h}$, which is bounded in $[0,H]$. Also observe that $W_k$ is $F_{k-1}$ measurable, since both values and policies are calculated based on data up to the episode $k-1$, and in particular, it is $F_{k,h-1}$ measurable and $\tilde{Y}^k_{1, h}$ is $F_{k,h}$ measurable. thus, by \Cref{lemma: consequences of optimism and freedman's inequality}, for any $k\in[K]$ and $h\in[H]$, we have w.p. at least $1-\frac{\delta}{8HK(K+1)}$ that
    \begin{align*}
        \sum_{k=1}^K \E[\tilde{Y}_{1 ,h}^k|F_{k,h-1}]\leq \br*{1+\frac{1}{2H}} \sum_{k=1}^K \tilde{Y}_{1 ,h}^k + 18H^2 \ln\frac{8HK(K+1)}{\delta}.
    \end{align*}
    Since $W_k$ is $F_{k,h-1}$ measurable, we can write the event as
    \begin{align*}
        \sum_{k=1}^K W_k\E[Y_{1 ,h}^k|F_{k,h-1}]\leq \br*{1+\frac{1}{2H}} \sum_{k=1}^K W_kY_{1 ,h}^k + 18H^2 \ln\frac{8HK(K+1)}{\delta},
    \end{align*}
    and taking the union bound over all $h\in[H]$ and $K\ge1$, we get w.p. at least $1-\frac{\delta}{8}$ that the event
    \begin{align*}
        \tilde{E}^{\mathrm{diff}1}=\brc*{\forall h\in[H], K\geq 1:\ \sum_{k=1}^K W_k\E[Y_{1 ,h}^k|F_{k,h-1}]\leq \br*{1+\frac{1}{2H}} \sum_{k=1}^K W_kY_{1 ,h}^k + 18H^2 \ln\frac{8HK(K+1)}{\delta}}.
    \end{align*}
    Importantly, by optimism (\Cref{lemma: optimism of values MVP-RL}), under $\G_1$, it holds that $W_k=1$ for all $k\ge1$, so we immediately get that $\G_1\cap\tilde{E}^{\mathrm{diff}1}=\G_1\cap E^{\mathrm{diff}1}$.

    Following the exact same proof just with the filtration $F^R_{k,h}$ and defining the equivalent $\tilde{E}^{\mathrm{diff}2}$, we get that this event also holds w.p. $1-\frac{\delta}{8}$ and is the desired event when $\G_1$ holds.
    
    Next, we prove that the other two events also hold w.p. at least $1-\frac{\delta}{8}$.
    
    By the assumptions of our setting, we know that $V^{\pi^k}_{h}(s)\in[0,H]$, and so 
    \begin{align*}
        \sum_{h=1}^H Y^k_{2, h} = \sum_{h=1}^H\VAR_{P_h(\cdot|s_{t,h},a_{t,h})}( V^{\pi^k}_{h+1}) \in[0,H^3].
    \end{align*}
    In particular, applying \Cref{lemma: consequences of optimism and freedman's inequality} (w.r.t. the filtration $F_k$) with $C=H^3$ and any fixed $K$, we get w.p. $1-\frac{\delta}{8HK(K+1)}$ that 
    \begin{align*}
        \sum_{k=1}^K \sum_{h=1}^H Y_{2, h}^k\leq 2\sum_{k=1}^K \sum_{h=1}^H\E[Y_{2, h}^k|F_{k-1}]  + 4H^3 \ln\frac{8HK(K+1)}{\delta}.
    \end{align*}
    Taking the union bound on all possible values of $K\ge1$ proves that $E^{\VAR}$ holds w.p. at least $1-\frac{\delta}{8}$.

    Similarly, by definition, we have that $Y^k_{3, h} = b_{k,h}^{p}(s_h^k,a_h^k) + b_{k,h}^{pv1}(s_h^k,a_h^k)\in[0,2H]$ and is $F_{k,h}$ measurable. Thus, for any fixed $k\ge1$ and $h\in[H]$, using \Cref{lemma: consequences of optimism and freedman's inequality}, we have w.p. $1-\frac{\delta}{8HK(K+1)}$ that
    \begin{align*}
        \sum_{k=1}^K \E[Y_{3, h}^k|F_{k,h-1}]
        &\leq \br*{1+\frac{1}{4H}}\sum_{k=1}^K Y_{3 ,h}^k + 50H^2 \ln\frac{8HK(K+1)}{\delta} \\
        & \leq 2\sum_{k=1}^K Y_{3 ,h}^k + 50H^2 \ln\frac{8HK(K+1)}{\delta},
    \end{align*}
    applying the union bound on all $K\ge1$, the event $E^{bp}$ holds w.p. $1-\frac{\delta}{8}$.

    To summarize, we have that the event $\G_1$ holds w.p. $1-\frac{\delta}{2}$ (\Cref{lemma: the first good event reward lookahead}), and we proved that the events $\tilde{E}^{\mathrm{diff}1}, \tilde{E}^{\mathrm{diff}2},  E^{\VAR},  E^{bp}$ hold each w.p. $1-\frac{\delta}{8}$, so we also have that the event
    \begin{align*}
        \G&=\G_1\cap\G_2 \\
        &=\G_1\cap E^{\mathrm{diff}1} \cap   E^{\mathrm{diff}2} \cap  E^{\VAR}  \cap  E^{bp} \\
        &= \G_1\cap\tilde{E}^{\mathrm{diff}1} \cap   \tilde{E}^{\mathrm{diff}2} \cap  E^{\VAR}  \cap  E^{bp}
    \end{align*}
    holds w.p. at least $1-\delta$.
\end{proof}

\clearpage

\subsection{Regret Analysis}
\label{appendix: reward lookahead regret}
We finally analyze the regret of the algorithm
\MVPRL*
\begin{proof}
    Assume that the good events $\G$ holds, which by \Cref{lemma: good event reward lookahead}, happens with probability at least $1-\delta$. Then, by optimism (\Cref{lemma: optimism of values MVP-RL}), for any $k\in[K]$, $h\in[H]$ and $s\in\Scal$, it holds that $V^*_h(s)\leq \bar{V}^k_{h}(s)$. Moreover, we can lower bound the value of the policy $\pi^k$ as follows (see \Cref{remark: reward value with reward observations}):
    \begin{align}
    V^{\pi^k}_h(s) &= \E_{\bR\sim\Rcal_h(s)}\brs*{R(\pi^k_h(s,\bR)) +P_hV^{\pi^k}_{h+1}(s,\pi^k_h(s,\bR))} \nonumber\\
    & = \E_{\bR\sim\Rcal_h(s)}\brs*{R(\pi^k_h(s,\bR)) +\hat{P}_h^{k-1}\bar{V}^k_{h+1}(s,\pi^k_h(s,\bR))+b_{k,h}^p(s,\pi^k_h(s,\bR))} \nonumber\\
    &\quad+ \E_{\bR\sim\Rcal_h(s)}\brs*{P_hV^{\pi^k}_{h+1}(s,\pi^k_h(s,\bR))-\hat{P}_h^{k-1}\bar{V}^k_{h+1}(s,\pi^k_h(s,\bR))-b_{k,h}^p(s,\pi^k_h(s,\bR))} \nonumber\\
    & \overset{(1)}{=} \E_{\bR\sim\Rcal_h(s)}\brs*{\max_{a\in\Acal}\brc*{R(a) +\hat{P}_h^{k-1}\bar{V}^k_{h+1}(s,a)+b_{k,h}^p(s,a)}} \nonumber\\
    &\quad+ \E_{\bR\sim\Rcal_h(s)}\brs*{P_hV^{\pi^k}_{h+1}(s,\pi^k_h(s,\bR))-\hat{P}_h^{k-1}\bar{V}^k_{h+1}(s,\pi^k_h(s,\bR))-b_{k,h}^p(s,\pi^k_h(s,\bR))} \nonumber\\
    & \overset{(2)}\geq \E_{\bR\sim\hat{\Rcal}^{k-1}_h(s)}\brs*{\max_{a\in\Acal}\brc*{R(a) +\hat{P}_h^{k-1}\bar{V}^k_{h+1}(s,a)+b_{k,h}^p(s,a)}} -b_{k,h}^r(s)\nonumber\\
    &\quad+ \E_{\bR\sim\Rcal_h(s)}\brs*{P_hV^{\pi^k}_{h+1}(s,\pi^k_h(s,\bR))-\hat{P}_h^{k-1}\bar{V}^k_{h+1}(s,\pi^k_h(s,\bR))-b_{k,h}^p(s,\pi^k_h(s,\bR))} \nonumber\\
    &\overset{(3)}\geq \bar{V}_h^k(s)-2b_{k,h}^r(s) \nonumber\\
    &\quad+ \E_{\bR\sim\Rcal_h(s)}\brs*{P_hV^{\pi^k}_{h+1}(s,\pi^k_h(s,\bR))-\hat{P}_h^{k-1}\bar{V}^k_{h+1}(s,\pi^k_h(s,\bR))-b_{k,h}^p(s,\pi^k_h(s,\bR))}. \label{eq: value lower bound reward lookahead}
\end{align}
Relation $(1)$ is by the definition of $\pi^k$ (see \Cref{alg: MVP reward lookahead}), while $(2)$ holds under the good event $E^r(k)$ with $u(a)=\hat{P}_h^{k-1}\bar{V}^k_{h+1}(s,a)+b_{k,h}^p(s,a)\in[0,2H]$ (due to the value and bonus truncation). Finally, $(3)$ is by the definition of $\bar{V}_h^k(s)$, where the inequality also accounts for its possible truncation.

To further bound this, we need to bound
\begin{align*}
    \hat{P}_h^{k-1}\bar{V}^k_{h+1}(s,a) - P_hV^{\pi^k}_{h+1}(s,a)
    &= P_h\br*{\bar{V}^k_{h+1} - V^{\pi^k}_{h+1}}(s,a)+\br*{\hat{P}_h^{k-1} - P_h}\bar{V}^k_{h+1}(s,a) \\
    &= P_h\br*{\bar{V}^k_{h+1} - V^{\pi^k}_{h+1}}(s,a)\\
    &\quad+\br*{\hat{P}_h^{k-1}-P_h}V^*_{h+1}(s,a) + \br*{\hat{P}_h^{k-1}-P_h}\br*{\bar{V}^k_{h+1}-V^*_{h+1}}(s,a).
\end{align*}
The first error term can be bounded under the good event, while the second using \Cref{lemma: transition different to next state expectation}. More formally, under the good event $E^{pv1}(k)$, we have 
\begin{align*}
    \abs*{\br*{\hat{P}_h^{k-1}-P_h}V^*_{h+1}(s,a)}\leq \sqrt{\frac{2\VAR_{P_h(\cdot| s,a)}(V^*_{h+1})L^k_{\delta}}{n^{k-1}_h(s,a)\vee 1}} + \frac{HL^k_{\delta}}{n^{k-1}_h(s,a)\vee 1},
\end{align*}
and by \Cref{lemma: transition different to next state expectation} with $\alpha=4H$ (using and $P_1=P_h$, $P_2=\hat{P}^{k-1}_h$, under $E^p(k)$),
\begin{align*}
    \abs*{\br*{\hat{P}_h^{k-1}-P_h}\br*{\bar{V}^k_{h+1}-V^*_{h+1}}(s,a)} &\leq \frac{1}{4H} \E_{P_h(\cdot\vert s,a)}\brs*{\bar{V}^{k}_{h+1}(s')  - V^{*}_{h+1}(s')} + \frac{H S L^k_{\delta}(1+ 4H\cdot 2/4)}{n^{k-1}_h(s,a)\vee 1} \\
        & \leq \frac{1}{4H} \E_{P_h(\cdot\vert s,a)}\brs*{\bar{V}^{k}_{h+1}(s') - V^{\pi^k}_{h+1}(s')} + \frac{3H^2 S L^k_{\delta}}{n^{k-1}_h(s,a)\vee 1} \\
        & = \frac{1}{4H}P_h\br*{\bar{V}^{k}_{h+1} - V^{\pi^k}_{h+1}}(s,a)+\frac{3H^2 S L^k_{\delta}}{n^{k-1}_h(s,a)\vee 1},
\end{align*}
where the second inequality is since the value of $\pi^k$ cannot exceed the optimal value.

Since under the good event by \Cref{lemma: optimism of values MVP-RL}, we have $0\leq V^{\pi^k}_{h+1}(s')\leq V^*_{h+1}(s')\leq \bar{V}^k_{h+1}(s') \leq H$, we can trivially bound the error by $H$ and bound
\begin{align*}
    &\hat{P}_h^{k-1}\bar{V}^k_{h+1}(s,a) - P_hV^{\pi^k}_{h+1}(s,a)\\
    & \leq \min\brc*{\br*{1+\frac{1}{4H}}\underbrace{P_h\br*{\bar{V}^{k}_{h+1} - V^{\pi^k}_{h+1}}(s,a)}_{\ge0}+\frac{3H^2 S L^k_{\delta}}{n^{k-1}_h(s,a)\vee 1}+ \sqrt{\frac{2\VAR_{P_h(\cdot| s,a)}(V^*_{h+1})L^k_{\delta}}{n^{k-1}_h(s,a)\vee 1}} + \frac{HL^k_{\delta}}{n^{k-1}_h(s,a)\vee 1}, H}\\
    & \leq \br*{1+\frac{1}{4H}}P_h\br*{\bar{V}^{k}_{h+1} - V^{\pi^k}_{h+1}}(s,a)+\min\brc*{\sqrt{\frac{2\VAR_{P_h(\cdot| s,a)}(V^*_{h+1})L^k_{\delta}}{n^{k-1}_h(s,a)\vee 1}} + \frac{4H^2 S L^k_{\delta}}{n^{k-1}_h(s,a)\vee 1}, H}\\
    & \triangleq \br*{1+\frac{1}{4H}}P_h\br*{\bar{V}^{k}_{h+1} - V^{\pi^k}_{h+1}}(s,a)+b_{k,h}^{pv1}(s,a).
\end{align*}
Substituting back to \Cref{eq: value lower bound reward lookahead} while writing the linear operation $P_hV(s,a)$ as an expectation and letting the action be $a_h=\pi^k_h(s,\bR)$, we get under $\G$ for all $k\in[K]$, $h\in[H]$ and $s\in\Scal$ that
\begin{align*}
    &\bar{V}_h^{k}(s) - V_h^{\pi^k}(s) \\
    & \leq \E_{\bR\sim\Rcal_h(s)}\brs*{\hat{P}_h^{k-1}\bar{V}^k_{h+1}(s,\pi^k_h(s,\bR)) - P_hV^{\pi^k}_{h+1}(s,\pi^k_h(s,\bR)) + b_{k,h}^p(s,\pi^k_h(s,\bR))}+2b_{k,h}^r(s)\\
     & \leq  \E_{\bR\sim\Rcal_h(s)}\brs*{\br*{1+\frac{1}{4H}}\E\brs*{\bar{V}^{k}_{h+1}(s_{h+1}) - V^{\pi^k}_{h+1}(s_{h+1})\vert s_h=s,a_h}(s,a)+b_{k,h}^{pv1}(s,a_h) + b_{k,h}^p(s,a_h)}+2b_{k,h}^r(s)\\
     & = \E\brs*{\br*{1+\frac{1}{4H}}\br*{\bar{V}^{k}_{h+1}(s_{h+1}) - V^{\pi^k}_{h+1}(s_{h+1})} + b_{k,h}^{p}(s_h,a_h) + b_{k,h}^{pv1}(s_h,a_h) \vert s_h=s,\pi^k}+2b_{k,h}^r(s). 
\end{align*}
Next, taking $s=s_h^k$, the action $a_h=\pi^k_h(s,\bR)$ becomes $a_h^k$, and summing on all $k$, we can rewrite
\begin{align*}
    &\sum_{k=1}^K\bar{V}_h^{k}(s_h^k) - V_h^{\pi^k}(s_h^k) \\
     & \leq \sum_{k=1}^K\E\brs*{\br*{1+\frac{1}{4H}}\br*{\bar{V}^{k}_{h+1}(s_{h+1}^k) - V^{\pi^k}_{h+1}(s_{h+1}^k)} +  b_{k,h}^{p}(s_h^k,a_h^k) + b_{k,h}^{pv1}(s_h^k,a_h^k)  \vert F_{k,h-1}}+2\sum_{k=1}^Kb_{k,h}^r(s_h^k) \\
     & \overset{(1)}\leq \br*{1+\frac{1}{2H}}\br*{1+\frac{1}{4H}}\sum_{k=1}^K\br*{\bar{V}^{k}_{h+1}(s_{h+1}^k) - V^{\pi^k}_{h+1}(s_{h+1}^k)} \\
     &\quad+ 2 \sum_{k=1}^K\br*{b_{k,h}^{p}(s_h^k,a_h^k) + b_{k,h}^{pv1}(s_h^k,a_h^k)}+2\sum_{k=1}^Kb_{k,h}^r(s_h^k) + 68H^2 \ln\frac{8HK(K+1)}{\delta} \\
     & \overset{(2)}\leq \br*{1+\frac{1}{2H}}\br*{1+\frac{1}{4H}}\sum_{k=1}^K\br*{\bar{V}^{k}_{h+1}(s_{h+1}^k) - V^{\pi^k}_{h+1}(s_{h+1}^k)} +  \frac{1}{4H}\br*{1+\frac{1}{2H}}\sum_{k=1}^K \br*{\bar{V}^k_{h+1}(s_{h+1}^k)-V^{\pi^k}_{h+1}(s_{h+1}^k)}\\
     & \quad+ 18\sum_{k=1}^K\sqrt{\frac{\VAR_{P_h(\cdot| s_h^k,a_h^k)}(V^{\pi^k}_{h+1})L^k_{\delta}}{n^{k-1}_h(s_h^k,a_h^k)\vee 1}} 
        + \sum_{k=1}^K\frac{1620H^2SL^k_{\delta}}{n^{k-1}_h(s_h^k,a_h^k)\vee 1}+ 68H^2 \ln\frac{8HK(K+1)}{\delta} +2\sum_{k=1}^Kb_{k,h}^r(s_h^k) \\
     & \leq \br*{1+\frac{1}{2H}}^2\sum_{k=1}^K\br*{\bar{V}^{k}_{h+1}(s_{h+1}^k) - V^{\pi^k}_{h+1}(s_{h+1}^k)}+ 18\sum_{k=1}^K\frac{\sqrt{L^k_{\delta}\VAR_{P_{h}(\cdot| s_h^k,a_h^k)}(V^{\pi^k}_{h+1})}}{\sqrt{n^{k-1}_{h}(s_h^k,a_h^k)\vee 1}} \\
     &\quad+  \sum_{k=1}^K \frac{1700H^2SL^k_{\delta}}{n^{k-1}_{h}(s_h^k,a_h^k)\vee 1}+6\sum_{k=1}^K\sqrt{ \frac{AL^k_{\delta} }{2n^{k-1}_{h}(s)\vee 1}}
\end{align*}
where inequality $(1)$ holds when both $E^{\mathrm{diff}1}$ and $E^{bp}$ occur and inequality $(2)$ is by \Cref{lemma: reward lookahead bonus sum}. In the last inequality, we also substituted the definition of the reward bonus. Recursively applying this inequality up to $h=H+1$ (where both values are zero), w.p. at least $1-\delta$, we get 
\begin{align*}
    \Regret^R(K)
    &\leq \sum_{k=1}^K\br*{V^{*}_1(s_1^k) - V^{\pi^k}_1(s_1^k)}\\
    & \leq \sum_{k=1}^K\br*{\bar{V}_1^{k}(s_1^k) - V^{\pi^k}_1(s_1^k)}\tag{\Cref{lemma: optimism of values MVP-RL}}\\
    & \leq 18\br*{1+\frac{1}{2H}}^{2H}\sum_{k=1}^K\frac{\sqrt{L^k_{\delta}\VAR_{P_{h}(\cdot| s_h^k,a_h^k)}(V^{\pi^k}_{h+1})}}{\sqrt{n^{k-1}_{h}(s_h^k,a_h^k)\vee 1}} 
    + \br*{1+\frac{1}{2H}}^{2H}\sum_{k=1}^K \frac{1700H^2SL^k_{\delta}}{n^{k-1}_{h}(s_h^k,a_h^k)\vee 1} \\
    &\quad+6\br*{1+\frac{1}{2H}}^{2H}\sum_{k=1}^K\sqrt{ \frac{AL^k_{\delta} }{2n^{k-1}_{h}(s)\vee 1}} \\
    & \overset{(*)}\leq 100\sqrt{H^3SAK}L^K_{\delta} + 50\sqrt{2SA}H^2 \br*{L^K_{\delta}}^{1.5} \\
    & \quad + 5000H^2SL^K_{\delta}\cdot SAH\br*{2 + \ln(K)} + 12\sqrt{AL^K_{\delta}}\br*{SH + 2\sqrt{SH^2K}} \\
    & = \Ocal\br*{\sqrt{H^3SAK}L^K_{\delta} + H^3S^2A(L^K_{\delta})^2}.
\end{align*}
Relation $(*)$ is by \Cref{lemma: sum value variance bound reward lookahead} and \Cref{lemma: count sum bounds}.
\end{proof}

\clearpage
\subsubsection{Lemmas for Bounding Bonus Terms}

\begin{lemma}
\label{lemma: reward lookahead bonus sum}
    Conditioned on the good event $\G$, for any $h\in[H]$, it holds that
    \begin{align*}
        \sum_{k=1}^K \br*{b_{k,h}^{p}(s_h^k,a_h^k) + b_{k,h}^{pv1}(s_h^k,a_h^k)}
        &\leq  \frac{1}{8H}\br*{1+\frac{1}{2H}}\sum_{k=1}^K \br*{\bar{V}^k_{h+1}(s_{h+1}^k)-V^{\pi^k}_{h+1}(s_{h+1}^k)}\\
        &\quad + 9\sum_{k=1}^K\sqrt{\frac{\VAR_{P_h(\cdot| s_h^k,a_h^k)}(V^{\pi^k}_{h+1})L^k_{\delta}}{n^{k-1}_h(s_h^k,a_h^k)\vee 1}} 
        + \sum_{k=1}^K\frac{810H^2SL^k_{\delta}}{n^{k-1}_h(s_h^k,a_h^k)\vee 1}.
    \end{align*}
\end{lemma}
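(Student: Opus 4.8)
The plan is to condition on the good event $\G$, drop the $\min\{\cdot,H\}$ truncations in both $b^{p}_{k,h}$ and $b^{pv1}_{k,h}$ (which only increases the left-hand side), fix $h\in[H]$, and abbreviate $n_k\eqdef n^{k-1}_h(s^k_h,a^k_h)\vee 1$. The whole argument is built from three moves applied term-by-term. (i) The triangle inequality for the standard-deviation seminorm $W\mapsto\sqrt{\VAR_p(W)}$ replaces the variance of an optimistic or optimal value by $\sqrt{\VAR_{P_h}(V^{\pi^k}_{h+1})}$ plus the variance of a value \emph{gap}, e.g. $\sqrt{\VAR_p(\bar V^k_{h+1}-V^*_{h+1})}$ or $\sqrt{\VAR_{P_h}(V^*_{h+1}-V^{\pi^k}_{h+1})}$. (ii) Since $0\le \bar V^k_{h+1}-V^*_{h+1}\le H$ and $0\le V^*_{h+1}-V^{\pi^k}_{h+1}\le H$ by optimism (\Cref{lemma: optimism of values MVP-RL}), a gap variance satisfies $\VAR_p(W)\le\E_p[W^2]\le H\,\E_p[W]$, so it collapses to $\sqrt{H\cdot p(\bar V^k_{h+1}-V^*_{h+1})(s^k_h,a^k_h)}$ (and similarly for the other gap). (iii) Whenever the measure is $\hat P^{k-1}_h$ we pass to $P_h$ using \Cref{lemma: transition different to next state expectation} under $E^p(k)$ (incurring a multiplicative $\br*{1+1/\alpha}$ and an additive $\Ocal(\alpha HSL^k_\delta/n_k)$), and then use $\bar V^k_{h+1}-V^*_{h+1}\le\bar V^k_{h+1}-V^{\pi^k}_{h+1}$ and $V^*_{h+1}-V^{\pi^k}_{h+1}\le\bar V^k_{h+1}-V^{\pi^k}_{h+1}$ so that the surviving gap is always $P_h(\bar V^k_{h+1}-V^{\pi^k}_{h+1})(s^k_h,a^k_h)=\E[Y^k_{1,h}\mid F^R_{k,h-1}]$.

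Concretely, $b^{pv1}_{k,h}(s^k_h,a^k_h)\le\sqrt{2\VAR_{P_h}(V^*_{h+1})L^k_\delta/n_k}+4H^2SL^k_\delta/n_k$; moves (i)–(ii) turn this into $\sqrt{2\VAR_{P_h}(V^{\pi^k}_{h+1})L^k_\delta/n_k}+\sqrt{2HL^k_\delta\,\E[Y^k_{1,h}\mid F^R_{k,h-1}]/n_k}+4H^2SL^k_\delta/n_k$. For $b^{p}_{k,h}(s^k_h,a^k_h)\le\tfrac{20}{3}\sqrt{\VAR_{\hat P^{k-1}_h}(\bar V^k_{h+1})L^k_\delta/n_k}+\tfrac{400}{9}HL^k_\delta/n_k$, move (i) gives $\sqrt{\VAR_{\hat P^{k-1}_h}(\bar V^k_{h+1})}\le\sqrt{\VAR_{\hat P^{k-1}_h}(V^*_{h+1})}+\sqrt{\VAR_{\hat P^{k-1}_h}(\bar V^k_{h+1}-V^*_{h+1})}$; the first summand is $\le\sqrt{\VAR_{P_h}(V^*_{h+1})}+4H\sqrt{L^k_\delta/n_k}$ under $E^{pv2}(k)$, after which we reduce $\sqrt{\VAR_{P_h}(V^*_{h+1})}$ exactly as in the $b^{pv1}$ case, while the second summand is handled by moves (ii)–(iii). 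Collecting, each of the two bonuses contributes: (a) terms $c\sqrt{\VAR_{P_h}(V^{\pi^k}_{h+1})L^k_\delta/n_k}$ with total coefficient $\sqrt2+\tfrac{20}{3}<9$; (b) terms $\sqrt{HL^k_\delta\,q_k/n_k}$ with $q_k\le 2\,\E[Y^k_{1,h}\mid F^R_{k,h-1}]$ (folding the $\br*{1+1/\alpha}$ into the factor $2$ for $\alpha\ge1$); and (c) explicit remainders that sum to $\Ocal(H^2SL^k_\delta/n_k)$.

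Summing over $k$: part (a) is exactly the $9\sum_k\sqrt{\VAR_{P_h}(V^{\pi^k}_{h+1})L^k_\delta/n_k}$ term; for part (b) apply $\sqrt{xy}\le\tfrac12\br*{\mu x+\mu^{-1}y}$ with $\mu=\Theta(1/H)$ small enough that, after invoking $E^{\mathrm{diff}2}$ to replace $\sum_k\E[Y^k_{1,h}\mid F^R_{k,h-1}]$ by $\br*{1+\tfrac1{2H}}\sum_kY^k_{1,h}+\Ocal\br*{H^2\ln\tfrac{HK}{\delta}}$, the coefficient multiplying $\sum_kY^k_{1,h}=\sum_k\br*{\bar V^k_{h+1}(s^k_{h+1})-V^{\pi^k}_{h+1}(s^k_{h+1})}$ is at most $\tfrac1{8H}\br*{1+\tfrac1{2H}}$; the $\mu^{-1}$-contribution of part (b) together with the part-(c) remainders, choosing $\alpha=\Theta(H)$, sums to at most $810H^2S\sum_kL^k_\delta/n_k$ (the constant is deliberately loose), into which the additive $\Ocal\br*{H^2\ln\tfrac{HK}{\delta}}$ from $E^{\mathrm{diff}2}$ is absorbed since $\sum_kL^k_\delta/n_k$ is of at least that order for every $K\ge1$.

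I expect the main obstacle to be purely bookkeeping: choosing $\alpha$ and the AM–GM parameter $\mu$ consistently so that the accumulated coefficient on $\sum_k\br*{\bar V^k_{h+1}(s^k_{h+1})-V^{\pi^k}_{h+1}(s^k_{h+1})}$ stays below $\tfrac1{8H}\br*{1+\tfrac1{2H}}$ while the $\sqrt{\VAR}$-coefficient stays below $9$ and the residual stays below $810H^2S$, and verifying the (mild) fact that $\sum_kL^k_\delta/n_k$ dominates the $E^{\mathrm{diff}2}$ additive term for all $K$. The only genuinely non-mechanical step is the combination of the standard-deviation triangle inequality with $\VAR_p(W)\le H\,\E_p[W]$ for nonnegative bounded $W$ — this is the MVP ``monotone-bonus'' mechanism that makes the gap variances collapse into a small multiple of the value differences the outer recursion is designed to absorb — and here it must be carried out simultaneously for $b^{p}$ (empirical variance of optimistic values, routed through $V^*_{h+1}$ via $E^{pv2}$) and for the auxiliary $b^{pv1}$ (true variance of $V^*_{h+1}$).
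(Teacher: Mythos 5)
Your proposal follows essentially the same route as the paper's proof: it bounds each bonus via the standard-deviation triangle inequality plus $\VAR_p(W)\le H\,\E_p[W]$ for the optimism gaps and Young's inequality (which is exactly what the paper's \Cref{lemma: variance difference bound} and \Cref{lemma: variance difference bound with different measures} package), changes measure from $\hat{P}^{k-1}_h$ to $P_h$ via \Cref{lemma: transition different to next state expectation} under $E^p(k)$ and via $E^{pv2}(k)$ for the variance terms, and finally converts $\sum_k P_h(\bar V^k_{h+1}-V^{\pi^k}_{h+1})(s_h^k,a_h^k)$ to realized differences using $E^{\mathrm{diff}2}$, absorbing the resulting additive $\Ocal(H^2\ln\frac{HK}{\delta})$ into the $810H^2S$ residual exactly as the paper does. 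The only cosmetic difference is that you unfold the two variance-difference lemmas into their constituent steps and defer the AM--GM to the sum over $k$ rather than applying it pointwise.
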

\begin{proof}
    We start by analyzing each of the terms separately. First, we apply \Cref{lemma: variance difference bound with different measures} with $\alpha = \frac{20}{3}\cdot32HL^k_{\delta}$, noting that under the good event (by \Cref{lemma: optimism of values MVP-RL}), $0\leq V_{h+1}^{\pi^k}(s)\leq V_{h+1}^*(s)\leq \bar{V}_{h+1}^k(s)\leq H$ and using the event $E^{pv}$; doing so yields
    \begin{align*}
        b_{k,h}^{p}(s,a) 
        &\leq \frac{20}{3}\sqrt{\frac{\VAR_{\hat{P}^{k-1}_{h}(\cdot| s,a)}(\bar{V}^k_{h+1}) L^k_{\delta}}{n^{k-1}_{h}(s,a)\vee 1}} + \frac{400}{9}\frac{H L^k_{\delta}}{n^{k-1}_{h}(s,a) \vee 1} \\
        & \leq \frac{20\sqrt{L^k_{\delta}\VAR_{P_{h}(\cdot| s,a)}(V^{\pi^k}_{h+1})}}{3\sqrt{n^{k-1}_{h}(s,a)\vee 1}} 
        + \frac{1}{32H}P_h\br*{\bar{V}_{h+1}^k-V_{h+1}^{\pi^k}}(s,a) + \frac{1}{32H}\hat{P}^{k-1}_h\br*{\bar{V}_{h+1}^k-V_{h+1}^{\pi^k}}(s,a) \\
         &\quad+ \frac{6400H^2L^k_{\delta}}{9n^{k-1}_h(s,a)\vee 1} + \frac{20}{3}\frac{4HL^k_{\delta}}{n^{k-1}_h(s,a)\vee 1} + \frac{400}{9}\frac{H L^k_{\delta}}{n^{k-1}_{h}(s,a) \vee 1}
    \end{align*}
    Using \Cref{lemma: transition different to next state expectation} with $\alpha=1$, under the good event $E^p(k)$ and for any $s,a$, we can further bound
    \begin{align*}
        &\hat{P}^{k-1}_{h}\br*{\bar{V}^k_{h+1}-V^{\pi^k}_{h+1}}(s,a) \\
        &\quad = P_{h}\br*{\bar{V}^k_{h+1}-V^{\pi^k}_{h+1}}(s,a) 
        + \br*{\hat{P}^{k-1}_{h}-P_h}\br*{\bar{V}^k_{h+1}(s')-V^{\pi^k}_{h+1}}(s,a) \\
        &\quad\leq  P_{h}\br*{\bar{V}^k_{h+1}-V^{\pi^k}_{h+1}}(s,a) 
        +  P_{h}\br*{\bar{V}^k_{h+1}-V^{\pi^k}_{h+1}}(s,a) + \frac{H S L^k_{\delta}(1+ 2\cdot 1/4)}{n^{k-1}_h(s,a)\vee 1}\tag{\Cref{lemma: transition different to next state expectation}}\\
        &\quad\leq  2P_{h}\br*{\bar{V}^k_{h+1}-V^{\pi^k}_{h+1}}(s,a) +\frac{1.5H S L^k_{\delta}}{n^{k-1}_h(s,a)\vee 1} 
    \end{align*}
    Thus, we get the overall bound
    \begin{align*}
        b_{k,h}^{p}(s,a) 
        &\leq \frac{20\sqrt{L^k_{\delta}\VAR_{P_{h}(\cdot| s,a)}(V^{\pi^k}_{h+1})}}{3\sqrt{n^{k-1}_{h}(s,a)\vee 1}} 
        + \frac{3}{32H}P_h\br*{\bar{V}_{h+1}^k-V_{h+1}^{\pi^k}}(s,a)+ \frac{785H^2S L^k_{\delta}}{n^{k-1}_{h}(s,a) \vee 1}
    \end{align*}
    For the second bonus, we apply \Cref{lemma: variance difference bound} w.r.t. $V_{h+1}^{\pi^k}(s)\leq V_{h+1}^*(s)$ and $\alpha = 32\sqrt{2L^k_{\delta}}H$ and get
    \begin{align*}
        b_{k,h}^{pv1}(s,a) 
        &\leq \sqrt{\frac{2\VAR_{P_h(\cdot| s,a)}(V^*_{h+1})L^k_{\delta}}{n^{k-1}_h(s,a)\vee 1}} + \frac{4H^2SL^k_{\delta}}{n^{k-1}_h(s,a)\vee 1} \\
        &\leq \sqrt{\frac{2\VAR_{P_h(\cdot| s,a)}(V^{\pi^k}_{h+1})L^k_{\delta}}{n^{k-1}_h(s,a)\vee 1}} 
        + \frac{1}{32H}P_h\br*{V_{h+1}^*-V_{h+1}^{\pi^k}}(s,a) + \frac{16HL^k_{\delta}}{n^{k-1}_h(s,a)} + \frac{4H^2SL^k_{\delta}}{n^{k-1}_h(s,a)\vee 1} \\
        & \leq \sqrt{\frac{2\VAR_{P_h(\cdot| s,a)}(V^{\pi^k}_{h+1})L^k_{\delta}}{n^{k-1}_h(s,a)\vee 1}} 
        + \frac{1}{32H}P_h\br*{\bar{V}_{h+1}^k-V_{h+1}^{\pi^k}}(s,a) + \frac{20H^2SL^k_{\delta}}{n^{k-1}_h(s,a)\vee 1} 
    \end{align*}
    where we again used the optimism. Combining both and summing over all $k$, we get 
    \begin{align*}
        \sum_{k=1}^K \br*{b_{k,h}^{p}(s_h^k,a_h^k) + b_{k,h}^{pv1}(s_h^k,a_h^k)}
        &\leq  9\sum_{k=1}^K\sqrt{\frac{\VAR_{P_h(\cdot| s_h^k,a_h^k)}(V^{\pi^k}_{h+1})L^k_{\delta}}{n^{k-1}_h(s_h^k,a_h^k)\vee 1}} 
        + \frac{1}{8H}\sum_{k=1}^KP_h\br*{\bar{V}_{h+1}^k-V_{h+1}^{\pi^k}}(s_h^k,a_h^k)\\
        &\quad+ \sum_{k=1}^K\frac{805H^2SL^k_{\delta}}{n^{k-1}_h(s_h^k,a_h^k)\vee 1} 
    \end{align*}
    Finally, under the good event $E^{\mathrm{diff}2}$, it holds that 
    \begin{align*}
        \sum_{k=1}^K P_{h}\br*{\bar{V}^k_{h+1}-V^{\pi^k}_{h+1}}(s_h^k,a_h^k)  
        &= \sum_{k=1}^K \E\brs*{\bar{V}_{h+1}^k(s_{h+1}^k) - V_{h+1}^{\pi^k}(s_{h+1}^k)\vert F_{k,h-1}^R}\\
        &\leq \br*{1+\frac{1}{2H}}\sum_{k=1}^K \br*{\bar{V}^k_{h+1}(s_{h+1}^k)-V^{\pi^k}_{h+1}(s_{h+1}^k)} + 18H^2\ln\frac{8HK(K+1)}{\delta}.
    \end{align*}
    Substituting this relation back concludes the proof.
\end{proof}

\begin{lemma}
\label{lemma: sum value variance bound reward lookahead}
    Under the event $E^{\VAR}$ it holds that
    \begin{align*}
        \sum_{k=1}^K\sum_{h=1}^H \frac{\sqrt{\VAR_{P_{h}(\cdot| s_h^k,a_h^k)}(V^{\pi^k}_{h+1})}}{\sqrt{n^{k-1}_{h}(s_h^k,a_h^k)\vee 1}}
        \leq 2\sqrt{H^3SAKL^K_{\delta}} + \sqrt{8SA}H^2 L^K_{\delta}.
    \end{align*}
\end{lemma}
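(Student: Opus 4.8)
The plan is to use Cauchy--Schwarz over the index set $\brc*{(k,h):k\in[K],h\in[H]}$ to decouple the variance from the inverse-count, then control the aggregate variance via the event $E^{\VAR}$ together with the law of total variance (\Cref{lemma: ltv reward-lookahead}), and control the aggregate inverse-count via \Cref{lemma: count sum bounds}. Write $\sigma^k_h\eqdef\VAR_{P_h(\cdot\vert s_h^k,a_h^k)}(V^{\pi^k}_{h+1})$ for brevity, so the left-hand side equals $\sum_{k,h}\sqrt{\sigma^k_h}/\sqrt{n^{k-1}_h(s_h^k,a_h^k)\vee 1}$. By Cauchy--Schwarz this is at most $\sqrt{\sum_{k,h}\sigma^k_h}\cdot\sqrt{\sum_{k,h}\frac{1}{n^{k-1}_h(s_h^k,a_h^k)\vee 1}}$.

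For the first factor, observe that $\sigma^k_h=Y^k_{2,h}$, so under $E^{\VAR}$ we have $\sum_{k,h}\sigma^k_h\le 2\sum_{k,h}\E[\sigma^k_h\vert F_{k-1}]+4H^3\ln\frac{8HK(K+1)}{\delta}$. The key point is that $\pi^k$ and $s_1^k$ are $F_{k-1}$-measurable and $\pi^k$ is a deterministic reward-lookahead policy, so $\sum_{h}\E[\sigma^k_h\vert F_{k-1}]=\E\brs*{\sum_h\VAR_{P_h(\cdot\vert s_h,a_h)}(V^{\pi^k}_{h+1}(s_{h+1}))\,\vert\,\pi^k,s_1^k}$, which by \Cref{lemma: ltv reward-lookahead} is bounded by $\E\brs*{\br*{\sum_h R_h(s_h,a_h)-V_1^{\pi^k}(s_1^k)}^2\,\vert\,\pi^k,s_1^k}\le H^2$, using that both $\sum_h R_h(s_h,a_h)$ and $V_1^{\pi^k}(s_1^k)$ lie in $[0,H]$. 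Summing over $k$ and bounding $\ln\frac{8HK(K+1)}{\delta}\le L^K_\delta$ gives $\sum_{k,h}\sigma^k_h\le 2KH^2+4H^3L^K_\delta$.

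For the second factor, \Cref{lemma: count sum bounds} bounds $\sum_{k}\frac{1}{n^{k-1}_h(s_h^k,a_h^k)\vee 1}\le SA(1+\ln K)$ for each fixed $h$ (the counts along the trajectory of a fixed $(s,a,h)$ form a sequence $1,2,\dots$ so the sum telescopes into a harmonic sum), hence $\sum_{k,h}\frac{1}{n^{k-1}_h(s_h^k,a_h^k)\vee 1}\le HSA(1+\ln K)\le 2HSAL^K_\delta$. Combining the two factors with $\sqrt{a+b}\le\sqrt a+\sqrt b$ yields
\begin{align*}
    \sqrt{2KH^2+4H^3L^K_\delta}\cdot\sqrt{2HSAL^K_\delta}
    &\le\br*{\sqrt 2\,H\sqrt K+2H^{1.5}\sqrt{L^K_\delta}}\sqrt{2HSAL^K_\delta}\\
    &=2\sqrt{H^3SAKL^K_\delta}+\sqrt{8SA}\,H^2L^K_\delta,
\end{align*}
which is the claimed bound.

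The only delicate step is the middle one: recognizing that conditioning on $F_{k-1}$ freezes both the policy $\pi^k$ and the initial state $s_1^k$, so that the conditional expectation is exactly an LTV quantity for a fixed deterministic reward-lookahead policy; everything else is a routine application of Cauchy--Schwarz, the harmonic-sum count bound, and the logarithmic comparisons $\ln\frac{8HK(K+1)}{\delta}\le L^K_\delta$ and $1+\ln K\le 2L^K_\delta$ needed to match the stated constants.
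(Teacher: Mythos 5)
Your proof is correct and follows essentially the same route as the paper's: Cauchy--Schwarz to split the sum, the event $E^{\VAR}$ plus \Cref{lemma: ltv reward-lookahead} to bound the aggregate variance by $2H^2K+4H^3L^K_\delta$, and \Cref{lemma: count sum bounds} for the harmonic count sum. The only (harmless) nit is that the per-$h$ count bound is $SA(2+\ln K)$ rather than $SA(1+\ln K)$, which your subsequent absorption into $2HSAL^K_\delta$ covers anyway.
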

\begin{proof}
    Following Lemma 24 of \citep{efroni2021confidence}, by Cauchy-Schwartz inequality, it holds that
    \begin{align*}
        \sum_{k=1}^K\sum_{h=1}^H &\frac{\sqrt{\VAR_{P_{h}(\cdot| s_h^k,a_h^k)}(V^{\pi^k}_{h+1})}}{\sqrt{n^{k-1}_{h}(s_h^k,a_h^k)\vee 1}}
        \leq \sqrt{ \sum_{k=1}^K\sum_{h=1}^H\VAR_{P_{h}(\cdot| s_h^k,a_h^k)}(V^{\pi^k}_{h+1})}\sqrt{ \sum_{k=1}^K\sum_{h=1}^H \frac{1}{n^{k-1}_{h}(s_h^k,a_h^k)\vee 1}}.
    \end{align*}
    The second term can be bounded by \Cref{lemma: count sum bounds}, namely,
    \begin{align*}
        \sum_{k=1}^K\sum_{h=1}^H \frac{1}{n^{k-1}_{h}(s_h^k,a_h^k)\vee 1}
        \leq SAH\br*{2 + \ln(K)}.
    \end{align*}
    We further focus on bounding the first term. Under $E^{\VAR}$, we have
    \begin{align*}
        &\sum_{k=1}^K\sum_{h=1}^H\VAR_{P_{h}(\cdot| s_h^k,a_h^k)}(V^{\pi^k}_{h+1})\\
        &\leq 2\sum_{k=1}^K \E\brs*{\sum_{h=1}^H\VAR_{P_{h}(\cdot| s_h^k,a_h^k)}(V^{\pi^k}_{h+1})|F_{k-1}}  + 4H^3 \ln\frac{8HK(K+1)}{\delta} \tag{Under $E^{\VAR}$}\\
        & \leq 2\sum_{k=1}^K \E\brs*{\br*{\sum_{h=1}^H R_h(s_h^k,a_h^k) - V_1^{\pi^k}(s_1^k) }^2|F_{k-1}}  + 4H^3 \ln\frac{8HK(K+1)}{\delta} \tag{By \Cref{lemma: ltv reward-lookahead} }\\
        & \leq 2H^2K  + 4H^3 \ln\frac{8HK(K+1)}{\delta},
    \end{align*}
    where the last inequality is since both the values and cumulative rewards are bounded in $[0,H]$. Combining both, we get
    \begin{align*}
        \sum_{k=1}^K\sum_{h=1}^H \frac{\sqrt{\VAR_{P_{h}(\cdot| s_h^k,a_h^k)}(V^{\pi^k}_{h+1})}}{\sqrt{n^{k-1}_{h}(s_h^k,a_h^k)\vee 1}}
        &\leq \sqrt{2H^2K  + 4H^3 \ln\frac{8HK(K+1)}{\delta}} \sqrt{SAH\br*{2 + \ln(K)}} \\
        & \leq \sqrt{2H^2K  + 4H^3 \ln\frac{8HK(K+1)}{\delta}} \sqrt{2SAH\ln\frac{8HK(K+1)}{\delta}} \\
        & \leq 2\sqrt{H^3SAKL^K_{\delta}} + \sqrt{8SA}H^2 L^K_{\delta}.
    \end{align*}
\end{proof}
\clearpage


\section{Proofs for Transition Lookahead}
\label{appendix: transition lookahead proofs}
\subsection{Data Generation Process}
\label{appendix: transition lookahead data generation}
As for the reward transition, we also assume that all data was generated before the game starts for all state-action-timesteps, and it is given to the agent when the relevant  $(s,a,h)$ is visited. Thus, the rewards and next-state from the first $i^{th}$ visits at a state (or a state-action pair) at a certain timestep are i.i.d.

Throughout this appendix, we use the notation $\bs'^k_{h+1} = \brc*{s'^k_{h+1}(s_h^k,a)}_{a\in\Acal}$ to denote the next-state observations at episode $k$ and timestep $h$ for all the actions, and use the equivalent filtrations to the ones defined at \Cref{appendix: reward lookahead data generation}, namely
\begin{align*}
    &F_{k,h}=\sigma\br*{\brc*{s_t^1,a_t^1,\bs'^1_{t+1},R^1_t}_{t\in[H]},\dots, \brc*{s_t^{k-1},a_t^{k-1},\bs'^{k-1}_{t+1},R^{k-1}_t}_{t\in[H]}, \brc*{s_t^{k},a_t^{k},\bs'^{k}_{t+1},R^{k}_t}_{t\in[h]}},\\
    &F_{k}=\sigma\br*{\brc*{s_t^1,a_t^1,\bs'^1_{t+1}}_{t\in[H]},\dots, \brc*{s_t^{k},a_t^{k},\bs'^{k}_{t+1},R^{k}_t}_{t\in[H]},s_{1}^{k+1}}.
\end{align*}
In particular, notice that since both $\bs'^{k}_{h+1}$ and $a^k_h$ are $F_{k,h}$ measurable, then so does $s^k_{h+1}$.

\subsection{Extended MDP for Transition Lookahead}
\label{appendix: extended MDP transitions}
In this appendix, we present an equivalent extended MDP that embeds the lookahead into the state to fall under the vanilla MDP model, similarly to \Cref{appendix: extended MDP rewards}. We use this equivalence to apply various existing results on MDPs without the need to reprove them. We follow the same conventions as \Cref{appendix: extended MDP rewards} while denoting transition lookahead values by $V^{T,\pi}(s\vert\Mcal)$ (and again, the superscript $T$ will be omitted in subsequent subsections).

For any MDP $\Mcal=(\Scal,\Acal,H,P,\Rcal)$, let $\Mcal^T$ be an MDP of horizon $2H$ and state space $\Scal^{A+1}$ that separates the state transition and next-state generation as follows:
\begin{enumerate}
    \item Assume w.l.o.g. that $\Mcal$ starts at some initial state $s_1$. The extended environment starts at a state $s_1\times \bs'_0$, where $\bs'_0\in\Scal^A$ is a vector of $A$ copies of some arbitrary state $s_0\in\Scal$.
    \item For any $h\in[H]$, at timestep $2h-1$, the environment $\Mcal^T$ transitions from state $s_h\times \bs'_0$ to $s_h\times \bs'_{h+1}$, where $\bs'_{h+1}\sim P_h(s)$ is a vector containing the next state for all actions $a\in\Acal$; this transition happens regardless of the action that the agent played. At timestep $2h$, given an action $a_h$, the environment transitions from $s_h\times \bs'_{h+1}$ to $s'_{h+1}(a)\times \bs'_0$. 
    \item The rewards at odd steps $2h-1$ are zero, while the rewards at even steps $2h$ are $R_h(s_h,a_h)\sim \Rcal_h(s_h,a_h)$ of expectation $r_h(s_h,a_h)$.
\end{enumerate}
As before, since the next state is embedded into the extended state space, any state-dependent policy in $\Mcal^T$ is a one-step transition lookahead policy in the original MDP. Also, the policy at even timesteps does not affect either the rewards or transitions, so it does not affect the value in any way. We again couple the two environments to have the exact same randomness, so assuming that the policy at the even steps in $\Mcal^T$ is the same as the policy in $\Mcal$, we trivially get the following relation between the values
\begin{align}   
    \label{eq: extended transition MDP to standard value relation}
    & V_{2h}^\pi(s,\bs'\vert \Mcal^T) = \E\brs*{\sum_{t=h}^H R_t(s_t,a_t)\vert s_h=s,s'_{h+1}(s,\cdot)=\bs',\pi} \triangleq V_h^{T,\pi}(s,\bs'\vert \Mcal), \nonumber\\
    & V_{2h-1}^\pi(s, \bs'_0\vert \Mcal^T) = \E\brs*{\sum_{t=h}^H R_t(s_t,a_t)\vert s_h=s,\pi} = V_h^{T,\pi}(s\vert \Mcal) .
\end{align}
While $\Mcal^T$ is finite, it is exponential in size, so applying any standard algorithm in this environment would lead to exponentially-bad performance bounds. Nonetheless, as with the extended-reward environment, we use this representation to prove useful results on one-step transition lookahead. 
\clearpage
\transitionDP*
\begin{proof}
    We prove the result in the extended MDP $\Mcal^T$, in which (as with reward lookahead) the optimal value can be calculated using the Bellman equations as follows \citep{puterman2014markov}
    \begin{align}
        \label{eq: Bellman extended transition MDP}
        &V_{2H+1}^T(s,\bs'\vert \Mcal^T)=0, &\forall s\in\Scal, \bs'\in\Scal^A,\nonumber\\
        &V_{2h}^*(s,\bs'\vert \Mcal^T) = \max_a\brc*{r_h(s,a)+V_{2h+1}^*(s'(a), \bs'_0\vert \Mcal^T)}, &\forall h\in[H],s\in\Scal, \bs'\in\Scal^A,\nonumber\\
        &V_{2h-1}^*(s, \bs'_0\vert \Mcal^T) = \E_{\bs'\sim P_h(s)}\brs*{V_{2h}^*(s,\bs'\vert \Mcal^T)}, &\forall h\in[H],s\in\Scal.
    \end{align}
    By the equivalence between $\Mcal$ and $\Mcal^T$ for all policies, this is also the optimal value in $\Mcal$. Combining both recursion equations and substituting  \Cref{eq: extended transition MDP to standard value relation} leads to the stated value calculation for all $h\in[H]$ and $s\in\Scal$:
    \begin{align*}
        V^{T,*}_h(s\vert\Mcal) 
        &= V_{2h-1}^*(s, \bs'_0\vert \Mcal^T) \\
        & = \E_{\bs'\sim P_h(s)}\brs*{V_{2h}^*(s,\bs'_{h+1}\vert \Mcal^T)} \\
        & = \E_{\bs'\sim P_h(s)}\brs*{ \max_a\brc*{r_h(s,a)+V_{2h+1}^*(s'_{h+1}(a), \bs'_0\vert \Mcal^T)}} \\
        & = \E_{\bs'\sim P_h(s)}\brs*{ \max_a\brc*{r_h(s,a)+V^{T,*}_{h+1}(s'_{h+1}(a)\vert \Mcal)}}.
    \end{align*}
    
    In addition, a given state $s$ and next-state observations $\bs'$, the optimal policy at the even stages of the extended MDP is
    \begin{align*}
       \pi^*_{2h}(s,\bs')\in\argmax_{a\in\Acal}\brc*{r_h(s,a)+V^{*}_{2h+1}(s'(a))},
    \end{align*}
    alongside arbitrary actions at odd steps. Playing this policy in the original MDP will lead to the optimal one-step transition lookahead policy, as it achieves the optimal value of the original MDP. 
    By the value relations between the two environments ($V_{2h+1}^*(s, \bs'_0\vert \Mcal^T)=V_{h+1}^{T,*}(s\vert \Mcal)$), this is equivalent to the stated policy.
\end{proof}

\begin{remark}
    \label{remark: transition value with next-state observations}
    As in \Cref{remark: reward value with reward observations}, one could write the dynamic programming equations for any policy $\pi\in\Pi^T$, and not just to the optimal one, namely 
    \begin{align*}
        &V_{2h}^{\pi}(s,\bs'\vert \Mcal^T) = r_h(s,\pi(s,\bs'))+V_{2h+1}^*(s'(\pi_h(s,\bs')), \bs'_0\vert \Mcal^T), &\forall h\in[H],s\in\Scal, \bs'\in\Scal^A,\nonumber\\
        &V_{2h-1}^{\pi}(s, \bs'_0\vert \Mcal^T) = \E_{\bs'\sim P_h(s)}\brs*{V_{2h}^{\pi}(s,\bs'\vert \Mcal^T)}, &\forall h\in[H],s\in\Scal.
    \end{align*}
    In particular, following the notation of \Cref{eq: extended transition MDP to standard value relation}, we can write
    \begin{align*}
        &V_h^{T,\pi}(s,\bs'\vert \Mcal) =r_h(s,\pi_h(s,\bs'))+V_{h+1}^{T,\pi}(s'(\pi_h(s,\bs'))\vert \Mcal), \qquad \textrm{and,}\\
        & V_h^{T,\pi}(s\vert \Mcal) = \E_{\bs'\sim P_h(s)}\brs*{V_h^{T,\pi}(s,\bs'\vert \Mcal)}\\
        &\hspace{4.875em}=\E_{\bs'\sim P_h(s)}\brs*{r_h(s,\pi_h(s,\bs'))+V_{h+1}^{T,\pi}(s'(\pi_h(s,\bs'))\vert \Mcal)},
    \end{align*}
    a notation that will be extensively used for transition lookahead.
\end{remark}

\clearpage
We also prove a variation of the law of total variance (LTV) for transition lookahead:
\begin{lemma}
    \label{lemma: ltv transition-lookahead} 
    For any one-step transition lookahead policy $\pi\in\Pi^T$, it holds that
\begin{align*}
    \E\brs*{\sum_{h=1}^H\VAR_{\bs'\sim P_h(s_{h})}(V^{T,\pi}_{h}(s_{h},\bs') )\vert\pi,s_1} \leq \E\brs*{\br*{\sum_{h=1}^H r_h(s_h,a_h) - V_1^{T,\pi}(s_1) }^2\vert\pi,s_1}.
\end{align*}
\end{lemma}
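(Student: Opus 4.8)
The plan is to reproduce the argument of \Cref{lemma: ltv reward-lookahead}: pass to the extended MDP of \Cref{appendix: extended MDP transitions} and apply the ordinary law of total variance (\Cref{lemma: ltv no-lookahead}) there. The one new point is that the even-step rewards of $\Mcal^T$ are stochastic rather than state-determined; since both sides of the claimed identity depend on the reward distributions only through their means, I would instead work in the variant $\tilde{\Mcal}^T$ of $\Mcal^T$ in which the reward at every even step $2h$ is set to the \emph{deterministic} quantity $r_h(s_h,a_h)$. This replacement leaves every value function unchanged, so the value relations of \Cref{eq: extended transition MDP to standard value relation} remain valid for $\tilde{\Mcal}^T$: in particular $V^\pi_{2h}(s,\bs'\vert\tilde{\Mcal}^T)=V^{T,\pi}_h(s,\bs'\vert\Mcal)$ and $V^\pi_{1}(s_1,\bs'_0\vert\tilde{\Mcal}^T)=V^{T,\pi}_1(s_1\vert\Mcal)$, and the reward accumulated along any trajectory of $\tilde{\Mcal}^T$ is exactly $\sum_{h=1}^H r_h(s_h,a_h)$, whose expectation under $\pi$ from $s_1\times\bs'_0$ equals $V^{T,\pi}_1(s_1)$.

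Applying \Cref{lemma: ltv no-lookahead} to $\tilde{\Mcal}^T$, and writing $x_t,\rho_t$ for its state and reward at step $t$, gives
\begin{align*}
    &\E\brs*{\br*{\sum_{h=1}^H r_h(s_h,a_h)-V^{T,\pi}_1(s_1)}^2\vert\pi,s_1}\\
    &\qquad=\E\brs*{\sum_{t=1}^{2H}\VAR\big(\rho_t+V^\pi_{t+1}(x_{t+1}\vert\tilde{\Mcal}^T)\big\vert x_t\big)\vert\pi,s_1},
\end{align*}
the conditioning in the inner variance being on the state $x_t$ (not the action). It then remains to identify the odd- and even-step contributions. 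At an odd step $t=2h-1$ the state is $s_h\times\bs'_0$, the reward $\rho_t=0$, and the next state $s_h\times\bs'_{h+1}$ is produced by sampling $\bs'_{h+1}\sim P_h(s_h)$ regardless of the action, so that term equals $\VAR_{\bs'\sim P_h(s_{h})}(V^\pi_{2h}(s_h,\bs'\vert\tilde{\Mcal}^T))=\VAR_{\bs'\sim P_h(s_{h})}(V^{T,\pi}_{h}(s_{h},\bs'))$ by the value relation and \Cref{remark: transition value with next-state observations}. At an even step $t=2h$ both the reward $r_h(s_h,a_h)$ and the transition to $s'_{h+1}(a_h)\times\bs'_0$ are deterministic once the action $a_h\sim\pi_h(s_h,\bs'_{h+1})$ is drawn, so that term equals $\VAR_{a\sim\pi_h(s_h,\bs'_{h+1})}(r_h(s_h,a)+V^{T,\pi}_{h+1}(s'_{h+1}(a)))\ge0$, and it is $0$ whenever $\pi$ is deterministic.

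Summing over $h$, the odd-step terms reproduce exactly $\E\brs*{\sum_{h=1}^H\VAR_{\bs'\sim P_h(s_{h})}(V^{T,\pi}_{h}(s_{h},\bs'))\vert\pi,s_1}$, so discarding the nonnegative even-step terms yields the claimed inequality (in fact equality when $\pi$ is deterministic). I do not foresee a real obstacle; the two places that need a little care are (i) the reduction from $\Mcal^T$ to $\tilde{\Mcal}^T$ --- i.e.\ observing that replacing the reward realizations by their means changes neither side, both being functionals of value functions and expected rewards only --- and (ii) tracking the step parity, the key structural fact being that in $\tilde{\Mcal}^T$ the next-state randomness sits entirely on the odd steps while the even-step transitions (and rewards) are action-determined, so the even-step variances are precisely the ones discarded.
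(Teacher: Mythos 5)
Your proposal is correct and follows essentially the same route as the paper: pass to the extended MDP $\Mcal^T$, apply the no-lookahead law of total variance there (with total expected reward $\sum_h r_h(s_h,a_h)$), identify the odd-step conditional variances with $\VAR_{\bs'\sim P_h(s_h)}(V^{T,\pi}_h(s_h,\bs'))$ via \Cref{eq: extended transition MDP to standard value relation}, and discard the nonnegative even-step terms. Your extra step of replacing the stochastic even-step rewards by their means is a harmless precaution that the paper handles implicitly by stating the LTV in terms of expected rewards.
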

\begin{proof}
    We apply the law of total variance in the extended MDP; there, the expected rewards are either $0$ (at odd steps) or $r_h(s_h,a_h)$ (at even steps), so the total expected rewards are $\sum_{h=1}^Hr_h(s_h,a_h)$. Hence, by  \Cref{lemma: ltv no-lookahead},
    \begin{align*}
        \E&\brs*{\br*{\sum_{h=1}^H r_h(s_h,a_h) - V_{1}^\pi(s_1,\bs'_0\vert \Mcal^T) }^2\vert\pi,s_1} \\
        &= \E\brs*{\underbrace{\sum_{h=1}^H\VAR(V^\pi_{2h}(s_{h},\bs'_{h+1}\vert \Mcal^T)\vert (s_h,\bs'_0) )}_{\textrm{Odd steps}}  + \underbrace{\sum_{h=1}^H\VAR(V^\pi_{2h+1}(s_{h+1},\bs'_0\vert \Mcal^T)\vert (s_h,\bs'_{h+1}) )}_{\textrm{Even steps}}\vert\pi,s_1} \\
        & \geq\E\brs*{\sum_{h=1}^H\VAR(V^\pi_{2h}(s_{h},\bs_{h+1}\vert \Mcal^T)\vert (s_h,\bs'_0) )\vert\pi,s_1} \\
        & = \E\brs*{\sum_{h=1}^H\VAR_{\bs'\sim P_h(s_h)}(V^\pi_{2h}(s_{h},\bs'\vert \Mcal^T) )\vert\pi,s_1} \\
        & = \E\brs*{\sum_{h=1}^H\VAR_{\bs'\sim P_h(s_h)}(V^{T,\pi}_{h}(s_{h},\bs'\vert \Mcal) )\vert\pi,s_1}.
    \end{align*}
    Using again the identity $V_{1}^\pi(s_1,\bs'_0\vert \Mcal^T) = V_{1}^{T,\pi}(s_1\vert \Mcal)$ leads to the desired result.
\end{proof}

Finally, prove a value-difference lemma also for transition lookahead
\begin{lemma}[Value-Difference Lemma with Transition Lookahead]
    Let $\Mcal_1=(\Scal,\Acal,H,P^1,\Rcal^1)$ and $\Mcal_2=(\Scal,\Acal,H,P^2,\Rcal^2)$ be two environments. For any deterministic one-step transition lookahead policy $\pi\in\Pi^T$, any $h\in[H]$ and $s\in\Scal$, it holds that
    \begin{align*}
        V_h^{T,\pi}(s\vert \Mcal_1) &-  V_h^{T,\pi}(s\vert \Mcal_2) \\
        &=\E_{\Mcal_1}\brs*{r^1_h(s_h,\pi_h(s_h,\bs'_{h+1})) - r^2_h(s_h,\pi_h(s_h,\bs'_{h+1}))\vert s_h=s} \\
        &\quad+ \E_{\Mcal_1}\brs*{V_{h+1}^{T,\pi}(s_{h+1}\vert \Mcal_1) -  V_{h+1}^{T,\pi}(s_{h+1}\vert \Mcal_2)\vert s_h=s}\\
        &\quad + \E_{\Mcal_1}\brs*{\E_{\bs'\sim P^1_h(s_h)}\brs*{V_{h}^{T,\pi}(s_h,\bs'\vert \Mcal_2)} - \E_{\bs'\sim P^2_h(s_h)}\brs*{V_{h}^{T,\pi}(s_h,\bs'\vert \Mcal_2)}\vert s_h=s}.
    \end{align*}
    where $V_h^{T,\pi}(s,\bs'\vert \Mcal) $ is the value at a state given the reward realization, defined in \Cref{eq: extended transition MDP to standard value relation} and given in \Cref{remark: transition value with next-state observations}.
\end{lemma}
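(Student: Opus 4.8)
The plan is to mimic the proof of \Cref{lemma: value-difference reward lookahead}, but working in the extended transition MDPs $\Mcal_1^T$ and $\Mcal_2^T$ of horizon $2H$ (as in \Cref{appendix: extended MDP transitions}), in which the lookahead information is embedded into the state so that both the environments and the policy $\pi$ become Markovian and the Bellman equations recorded in \Cref{remark: transition value with next-state observations} apply. I would keep the two extended environments coupled, as in \Cref{appendix: extended MDP transitions}, so that a realized next-state vector $\bs'$ at step $h$ is shared between them. First I would use the value identity in \Cref{eq: extended transition MDP to standard value relation} to write $V_h^{T,\pi}(s\vert\Mcal_i) = V_{2h-1}^\pi(s,\bs'_0\vert\Mcal_i^T)$, and then expand the odd-step Bellman equation to get
\[
V_h^{T,\pi}(s\vert\Mcal_1) - V_h^{T,\pi}(s\vert\Mcal_2) = \E_{\bs'\sim P^1_h(s)}\brs*{V_{2h}^\pi(s,\bs'\vert\Mcal_1^T)} - \E_{\bs'\sim P^2_h(s)}\brs*{V_{2h}^\pi(s,\bs'\vert\Mcal_2^T)}.
\]

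Next I would add and subtract $\E_{\bs'\sim P^1_h(s)}\brs*{V_{2h}^\pi(s,\bs'\vert\Mcal_2^T)}$. Using $V_{2h}^\pi(s,\bs'\vert\Mcal_2^T) = V_h^{T,\pi}(s,\bs'\vert\Mcal_2)$, the resulting cross term $\E_{\bs'\sim P^1_h(s)}\brs*{V_h^{T,\pi}(s,\bs'\vert\Mcal_2)} - \E_{\bs'\sim P^2_h(s)}\brs*{V_h^{T,\pi}(s,\bs'\vert\Mcal_2)}$ is exactly the third term in the statement (the outer $\E_{\Mcal_1}[\,\cdot\mid s_h=s]$ being trivial since $s_h=s$ is already fixed). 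The remaining term is $\E_{\bs'\sim P^1_h(s)}\brs*{V_{2h}^\pi(s,\bs'\vert\Mcal_1^T) - V_{2h}^\pi(s,\bs'\vert\Mcal_2^T)}$, which I would expand via the even-step Bellman equation: writing $a_h = \pi_h(s,\bs')$ for the action taken (the same in both environments since the policy and the realized $\bs'$ are shared), and recalling that at an even step the transition is the \emph{deterministic} move to $s'(a_h)\times\bs'_0$ shared by $\Mcal_1^T$ and $\Mcal_2^T$, this difference equals
\[
r^1_h(s,a_h) - r^2_h(s,a_h) + \br*{V_{h+1}^{T,\pi}(s'(a_h)\vert\Mcal_1) - V_{h+1}^{T,\pi}(s'(a_h)\vert\Mcal_2)},
\]
using $V_{2h+1}^\pi(s',\bs'_0\vert\Mcal^T) = V_{h+1}^{T,\pi}(s'\vert\Mcal)$.

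Finally I would take the expectation over $\bs'\sim P^1_h(s)$: under $\Mcal_1$'s dynamics $s'(\pi_h(s,\bs')) = s_{h+1}$, so the reward part becomes $\E_{\Mcal_1}\brs*{r^1_h(s_h,\pi_h(s_h,\bs'_{h+1})) - r^2_h(s_h,\pi_h(s_h,\bs'_{h+1}))\mid s_h=s}$ and the value part becomes $\E_{\Mcal_1}\brs*{V_{h+1}^{T,\pi}(s_{h+1}\vert\Mcal_1) - V_{h+1}^{T,\pi}(s_{h+1}\vert\Mcal_2)\mid s_h=s}$, which are precisely the first two terms of the lemma. Summing the three pieces closes the proof. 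I do not expect a genuine obstacle here, since the argument is a direct transcription of the reward-lookahead proof; the one point that needs care is the coupling bookkeeping — making explicit that at even steps the next state is a deterministic function of the already-revealed $\bs'$ and the action, so the entire transition-kernel discrepancy between $P^1_h$ and $P^2_h$ is localized at the odd step and contributes nothing beyond the change-of-measure cross term.
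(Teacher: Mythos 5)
Your proposal is correct and follows essentially the same route as the paper's proof: pass to the coupled extended MDPs, expand the odd-step Bellman equation, add and subtract $\E_{\bs'\sim P^1_h(s)}[V_{2h}^\pi(s,\bs'\vert\Mcal_2^T)]$ to isolate the change-of-measure cross term, and expand the remaining difference via the deterministic even-step transition to obtain the reward and next-value terms. No gaps.
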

\begin{proof}
    We again work with the extended MDPs $\Mcal_1^T,
    \Mcal_2^T$ and use their Bellman equations, namely,
    \begin{align*}
        &V_{2h}^{\pi}(s,\bs'\vert \Mcal^T) = r_h(s,\pi(s,\bs'))+V_{2h+1}^*(s'(\pi_h(s,\bs')), \bs'_0\vert \Mcal^T), &\forall h\in[H],s\in\Scal, \bs'\in\Scal^A,\nonumber\\
        &V_{2h-1}^{\pi}(s, \bs'_0\vert \Mcal^T) = \E_{\bs'\sim P_h(s)}\brs*{V_{2h}^{\pi}(s,\bs'\vert \Mcal^T)}, &\forall h\in[H],s\in\Scal.
    \end{align*}
    Using the relation between the value of the original and extended MDP (\cref{eq: extended transition MDP to standard value relation}) and the Bellman equations of the extended MDP, for any $h\in[H]$, we have
    \begin{align}
    \label{eq: transition value diff basic decomp}
        &V_h^{T,\pi}(s\vert \Mcal_1) -  V_h^{T,\pi}(s\vert \Mcal_2) \nonumber\\
        & = V_{2h-1}^\pi(s, \bs'_0\vert \Mcal^T_1) -  V_{2h-1}^\pi(s, \bs'_0\vert \Mcal^T_2)  \nonumber\\
        & = \E_{\bs'\sim P^1_h(s)}\brs*{V_{2h}^{\pi}(s,\bs'\vert \Mcal^T_1)} - \E_{\bs'\sim P^2_h(s)}\brs*{V_{2h}^{\pi}(s,\bs'\vert \Mcal^T_2)}  \nonumber\\
        & = \E_{\bs'\sim P^1_h(s)}\brs*{V_{2h}^{\pi}(s,\bs'\vert \Mcal^T_1) - V_{2h}^{\pi}(s,\bs'\vert \Mcal^T_2)} + \E_{\bs'\sim P^1_h(s)}\brs*{V_{2h}^{\pi}(s,\bs'\vert \Mcal^T_2)} - \E_{\bs'\sim P^2_h(s)}\brs*{V_{2h}^{\pi}(s,\bs'\vert \Mcal^T_2)}  \nonumber\\
        & = \E_{\bs'\sim P^1_h(s)}\brs*{V_{2h}^{\pi}(s,\bs'\vert \Mcal^T_1) - V_{2h}^{\pi}(s,\bs'\vert \Mcal^T_2)} + \E_{\bs'\sim P^1_h(s)}\brs*{V_{h}^{T,\pi}(s,\bs'\vert \Mcal_2)} - \E_{\bs'\sim P^2_h(s)}\brs*{V_{h}^{T,\pi}(s,\bs'\vert \Mcal_2)} \nonumber\\
        & = \E_{\Mcal_1}\brs*{V_{2h}^{\pi}(s_h,\bs'_{h+1}\vert \Mcal^T_1) - V_{2h}^{\pi}(s_h,\bs'_{h+1}\vert \Mcal^T_2)\vert s_h=s} \nonumber\\
        &\quad+ \E_{\bs'\sim P^1_h(s)}\brs*{V_{h}^{T,\pi}(s,\bs'\vert \Mcal_2)} - \E_{\bs'\sim P^2_h(s)}\brs*{V_{h}^{T,\pi}(s,\bs'\vert \Mcal_2)} .
    \end{align}
    Denoting $a_h=\pi_h(s_h,\bs'_{h+1})$ the action taken by the agent at environment $\Mcal_1$, We have
    \begin{align*}
        V_{2h}^{\pi}&(s_h,\bs'_{h+1}\vert \Mcal^T_1) - V_{2h}^{\pi}(s_h,\bs'_{h+1}\vert \Mcal^T_2) \\
        & = \br*{r^1_h(s_h,a_h)+ V_{2h+1}^{\pi}(s_{h+1}'(a_h), \bs'_0\vert \Mcal^T_1)} - \br*{r^2_h(s_h,a_h)+V_{2h+1}^{\pi}(s_{h+1}'(a_h), \bs'_0\vert \Mcal^T_2)} \\
        & = r^1_h(s_h,a_h) - r^2_h(s_h,a_h) +  V_{h+1}^{T,\pi}(s'_{h+1}(a_h)\vert \Mcal_1) -  V_{h+1}^{T,\pi}(s'_{h+1}(a_h)\vert \Mcal_2),
    \end{align*}
    when taking the expectation w.r.t. $\Mcal_1$, it holds that $s'_{h+1}(a_h)=s_{h+1}$; substituting this back into \Cref{eq: transition value diff basic decomp}, we get 
    \begin{align*}
        V_h^{\pi}&(s\vert \Mcal_1) -  V_h^{\pi}(s\vert \Mcal_2) \\
        & = \E_{\Mcal_1}\brs*{r^1_h(s_h,a_h) - r^2_h(s_h,a_h) +  V_{h+1}^{T,\pi}(s'_{h+1}(a_h)\vert \Mcal_1) -  V_{h+1}^{T,\pi}(s'_{h+1}(a_h)\vert \Mcal_2)\vert s_h=s} \nonumber\\
        &\quad+ \E_{\bs'\sim P^1_h(s)}\brs*{V_{h}^{T,\pi}(s,\bs'\vert \Mcal_2)} - \E_{\bs'\sim P^2_h(s)}\brs*{V_{h}^{T,\pi}(s,\bs'\vert \Mcal_2)} \\
        & = \E_{\Mcal_1}\brs*{r^1_h(s_h,\pi_h(s_h,\bs'_{h+1})) - r^2_h(s_h,\pi_h(s_h,\bs'_{h+1}))\vert s_h=s} \\
        &\quad+ \E_{\Mcal_1}\brs*{V_{h+1}^{T,\pi}(s_{h+1}\vert \Mcal_1) -  V_{h+1}^{T,\pi}(s_{h+1}\vert \Mcal_2)\vert s_h=s}\\
        &\quad + \E_{\Mcal_1}\brs*{\E_{\bs'\sim P^1_h(s_h)}\brs*{V_{h}^{T,\pi}(s_h,\bs'\vert \Mcal_2)} - \E_{\bs'\sim P^2_h(s_h)}\brs*{V_{h}^{T,\pi}(s_h,\bs'\vert \Mcal_2)}\vert s_h=s}.
    \end{align*}   
\end{proof}

\clearpage

\subsection{Full Algorithm Description for Transition Lookahead}
\label{appendix: MVP for transition lookahead}

\begin{algorithm}[ht]
\caption{Monotonic Value Propagation with Transition Lookahead (MVP-TL)} \label{alg: MVP transition lookahead}
\begin{algorithmic}[1]
\STATE {\bf Require:} $\delta\in(0,1)$, bonuses $b_{k,h}^r(s,a), b_{k,h}^p(s)$
\FOR{$k=1,2,...$}
    \STATE  Initialize $\bar{V}^k_{H+1}(s)=0$ 
    \FOR{$h=H,H-1,..,1$}   
         \FOR{$s\in\Scal$}
            \IF{$n_h^{k-1}(s)=0$}
                \STATE $\bar{V}^k_h(s) = H$
            \ELSE
                \STATE Calculate the truncated values \label{algline:VI transition lookahead}
                {\small\begin{align*}
                    &\bar{V}^k_h(s) = \min\brc*{\frac{1}{n_h^{k-1}(s)} \sum_{t=1}^{n_h^{k-1}(s)}\max_{a\in\Acal}\brc*{\hat{r}_h^{k-1}(s,a) + b_{k,h}^{r}(s,a) +\bar{V}^k_{h+1}(s'^{k^t_h(s)}_{h+1}(s,a))} + b^p_{k,h}(s), H} 
                \end{align*}}
            \ENDIF
            \STATE For any set of next-states $\bs'\in\Scal^A$, define the policy $\pi^k$ 
            \begin{align*}
                \pi_h^k(s,\bs') \in\argmax_{a\in\Acal}\brc*{\hat{r}_h^{k-1}(s,a) + b_{k,h}^{r}(s,a) +\bar{V}^k_{h+1}(s'(a))}
            \end{align*}
        \ENDFOR
    \ENDFOR
    \FOR{$h=1,2,\dots H$}
        \STATE Observe $s_h^k$ and $\bs'^k_{h+1} = \brc*{s'^k_{h+1}(s_h^k,a)}_{a\in\Acal}$
        \STATE Play an action 
        $a_h^k=\pi_h^k(s_h^k,\bs'^k_h)$
        \STATE Collect the reward $R^k_h\sim\Rcal_h(s_h^k,a_h^k)$ and transition to the next state $s^k_{h+1}=s'^k_{h+1}(s_h^k,a_h^k)$
    \ENDFOR
    \STATE Update the empirical estimators and counts for all visited state-actions
\ENDFOR
\end{algorithmic}
\end{algorithm}
As with reward lookahead, we again use a variant of the MVP algorithm \citep{zhang2021reinforcement}, described in \Cref{alg: MVP transition lookahead}. For the bonuses, we use the notation
\begin{align*}
    \bar{V}^k_h(s,\bs') = \max_{a\in\Acal}\brc*{\hat{r}_h^{k-1}(s,a) + b_{k,h}^{r}(s,a) +\bar{V}^k_{h+1}(s'(a)}
\end{align*}
and define the following bonuses:\begin{align*}
    &b_{k,h}^r(s,a) = \min\brc*{\sqrt{ \frac{L^k_{\delta} }{n^{k-1}_{h}(s,a)\vee 1}},1},\\
    &b_{k,h}^{p}(s) = \frac{20}{3}\sqrt{\frac{\VAR_{\bs'\sim\hat{P}^{k-1}_{h}(s)}(\bar{V}^k_{h}(s,\bs')) L^k_{\delta}}{n^{k-1}_{h}(s)\vee 1}} + \frac{400}{3}\frac{H L^k_{\delta}}{n^{k-1}_{h}(s) \vee 1},  
\end{align*}
where $L^k_{\delta}=\ln\frac{16S^3A^2Hk^2(k+1)}{\delta}$ and
\begin{align*}
    \VAR_{\bs'\sim\hat{P}^{k-1}_{h}(s)}(\bar{V}^k_{h}(s,\bs')) = \E_{\bs'\sim\hat{P}^{k-1}_h(s)}\brs*{\bar{V}^k_{h}(s,\bs')^2} - \br*{\E_{\bs'\sim\hat{P}^{k-1}_h(s)}\brs*{\bar{V}^k_{h}(s,\bs')}}^2.
\end{align*}
The notation $k_h^t(s)$ again represents the $t^{th}$ episode where the state $s$ was visited at the $h^{th}$ timestep; in particular, line \ref{algline:VI transition lookahead} of the algorithm is the expectation w.r.t. the empirical reward distribution $\hat{P}^{k-1}_h(s)$. Since the transition bonus is larger than $H$ when $n^{k-1}_{h}(s)=0$, we can arbitrarily define the expectation w.r.t. $\hat{P}^{k-1}_h(s)$ when $n^{k-1}_{h}(s)=0$ to be 0, and one could write the update in a more concise way as
\begin{align*}
    &\bar{V}^k_h(s) = \min\brc*{\E_{\bs'\sim\hat{P}^{k-1}_h(s)}\brs*{\bar{V}^k_h(s,\bs')} + b^p_{k,h}(s), H}.
\end{align*}

\clearpage

\subsection{Additional Notations and List Representation}
\label{appendix: transition lookahead list representation}
In this subsection, we present additional notations for both values and transition distributions that will be helpful in the analysis. In particular, we show that instead of looking at the distribution over all combinations of next state $\bs'\in\Scal^A$, we can look at a ranking of all the next-state-actions and represent important quantities using the effective distribution on these ranks -- this moves the problem from being $S^A$-dimensional to a dimension of $SA$.

We start by defining the values starting from state $s\in\Scal$, playing $a\in\Acal$ and transitioning to $s'\in\Scal$, denoted by
\begin{align*}
    &V^{\pi}_h(s,s',a) = r_h(s,a) +V^{\pi}_{h+1}(s'),\\
    &V^{*}_h(s,s',a) = r_h(s,a) +V^{*}_{h+1}(s'), \\
    & \bar{V}^k_h(s,s',a) = \hat{r}_h^{k-1}(s,a) + b_{k,h}^{r}(s,a) +\bar{V}^k_{h+1}(s'),
\end{align*}
We similarly define (consistently with \Cref{remark: transition value with next-state observations})
\begin{align*}
    &V^{\pi}_h(s,\bs') = V^{\pi}_h(s,s'(\pi_h(s,\bs')),\pi_h(s,\bs')),\\
    &V^{*}_h(s,\bs') = \max_aV^{*}_h(s,s'(a),a),\qquad\qquad\quad \textrm{ and },\\
    &\bar{V}^k_h(s,\bs') = \max_a \bar{V}^k_h(s,s'(a),a).
\end{align*}

\textbf{List representation.} We now move to defining lists of next-state-actions and distributions with respect to such lists. Let $\ell$ be a list that orders all next-state-action pairs from $(s'_{\ell(1)},a_{\ell(1)})$ to $(s'_{\ell(SA)},a_{\ell(SA)})$ and define the set of all possible lists to be $\Lcal$ (with $\abs{\Lcal}=(SA)!$). Also, define $\ell^u$, the list induced by a function $u:\Scal\times\Acal\mapsto \R$ such that $u(s'_{\ell^u(1)},a_{\ell^u(1)})\ge\dots\ge u(s'_{\ell^u(SA)},a_{\ell^u(SA)})$, where ties are broken in any fixed arbitrary way.  From this point forward, for brevity and when clear from the context, we omit the list from the indexing, e.g., write the list $\ell$ by $(s'_1,a_1),\dots,(s'_{SA},a_{SA})$.

We now define the probability of list elements. Denote by $E^{\ell}_i$ the event that the highest-ranked realized element in the list is element $i$, namely
\begin{align}
    \label{eq: list probability}
    E^{\ell}_i=\brc*{\bs'\in\Scal^A : s'(a_i)=s'_i \; \textrm{ and } \; \forall j<i, s'(a_j)\ne s'_j}.
\end{align}
Then, for a probability measure $P$ on $\Scal^A$, define $\mu(i\vert \ell, P)=P(\bs'\in E^{\ell}_i)$. Notably, when the list is induced by $u$ and element $i$ is the realized highest-ranked elements, we can write $\max_au(s'(a),a)=u(s'_i,a_i)$, so we have that (e.g. by \Cref{lemma: empirical expectation of subsets} with $f(\bs')=\max_au(s'(a),a)$)
\begin{align*}
    \E_{\bs'\sim P_h(s)}\brs*{\max_a\brc*{u(s'(a),a)}} = \E_{i\sim\mu(\cdot\vert \ell,  P_h(s))}\brs*{u(s'_i,a_i)}
\end{align*}
We also denote by $\hat{\mu}^{k}_{h}(i\vert s; \ell)=\frac{1}{n_h^{k}(s)\vee1}\sum_{t=1}^K\Ind{s_h^t=s,\bs'^t_{h+1}\in E_i^{\ell}}$, the empirical probability for a list location $i$ to be the highest-realized ranking according to a list $\ell$ at state $s$ and step $h$, based on samples up to episode $k$; We have by \Cref{lemma: empirical expectation of subsets} that $\hat{\mu}^{k}_{h}(i\vert s; \ell)=\hat{P}^{k}_{h}(E^{\ell}_i\vert s)$ and
\begin{align*}
    \E_{\bs'\sim \hat{P}^{k-1}_h(s)}\brs*{\max_a\brc*{u(s'(a),a)}} = \E_{i\sim\hat{\mu}^{k-1}_{h}(\cdot\vert s; \ell^u)}\brs*{u(s'_i,a_i)}.
\end{align*}
Similarly, we will require the distribution probability w.r.t. two lists -- the probability that the top element w.r.t. list $\ell$ is $i$ and the top element w.r.t. list $\ell'$ is $j$; we denote the real and empirical probability distributions by $\mu(i,j\vert \ell,\ell',P)$ and $\hat{\mu}^{k}_{h}(i,j\vert s; \ell,\ell')$, respectively. This allows, for example, using \Cref{lemma: empirical expectation of subsets} to write for any $u,v:\Scal\times\Acal\mapsto \R$,
\begin{align}
     &\E_{\bs'\sim P_h(s)}\brs*{\max_a\brc*{u(s'(a),a)} - \max_a\brc*{v(s'(a),a)}} \nonumber\\
     &\hspace{12em}=\E_{i,j\sim\mu(\cdot\vert \ell^u,\ell^v, P_h(s))}\brs*{u(s'_{\ell^u(i)},a_{\ell^u(i)}) - v(s'_{\ell^v(j)},a_{\ell^v(j)})},\nonumber\\
     &\E_{\bs'\sim \hat{P}^{k-1}_h(s)}\brs*{\max_a\brc*{u(s'(a),a)} - \max_a\brc*{v(s'(a),a)}} \nonumber\\
     &\hspace{12em}=\E_{i,j\sim\hat{\mu}^{k}_{h}(\cdot\vert s;\ell^u,\ell^v)}\brs*{u(s'_{\ell^u(i)},a_{\ell^u(i)}) - v(s'_{\ell^v(j)},a_{\ell^v(j)})}.
     \label{eq: diff in list representation}
\end{align}
Finally, we say that a policy $\pi_h(s,\bs')$ is induced by lists $\ell_h(s)$ if it chooses an action $a$ such that its next-state $s'(a)$ is ranked higher in $\ell$ than all other realized next-state-action pairs. In particular, the policy $\pi^k$ and the optimal policy $\pi^*$ (defined in \Cref{prop: transition DP}) are such policies w.r.t. the lists $\bar{\ell}^k_h(s)$ and $\ell^*_h(s)$ -- induced by $\bar{V}_h^k(s,s',a)$ and $V_h^*(s,s',a)$, respectively. As such, for any probability measure $P_h(s)$, function $u:\Scal\times\Scal\times\Acal\mapsto \R$ and a policy $\pi$ induced by a list $\ell$,  it holds that
\begin{align}
    \label{eq:dist to list equivalence}
    \E_{\bs'\sim P_h(s)}\brs*{u(s,s'(\pi(a)),\pi(a))} = \E_{i\sim\mu(\cdot\vert\ell_h(s), P_h(s))}\brs*{u(s,s'_i,a_i)}.
\end{align}



\subsubsection{Planning with Transition Lookahead}
\label{appendix: transition lookahead planning}
We have already seen the optimal policy is induced by a list $\ell^*_h(s)$, and in particular, we can write the dynamic programming equations of \Cref{prop: transition DP} as
\begin{align*}
    V^{*}_h(s) 
    &= \E_{\bs'\sim P_h(s)}\brs*{\max_{a\in\Acal}\brc*{r_h(s,a) +V^{T,*}_h(s'(a))}}\\
    &=\E_{i\sim\mu(\cdot\vert\ell^*_h(s), P_h(s))}\brs*{r_h(s,a_i) +V^{*}_{h+1}(s'(a_i))}.
\end{align*}
Therefore, one way to perform the planning is to build a list $\ell^*_h(s)$ of $(s',a)$ s.t. the values 
\begin{align*}
    V^{*}_h(s,s',a) = r_h(s,a) +V^{*}_{h+1}(s')
\end{align*}
are sorted in a non-increasing order and calculate the probability of any pair in the list to be the highest-realized pair:
\begin{align*}
    \mu(i\vert\ell, P_h(s)) = P_h(E_i^{\ell})
    = \Pr\br*{s'_{h+1}(a_i)=s'_i \; \textrm{ and } \; \forall j<i, s'_{h+1}(a_j)\ne s'_j\vert s_h=s}.
\end{align*}
In general, calculating this distribution is intractable, and one must resort to approximating it by sampling (as done in \Cref{alg: MVP transition lookahead}. Nonetheless, if next states are generated independently between actions, this distribution could be efficiently calculated as follows:
\begin{align*}
   \mu(i\vert\ell, P_h(s)) 
   &= \Pr\br*{s'_{h+1}(a_i)=s'_i \; \textrm{ and } \; \forall j<i, s'_{h+1}(a_j)\ne s'_j\vert s_h=s} \\
    & \overset{(1)}= \Pr\brc*{s'(a_i)=s'_i \; \textrm{ and } \; \forall j<i \textrm{ s.t. }a_j\ne a_i, s'(a_j)\ne s'_j\vert s_h=s} \\
    & \overset{(2)}= \Pr\brc*{s'(a_i)=s'_i\vert s_h=s}\prod_{a\ne a_i}\Pr\brc*{ \forall j<i \textrm{ s.t. }a_j=a, s'(a)\ne s'_j\vert s_h=s} \\
    & \overset{(3)}= P_h(s'_i\vert s,a_i) \prod_{a\ne a_i}\br*{1 - \sum_{j=1}^{i-1}\Ind{a_j=a}P_h(s'_j\vert s,a)}.
\end{align*}
Relation $(1)$ holds since if $s'(a_i)=s'_i$, it cannot get any previous value of the same action in the list, so these events can be removed. Relation $(2)$ is by the independence and $(3)$ directly calculates the probabilities.

\clearpage

\subsection{The First Good Event -- Concentration}
\label{appendix: transition lookahead concentration event}
Next, we define the events that ensure the concentration of all empirical measures. For rewards, an event handles the convergence of the empirical rewards to their mean. For the transitions, we want the Bellman operator, applied on the optimal value with the empirical model, to concentrate well, and we require the variance of values w.r.t. the empirical and real model to be close. Finally, the empirical measure $\hat{\mu}^{k}_{h}(i,j\vert s; \ell,\ell^*_h(s))$ must concentrate well around its mean for any list $\ell$ -- this will allow the change-of-measure argument described in the proof sketch.

Formally, define the following good events:
{\small
\begin{align*}
    &E^r(k) = \brc*{\forall s,a,h:\ |r_h\br*{s,a} - \hat{r}^{k-1}_h\br*{s,a}| \le \sqrt{ \frac{L^k_{\delta} }{n^{k-1}_{h}(s,a)\vee 1}}} \\
    &E^{\ell}(k) = \left\{\forall s,h, \forall \ell\in\Lcal, \forall i,j\in\brs*{SA}:\ \abs*{\hat{\mu}^{k-1}_h\br*{i,j|s; \ell,\ell^*_h(s)} - \mu\br*{i,j|\ell,\ell^*_h(s); P_h(s)}}\right.\\
    &\hspace{26.25em}\left.\le \sqrt{\frac{4SAL^k_{\delta}\mu\br*{i,j|s; \ell,\ell^*_h(s); P_h(s)}}{n^{k-1}_h(s)\vee 1}} + \frac{2SAL^k_{\delta}}{n^{k-1}_h(s)\vee 1}\right\} \\
    &E^{pv1}(k)=\brc*{\forall s,h:\ \abs*{\E_{\bs'\sim P_h(s)}\brs*{V^*_h(s,\bs')} - \E_{\bs'\sim\hat{P}^{k-1}_h(s)}\brs*{V^*_h(s,\bs')}} \leq \sqrt{\frac{2\VAR_{\bs'\sim P_{h}(s)}(V^*_{h}(s,\bs')) L^k_{\delta}}{n^{k-1}_{h}(s)\vee 1}} + \frac{HL^k_{\delta}}{n^{k-1}_h(s)\vee 1}}\\
    &E^{pv2}(k)=\brc*{\forall s,h:\ \abs*{ \sqrt{\VAR_{\bs'\sim P_{h}(s)}(V^*_{h}(s,\bs'))} -  \sqrt{\VAR_{\bs'\sim \hat{P}^{k-1}_{h}(s)}(V^*_{h}(s,\bs'))} } \leq 4H\sqrt{\frac{L^k_{\delta}}{n^{k-1}_h(s)\vee 1}} }
\end{align*}
}
where we again use $L^k_{\delta}=\ln\frac{16S^3A^2Hk^2(k+1)}{\delta}$. We define the first good event as 
$$\G_1 = \bigcap_{k\geq 1} E^r(k)\bigcap_{k\geq 1} E^{\ell}(k) \bigcap_{k\geq 1} E^{pv1}(k) \bigcap_{k\geq 1} E^{pv2}(k),$$
for which the following holds: 
\begin{lemma}[The First Good Event]\label{lemma: the first good event transition lookahead}
It holds that $\Pr(\G_1)\geq 1-\delta/2$.
\end{lemma}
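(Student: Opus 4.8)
The plan is to split $\G_1$ into its four defining families of events (each ranging over all $k\ge1$), control the failure probability of each family by $\delta/8$, and conclude by a union bound, so that $\Pr(\G_1)\ge1-\delta/2$. Three of the families --- $E^r$, $E^{pv1}$, $E^{pv2}$ --- are handled by entirely standard concentration arguments that mirror the reward-lookahead version line by line (\Cref{lemma: the first good event reward lookahead}); the one genuinely new ingredient is $E^\ell$, whose concentration must hold uniformly over the $(SA)!$ possible orderings of next-state--action pairs.

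For the three standard families I would use the data-generation assumption, so that conditioned on the visit counts the relevant samples are i.i.d. For $E^r(k)$: fix $(s,a,h)$, condition on $n^{k-1}_h(s,a)=n$, apply Hoeffding to the i.i.d.\ rewards in $[0,1]$ at confidence $\delta'=\tfrac{\delta}{8SAHk^2(k+1)}$ (the $n=0$ case is trivial since $\hat r^{k-1}_h$ is initialized to $0$ and $\sqrt{L^k_\delta}\ge1$), then union over $n\in[k]$, over $(s,a,h)$, and over $k\ge1$, using $\sum_{k\ge1}\tfrac1{k(k+1)}=1$ and $L^k_\delta\ge\ln(1/\delta')$. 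For $E^{pv1}(k)$ and $E^{pv2}(k)$: note that $V^*_h(s,\bs')=\max_a\{r_h(s,a)+V^*_{h+1}(s'(a))\}\in[0,H]$ is a fixed function of the joint next-state sample $\bs'\sim P_h(s)$, so conditioning on $n^{k-1}_h(s)=n$ and applying the Bernstein-type mean and variance concentration results of \Cref{lemma: concentration results} to the i.i.d.\ copies $V^*_h(s,\bs')$ --- with $\bs'$ playing the role of the ``transition'' and $V^*_h(s,\cdot)$ the role of the value --- gives the stated bounds after a union over $(s,h)$, $n\in[k]$, and $k\ge1$. Here only $(s,h)$-counts enter, which is consistent with the transition-lookahead structure. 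Each family then holds with probability at least $1-\delta/8$.

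The crux is $E^\ell$. Fix $s,h$ and condition on $n^{k-1}_h(s)=n$; the $n$ observed joint next-state vectors are i.i.d.\ from $P_h(s)$. For a fixed list $\ell\in\Lcal$ and fixed $i,j\in[SA]$, writing $\mu$ for $\mu(i,j\mid\ell,\ell^*_h(s);P_h(s))$ and $\hat\mu$ for $\hat\mu^{k-1}_h(i,j\mid s;\ell,\ell^*_h(s))$, the quantity $\hat\mu$ is the empirical frequency of the event $E^\ell_i\cap E^{\ell^*_h(s)}_j$ (see \Cref{eq: list probability}), i.e.\ an average of $n$ i.i.d.\ $\mathrm{Bernoulli}(\mu)$ variables --- and crucially $\ell^*_h(s)$ is a deterministic function of the MDP, so it requires no union bound. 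Bernstein's inequality gives, with probability at least $1-\delta'$,
\[
|\hat\mu-\mu|\ \le\ \sqrt{\tfrac{2\mu\ln(2/\delta')}{n}}+\tfrac{\ln(2/\delta')}{3n}.
\]
Choosing $\delta'=\tfrac{\delta}{8\,(SA)!\,(SA)^2\,SH\,k^2(k+1)}$ lets me union over the $(SA)!$ lists, the $(SA)^2$ index pairs, $n\in[k]$, $(s,h)$, and $k\ge1$ at total cost $\delta/8$. It then remains to verify $\ln(2/\delta')\le 2SA\,L^k_\delta$: using $\ln((SA)!)\le SA\ln(SA)$ we get $\ln(2/\delta')\le SA\ln(SA)+O\!\left(\ln\tfrac{SAHk}{\delta}\right)\le 2SA\ln\tfrac{16S^3A^2Hk^2(k+1)}{\delta}=2SA\,L^k_\delta$, and substituting this into the display turns it into exactly the stated form of $E^\ell(k)$ (with $n$ replaced by $n\vee1$; the $n=0$ case is trivial since $2SAL^k_\delta\ge1$ already bounds any probability difference). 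Combining the four families then yields $\Pr(\G_1)\ge1-\delta/2$.

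I expect the main obstacle to be precisely this last step: making the Bernstein bound for $\hat\mu$ hold uniformly over all list orderings without an exponential blow-up. The naive union over the $(SA)!$ lists is affordable only because $L^k_\delta$ is large enough to absorb $\ln((SA)!)\approx SA\ln(SA)$; this is exactly why $L^k_\delta$ carries the $S^3A^2$ factor and why the constants $4SA$ and $2SA$ appearing in $E^\ell(k)$ reflect an $SA$ overhead relative to a plain per-list Bernstein bound. One could trim the list count to $SA\cdot2^{SA}$ by noting that $E^\ell_i$ depends only on the length-$i$ prefix of $\ell$, but the crude bound already suffices.
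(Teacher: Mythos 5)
Your proposal is correct and follows essentially the same route as the paper: split $\G_1$ into the four families, handle $E^r$, $E^{pv1}$, $E^{pv2}$ by conditioning on visit counts and applying Hoeffding/Bernstein-type concentration, and for $E^\ell$ apply per-list Bernstein concentration to the Bernoulli indicators of the events $E^\ell_i\cap E^{\ell^*_h(s)}_j$ and pay for the union over all $(SA)!$ lists by absorbing $\ln\abs{\Lcal}\le SA\ln(SA)$ into the $SA$ factor multiplying $L^k_\delta$ (the paper uses the slightly cruder bound $\abs{\Lcal}\le (SA)^{SA}$ and routes the $(SA)^2$-support union through its packaged concentration lemma, but this is the same argument). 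Your observation that $\ell^*_h(s)$ is deterministic and requires no union bound is also implicit in the paper's proof.
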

\begin{proof}
    We prove that each of the events holds w.p. at least $1-\delta/8$. The result then directly follows by the union bound. We also remark that due to the domain of the variables and their estimators (e.g., $[0,1]$ for the rewards), all bounds trivially hold when the counts equal zero, so w.l.o.g., we only prove the results for cases in which states/state-actions were already previously visited. 

    \textbf{Event $\cap_{k\ge1}E^r(k)$.} Fix $k\ge1,s,a,h$ and visits $n\ge1$. Given all of these, the reward observations are i.i.d. random variables supported by $[0,1]$. Denoting the empirical mean based on these $n$ samples by $\hat{r}_h(s,a,n)$, by Hoeffding's inequality, it holds w.p. $1-\frac{\delta}{8SAHk^2(k+1)}$ that
    \begin{align*}
        \abs*{r_h(s,a) - \hat{r}_h(s,a,n)}\leq \sqrt{\frac{\ln\frac{16SAHk^2(k+1)}{\delta}}{2n}} \leq \sqrt{ \frac{L^k_{\delta} }{n}}.
    \end{align*}
    Taking the union bound over all $n\in[k]$ at timestep $k$, we get that w.p. $1-\frac{\delta}{8SAHk(k+1)}$ 
    \begin{align*}
        |r_h\br*{s,a} - \hat{r}^{k-1}_h\br*{s,a}| \le \sqrt{ \frac{L^k_{\delta} }{n^{k-1}_{h}(s,a)\vee 1}},
    \end{align*}
    and another union bound over all possible values of $s,a,h$ and $k\ge1$ implies that $\cap_{k\ge1}E^r(k)$ holds w.p. at least $1-\delta/8$.
    
    \textbf{The event $\cap_{k\ge1}E^{\ell}(k)$.} For  any fixed $k\ge1, s, h$, a list $\ell\in\Lcal$ and number of visits $n\in[k]$, we utilize \Cref{lemma: concentration results} (event $E^p$) w.r.t. the distribution $\mu(i,j\vert \ell,\ell_h^*(s),P)$ (whose support is of size $M=(SA)^2$). When applying the lemma, notice that given the number of visits $n\ge1$, the empirical distribution $\hat{\mu}^{k-1}_{h}(i,j\vert s; \ell,\ell_h^*(s))$ is the average of $n=n_h^{k-1}(s)$ i.i.d samples, so that for all $i,j\in[SA]$,
    {\small\begin{align*}
       \abs*{\hat{\mu}^{k-1}_h\br*{i,j|s; \ell,\ell^*_h(s)} - \mu\br*{i,j| \ell,\ell^*_h(s); P_h(s)}}
       &\le \sqrt{\frac{2\mu\br*{i,j|\ell,\ell^*_h(s); P_h(s)}\ln\frac{2(SA)^2}{\delta'}}{n}} + \frac{2\ln\frac{2(SA)^2}{\delta'}}{3n}\\
       &\le \sqrt{\frac{4\mu\br*{i,j|\ell,\ell^*_h(s); P_h(s)}\ln\frac{2SA}{\delta'}}{n}} + \frac{2\ln\frac{2SA}{\delta'}}{n}
    \end{align*}}
    w.p. $1-\delta'$. Choosing $\delta'=\frac{\delta}{8\abs{\Lcal}SHk^2(k+1)}$ (such that $\ln\frac{2SA}{\delta'} \leq SA\ln\frac{16S^3A^2Hk^2(k+1)}{\delta}$ since $\abs{\Lcal}\leq (SA)^{SA}$), while taking the union bound on all $n\in[k]$, all $s,h$ and all lists $\ell\in\Lcal$ implies that $\cap_{k\ge1}E^{\ell}(k)$ holds w.p. at least $1-\frac{\delta}{8}$. 

    \textbf{Events $\cap_{k\ge1}E^{pv1}(k)$ and $\cap_{k\ge1}E^{pv2}(k)$.} We repeat the arguments stated in \Cref{lemma: the first good event reward lookahead}. For any fixed $k\ge1, s,h$ and number of visits $n\in[k]$ , we utilize \Cref{lemma: concentration results} w.r.t. the next-state distribution for all actions $P_h(s)$, the value $V^*_{h}(s,\bs')\in[0,H]$ and probability $\delta'=\frac{\delta}{8SHk^2(k+1)}$; we yet again remind that given the number of visits, samples are i.i.d.

    As before, the events $\cap_{k\geq 1} E^{pv1}(k)$ and $\cap_{k\geq 1} E^{pv2}(k)$ hold w.p. at least $1-\frac{\delta}{8}$ through the union bound first on $n\in[k]$ (to get the empirical quantities) and then on $s,h$ and $k\ge1$. This proves that each of the events in $\G_1$ holds w.p. at least $1-\frac{\delta}{8}$, so  $\G_1$ holds w.p. at least $1-\frac{\delta}{2}$.    
\end{proof}

\clearpage


\subsection{Optimism of the Upper Confidence Value Functions}
We now prove that under the event $\G_1$, the values that MVP-TL outputs are optimistic.

\begin{lemma}[Optimism] \label{lemma: optimism of values MVP-TL}
Under the first good event $\G_1$, for all $k\in[K]$, $h\in [H]$, $a\in\Acal$ and $s,s'\in \mathcal{S}$, it holds that $V^*_h(s,s',a)\leq \bar{V}^k_{h}(s,s',a)$. Moreover, for all $\bs'\in\Scal^A$, $V^*_h(s,\bs')\leq \bar{V}^k_{h}(s,\bs')$ and also $V^*_h(s)\leq \bar{V}^k_{h}(s)$.
\end{lemma}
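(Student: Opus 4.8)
The plan is to prove optimism by backward induction on $h$, mirroring the structure of \Cref{lemma: optimism of values MVP-RL} but using the list representation and the Bellman equations of \Cref{prop: transition DP}. The base case $h=H+1$ is trivial since $\bar{V}^k_{H+1}(s)=V^*_{H+1}(s)=0$, and consequently $V^*_{H+1}(s,s',a)=r_H(s,a)=\hat{r}_H^{k-1}(s,a)+(r_H(s,a)-\hat{r}_H^{k-1}(s,a))\le \hat{r}_H^{k-1}(s,a)+b^r_{k,H}(s,a)=\bar{V}^k_{H+1}(s,s',a)$ under $E^r(k)$ (where one also uses that $b^r$ truncates at $1$ and rewards lie in $[0,1]$). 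Assume the three inequalities hold at step $h+1$; I will establish them at step $h$.

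First, the pointwise claim $V^*_h(s,s',a)\le\bar{V}^k_h(s,s',a)$: by definition $V^*_h(s,s',a)=r_h(s,a)+V^*_{h+1}(s')$ and $\bar{V}^k_h(s,s',a)=\hat{r}_h^{k-1}(s,a)+b^r_{k,h}(s,a)+\bar{V}^k_{h+1}(s')$, so the inequality follows immediately from the reward concentration event $E^r(k)$ (giving $r_h(s,a)\le\hat{r}_h^{k-1}(s,a)+b^r_{k,h}(s,a)$, again handling the truncation of $b^r$ as in the base case) together with the induction hypothesis $V^*_{h+1}(s')\le\bar{V}^k_{h+1}(s')$. The second claim then follows by taking the maximum over $a$: $V^*_h(s,\bs')=\max_a V^*_h(s,s'(a),a)\le\max_a\bar{V}^k_h(s,s'(a),a)=\bar{V}^k_h(s,\bs')$.

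The third claim, $V^*_h(s)\le\bar{V}^k_h(s)$, is the main obstacle and is where the transition bonus and the list machinery enter. If $n^{k-1}_h(s)=0$ then $\bar{V}^k_h(s)=H\ge V^*_h(s)$ and we are done, and if the value is truncated to $H$ the same holds; so assume neither. Using \Cref{prop: transition DP} and \Cref{eq:dist to list equivalence}, write $V^*_h(s)=\E_{i\sim\mu(\cdot\vert\ell^*_h(s),P_h(s))}[V^*_h(s,s'_i,a_i)] = \E_{\bs'\sim P_h(s)}[V^*_h(s,\bs')]$, since $\pi^*$ is induced by the list $\ell^*_h(s)$ and picks the top realized pair. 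Now I want to compare this to $\bar{V}^k_h(s)=\E_{\bs'\sim\hat{P}^{k-1}_h(s)}[\bar{V}^k_h(s,\bs')]+b^p_{k,h}(s)$. By the second claim, $\E_{\bs'\sim\hat{P}^{k-1}_h(s)}[\bar{V}^k_h(s,\bs')]\ge\E_{\bs'\sim\hat{P}^{k-1}_h(s)}[V^*_h(s,\bs')]$, so it suffices to show $\E_{\bs'\sim P_h(s)}[V^*_h(s,\bs')] - \E_{\bs'\sim\hat{P}^{k-1}_h(s)}[V^*_h(s,\bs')] \le b^p_{k,h}(s)$. This is exactly a single change-of-measure on the value $V^*_h(s,\cdot)$, controlled by event $E^{pv1}(k)$, which bounds the left side by $\sqrt{2\VAR_{\bs'\sim P_h(s)}(V^*_h(s,\bs'))L^k_\delta/(n^{k-1}_h(s)\vee1)}+HL^k_\delta/(n^{k-1}_h(s)\vee1)$. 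It then remains to dominate this by $b^p_{k,h}(s)$: I replace $\VAR_{P_h(s)}(V^*_h(s,\bs'))$ by $\VAR_{\hat{P}^{k-1}_h(s)}(V^*_h(s,\bs'))$ via $E^{pv2}(k)$ (paying an extra $O(H\sqrt{L^k_\delta/(n^{k-1}_h(s)\vee1)})$ term, absorbed into the linear-in-$L^k_\delta$ part of $b^p$), and then replace $\VAR_{\hat{P}^{k-1}_h(s)}(V^*_h(s,\bs'))$ by $\VAR_{\hat{P}^{k-1}_h(s)}(\bar{V}^k_h(s,\bs'))$ using the monotonicity of the variance-based bonus (the MVP-style argument, e.g.\ \Cref{lemma: bonus monotonicity} in the reward case, adapted to the transition bonus here): since $\bar{V}^k_h(s,\bs')\ge V^*_h(s,\bs')\ge0$ pointwise and both are bounded by $H$, the bonus built from the larger value dominates. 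Matching the numerical constants ($\tfrac{20}{3}$ and $\tfrac{400}{3}$ in $b^p_{k,h}(s)$) against the $\sqrt{2}$ and the accumulated linear terms closes the induction. The one subtlety to flag is that this argument only needs the change of measure on the \emph{single} function $V^*_h(s,\cdot)$ — not uniformly over all value vectors — which is precisely why the event $E^{pv1}(k)$ suffices and no spurious $\sqrt{SA}$ factor appears at this stage.
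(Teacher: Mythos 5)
Your first two claims (the pointwise inequality via $E^r(k)$ plus the induction hypothesis, and then taking the max over $a$) match the paper's proof exactly and are fine. The problem is in your third claim. After reducing to showing $\E_{\bs'\sim P_h(s)}[V^*_h(s,\bs')]-\E_{\bs'\sim\hat{P}^{k-1}_h(s)}[V^*_h(s,\bs')]\le b^p_{k,h}(s)$ and arriving (via $E^{pv1}(k)$ and $E^{pv2}(k)$) at a bound involving $\VAR_{\hat{P}^{k-1}_h(s)}(V^*_h(s,\bs'))$, you assert that this can be replaced by $\VAR_{\hat{P}^{k-1}_h(s)}(\bar{V}^k_h(s,\bs'))$ because ``the bonus built from the larger value dominates.'' That step is false: the variance is not monotone under pointwise domination of the value. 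For instance, if all next-step optimistic values are truncated at $H$, then $\bar{V}^k_h(s,\bs')$ is constant in $\bs'$ and $\VAR_{\hat{P}^{k-1}_h(s)}(\bar{V}^k_h(s,\bs'))=0$, while $\VAR_{\hat{P}^{k-1}_h(s)}(V^*_h(s,\bs'))$ can be of order $H^2$, so the $1/\sqrt{n}$ concentration term cannot be absorbed into $b^p_{k,h}(s)$ for moderate $n$. \Cref{lemma: bonus monotonicity} does not say the bonus alone is monotone in $v$; it says the \emph{sum} $p^Tv+\max\brc*{\frac{20}{3}\sqrt{\VAR_p(v)\ln(1/\delta')/n},\,\frac{400}{9}H\ln(1/\delta')/n}$ is monotone — the point of the MVP construction is precisely that a decrease in the variance term is compensated by the mean term.

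The fix is to not split the mean and the bonus: as in the paper, write the non-truncated value as $\bar{V}^k_h(s)\ge \E_{\bs'\sim\hat{P}^{k-1}_h(s)}[\bar{V}^k_h(s,\bs')]+\max\brc*{\frac{20}{3}\sqrt{\VAR_{\hat{P}^{k-1}_h(s)}(\bar{V}^k_h(s,\bs'))L^k_\delta/(n^{k-1}_h(s)\vee1)},\,\cdot}$, apply \Cref{lemma: bonus monotonicity} to this combined functional with $\bar{V}^k_h(s,\cdot)\ge V^*_h(s,\cdot)$ (your claim 2) to lower bound it by the same expression evaluated at $V^*_h(s,\cdot)$, and only then invoke $E^{pv2}(k)$ and $E^{pv1}(k)$ to reach $\E_{\bs'\sim P_h(s)}[V^*_h(s,\bs')]=V^*_h(s)$. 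Your observation that the change of measure is needed only for the single function $V^*_h(s,\cdot)$ (so $E^{pv1}$ suffices and no $\sqrt{SA}$ factor appears) is correct and is indeed how the paper proceeds; also note your base-case display conflates the $h=H+1$ case (where everything is zero) with the $h=H$ instance of the pointwise claim, though that is only a cosmetic slip.
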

\begin{proof}
    The proof of all claims follows by backward induction on $H$; the base case naturally holds for $h=H+1$, where all values are defined to be zero. 
    
    Assume by induction that for some $k\in[K]$ and $h\in[H]$, the inequality $V^*_{h+1}(s)\leq \bar{V}^k_{h+1}(s)$ holds for all $s\in\Scal$; we will show that this implies that all stated inequalities also hold at timestep $h$. 
    At this point, we also assume w.l.o.g. that $\bar{V}^k_{h}(s)< H$ (namely, not truncated), since otherwise, by the boundedness of the rewards, $V^*_h(s)\leq H=\bar{V}^k_{h}(s).$ In particular, under the good event $E^r(k)$, for all $s$ and $a$ , it holds that $\hat{r}_h^{k-1}(s,a) + b_{k,h}^{r}(s,a)\geq r_h(s,a)$, so for all $s,a$ and $s'$, we have
    \begin{align*}
        \bar{V}^k_{h}(s,s',a) = \hat{r}_h^{k-1}(s,a) + b_{k,h}^{r}(s,a) +\bar{V}^k_{h+1}(s')
         \geq r_h(s,a) +V^*_{h+1}(s') 
         = V^*_h(s,s',a).
    \end{align*}
    where the inequality also uses the induction hypothesis. This proves the first part of the lemma. Moreover, it implies that 
    \begin{align}
        \bar{V}^k_h(s,\bs') 
        = \max_{a\in\Acal}\brc*{\bar{V}^k_{h}(s,s'(a),a)}
        \geq \max_{a\in\Acal}\brc*{V^*_h(s,s'(a),a)}
        =V^*_h(s,\bs'), \label{eq: value monotonicity transition lookahead}
    \end{align}
    and proves the second part of the statement.
    
    To prove the last claim of the lemma, we use the monotonicity of the bonus, relying on \Cref{lemma: bonus monotonicity}. This lemma can be used when applied to the empirical distribution of all possible next-states $\hat{P}_h^{k-1}(s)$; indeed, the non-truncated optimistic value can be written as
    \begin{align*}
        \bar{V}^k_{h}(s)
        &= \E_{\bs'\sim\hat{P}^{k-1}_h(s)}\brs*{\max_{a\in\Acal}\brc*{\hat{r}_h^{k-1}(s,a) + b_{k,h}^{r}(s,a) +\bar{V}^k_{h+1}(s'(a))}} + b^p_{k,h}(s) \\
        & \geq \E_{\bs'\sim\hat{P}^{k-1}_h(s)}\brs*{\bar{V}^k_h(s,\bs')} + \max\brc*{\frac{20}{3}\sqrt{\frac{\VAR_{\bs'\sim\hat{P}^{k-1}_{h}(s)}(\bar{V}^k_{h}(s,\bs')) L^k_{\delta}}{n^{k-1}_{h}(s)\vee 1}},\frac{400}{9}\frac{3H L^k_{\delta}}{n^{k-1}_{h}(s) \vee 1}},
    \end{align*}
    which is exactly the required form in \Cref{lemma: bonus monotonicity}, w.r.t. the distribution $\hat{P}^{k-1}_h(s)$ and the values $\bar{V}^k_h(s,\bs')$ (while noticing that due to the truncation of the values and bonuses, $\bar{V}^k_h(s,\bs')\in[0,3H]$). Thus, the lemma guarantees monotonicity in the value, so by \Cref{eq: value monotonicity transition lookahead}, 
    {\small
    \begin{align*}
        \bar{V}^k_{h}(s)
        &\geq \E_{\bs'\sim\hat{P}^{k-1}_h(s)}\brs*{V^*_h(s,\bs')} + \max\brc*{\frac{20}{3}\sqrt{\frac{\VAR_{\bs'\sim\hat{P}^{k-1}_{h}(s)}(V^*_h(s,\bs')) L^k_{\delta}}{n^{k-1}_{h}(s)\vee 1}},\frac{400}{9}\frac{3H L^k_{\delta}}{n^{k-1}_{h}(s) \vee 1}} \\
        & \geq  \E_{\bs'\sim\hat{P}^{k-1}_h(s)}\brs*{V^*_h(s,\bs')} + \frac{10}{3}\sqrt{\frac{\VAR_{\bs'\sim\hat{P}^{k-1}_{h}(s)}(V^*_h(s,\bs')) L^k_{\delta}}{n^{k-1}_{h}(s)\vee 1}} + \frac{200}{3}\frac{H L^k_{\delta}}{n^{k-1}_{h}(s) \vee 1} \\
        & \geq  \E_{\bs'\sim\hat{P}^{k-1}_h(s)}\brs*{V^*_h(s,\bs')} + \frac{10}{3}\sqrt{\frac{\VAR_{\bs'\sim P_{h}(s)}(V^*_h(s,\bs')) L^k_{\delta}}{n^{k-1}_{h}(s)\vee 1}} +\frac{50H L^k_{\delta}}{n^{k-1}_{h}(s) \vee 1}\tag{Under $E^{pv2}(k)$} \\
        & \geq  \E_{\bs'\sim P_h(s)}\brs*{V^*_h(s,\bs')}\tag{Under $E^{pv1}(k)$} \\
        & = V^*_{h}(s).
    \end{align*}}
\end{proof}

\clearpage


\subsection{The Second Good Event -- Martingale Concentration}
In this subsection, we present three good events that allow replacing the expectation over the randomizations inside each episode by their realization. 
Let 
\begin{align*}
    &Y^k_{1 ,h} \eqdef \bar{V}_{h+1}^k(s^k_{h+1}) - V_{h+1}^{\pi^k}(s^k_{h+1})\\
    &Y^k_{2, h} = \VAR_{\bs'\sim P_{h}(s_h^k)}(V^{\pi^k}_{h}(s_h^k,\bs'))\\
    &Y^k_{3, h} = b_{k,h}^{r}(s_h^k,a_h^k).
\end{align*}
The second good event is the intersection of the events $\G_2 =E^{\mathrm{diff}} \cap  E^{\VAR}  \cap  E^{br}$ defined as follows.
\begin{align*}
    &E^{\mathrm{diff}}=\brc*{\forall h\in[H], K\geq 1:\ \sum_{k=1}^K \E[Y_{1 ,h}^k|F_{k,h-1}]\leq \br*{1+\frac{1}{2H}} \sum_{k=1}^K Y_{1 ,h}^k + 18H^2 \ln\frac{6HK(K+1)}{\delta}},\\
    &E^{\VAR}= \brc*{ K\geq 1:\  \sum_{k=1}^K \sum_{h=1}^H Y_{2, h}^k\leq 2\sum_{k=1}^K \sum_{h=1}^H\E[Y_{2, h}^k|F_{k-1}]  + 4H^3 \ln\frac{6HK(K+1)}{\delta}},\\
    &E^{br}= \brc*{\forall h\in[H], K\geq 1:\ \sum_{k=1}^K \E[Y_{3, h}^k|F_{k,h-1}]\leq 2\sum_{k=1}^K Y_{3 ,h}^k + 18 \ln\frac{6HK(K+1)}{\delta}},
\end{align*}
We define the good event $\G=\G_1\cap\G_2$.
\begin{lemma}
    \label{lemma: good event transition lookahead}
    The good event $\G$ holds with a probability of at least $1-\delta$.
\end{lemma}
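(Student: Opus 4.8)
The plan is to mirror the proof of the analogous reward-lookahead statement, \Cref{lemma: good event reward lookahead}, which in turn follows Lemmas~15 and~21 of \citep{efroni2021confidence}. By \Cref{lemma: the first good event transition lookahead} we already have $\Pr(\G_1)\ge 1-\delta/2$, so it suffices to show that each of $E^{\mathrm{diff}}$, $E^{\VAR}$, $E^{br}$ (or, in the case of $E^{\mathrm{diff}}$, a suitable truncated surrogate) holds with probability at least $1-\delta/6$; a union bound then gives $\Pr(\G)=\Pr(\G_1\cap\G_2)\ge 1-\delta$. The engine throughout is the Freedman-type martingale inequality already used in the reward-lookahead analysis (\Cref{lemma: consequences of optimism and freedman's inequality}): for a nonnegative process bounded by $C$ and adapted to a given filtration, with probability $1-\delta'$ one controls $\sum_k\E[X_k\mid \mathcal F_{k-1}]$ by $(1+\epsilon)\sum_k X_k$ plus an additive term of order $C\,\epsilon^{-1}\ln(1/\delta')$, where $\epsilon=\tfrac1{2H}$ is used for the "diff" events and $\epsilon=O(1)$ for the bonus events.

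First I would treat $E^{\mathrm{diff}}$, the only place where optimism is needed. The obstacle is that $Y^k_{1,h}=\bar V^k_{h+1}(s^k_{h+1})-V^{\pi^k}_{h+1}(s^k_{h+1})$ is guaranteed to lie in $[0,H]$ only once optimism is in force, whereas the martingale argument must not condition on $\G_1$. The fix, exactly as in the reward case, is to define the $F_{k-1}$-measurable indicator $W_k=\Ind{\bar V^k_h(s)-V^{\pi^k}_h(s)\in[0,H]\ \forall h\in[H],\ s\in\Scal}$ and the truncated process $\tilde Y^k_{1,h}=W_kY^k_{1,h}\in[0,H]$, which is $F_{k,h}$-measurable (note $s^k_{h+1}$ is $F_{k,h}$-measurable since both $a^k_h$ and $\bs'^k_{h+1}$ are), while $W_k$ is already $F_{k,h-1}$-measurable. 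Applying \Cref{lemma: consequences of optimism and freedman's inequality} for each fixed $h\in[H]$ with failure probability $\tfrac{\delta}{6HK(K+1)}$ and union-bounding over $h\in[H]$ and $K\ge 1$ (using $\sum_{K\ge1}\tfrac1{K(K+1)}=1$) yields, with probability $\ge 1-\delta/6$, the event $\tilde E^{\mathrm{diff}}$ obtained from $E^{\mathrm{diff}}$ by inserting the factor $W_k$ inside both sums. Finally, by optimism (\Cref{lemma: optimism of values MVP-TL}) we have $V^{\pi^k}_h\le V^*_h\le\bar V^k_h\le H$ on $\G_1$, hence $W_k\equiv1$ there, so $\G_1\cap\tilde E^{\mathrm{diff}}=\G_1\cap E^{\mathrm{diff}}$.

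Next, $E^{\VAR}$ and $E^{br}$ require no optimism, since the relevant processes are already nonnegative and bounded. For $E^{\VAR}$, by \Cref{remark: transition value with next-state observations} each $V^{\pi^k}_h(s,\bs')$ lies in $[0,H]$, so $\sum_{h=1}^H Y^k_{2,h}=\sum_{h=1}^H\VAR_{\bs'\sim P_h(s^k_h)}(V^{\pi^k}_h(s^k_h,\bs'))\in[0,H^3]$, and this quantity is $F_k$-measurable; applying \Cref{lemma: consequences of optimism and freedman's inequality} with respect to $(F_k)_k$ and $C=H^3$ for each fixed $K$, then union-bounding over $K\ge1$, gives $\Pr(E^{\VAR})\ge 1-\delta/6$. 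For $E^{br}$, $Y^k_{3,h}=b^r_{k,h}(s^k_h,a^k_h)\in[0,1]$ is $F_{k,h}$-measurable (here $a^k_h$, depending on $\bs'^k_{h+1}$, is $F_{k,h}$- but not $F_{k,h-1}$-measurable, which is precisely the martingale-difference structure needed); applying the same inequality with an $O(1)$ multiplicative slack (bounded by $2$) for each fixed $h$ and union-bounding over $h\in[H]$ and $K\ge1$ gives $\Pr(E^{br})\ge 1-\delta/6$. Combining the three estimates with $\Pr(\G_1)\ge1-\delta/2$ yields the claim.

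The main obstacle is the one already flagged: keeping the martingale concentration step logically separate from optimism. One cannot directly apply Freedman to $Y^k_{1,h}$ because its range is controlled only on $\G_1$; truncation by $W_k$ makes the range unconditional, and only afterwards does one use \Cref{lemma: optimism of values MVP-TL} to identify $\tilde E^{\mathrm{diff}}$ with $E^{\mathrm{diff}}$ on $\G_1$. Once this is handled, the remaining work is bookkeeping: verifying measurability of $Y^k_{1,h}$, $Y^k_{2,h}$, $Y^k_{3,h}$ with respect to the appropriate filtrations, tracking the $[0,H]$, $[0,H^3]$ and $[0,1]$ range bounds, and choosing per-step failure probabilities $\tfrac{\delta}{6HK(K+1)}$ so that the union bounds over $h$ and over $K$ collapse to $\delta/6$ each.
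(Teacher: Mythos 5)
Your proposal is correct and follows essentially the same route as the paper's proof: the same $W_k$-truncation to decouple the Freedman-type concentration from optimism, the same application of \Cref{lemma: consequences of optimism and freedman's inequality} to $E^{\VAR}$ (with $C=H^3$ over the filtration $F_k$) and $E^{br}$ (with range $[0,1]$), and the same union-bound accounting yielding $1-\delta/2$ plus three events at $1-\delta/6$ each.
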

\begin{proof}
    The analysis of the first event follows $E^{\mathrm{diff}}$ exactly as the one of $E^{\mathrm{diff}1}$ in \Cref{lemma: good event reward lookahead}: define $W_k=\Ind{\bar{V}_{h}^k(s) - V_{h}^{\pi^k}(s)\in[0,H], \forall h\in[H], s\in\Scal}$ (which happens a.s. under $\G_1$ due to the optimism in \Cref{lemma: optimism of values MVP-TL} and truncation) and $\tilde{Y}^k_{1 ,h} = W_kY^k_{1 ,h}$, which is bounded in $[0,H]$ and $F_{k,h}$-measurable. The corresponding event w.r.t. this modified variables $\tilde{E}^{\mathrm{diff}}$ then holds w.p. $1-\frac{\delta}{6}$ by \Cref{lemma: consequences of optimism and freedman's inequality}, and as in \Cref{lemma: good event reward lookahead}, we can use the fact that $\G_1\cap\tilde{E}^{\mathrm{diff}}=\G_1\cap E^{\mathrm{diff}}$ to conclude this part of the proof.

    Moving to the second event, since $V^{\pi^k}_{h}(s,\bs')\in[0,H]$, then $\sum_{h=1}^H Y^k_{2, h}\in [0,H^3]$. Therefore, by \Cref{lemma: consequences of optimism and freedman's inequality} (w.r.t. the filtration $F_k$) with $C=H^3$ and any fixed $K$, we get w.p. $1-\frac{\delta}{6HK(K+1)}$ that 
    \begin{align*}
        \sum_{k=1}^K \sum_{h=1}^H Y_{2, h}^k\leq 2\sum_{k=1}^K \sum_{h=1}^H\E[Y_{2, h}^k|F_{k-1}]  + 4H^3 \ln\frac{6HK(K+1)}{\delta}.
    \end{align*}
    Taking the union bound on all possible values of $K\ge1$ proves that $E^{\VAR}$ holds w.p. at least $1-\frac{\delta}{6}$.

    Finally, by definition, we have that $Y^k_{3, h} = b_{k,h}^{r}(s_h^k,a_h^k)\in[0,1]$ and is $F_{k,h}$-measurable. Thus, for any fixed $k\ge1$ and $h\in[H]$, using \Cref{lemma: consequences of optimism and freedman's inequality}, we have w.p. $1-\frac{\delta}{6HK(K+1)}$ that
    \begin{align*}
        \sum_{k=1}^K \E[Y_{3, h}^k|F_{k,h-1}]
        &\leq \br*{1+\frac{1}{2}}\sum_{k=1}^K Y_{3 ,h}^k + 18 \ln\frac{6HK(K+1)}{\delta} 
        \leq 2\sum_{k=1}^K Y_{3 ,h}^k + 18 \ln\frac{6HK(K+1)}{\delta},
    \end{align*}
    so that due to the union bound, $E^{br}$ holds w.p. $1-\frac{\delta}{6}$.

    To conclude, $\G_1$ holds w.p. $1-\frac{\delta}{2}$ (\Cref{lemma: the first good event reward lookahead}) and the events $\tilde{E}^{\mathrm{diff}},  E^{\VAR},  E^{br}$ each hold w.p. $1-\frac{\delta}{6}$. As before, when accounting to the fact that $\tilde{E}^{\mathrm{diff}}$ and $E^{\mathrm{diff}}$ are identical under $\G_1$, the event $G=\G_1\cap\G_2$
    holds w.p. at least $1-\delta$.
\end{proof}

\clearpage


\subsection{Regret Analysis}
\label{appendix: transition lookahead regret}
\MVPTL*
\begin{proof}
    Assume that the event $\G$ holds, which by \Cref{lemma: good event transition lookahead}, happens with probability at least $1-\delta$. In particular, throughout the proof, we use optimism (\Cref{lemma: optimism of values MVP-TL}), which implies that $0\leq V_h^{\pi^k}(s,\bs')\leq V_h^{*}(s,\bs')\leq \bar{V}_h^{k}(s,\bs')\leq 3H$ (the upper bound is also by the truncation), as well as $0\leq V_h^{\pi^k}(s)\leq V_h^{*}(s)\leq \bar{V}_h^{k}(s)\leq H$. 
    
    We first focus on lower-bounding the value of the policy $\pi^k$: by \Cref{remark: transition value with next-state observations}, we have
    \begin{align*}
    V^{\pi^k}_h(s) &= \E_{\bs'\sim P_h(s)}\brs*{r_h(s,\pi^k_h(s,\bs')) +V^{\pi^k}_{h+1}(s'(\pi^k_h(s,\bs')))} \\
    & = \E_{\bs'\sim P_h(s)}\brs*{\hat{r}_h^{k-1}(s,\pi^k_h(s,\bs)) +\bar{V}^{k}_{h+1}(s'(\pi^k_h(s,\bs'))) + b_{k,h}^r(s,\pi^k_h(s,\bs'))} \\
    &\quad+ \E_{\bs'\sim P_h(s)}\brs*{r_h(s,\pi^k_h(s,\bs'))  - \hat{r}_h^{k-1}(s,\pi^k_h(s,\bs')) - b_{k,h}^r(s,\pi^k_h(s,\bs'))} \\
    &\quad+ \E_{\bs'\sim P_h(s)}\brs*{V^{\pi^k}_{h+1}(s'(\pi^k_h(s,\bs'))) - \bar{V}^{k}_{h+1}(s'(\pi^k_h(s,\bs')))} \\
    & \overset{(1)}= \E_{\bs'\sim P_h(s)}\brs*{\max_{a\in\Acal}\brc*{\hat{r}_h^{k-1}(s,a) + \bar{V}^{k}_{h+1}(s'(a)) + b_{k,h}^r(s,a)}} \\
    &\quad+ \E_{\bs'\sim P_h(s)}\brs*{r_h(s,\pi^k_h(s,\bs'))  - \hat{r}_h^{k-1}(s,\pi^k_h(s,\bs)) - b_{k,h}^r(s,\pi^k_h(s,\bs'))} \\
    &\quad+ \E_{\bs'\sim P_h(s)}\brs*{V^{\pi^k}_{h+1}(s'(\pi^k_h(s,\bs'))) - \bar{V}^{k}_{h+1}(s'(\pi^k_h(s,\bs')))} \\
    &\overset{(2)}\geq \E_{\bs'\sim P_h(s)}\brs*{\bar{V}^k_{h}(s,\bs')} 
    - 2\E_{\bs'\sim P_h(s)}\brs*{b_{k,h}^r(s,\pi^k_h(s,\bs'))} \\
    &\quad- \E_{\bs'\sim P_h(s)}\brs*{\bar{V}^{k}_{h+1}(s'(\pi^k_h(s,\bs'))) - V^{\pi^k}_{h+1}(s'(\pi^k_h(s,\bs'))) } 
\end{align*}
where $(1)$ is by the definition of $\pi^k$ and $(2)$  uses the reward concentration event. Thus, we can write
\begin{align}
    \bar{V}^{k}_h(s) - V^{\pi^k}_h(s) 
    &\leq \E_{\bs'\sim \hat{P}^{k-1}_h(s)}\brs*{\bar{V}^k_{h}(s,\bs')} - \E_{\bs'\sim P_h(s)}\brs*{\bar{V}^k_{h}(s,\bs')}+ 2\E_{\bs'\sim P_h(s)}\brs*{b_{k,h}^r(s,\pi^k_h(s,\bs'))}\nonumber\\
    &\quad  +\E_{\bs'\sim P_h(s)}\brs*{\bar{V}^{k}_{h+1}(s'(\pi^k_h(s,\bs'))) - V^{\pi^k}_{h+1}(s'(\pi^k_h(s,\bs'))) }+ b_{k,h}^p(s) \nonumber\\
    & = \underbrace{\E_{\bs'\sim \hat{P}^{k-1}_h(s)}\brs*{\bar{V}^k_{h}(s,\bs') - V^*_{h}(s,\bs')} - \E_{\bs'\sim P_h(s)}\brs*{\bar{V}^k_{h}(s,\bs') - V^*_{h}(s,\bs')} + b_{k,h}^p(s)}_{(i)} \nonumber\\
        &\quad + \underbrace{\E_{\bs'\sim P_h(s)}\brs*{V^*_{h}(s,\bs')} - \E_{\bs'\sim \hat{P}^{k-1}_h(s)}\brs*{V^*_{h}(s,\bs')}}_{(ii)}+ 2\E_{\bs'\sim P_h(s)}\brs*{b_{k,h}^r(s,\pi^k_h(s,\bs'))} \nonumber\\
    &\quad +\E_{\bs'\sim P_h(s)}\brs*{\bar{V}^{k}_{h+1}(s'(\pi^k_h(s,\bs'))) - V^{\pi^k}_{h+1}(s'(\pi^k_h(s,\bs'))) }
    \label{eq: value-difference transition lookahead}
\end{align}

\textbf{Bounding term $(ii)$:} using the concentration event $E^{pv1}(k)$, we have
\begin{align}
    (ii) &\leq \sqrt{\frac{2\VAR_{\bs'\sim P_{h}(s)}(V^*_{h}(s,\bs')) L^k_{\delta}}{n^{k-1}_{h}(s)\vee 1}} + \frac{HL^k_{\delta}}{n^{k-1}_h(s)\vee 1} \nonumber\\
    & \overset{(1)}\leq \sqrt{\frac{2\VAR_{\bs'\sim P_{h}(s)}(V^{\pi^k}_{h}(s,\bs')) L^k_{\delta}}{n^{k-1}_{h}(s)\vee 1}} + \frac{1}{8H}\E_{\bs'\sim P_{h}(s)}\brs*{V^{\pi^k}_{h}(s,\bs') - V^{\pi_k}_{h}(s,\bs')} + \frac{4H^2L^k_{\delta}}{n^{k-1}_h(s)\vee 1}  + \frac{HL^k_{\delta}}{n^{k-1}_h(s)\vee 1} \nonumber\\
    & \overset{(2)}\leq \sqrt{\frac{2\VAR_{\bs'\sim P_{h}(s)}(V^{\pi^k}_{h}(s,\bs')) L^k_{\delta}}{n^{k-1}_{h}(s)\vee 1}} + \frac{1}{8H}\E_{\bs'\sim P_{h}(s)}\brs*{\bar{V}^k_{h}(s,\bs') - V^{\pi_k}_{h}(s,\bs')} + \frac{5H^2L^k_{\delta}}{n^{k-1}_h(s)\vee 1}.\label{eq: transition lookahead regret term (ii)}
\end{align}
Relation $(1)$ uses \Cref{lemma: variance difference bound} with the values $0\leq V_h^{\pi^k}(s,\bs')\leq V_h^{*}(s,\bs')\leq H$ with $\alpha=8H\cdot\sqrt{2L^k_{\delta}}$ and $(2)$ is by optimism.

\textbf{Bounding term $(i)$:} We first focus on the transition bonus; to bound it, we apply \Cref{lemma: variance difference bound with different measures} w.r.t. $\hat{P}_h^{k-1}(\bs'\vert s),P_h(\bs'\vert s)$, the values $0\leq V_h^{\pi^k}(s,\bs')\leq V_h^{*}(s,\bs')\leq \bar{V}_h^{k}(s,\bs')\leq 3H$ (by optimism), under the event $E^{pv2}(k)$ and with $\alpha=8H\cdot \frac{20}{3}\sqrt{L^k_{\delta}}$:
\begin{align*}
     b_{k,h}^p(s) 
     & = \frac{20}{3}\sqrt{\frac{\VAR_{\bs'\sim\hat{P}^{k-1}_{h}(s)}(\bar{V}^k_{h}(s,\bs')) L^k_{\delta}}{n^{k-1}_{h}(s)\vee 1}} + \frac{400}{3}\frac{HL^k_{\delta}}{n^{k-1}_{h}(s) \vee 1} \\
     & \leq  \frac{1}{8H}\E_{\bs'\sim\hat{P}^{k-1}_{h}(s)}\brs*{\bar{V}^k_{h}(s,\bs') - V^*_{h}(s,\bs')} + \frac{1}{8H}\E_{\bs'\sim P_{h}(s)}\brs*{V^*_{h}(s,\bs') - V^{\pi_k}_{h}(s,\bs')}\\
     &\quad+ \frac{20}{3}\sqrt{\frac{\VAR_{\bs'\sim P_{h}(s)}(V^{\pi^k}_{h}(s,\bs')) L^k_{\delta}}{n^{k-1}_{h}(s)\vee 1}}
     + \frac{1600H^2}{3n^{k-1}_{h}(s)\vee 1} + \frac{20}{3}\frac{4H L^k_{\delta}}{n^{k-1}_h(s)\vee 1} + \frac{400}{3}\frac{HL^k_{\delta}}{n^{k-1}_{h}(s) \vee 1} \\
     & \leq  \frac{1}{8H}\br*{\E_{\bs'\sim\hat{P}^{k-1}_{h}(s)}\brs*{\bar{V}^k_{h}(s,\bs') - V^*_{h}(s,\bs')} - E_{\bs'\sim P_{h}(s)}\brs*{\bar{V}^k_{h}(s,\bs') - V^*_{h}(s,\bs')} }\\
     &\quad+ \frac{1}{8H}\E_{\bs'\sim P_{h}(s)}\brs*{\bar{V}^k_{h}(s,\bs') - V^{\pi_k}_{h}(s,\bs')}  
     + \frac{20}{3}\sqrt{\frac{\VAR_{\bs'\sim P_{h}(s)}(V^{\pi^k}_{h}(s,\bs')) L^k_{\delta}}{n^{k-1}_{h}(s)\vee 1}}
     + \frac{700H^2}{n^{k-1}_{h}(s)\vee 1} .
\end{align*}
Substituting back to term $(i)$, we now have
\begin{align*}
    (i)& \leq 
    \br*{1+\frac{1}{8H}}\br*{\E_{\bs'\sim\hat{P}^{k-1}_{h}(s)}\brs*{\bar{V}^k_{h}(s,\bs') - V^*_{h}(s,\bs')} - E_{\bs'\sim P_{h}(s)}\brs*{\bar{V}^k_{h}(s,\bs') - V^*_{h}(s,\bs')} }\\
     &\quad+ \frac{1}{8H}\E_{\bs'\sim P_{h}(s)}\brs*{\bar{V}^k_{h}(s,\bs') - V^{\pi_k}_{h}(s,\bs')} 
     + \frac{20}{3}\sqrt{\frac{\VAR_{\bs'\sim P_{h}(s)}(V^{\pi^k}_{h}(s,\bs')) L^k_{\delta}}{n^{k-1}_{h}(s)\vee 1}}+\frac{700H^2L^k_{\delta}}{n^{k-1}_{h}(s) \vee 1}  .
\end{align*}

The next step in the proof involves bounding the first term of $(i)$. At this point, we remind that both values can be written as $\bar{V}^k_{h}(s,\bs')=\max_a\bar{V}^k_{h}(s,s'(a),a)$ and $V^*_{h}(s,\bs')= \max_a V^*_{h}(s,s'(a),a)$, inducing the lists $\bar{\ell}=\bar{\ell}^k_h(s)$ and $\ell^*=\ell^*_h(s)$, respectively; thus the expectations can be written as (see \Cref{appendix: transition lookahead list representation} for further details on the list representation, and in particular, \Cref{eq: diff in list representation}):
\begin{align*}
    &\E_{\bs'\sim \hat{P}^{k-1}_h(s)}\brs*{\bar{V}^k_{h}(s,\bs') - V^*_{h}(s,\bs')} - \E_{\bs'\sim P_h(s)}\brs*{\bar{V}^k_{h}(s,\bs') - V^*_{h}(s,\bs')}\\
    & \overset{(1)}= \E_{i,j\sim\hat{\mu}^{k}_{h}\br*{\cdot\vert s;\bar{\ell},\ell^*}}\brs*{\bar{V}^k_{h}(s,s'_{\bar{\ell}(i)},a_{\bar{\ell}(i)}) - V^*_{h}(s,s'_{\ell^*(j)},a_{\ell^*(j)})} \\
    &\quad- \E_{i,j\sim\mu\br*{\cdot\vert \bar{\ell},\ell^*, P_h(s)}}\brs*{\bar{V}^k_{h}(s,s'_{\bar{\ell}(i)},a_{\bar{\ell}(i)}) - V^*_{h}(s,s'_{\ell^*(j)},a_{\ell^*(j)})} \\
    & \overset{(2)}\leq \frac{1}{8H}\E_{i,j\sim\mu\br*{\cdot\vert \bar{\ell},\ell^*, P_h(s)}}\brs*{\bar{V}^k_{h}(s,s'_{\bar{\ell}(i)},a_{\bar{\ell}(i)}) - V^*_{h}(s,s'_{\ell^*(j)},a_{\ell^*(j)})}  + \frac{3H (SA)^2 L^k_{\delta}(2SA+ 8H\cdot 4SA/4)}{n^{k-1}_{h}(s)\vee 1} \\
    & \overset{(1)}\leq \frac{1}{8H}\E_{\bs'\sim P_h(s)}\brs*{\bar{V}^k_{h}(s,\bs') - V^*_{h}(s,\bs')} + \frac{30H^2 (SA)^3 L^k_{\delta}}{n^{k-1}_{h}(s)\vee 1} \\
    & \leq \frac{1}{8H}\E_{\bs'\sim P_h(s)}\brs*{\bar{V}^k_{h}(s,\bs') - V^{\pi^k}_{h}(s,\bs')} + \frac{30H^2 (SA)^3 L^k_{\delta}}{n^{k-1}_{h}(s)\vee 1}
\end{align*}
    Relations $(1)$ formulate the expectation using the list representations and backward, as done in \Cref{eq: diff in list representation}. 
    For inequality $(2)$ we rely on \Cref{lemma: transition different to next state expectation} with $\alpha=8H$ under the event $E^{\ell}(k)$ and the optimism, which ensures that the value difference is bounded in $[0,3H]$. We also remark that the support of the distributions is of size $(SA)^2$; were we to use the same result on the distributions  $\hat{P}^{k-1}_h(s)$ and $P_h(s)$, the support would be of size $S^A$, which would lead to an exponential additive factor. And so, we finally have a bound of 
\begin{align}
    (i)& \leq \frac{3}{8H}\E_{\bs'\sim P_{h}(s)}\brs*{\bar{V}^k_{h}(s,\bs') - V^{\pi_k}_{h}(s,\bs')} + \frac{20}{3}\sqrt{\frac{\VAR_{\bs'\sim P_{h}(s)}(V^{\pi^k}_{h}(s,\bs')) L^k_{\delta}}{n^{k-1}_{h}(s)\vee 1}}
     + \frac{735H^2(SA)^3L^k_{\delta}}{n^{k-1}_{h}(s) \vee 1}. \label{eq: transition lookahead regret term (i) better asympy}
\end{align}

\textbf{Combining both terms.} Substituting this and \Cref{eq: transition lookahead regret term (ii)} into \Cref{eq: value-difference transition lookahead}, we have 
\begin{align*}
    \bar{V}^{k}_h(s) - V^{\pi^k}_h(s) 
    &\leq  \frac{1}{2H}\E_{\bs'\sim P_{h}(s)}\brs*{\bar{V}^k_{h}(s,\bs') - V^{\pi_k}_{h}(s,\bs')} 
    + 9\sqrt{\frac{\VAR_{\bs'\sim P_{h}(s)}(V^{\pi^k}_{h}(s,\bs'))L^k_{\delta}}{n^{k-1}_{h}(s)\vee 1}} + \frac{750H^2(SA)^3 L^k_{\delta}}{n^{k-1}_{h}(s) \vee 1} \nonumber\\
    &\quad  + 2\E_{\bs'\sim P_h(s)}\brs*{b_{k,h}^r(s,\pi^k_h(s,\bs'))}+\E_{\bs'\sim P_h(s)}\brs*{\bar{V}^{k}_{h+1}(s'(\pi^k_h(s,\bs'))) - V^{\pi^k}_{h+1}(s'(\pi^k_h(s,\bs'))) }.
\end{align*}
and further bounding (using the concentration event $E^{r}(k)$
\begin{align*}
    \bar{V}^k_{h}(s,\bs'))  - V^{\pi^k}_{h}(s,\bs')
    &= \hat{r}_h^{k-1}(s,\pi^k_h(s,\bs')) + b_{k,h}^{r}(s,\pi^k_h(s,\bs')) +\bar{V}^k_{h+1}(s'(\pi^k_h(s,\bs')))\\
    &\quad- r_h^{k-1}(s,\pi^k_h(s,\bs')) - V^{\pi^k}_{h+1}(s'(\pi^k_h(s,\bs')))\\
    &\leq \bar{V}^k_{h+1}(s'(\pi^k_h(s,\bs'))) - V^{\pi^k}_{h+1}(s'(\pi^k_h(s,\bs'))) + 2b_{k,h}^{r}(s,\pi^k_h(s,\bs')),
\end{align*}
we finally get the decomposition 
\begin{align*}
    \bar{V}^{k}_h(s) - V^{\pi^k}_h(s) 
    &\leq  \br*{1+\frac{1}{2H}}\E_{\bs'\sim P_h(s)}\brs*{\bar{V}^{k}_{h+1}(s'(\pi^k_h(s,\bs'))) - V^{\pi^k}_{h+1}(s'(\pi^k_h(s,\bs'))) }\nonumber\\
    &\quad   + 9\sqrt{\frac{\VAR_{\bs'\sim P_{h}(s)}(V^{\pi^k}_{h}(s,\bs'))L^k_{\delta}}{n^{k-1}_{h}(s)\vee 1}} + \frac{750H^2(SA)^3 L^k_{\delta}}{n^{k-1}_{h}(s) \vee 1} + 3\E_{\bs'\sim P_h(s)}\brs*{b_{k,h}^r(s,\pi^k_h(s,\bs'))}.
\end{align*}
At this point, we choose to take $s=s_h^k$ and sum over all $k\in[K]$; specifically, for $\bs'=\bs'^k_{h+1}$, the action becomes $\pi^k_h(s,\bs')=a_h^k$ and $s'(\pi^k_h(s,\bs'))=s_{h+1}^k$. Formally, we can write the bound as 
\begin{align*}
    \sum_{k=1}^K\bar{V}^{k}_h(s_h^k) - V^{\pi^k}_h(s_h^k) 
    &\leq  \br*{1+\frac{1}{2H}}\sum_{k=1}^K\E\brs*{\bar{V}^{k}_{h+1}(s_{h+1}^k) - V^{\pi^k}_{h+1}(s_{h+1}^k)\vert F_{k,h-1} } \\
    &\quad + 3\sum_{k=1}^K\E\brs*{b_{k,h}^r(s_h^k,a_h^k)\vert F_{k,h-1}}+9\sum_{k=1}^K\sqrt{\frac{\VAR_{\bs'\sim P_{h}(s_h^k)}(V^{\pi^k}_{h}(s_h^k,\bs'))L^k_{\delta} }{n^{k-1}_{h}(s_h^k)\vee 1}}\\
    &\quad+ \sum_{k=1}^K\frac{750H^2(SA)^3 L^k_{\delta}}{n^{k-1}_{h}(s_h^k) \vee 1}.
\end{align*}
and, in particular, under the events $E^{\mathrm{diff}}$ and $E^{br}$, it holds that 
\begin{align*}
    \sum_{k=1}^K\bar{V}^{k}_h(s_h^k) - V^{\pi^k}_h(s_h^k) 
    &\leq  \br*{1+\frac{1}{2H}}^2\sum_{k=1}^K\br*{\bar{V}^{k}_{h+1}(s_{h+1}^k)) - V^{\pi^k}_{h+1}(s_{h+1}^k)} + 36H^2 \ln\frac{6HK(K+1)}{\delta}\\
    &\quad+ 3\sum_{k=1}^Kb_{k,h}^r(s_h^k,a_h^k) + 54 \ln\frac{6HK(K+1)}{\delta}  \\
    &\quad + 9\sum_{k=1}^K\sqrt{\frac{\VAR_{\bs'\sim P_{h}(s_h^k)}(V^{\pi^k}_{h}(s_h^k,\bs'))L^k_{\delta} }{n^{k-1}_{h}(s_h^k)\vee 1}} + \sum_{k=1}^K\frac{750H^2(SA)^3 L^k_{\delta}}{n^{k-1}_{h}(s_h^k) \vee 1}.
\end{align*}

To conclude the proof, we recursively apply this formula from $h=1$ to $h=H+1$ (where the values are zero) and use the optimism. This yields
\begin{align*}
    \Regret^T(K) 
    & = \sum_{k=1}^K V_1^*(s_h^k) -V^{\pi^k}_1(s_h^k) \\
    & \leq \sum_{k=1}^K \bar{V}_1^k(s_h^k) -V^{\pi^k}_1(s_h^k) \tag{Optimism} \\
    &\overset{(1)}\leq 9\br*{1+\frac{1}{2H}}^{2H}\sum_{k=1}^K\sum_{h=1}^H\frac{\sqrt{\VAR_{\bs'\sim P_{h}(s_h^k)}(V^{\pi^k}_{h} (s_h^k,\bs')) L^k_{\delta}}}{\sqrt{n^{k-1}_{h}(s_h^k)\vee 1}} \\
    &\quad + 3\br*{1+\frac{1}{2H}}^{2H}\sum_{k=1}^K\sum_{h=1}^H\sqrt{ \frac{L^k_{\delta} }{n^{k-1}_{h}(s_h^k,a_h^k)\vee 1}} \\
    &\quad+ \br*{1+\frac{1}{2H}}^{2H}\sum_{k=1}^K\sum_{h=1}^H\frac{750H^2(SA)^3 L^k_{\delta}}{n^{k-1}_{h}(s_h^k) \vee 1} + 90H^3\br*{1+\frac{1}{2H}}^{2H} \ln\frac{6HK(K+1)}{\delta} \\
    & \overset{(2)}\le 50\sqrt{H^3SK}L^K_{\delta} + 50\sqrt{2S}H^2\br*{L^K_{\delta}}^{1.5} \\
    &\quad + 9 \sqrt{L^K_{\delta}}\br*{SAH + 2\sqrt{SAH^2K}} 
    +2050H^3S^4A^3L^K_{\delta}\br*{2 + \ln(K)} + 250H^3L^K_{\delta} \\
    & = \Ocal\br*{\sqrt{H^2SK}\br*{\sqrt{H}+\sqrt{A}}L^K_{\delta}+H^3S^4A^3\br*{L^K_{\delta}}^2}.
\end{align*}
Relation $(1)$ is the recursive application of the difference alongside substitution of the reward bonuses, while relation $(2)$ is by \Cref{lemma: sum value variance bound transition lookahead} and \Cref{lemma: count sum bounds}. 
\end{proof}

\clearpage

\subsubsection{Lemmas for Bounding Bonus Terms}
\begin{lemma}
\label{lemma: sum value variance bound transition lookahead}
    Under the event $E^{\VAR}$ it holds that
    \begin{align*}
        \sum_{k=1}^K\sum_{h=1}^H \frac{\sqrt{\VAR_{\bs'\sim P_{h}(s_h^k)}(V^{\pi^k}_{h}(s_h^k,\bs')) }}{\sqrt{n^{k-1}_{h}(s_h^k)\vee 1}}
        \leq 2\sqrt{H^3SKL^K_{\delta}} + \sqrt{8S}H^2 L^K_{\delta}.
    \end{align*}
\end{lemma}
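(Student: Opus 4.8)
The plan is to follow the same template as \Cref{lemma: sum value variance bound reward lookahead}, the only difference being that the relevant counts and the law of total variance are indexed by states rather than state–action pairs. First I would apply the Cauchy--Schwarz inequality over the double sum,
\begin{align*}
    \sum_{k=1}^K\sum_{h=1}^H \frac{\sqrt{\VAR_{\bs'\sim P_{h}(s_h^k)}(V^{\pi^k}_{h}(s_h^k,\bs'))}}{\sqrt{n^{k-1}_{h}(s_h^k)\vee 1}}
    \leq \sqrt{\sum_{k=1}^K\sum_{h=1}^H\VAR_{\bs'\sim P_{h}(s_h^k)}(V^{\pi^k}_{h}(s_h^k,\bs'))}\;\sqrt{\sum_{k=1}^K\sum_{h=1}^H \frac{1}{n^{k-1}_{h}(s_h^k)\vee 1}},
\end{align*}
and then bound the two factors separately.

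The second factor is a pure counting term: since the visitation counts $n^{k-1}_h(s^k_h)$ range over the $S$ states, \Cref{lemma: count sum bounds} gives $\sum_{k=1}^K\sum_{h=1}^H \frac{1}{n^{k-1}_{h}(s_h^k)\vee 1}\le SH(2+\ln K)$, and I would further bound $2+\ln K\le 2L^K_\delta$ using the definition $L^K_\delta=\ln\frac{16S^3A^2HK^2(K+1)}{\delta}$, so this factor is at most $\sqrt{2SH L^K_\delta}$. For the first factor, I would invoke the event $E^{\VAR}$ to pass from the realized sum of variances to its conditional expectation, and then apply the transition-lookahead law of total variance (\Cref{lemma: ltv transition-lookahead}) inside each episode. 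Concretely, under $E^{\VAR}$,
\begin{align*}
    \sum_{k=1}^K\sum_{h=1}^H\VAR_{\bs'\sim P_{h}(s_h^k)}(V^{\pi^k}_{h}(s_h^k,\bs'))
    &\le 2\sum_{k=1}^K\E\brs*{\sum_{h=1}^H\VAR_{\bs'\sim P_{h}(s_h^k)}(V^{\pi^k}_{h}(s_h^k,\bs'))\;\middle|\;F_{k-1}} + 4H^3\ln\frac{6HK(K+1)}{\delta},
\end{align*}
and since $\pi^k$ is $F_{k-1}$-measurable, \Cref{lemma: ltv transition-lookahead} bounds each inner conditional expectation by $\E[(\sum_{h=1}^H r_h(s_h,a_h)-V_1^{T,\pi^k}(s_1^k))^2\mid F_{k-1}]\le H^2$, because both the cumulative expected reward and the value lie in $[0,H]$. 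Hence the sum of variances is at most $2H^2K+4H^3\ln\frac{6HK(K+1)}{\delta}\le 2H^2K+4H^3L^K_\delta$.

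Finally I would combine the two bounds using $\sqrt{a+b}\le\sqrt a+\sqrt b$:
\begin{align*}
    \sqrt{2H^2K+4H^3L^K_\delta}\cdot\sqrt{2SHL^K_\delta}
    \le \sqrt{2H^2K}\cdot\sqrt{2SHL^K_\delta}+\sqrt{4H^3L^K_\delta}\cdot\sqrt{2SHL^K_\delta}
    = 2\sqrt{H^3SKL^K_\delta}+\sqrt{8S}\,H^2 L^K_\delta,
\end{align*}
which is the claimed bound. There is no serious obstacle here; the only point requiring care is to make sure the LTV and counting arguments are applied at the level of states — the variance term $\VAR_{\bs'\sim P_h(s_h)}(V^{T,\pi}_h(s_h,\bs'))$ matches exactly the quantity appearing in \Cref{lemma: ltv transition-lookahead}, and the counts $n^{k-1}_h(s^k_h)$ involve only $S$ states — which is precisely what keeps the $A$-factor out of the dominant term.
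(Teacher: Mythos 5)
Your proposal is correct and follows essentially the same route as the paper's proof: Cauchy--Schwarz to split the double sum, \Cref{lemma: count sum bounds} for the counting factor, and the event $E^{\VAR}$ together with the transition-lookahead law of total variance (\Cref{lemma: ltv transition-lookahead}) to bound the sum of variances by $2H^2K+4H^3L^K_\delta$ before recombining. The only cosmetic difference is that you make explicit the $F_{k-1}$-measurability of $\pi^k$ and the inequality $2+\ln K\le 2L^K_\delta$, both of which the paper uses implicitly.
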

\begin{proof}
    Similar to \Cref{lemma: sum value variance bound reward lookahead}, we again rely on the lookahead version of the law of total variation to prove this bound. First, by Cauchy-Schwartz inequality, it holds that
    \begin{align*}
        \sum_{k=1}^K\sum_{h=1}^H \frac{\sqrt{\VAR_{\bs'\sim P_{h}(s_h^k)}(V^{\pi^k}_{h}(s_h^k,\bs')) }}{\sqrt{n^{k-1}_{h}(s_h^k)\vee 1}}
        \leq \sqrt{ \sum_{k=1}^K\sum_{h=1}^H\VAR_{\bs'\sim P_{h}(s_h^k)}(V^{\pi^k}_{h}(s_h^k,\bs')) }\sqrt{ \sum_{k=1}^K\sum_{h=1}^H \frac{1}{n^{k-1}_{h}(s_h^k)\vee 1}}.
    \end{align*}
    We use \Cref{lemma: count sum bounds} to bound the second term by
    \begin{align*}
        \sum_{k=1}^K\sum_{h=1}^H \frac{1}{n^{k-1}_{h}(s_h^k)\vee 1}
        \leq SH\br*{2 + \ln(K)}
    \end{align*}
    and focus on bounding the first term. Under $E^{\VAR}$, we have
    \begin{align*}
        &\sum_{k=1}^K\sum_{h=1}^H\VAR_{\bs'\sim P_{h}(s_h^k)}(V^{\pi^k}_{h}(s_h^k,\bs'))\\
        &\leq 2\sum_{k=1}^K \E\brs*{\sum_{h=1}^H\VAR_{\bs'\sim P_{h}(s_h^k)}(V^{\pi^k}_{h}(s_h^k,\bs'))|F_{k-1}}  + 4H^3 \ln\frac{6HK(K+1)}{\delta} \tag{Under $E^{\VAR}$}\\
        & = 2\sum_{k=1}^K \E\brs*{\br*{\sum_{h=1}^H r_h(s_h^k,a_h^k) - V_1^{\pi^k}(s_1^k) }^2|F_{k-1}}  + 4H^3 \ln\frac{6HK(K+1)}{\delta} \tag{By \Cref{lemma: ltv transition-lookahead} }\\
        & \leq 2H^2K  + 4H^3 \ln\frac{6HK(K+1)}{\delta},
    \end{align*}
    where the last inequality is since both the values and cumulative rewards are bounded in $[0,H]$. Combining both, we get
    \begin{align*}
        \sum_{k=1}^K\sum_{h=1}^H \frac{\sqrt{\VAR_{\bs'\sim P_{h}(s_h^k)}(V^{\pi^k}_{h}(s_h^k,\bs')) }}{\sqrt{n^{k-1}_{h}(s_h^k)\vee 1}}
        &\leq \sqrt{2H^2K  + 4H^3 \ln\frac{6HK(K+1)}{\delta}} \sqrt{SH\br*{2 + \ln(K)}} \\
        & \leq \sqrt{2H^2K  + 4H^3 \ln\frac{6HK(K+1)}{\delta}} \sqrt{2SH\ln\frac{6HK(K+1)}{\delta}} \\
        & \leq 2\sqrt{H^3SKL^K_{\delta}} + \sqrt{8S}H^2 L^K_{\delta}.
    \end{align*}
\end{proof}
\clearpage

\clearpage

\subsection{Example: Value Gain due to Transition Lookahead}
\label{appendix: transition lookahead example}

\begin{figure}[h]
    \centering
    \includegraphics[width=0.85\linewidth]{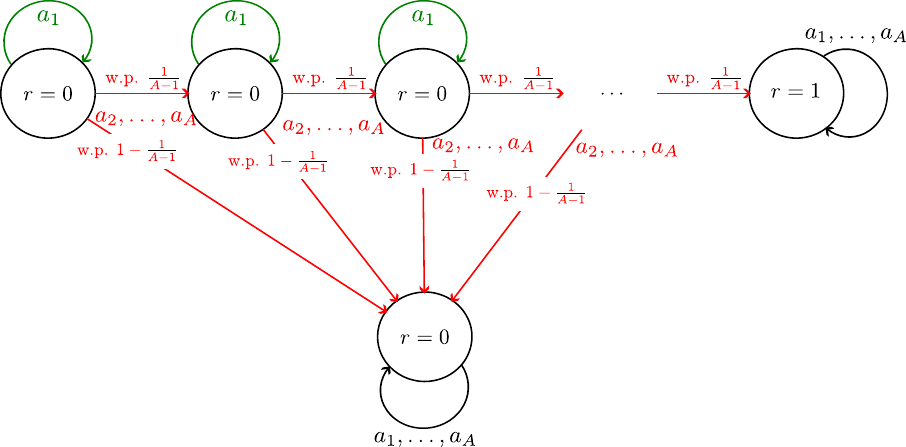} 
     \caption{Random chain: agents start at the left side and must reach its right side to collect a reward.}
     \label{figure: chain transition lookahead}
\end{figure}

We now present in further detail the example described at \Cref{section: comparison to standard RL}. This example is inspired by the one in Appendix C.3 in \citep{merlis2024value}, greatly simplifying it and achieving similar behavior for a much smaller environment. 

Agents start at the left side of a chain of length $H/2$ (depicted in \Cref{figure: chain transition lookahead}) and have two options: 
\begin{enumerate}
    \item Play a safe action $a_1$ that leaves the agent in the same state (in green), or,
    \item play one of the $A-1$ risky actions $a_2,\dots,a_A$ (in red). Each of these actions moves the agent forward in the chain w.p. $\frac{1}{A-1}$, but leads to a terminal non-rewarding state w.p. $1-\frac{1}{A-1}$.
\end{enumerate}
At the end of the chain, the last state is an absorbing state with a unit reward.

Without lookahead, all agents can do is try to randomly reach the end of the chain, succeeding with probability $(A-1)^{-H/2}$. In particular, such agents cannot collect more than $V^{no}\le H(A-1)^{-H/2}$. On the other hand, with transition lookahead, agents observe whether the risky actions allow moving forward in the chain or lead to the bad terminal state. If one action allows progressing in the chain (which happens w.p. $p=1-\br*{1-\frac{1}{A-1}}^{A-1}\ge1-\nicefrac{1}{e}$), a lookahead agent would take it, and otherwise, they will use $a_1$ to remain in the same state. In other words, optimal lookahead agents reach the reward after $H/2-1$ successful 'progression steps' with probability $p$ each. The probability of reaching the end of the chain using less than $5H/6$ steps is at least 
\begin{align*}
    \Pr\br*{\mathrm{Bin}\br*{\frac{5H}{6}-1,1-\frac{1}{e}}> \frac{H}{2}}\ge c_0,
    \quad \textrm{for some absolute } c_0>0. 
\end{align*}
Under this event, the agent collects $\frac{H}{6}$ rewards, so the lookahead value is at least $V^{T,*}\!\ge\! \frac{c_0H}{6}\!=\!\Omega(H)$.

To summarize, for this example, no lookahead optimal value is at most $\approx HA^{-H/2}$, while transition lookahead agents can collect a value of $\approx H$: transition lookahead increases the value by an exponential multiplicative factor. The difference between the two values is $G^T=\Omega(H)$, and following the discussion in \Cref{section: comparison to standard RL}, a sublinear transition lookahead regret would imply a negatively linear standard regret of $\Regret(K)\lesssim -HK$.

\begin{remark}
    The chain length was chosen to be $H/2$ for simplicity -- similar conclusions can be achieved for a length of $\approx 1-\nicefrac{1}{e}$. Then, the multiplicative increase in value due to transition lookahead would be $\approx (A-1)^{\br*{1-\frac{1}{e}}H}$, matching Proposition 2 in \citep{merlis2024value}. In fact, setting the transition from the last state of the chain to the terminal state (rendering it possible to earn only one unit of reward), the analysis coincides with the one in \citep{merlis2024value}. Following their exact derivation, the value with lookahead information is multiplicatively larger than its no-lookahead factor by an exponential factor of $\Theta\br*{(A-1)^{\min\brc*{\br*{1-\frac{1}{e}}H-1,S}-2}}$. This significantly improves the result in \citep{merlis2024value}, that only holds if $S\ge A^{\br*{1-\frac{1}{e}}H}$.
\end{remark}


\section{Auxiliary Lemmas}
In this appendix, we prove various auxiliary lemma that will be used throughout our proofs.
\subsection{Concentration results}
We first present and reprove a set of well-known concentration results.
\begin{lemma}
    \label{lemma: concentration results}
    Let $P$ be a distribution over a discrete set $\X$ of size $\abs*{\X}=M$ and let $X,X_1,\dots,X_n$ be independent samples from this distribution. Also, let $U:\X\mapsto[0,C]$ for some $C>0$ and define the empirical distribution $\hat{P}_n(x) = \frac{1}{n}\sum_{i=1}^n\Ind{x_i=x}$. Then, for any $\delta\in(0,1)$, each of the following events hold w.p. at least $1-\delta$:
    \begin{align*}
        &E^p = \brc*{\forall x\in\X, |P\br*{x} - \hat{P}_n(x)| \le \sqrt{\frac{2P(x)\ln\frac{2M}{\delta}}{n}} + \frac{2\ln\frac{2M}{\delta}}{3n}} \\
        &E^{pv1}=\brc*{\abs*{\sum_{x\in\X}\br*{\hat{P}_n(x)-P(x)} U(x)} \leq \sqrt{\frac{2\VAR_{P}(U(X))\ln\frac{2}{\delta}}{n}} + \frac{2C\ln\frac{2}{\delta}}{3n} }\\
        &E^{pv2} = \brc*{\abs*{\sqrt{\VAR_{\hat{P}_n}(U(X))} - \sqrt{\VAR_{P}(U(X))}} \leq 4C\sqrt{\frac{\ln\frac{2}{\delta}}{n\vee 1}}},
    \end{align*}
    where $\VAR_P(U(X)) = \sum_{x\in\X}P(x)U(x)^2 -  \br*{\sum_{x\in\X}P(x)U(x)}^2$.
\end{lemma}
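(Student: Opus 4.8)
The three events are standard, and the plan is to establish each separately, then conclude by a union bound—except that since the lemma asks for each to hold with probability $1-\delta$ individually (not jointly), no final union bound across the three is needed; each is proved in isolation.

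For $E^p$: this is the empirical Bernstein / Chernoff bound applied coordinatewise. For a fixed $x\in\X$, $n\hat{P}_n(x)$ is a sum of $n$ i.i.d.\ Bernoulli$(P(x))$ random variables, so Bernstein's inequality gives $|P(x)-\hat{P}_n(x)|\le\sqrt{2P(x)\ln(2/\delta')/n}+\tfrac{2\ln(2/\delta')}{3n}$ with probability $1-\delta'$ (the two-sided version uses $2/\delta'$). Setting $\delta'=\delta/M$ and taking a union bound over the $M$ elements of $\X$ yields $E^p$. For $E^{pv1}$: apply Bernstein's inequality directly to the i.i.d.\ bounded random variables $U(X_1),\dots,U(X_n)\in[0,C]$, whose common mean is $\sum_x P(x)U(x)$ and common variance is $\VAR_P(U(X))$; the empirical mean is $\sum_x\hat{P}_n(x)U(x)$, so Bernstein gives exactly the stated inequality with probability $1-\delta$. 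No union bound is needed here since it is a single scalar concentration statement.

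For $E^{pv2}$: this compares empirical and population standard deviations of $U(X)$. The plan is to use the identity $\VAR_P(U(X))=\tfrac12\E_{X,X'\sim P}[(U(X)-U(X'))^2]$ together with the elementary inequality $|\sqrt{a}-\sqrt{b}|\le\sqrt{|a-b|}$ for $a,b\ge0$, reducing the claim to bounding $|\VAR_{\hat{P}_n}(U(X))-\VAR_P(U(X))|$. Writing both variances as $\E[U^2]-(\E U)^2$ under the respective measures, the difference splits into $(\E_{\hat P_n}[U^2]-\E_P[U^2])$ and $((\E_P U)^2-(\E_{\hat P_n}U)^2)$; each is controlled by Hoeffding's inequality applied to $U(X_i)^2\in[0,C^2]$ and to $U(X_i)\in[0,C]$ respectively (using $a^2-b^2=(a-b)(a+b)$ and $|a+b|\le 2C$ for the latter), giving a bound of order $C^2\sqrt{\ln(2/\delta)/n}$ on the variance difference and hence $4C\sqrt{\ln(2/\delta)/n}$ on the standard-deviation difference after taking the square root and collecting constants. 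One should be slightly careful to get the constant $4$ rather than something larger; the standard route (see the cited references such as the appendix of \citet{efroni2021confidence} or \citet{azar2017minimax}) is to bound $|\sqrt{\VAR_{\hat P_n}}-\sqrt{\VAR_P}|\le\sqrt{|\VAR_{\hat P_n}-\VAR_P|}$ and then use a Hoeffding bound on the difference of variances that is itself at most $2C^2\sqrt{2\ln(2/\delta)/n}$ plus lower-order terms, and simplify.

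The main obstacle is purely bookkeeping in $E^{pv2}$: chaining the $|\sqrt a-\sqrt b|\le\sqrt{|a-b|}$ step with the Hoeffding bounds on the two pieces of the variance difference while keeping the leading constant at exactly $4C$ (rather than, say, $5C$ or an expression with extra lower-order $1/n$ terms) requires a careful but routine accounting; all the probabilistic content is immediate from standard Bernstein/Hoeffding bounds. Since these are textbook facts, the proof will be short and is included "for completeness," as the surrounding text indicates.
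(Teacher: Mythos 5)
Your treatments of $E^p$ and $E^{pv1}$ match the paper's proof exactly (Bernstein on the Bernoulli indicators with a union bound over the $M$ elements, and Bernstein on the scalars $U(X_i)$, respectively), and are correct.

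The $E^{pv2}$ argument, however, has a genuine gap: the route you propose cannot produce the stated rate. If you bound $\abs*{\VAR_{\hat{P}_n}(U(X))-\VAR_P(U(X))}$ by Hoeffding applied to $U(X_i)^2$ and $U(X_i)$, you get a quantity of order $C^2\sqrt{\ln(2/\delta)/n}$; passing through $\abs*{\sqrt{a}-\sqrt{b}}\le\sqrt{\abs{a-b}}$ then yields $C\br*{\ln(2/\delta)/n}^{1/4}$, an $n^{-1/4}$ rate, not the claimed $4C\sqrt{\ln(2/\delta)/n}$. This is not a constant-bookkeeping issue but a loss of a full square root in the rate, and no amount of "collecting constants" recovers it. (The alternative $\abs*{\sqrt a-\sqrt b}=\abs{a-b}/(\sqrt a+\sqrt b)$ fails too when $\VAR_P(U(X))$ is small, since the denominator is uncontrolled.) The paper instead follows Lemma 19 of Efroni et al., which invokes Theorem 10 of Maurer and Pontil (2009): the unbiased U-statistic estimator $V_n=\frac{1}{2n(n-1)}\sum_{i,j}\br*{U(X_i)-U(X_j)}^2$ satisfies $\abs*{\sqrt{V_n}-\sqrt{\VAR_P(U(X))}}\le C\sqrt{2\ln(2/\delta)/(n-1)}$ directly at the $n^{-1/2}$ rate — concentration is established for the standard deviation itself, not deduced from concentration of the variance. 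A short algebraic computation then shows $\abs*{V_n-\VAR_{\hat P_n}(U(X))}\le C^2/(n-1)$, so $\abs*{\sqrt{V_n}-\sqrt{\VAR_{\hat P_n}(U(X))}}\le C/\sqrt{n-1}$ (here the crude $\sqrt{\abs{a-b}}$ step is harmless because the deterministic gap is already $O(1/n)$), and the triangle inequality gives the constant $4$. You would need to replace your Hoeffding-based step with this (or an equivalent self-bounding) argument for $E^{pv2}$ to go through.
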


\begin{proof}
    All the results require standard probability arguments and are stated for completeness.
    
    For the first event $E^p$, notice that each of the components $\hat{P}_n(x)$ is the empirical mean of independent Bernoulli random variables $X_i(x)$ of mean $P(x)$. Therefore, by Bernstein's inequality, recalling that the variance of the variable $Ber(p)$ is $p(1-p)$, we get w.p. at least $1-\frac{\delta}{M}$ that 
 \begin{align*}
     |P(x) - \hat{P}_n(x)| \le \sqrt{\frac{2P(x)(1-P(x))\ln\frac{2M}{\delta} }{n}} +\frac{2\ln\frac{2M}{\delta} }{3n}
     \leq \sqrt{\frac{2P(x)\ln\frac{2M}{\delta} }{n}} +\frac{2\ln\frac{2M}{\delta} }{3n}.
 \end{align*}
 Taking the union bound over all $x\in\X$ implies that $E^p$ holds w.p. at least $1-\delta$.

 For the second event $E^{pv1}$, we apply Bernstein's inequality on the variables $Y_i=U(X_i)$. The empirical mean is given by $\hat{Y}_n=\frac{1}{n}\sum_iU(X_i) = \sum_{x\in\X}\hat{P}_n(x)U(x)$ and its average is $\E[Y]=\sum_{x\in\X}P(x)U(x)$. Similarly, the variance of the random variables is $\VAR(Y)=\VAR_P(U(X))$. Thus, by Bernstein's inequality, w.p. at least  $1-\delta$,
 \begin{align*}
    \abs*{\hat{Y}_n - \E[Y]} \leq \sqrt{\frac{2\VAR(Y)\ln\frac{2}{\delta}}{n}} + \frac{2C\ln\frac{2}{\delta}}{3n}.
 \end{align*}
 Stating the bounds in terms of $X_i$ leads to the second event.

For the last event, we follow the analysis of \citep[][Lemma 19]{efroni2021confidence}, which in turn, relies on \citep[][Theorem 10]{maurer2009empirical}. Define $V_n=\frac{1}{2n(n-1)}\sum_{i,j=1}^n\br*{U(X_i) - U(X_j)}^2$. This is a well-known unbiased variance estimator, namely, $\E\brs*{V_n} = \VAR_{P}(U(X))$, and by \citep[][Theorem 10]{maurer2009empirical}, for any $\delta>0$ it holds w.p. at least $1-\delta$ that 
\begin{align*}
    \abs*{\sqrt{V_n} - \sqrt{\VAR_{P}(U(X))}} \leq C\sqrt{\frac{2\ln\frac{2}{\delta}}{n-1}},
\end{align*}
where we scaled the bound by $C$ to account for the values being in $[0,C]$.

Next, we relate $V_{n}$ to the empirical variance. By elementary algebra, we have
\begin{align*}
    V_{n}
    &=\frac{1}{2n(n-1)}\sum_{i,j=1}^n\br*{U(X_i) - U(X_j)}^2 \\
    &= \frac{1}{n}\sum_{i=1}^nU(X_i)^2 - \frac{1}{n(n-1)}\sum_{i\ne j}U(X_i)U(X_j) \\
    & =  \frac{1}{n}\sum_{i=1}^nU(X_i)^2 - \frac{n}{(n-1)}\br*{\frac{1}{n}\sum_{i}U(X_i)}^2 + \frac{1}{n(n-1)} \sum_{i=1}^nU(X_i)^2 \\
    & = \sum_{x\in\X}\hat{P}_n(x)U(x)^2 - \br*{\sum_{x\in\X}\hat{P}_n(x)U(x)}^2 
     + \frac{1}{n(n-1)}\sum_{i=1}^nU(X_i)^2 - \frac{1}{n^2(n-1)}\br*{\sum_{i=1}^nU(X_i)}^2.
\end{align*}
The first two terms are exactly the variance w.r.t. the empirical distribution; therefore, using the inequality $\abs*{\sqrt{a}-\sqrt{b}}\leq \sqrt{\abs{a-b}}$ for positive numbers, we have
\begin{align*}
    \abs*{\sqrt{V_{n}} - \sqrt{\VAR_{\hat{P}_n}(U(X))}}
    &\leq \sqrt{\abs*{\frac{1}{n(n-1)}\sum_{i=1}^nU(X_i)^2 - \frac{1}{n^2(n-1)}\br*{\sum_{i=1}^nU(X_i)}^2}}
    \leq \sqrt{\frac{C^2}{n-1}}.
\end{align*}
Combining both inequalities and recalling the trivial bound of $C$ on the difference, we get that w.p. at least $1-\delta$,
\begin{align*}
    \abs*{\sqrt{\VAR_{\hat{P}_n}(U(X))} - \sqrt{\VAR_{P}(U(X))}} &\leq \min\brc*{C\sqrt{\frac{2\ln\frac{2}{\delta}}{n-1}} + \sqrt{\frac{C^2}{n-1}},C}  \leq 4C\sqrt{\frac{\ln\frac{2}{\delta}}{n\vee 1}}.
\end{align*}     
\end{proof}

Next, we present a short lemma that allows moving between different spaces of probabilities.
\begin{lemma}
    \label{lemma: empirical expectation of subsets}
    Let $\X$ be a finite set and let $X_1,\dots,X_n\in\X$. Also, let $E_1,\dots,E_m\subseteq \X$ be a partition of the set $\X$, namely, for all $i\ne j$, $E_i\cap E_j=\emptyset$ and $\cup_{i=1}^m E_i=\X$. Finally, let $f:\X\mapsto \R$ such that for all $i\in[m]$ and $x\in E_i$, it holds that $f(x)=f(i)$, and define
    \begin{align*}
        \hat{P}_n(x) = \frac{1}{n}\sum_{\ell=1}^n\Ind{X_\ell=x},\quad\textrm{and},\quad \hat{Q}_n(i) = \frac{1}{n}\sum_{\ell=1}^n\Ind{X_\ell\in E_i}.
    \end{align*} 
    Then, the following hold:
    \begin{enumerate}
        \item $\hat{Q}_n(i) = \hat{P}_n(E_i)\triangleq \sum_{x\in E_i}\hat{P}_n(x)$ and, in particular, $\E_{i\sim \hat{Q}_n}\brs*{f(i)} = \E_{x\sim \hat{P}_n}\brs*{f(x)}$.
        \item If $P$ is a distribution over  $\X$ and $X_1,\dots,X_n\in\X$ are i.i.d. samples from $P$, then $\E[\hat{Q}_n(i)] = P(E_i)\triangleq Q(i)$. It also holds that $\E_{x\sim  P}\brs*{f(x)} = \E_{i\sim Q}\brs*{f(i)}$.
    \end{enumerate}
\end{lemma}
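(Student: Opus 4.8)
The plan is to prove both parts by a direct expansion of the relevant finite sums, using only two facts: that $\{E_i\}_{i\in[m]}$ is a genuine partition of $\X$ (disjoint with union $\X$), and that $f$ is constant on each block, $f(x)=f(i)$ for $x\in E_i$.

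First I would dispatch Claim~1. For each sample index $\ell\in[n]$, disjointness of the blocks gives the pointwise identity $\Ind{X_\ell\in E_i}=\sum_{x\in E_i}\Ind{X_\ell=x}$; averaging over $\ell$ then yields $\hat{Q}_n(i)=\sum_{x\in E_i}\hat{P}_n(x)=\hat{P}_n(E_i)$, which is the first assertion. For the expectation identity I expand $\E_{i\sim\hat{Q}_n}[f(i)]=\sum_{i=1}^m\hat{Q}_n(i)f(i)=\sum_{i=1}^m\sum_{x\in E_i}\hat{P}_n(x)f(i)$, replace $f(i)$ by $f(x)$ inside the inner sum (legitimate since $x\in E_i$ and $f$ is block-constant), and use $\X=\bigsqcup_i E_i$ to collapse the double sum into $\sum_{x\in\X}\hat{P}_n(x)f(x)=\E_{x\sim\hat{P}_n}[f(x)]$.

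Then Claim~2 follows by mirroring these computations with expectations over the i.i.d.\ draws. Linearity of expectation gives $\E[\hat{Q}_n(i)]=\frac1n\sum_{\ell=1}^n\Pr(X_\ell\in E_i)=\Pr(X_1\in E_i)=P(E_i)=Q(i)$. For the remaining identity I repeat the rearrangement of Claim~1 with $P$ in place of $\hat{P}_n$: $\E_{x\sim P}[f(x)]=\sum_{x\in\X}P(x)f(x)=\sum_{i=1}^m\sum_{x\in E_i}P(x)f(x)=\sum_{i=1}^m f(i)\sum_{x\in E_i}P(x)=\sum_{i=1}^m Q(i)f(i)=\E_{i\sim Q}[f(i)]$.

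There is no genuine obstacle here; this is bookkeeping. The only point deserving a moment of care is to make the block structure explicit when re-indexing the double sum over $i$ and $x\in E_i$ as a single sum over $x\in\X$ — this is exactly where both ``partition'' (so the re-indexing is a bijection on summation variables) and ``$f$ block-constant'' (so $f(i)$ may be swapped for $f(x)$) are used — after which everything reduces to linearity of expectation and rearrangement of finitely many terms.
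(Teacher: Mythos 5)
Your proof is correct and follows essentially the same route as the paper's: expand the indicator of $E_i$ over its elements, collapse the double sum over $i$ and $x\in E_i$ using the partition and block-constancy of $f$, and apply linearity of expectation for the second part. The only cosmetic difference is that you compute $\E[\hat{Q}_n(i)]$ directly from $\Pr(X_\ell\in E_i)$ whereas the paper first notes $\E[\hat{P}_n(x)]=P(x)$ and sums; these are trivially equivalent.
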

\begin{proof}
    For the first part, we have by definition that
    \begin{align*}
        \hat{Q}_n(i) 
        &=\frac{1}{n}\sum_{\ell=1}^n\Ind{X_\ell\in E_i}
        =\sum_{x\in\X}\frac{1}{n}\sum_{\ell=1}^n\Ind{X_\ell=x}\Ind{x\in E_i}
        =\sum_{x\in\X}\hat{P}_n(x)\Ind{x\in E_i} \\
        &=\sum_{x\in E_i}\hat{P}_n(x)
        = \hat{P}_n(E_i).
    \end{align*}    
    In particular, it holds that 
    \begin{align*}
        \E_{i\sim \hat{Q}_n}\brs*{f(i)}
        &= \sum_{i=1}^m \hat{Q}_n(i)f(i)
        = \sum_{i=1}^m \sum_{x\in E_i}\hat{P}_n(x)f(i)
        \overset{(1)}= \sum_{i=1}^m \sum_{x\in E_i}\hat{P}_n(x)f(x)
        \overset{(2)}=\sum_{x\in \X}\hat{P}_n(x)f(x)\\
        &= \E_{x\sim \hat{P}_n}\brs*{f(x)},
    \end{align*}
    where $(1)$ is since $f$ is constant inside $E_i$ and $(2)$ is since $\brc*{E_i}_{i=1}^m$ partition $\X$.
    
    For the second part of the statement, notice that since the samples are i.i.d., it holds that $\E\brs*{\hat{P}_n(x)} = P(x)$, and therefore,
    \begin{align*}
        \E[\hat{Q}_n(i)]
        =\E\brs*{\sum_{x\in E_i}\hat{P}_n(x)}
        = \sum_{x\in E_i}P(x)
        =P(E_i)
        =Q(i).
    \end{align*}
    Finally, as in the first part of the statement, it holds that 
    \begin{align*}
        \E_{i\sim Q}\brs*{f(i)}
        &= \sum_{i=1}^m Q(i)f(i)
        = \sum_{i=1}^m \sum_{x\in E_i}P(x)f(i)
        = \sum_{i=1}^m \sum_{x\in E_i}P(x)f(x)
        =\sum_{x\in \X}P(x)f(x)\\
        &= \E_{x\sim P}\brs*{f(x)}.
    \end{align*}
\end{proof}
Finally, we present two specialized concentration results that are needed for reward and transition lookahead, respectively.

\begin{lemma}
    \label{lemma:max concentration rewards}
    Let $X,X_1,\dots X_n\in\R^d$ be i.i.d. random vectors over $[0,1]$ and let $C\ge1$ be some constant. Then, for any $\delta\in(0,1)$, with probability at least $1-\delta$,
    \begin{align*}
        &\forall u\in[0,C]^d, & \abs*{\E\brs*{\max_{i\in[d]}\brc*{X(i)+u(i)}} - \frac{1}{n}\sum_{\ell=1}^n\max_{i\in[d]}\brc*{X_\ell(i)+u(i)} } \leq 3\sqrt{\frac{d\ln\frac{9Cn}{\delta}}{2n}}.
    \end{align*}
\end{lemma}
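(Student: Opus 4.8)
The plan is a standard covering/uniform-convergence argument, whose only non-mechanical ingredient is the right \emph{centering} of the objective so that the empirical fluctuations are controlled by an $O(1)$ sub-Gaussian parameter rather than an $O(C)$ one. Write $g_u(x)=\max_{i\in[d]}\{x(i)+u(i)\}$, so that $g_u(x)\in[0,1+C]$ — too large to yield the claimed bound directly. Instead set $h_u(x)=g_u(x)-\max_i u(i)$. I would first check that $h_u(x)\in[0,1]$ for every $x\in[0,1]^d$ and $u\in[0,C]^d$: evaluating $g_u(x)\ge x(j)+u(j)$ at an index $j$ attaining $\max_i u(i)$ gives $h_u(x)\ge x(j)\ge 0$, and if $k$ attains $g_u(x)$ then $h_u(x)=x(k)+u(k)-\max_i u(i)\le x(k)\le 1$. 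Since the constant $\max_i u(i)$ cancels in a difference of means, the empirical process $\Phi(u):=\E[g_u(X)]-\frac1n\sum_{\ell=1}^n g_u(X_\ell)=\E[h_u(X)]-\frac1n\sum_{\ell=1}^n h_u(X_\ell)$, so it suffices to bound $\sup_{u\in[0,C]^d}|\Phi(u)|$ with $h_u$ now $[0,1]$-valued. I would also record that $u\mapsto\Phi(u)$ is $2$-Lipschitz for $\|\cdot\|_\infty$, because both $u\mapsto g_u(x)$ and $u\mapsto\max_i u(i)$ are $1$-Lipschitz for $\|\cdot\|_\infty$.

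Next I would discretize: fix $\epsilon=\frac{1}{2n}$ and take a product-grid $\epsilon$-net $N$ of $[0,C]^d$ in $\|\cdot\|_\infty$, with $|N|\le(3Cn)^d$ (using $Cn\ge1$). For each fixed $u_0\in N$, the variables $h_{u_0}(X_1),\dots,h_{u_0}(X_n)$ are i.i.d.\ in $[0,1]$, so Hoeffding's inequality gives $|\Phi(u_0)|\le\sqrt{\frac{\ln(2|N|/\delta)}{2n}}$ with probability at least $1-\delta/|N|$; a union bound over $N$ makes this hold simultaneously for all $u_0\in N$ with probability at least $1-\delta$. On that event, for an arbitrary $u\in[0,C]^d$ pick $u_0\in N$ with $\|u-u_0\|_\infty\le\epsilon$ and combine with the Lipschitz estimate: $|\Phi(u)|\le|\Phi(u_0)|+2\epsilon\le\sqrt{\frac{\ln(2|N|/\delta)}{2n}}+\frac1n$.

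It remains to absorb the constants. From $|N|\le(3Cn)^d$ one gets $\ln(2|N|/\delta)\le \ln 2+d\ln(3Cn)+\ln(1/\delta)\le d\ln(6Cn/\delta)\le d\ln(9Cn/\delta)$, using $d\ge1$ to fold $\ln 2$ and $\ln(1/\delta)$ into the coordinate-wise logarithm. For the residual $\tfrac1n$, note $\tfrac1n\le 2\sqrt{\tfrac{d\ln(9Cn/\delta)}{2n}}$ for all $n\ge1$, since this rearranges to $1\le 2dn\ln(9Cn/\delta)$, which holds because $d,n\ge1$ and $\ln(9Cn/\delta)\ge\ln 9>2$ (recall $C\ge1$ and $\delta\in(0,1)$). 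Putting the two pieces together yields $|\Phi(u)|\le 3\sqrt{\tfrac{d\ln(9Cn/\delta)}{2n}}$ uniformly in $u$, which is the claim.

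I do not expect a genuine obstacle here: the whole argument is routine covering-number bookkeeping together with an elementary choice $\epsilon\sim 1/n$ that makes the discretization error negligible. The one point that actually matters is the observation that the correct object to union-bound over is the centered function $h_u=g_u-\max_i u(i)$; without this step the $[0,1+C]$ range forces a multiplicative $C$ into the deviation term, and no amount of covering can remove it, so the stated $C$-free bound would be unreachable.
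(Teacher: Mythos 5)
Your proof is correct and follows essentially the same route as the paper's: both exploit that $\max_i\{X(i)+u(i)\}$ lies in the unit-length interval $[\max_i u(i),\,\max_i u(i)+1]$ to get a $C$-free Hoeffding bound, then combine a product $\epsilon$-grid over $[0,C]^d$ with a union bound and the ($2$-)Lipschitz continuity of the deviation in $u$. The only immaterial difference is the grid resolution ($\epsilon=\tfrac{1}{2n}$ versus the paper's $\epsilon=\sqrt{d\ln(6C/\delta)/(2n)}$), which changes only the constant bookkeeping leading to the same final bound.
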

\begin{proof}
    Denote $m(u) = \E\brs*{\max_{i\in[d]}\brc*{X(i)+u(i)}}$ and $\hat{m}(u)=\frac{1}{n}\sum_{\ell=1}^n\max_{i\in[d]}\brc*{X_\ell(i)+u(i)}$ and fix any $u\in[0,C]^d$. Since the variables are bounded in $[0,1]$, their maximum is bounded almost surely in $[\max_iu(i),\max_iu(i)+1]$, namely, an interval of unit length. Therefore, by Hoeffding's inequality, for any $\delta'\in(0,1)$, w.p. $1-\delta'$
    $$ \abs*{m(u)-\hat{m}(u)} \leq \sqrt{\frac{\ln\frac{2}{\delta'}}{2n}}.$$

    Now, for some $\epsilon\in(0,C]$, let $u_\epsilon$ be the closest vector to $u$ on a grid $\brc*{0,\epsilon,2\epsilon,\dots,C}^d$. Then, it clearly holds that 
    \begin{align*}
        \abs*{m(u)-\hat{m}(u)} \leq \abs*{m(u_\epsilon)-\hat{m}(u_\epsilon)} +2\epsilon.
    \end{align*}
    Taking the union bound over all $\br*{\ceil*{\frac{C}{\epsilon}}+1}^d$ possible choices for $u_\epsilon$ and fixing $\delta' = \frac{\delta}{\br*{\ceil*{\frac{C}{\epsilon}}+1}^d}$, we get w.p. $1-\delta$ for all $u$ that 
    \begin{align*}
        \abs*{m(u)-\hat{m}(u)} \leq \sqrt{\frac{\ln\frac{2\br*{\ceil*{\frac{C}{\epsilon}}+1}^d}{\delta}}{2n}} +2\epsilon 
        \leq \sqrt{\frac{d\ln\frac{6C}{\epsilon\delta}}{2n}} +2\epsilon. 
    \end{align*}
    Now, fixing $\epsilon=\sqrt{\frac{d\ln\frac{6C}{\delta}}{2n}}$ and noting that $\frac{1}{\epsilon}\leq \sqrt{2n}$ for $C\ge1$, we get 
    \begin{align*}
        \abs*{m(u)-\hat{m}(u)} 
        \leq \sqrt{\frac{d\ln\frac{6C\sqrt{2n}}{\delta}}{2n}} +2\sqrt{\frac{d\ln\frac{6C}{\delta}}{2n}}
        \leq \sqrt{\frac{d\ln\frac{9Cn}{\delta}}{2n}} +2\sqrt{\frac{d\ln\frac{6C}{\delta}}{2n}}
        \leq 3\sqrt{\frac{d\ln\frac{9Cn}{\delta}}{2n}}.
    \end{align*}
\end{proof}

\clearpage
\begin{lemma}
    \label{lemma:max concentration transitions}
    Let $X,X_1,\dots X_n\in\R^d$ be i.i.d. random vectors with components supported over the discrete set $[m]$ and let $C\ge1$ be some constant. Then, uniformly over all $u\in[0,C]^{dm}$  w.p. $1-\delta$:
    \begin{align*}
    &\abs*{\E\brs*{\max_i\brc*{u(X(i),i)}} - \frac{1}{n}\sum_{\ell=1}^n\max_i\brc*{u(X_\ell(i),i)} } \\
    &\hspace{9.5em}\leq \sqrt{\frac{2md\ln\frac{6n}{\delta}\VAR\br*{\max_i\brc*{u(X(i),i)}}}{n}} + +\frac{8Cmd\br*{\ln\frac{6n}{\delta}}^{1.5}}{n}.
    \end{align*}
\end{lemma}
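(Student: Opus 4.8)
The plan is to mirror the proof of \Cref{lemma:max concentration rewards}, replacing Hoeffding's inequality by Bernstein's inequality so that the population-variance term appears. First I would fix a single $u\in[0,C]^{dm}$ and set $g_u(X):=\max_{i\in[d]}u(X(i),i)$; since each coordinate $u(X(i),i)$ lies in $[0,C]$, the scalar random variable $g_u(X)$ is supported on $[0,C]$, so Bernstein's inequality applied to the i.i.d.\ sample $g_u(X_1),\dots,g_u(X_n)$ gives, with probability at least $1-\delta'$, a deviation of the empirical mean from $\E[g_u(X)]$ bounded by $\sqrt{2\VAR(g_u(X))\ln(2/\delta')/n}+\tfrac{2C}{3n}\ln(2/\delta')$ (the same form already used for $E^{pv1}$ in \Cref{lemma: concentration results}).

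Next I would make this uniform over $u$ by a covering argument in the $dm$-dimensional parameter space. The key structural point, exactly as in the reward-max lemma, is that $g_u$ depends on $u$ only through the $dm$ table entries $\{u(j,i)\}_{j\in[m],\,i\in[d]}$, so a grid on $[0,C]^{dm}$ suffices and one never pays the (exponential) size of the space of possible vectors $X$. Taking $\epsilon=C/n$ and the grid $\{0,\epsilon,2\epsilon,\dots\}^{dm}\cap[0,C]^{dm}$ of at most $(n+1)^{dm}$ points, every $u$ has a grid point $u_\epsilon$ within $\epsilon$ in each coordinate. Because $t\mapsto\max_i t_i$ is $1$-Lipschitz in the sup-norm, $|g_u(x)-g_{u_\epsilon}(x)|\le\epsilon$ pointwise, so replacing $u$ by $u_\epsilon$ changes both the expectation and the empirical average by at most $\epsilon$. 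To keep the bound in terms of $\VAR(g_u(X))$ rather than $\VAR(g_{u_\epsilon}(X))$, I would use that the standard deviation is a seminorm on $L^2$, so $|\sqrt{\VAR(g_{u_\epsilon}(X))}-\sqrt{\VAR(g_u(X))}|\le\sqrt{\E[(g_{u_\epsilon}(X)-g_u(X))^2]}\le\epsilon$.

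Then I would union-bound the Bernstein estimate over the grid with $\delta'=\delta/(n+1)^{dm}$; since $dm\ge1$ and $n\ge1$ one checks $\ln(2/\delta')\le dm\ln(6n/\delta)$, and writing $L:=\ln(6n/\delta)$ and combining with the two Lipschitz estimates and the variance comparison gives a uniform deviation bound of $\sqrt{2dmL\,\VAR(g_u(X))/n}+\tfrac{C}{n}\sqrt{2dmL}+\tfrac{2CdmL}{3n}+\tfrac{2C}{n}$. Finally, using $L>1$ and $dm\ge1$, each of the last three terms is at most a constant multiple of $CdmL^{1.5}/n$, and their sum is at most $8CdmL^{1.5}/n$, which yields the stated inequality. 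I do not expect a genuine obstacle here: the argument is routine once one notices that a $dm$-dimensional net suffices, and the only points needing mild care are the variance-transfer step (so that the variance in the final bound is that of the query $u$, not of its grid approximant) and the bookkeeping that turns $\ln(6C/(\epsilon\delta))$ into $\ln(6n/\delta)$ via the choice $\epsilon=C/n$.
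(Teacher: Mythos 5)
Your proposal is correct and follows essentially the same route as the paper's proof: a pointwise Bernstein bound for fixed $u$ followed by a union bound over an $\epsilon$-grid of $[0,C]^{dm}$, exploiting that the maximum depends on $u$ only through its $dm$ entries and is $1$-Lipschitz in the sup-norm. The only (harmless) deviations are cosmetic: you transfer the variance from $u_\epsilon$ back to $u$ via the standard-deviation seminorm bound $|\sqrt{\VAR(g_{u_\epsilon})}-\sqrt{\VAR(g_u)}|\le\epsilon$ rather than bounding $|\VAR(g_u)-\VAR(g_{u_\epsilon})|\le 2\epsilon^2+4C\epsilon$ directly, and you take $\epsilon=C/n$ instead of $C\ln(6n/\delta)/n$; both choices lead to the same final constants.
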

\begin{proof}
    We follow a similar path to \Cref{lemma:max concentration rewards} and use a covering argument. Denoting $w(u) = \E\brs*{\max_i\brc*{u(X(i),i)}}$ and $\hat{w}(u)=\frac{1}{n}\sum_{\ell=1}^n\max_i\brc*{u(X_\ell(i),i)}$, by Bernstein's inequality, for any $\delta'\in(0,1)$ and fixed $u\in[0,C]^{dm}$, it holds w.p. $1-\delta'$ that
    \begin{align}
        \abs*{w(u) - \hat{w}(u)} \leq \sqrt{\frac{2\VAR\br*{\max_i\brc*{u(X(i),i)}}\ln\frac{2}{\delta}}{n}} + \frac{2C\ln\frac{2}{\delta}}{3n}.
    \end{align}
    Now, for some $\epsilon\in(0,C]$, let $u_\epsilon$ be the closest matrix to $u$ on a grid $\brc*{0,\epsilon,2\epsilon,\dots,C}^{md}$ and denote $Z(u) = \max_i\brc*{u(X(i),i)}$ with samples $Z_i(u)$. By the smoothness of the max function, it holds that 
    \begin{align*}
        \abs*{Z(u) - Z(u_\epsilon)}\leq \epsilon.
    \end{align*}
    In particular, we also have that 
    \begin{align*}
        &\abs*{\E[Z(u)^2] - \E[Z(u_\epsilon)^2]} \leq \epsilon^2 +2C\epsilon, \qquad\textrm{and} \qquad \abs*{\E[Z(u)]^2 - \E[Z(u_\epsilon)]^2}\leq \epsilon^2 +2C\epsilon,
    \end{align*}
    so we have
    \begin{align*}
        &\abs*{\VAR\br*{\max_i\brc*{u(X(i),i)}} - \VAR\br*{\max_i\brc*{u_\epsilon(X(i),i)}}} = \abs*{\VAR\br*{Z(u)} - \VAR\br*{Z(u_\epsilon)}} \leq 2\epsilon^2 +4C\epsilon.
    \end{align*}
    Similarly, it holds that
    \begin{align*}
        \abs*{w(u)-\hat{w}(u)} \leq \abs*{w(u_\epsilon)-\hat{w}(u_\epsilon)} +2\epsilon.
    \end{align*}
    Taking the union bound over all $\br*{\ceil*{\frac{C}{\epsilon}}+1}^{md}$ possible choices for $u_\epsilon$ and fixing $\delta' = \frac{\delta}{\br*{\ceil*{\frac{C}{\epsilon}}+1}^{dm}}$, we get w.p. $1-\delta$ for all $u$ that 
    \begin{align*}
        \abs*{w(u)-\hat{w}(u)} 
        &\leq \sqrt{\frac{2\VAR\br*{\max_i\brc*{u_\epsilon(X(i),i)}}\ln\frac{2\br*{\ceil*{\frac{C}{\epsilon}}+1}^{md}}{\delta}}{n}} + \frac{2C\ln\frac{2\br*{\ceil*{\frac{C}{\epsilon}}+1}^{md}}{\delta}}{3n} +2\epsilon\\
        &\leq \sqrt{\frac{2md\VAR\br*{\max_i\brc*{u_\epsilon(X(i),i)}}\ln\frac{6C}{\epsilon\delta}}{n}} + \frac{2Cmd\ln\frac{6C}{\epsilon\delta}}{3} +2\epsilon \\
        & \leq \sqrt{\frac{2md\ln\frac{6C}{\epsilon\delta}\br*{\VAR\br*{\max_i\brc*{u(X(i),i)}}+2\epsilon^2 + 4C\epsilon}}{n}} + \frac{2Cmd\ln\frac{6C}{\epsilon\delta}}{3n} +2\epsilon \\
        &\leq \sqrt{\frac{2md\ln\frac{6C}{\epsilon\delta}\VAR\br*{\max_i\brc*{u(X(i),i)}}}{n}} + \sqrt{\frac{8mdC\epsilon\ln\frac{6C}{\epsilon\delta}}{n}}+ \sqrt{\frac{4md\epsilon^2\ln\frac{6C}{\epsilon\delta}}{n}}\\
        &\quad + \frac{2Cmd\ln\frac{6C}{\epsilon\delta}}{3n} +2\epsilon. 
    \end{align*}
    
    Now, fixing $\epsilon=\frac{C\ln\frac{6n}{\delta}}{n}$ and noticing that $\frac{6C}{\epsilon\delta}\leq \frac{6n}{\delta}$, we get 
    \begin{align*}
        \abs*{w(u)-\hat{w}(u)} 
        &\leq \sqrt{\frac{2md\ln\frac{6n}{\delta}\VAR\br*{\max_i\brc*{u(X(i),i)}}}{n}} + \frac{\sqrt{8md}C\ln\frac{6n}{\delta}}{n}+ \frac{\sqrt{4md}C\br*{\ln\frac{6n}{\delta}}^{1.5}}{n^{1.5}} \\
        &\quad+\frac{2Cmd\ln\frac{6n}{\delta}}{3n} +\frac{2C\ln\frac{6C}{\delta}}{n} \\
        & \leq \sqrt{\frac{2md\ln\frac{6n}{\delta}\VAR\br*{\max_i\brc*{u(X(i),i)}}}{n}} +\frac{8Cmd\br*{\ln\frac{6n}{\delta}}^{1.5}}{n}.
    \end{align*}
\end{proof}
\clearpage
\subsection{Count-Related Lemmas}
\begin{lemma}
\label{lemma: count sum bounds}
    The following bounds hold:
    \begin{align*}
        &\sum_{k=1}^K\sum_{h=1}^H \frac{1}{\sqrt{n^{k-1}_{h}(s_h^k,a_h^k)\vee 1}} \leq SAH + 2\sqrt{SAH^2K}, 
        & \sum_{k=1}^K\sum_{h=1}^H \frac{1}{n^{k-1}_{h}(s_h^k,a_h^k)\vee 1} \leq SAH\br*{2 + \ln(K)}, \\
        &\sum_{k=1}^K\sum_{h=1}^H \frac{1}{\sqrt{n^{k-1}_{h}(s_h^k)\vee 1}} \leq SH + 2\sqrt{SH^2K}, 
        & \sum_{k=1}^K\sum_{h=1}^H \frac{1}{n^{k-1}_{h}(s_h^k)\vee 1} \leq SH\br*{2 + \ln(K)}.
    \end{align*}
\end{lemma}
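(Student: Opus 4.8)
The plan is to prove all four inequalities by the same \textbf{pigeonhole-over-visit-counts} argument, handling the state--action and state-only versions in parallel. Fix a timestep $h$ and a pair $(s,a)$, and let $N=n_h^K(s,a)$ be the number of episodes $k\in[K]$ with $(s_h^k,a_h^k)=(s,a)$. As $k$ ranges over exactly these $N$ episodes in increasing order, the count $n_h^{k-1}(s,a)$ takes each of the values $0,1,\dots,N-1$ once, so
\begin{align*}
\sum_{k:\,(s_h^k,a_h^k)=(s,a)}\frac{1}{n_h^{k-1}(s,a)\vee 1}
= 1+\sum_{i=1}^{N-1}\frac1i \le 2+\ln K,
\qquad
\sum_{k:\,(s_h^k,a_h^k)=(s,a)}\frac{1}{\sqrt{n_h^{k-1}(s,a)\vee 1}}
= 1+\sum_{i=1}^{N-1}\frac1{\sqrt i}\le 1+2\sqrt N,
\end{align*}
using the elementary partial-sum bounds $\sum_{i=1}^m 1/i\le 1+\ln m$ and $\sum_{i=1}^m 1/\sqrt i\le 2\sqrt m$ (the latter e.g.\ by $1/\sqrt i\le 2(\sqrt i-\sqrt{i-1})$), and noting that the $i=0$ term is exactly $1$.

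Next I would sum these per-group bounds over all triples $(s,a,h)$. For the logarithmic sums there are at most $SAH$ nonempty groups, giving $\sum_{k,h}\frac{1}{n_h^{k-1}(s_h^k,a_h^k)\vee 1}\le SAH(2+\ln K)$. For the square-root sums, writing $N_{s,a,h}=n_h^K(s,a)$, the constant terms contribute at most $SAH$, while for the $\sqrt{N}$ terms I would apply Cauchy--Schwarz together with the identity $\sum_{s,a}n_h^K(s,a)=K$ valid for every $h$ (each episode visits exactly one state--action at step $h$), so that $\sum_{s,a,h}N_{s,a,h}=HK$ and hence $\sum_{s,a,h}\sqrt{N_{s,a,h}}\le\sqrt{SAH}\,\sqrt{\sum_{s,a,h}N_{s,a,h}}=\sqrt{SAH^2K}$. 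Combining gives $\sum_{k,h}\frac{1}{\sqrt{n_h^{k-1}(s_h^k,a_h^k)\vee 1}}\le SAH+2\sqrt{SAH^2K}$.

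The two state-only bounds follow verbatim with $(s,a)$ replaced by $s$: the per-group sums are identical with $N=n_h^K(s)$, there are now at most $SH$ groups, and the key identity becomes $\sum_{s}n_h^K(s)=K$ for each $h$, whence $\sum_{s,h}n_h^K(s)=HK$ and Cauchy--Schwarz yields $\sum_{s,h}\sqrt{n_h^K(s)}\le\sqrt{SH}\,\sqrt{HK}=\sqrt{SH^2K}$; this produces $SH+2\sqrt{SH^2K}$ and $SH(2+\ln K)$. There is no substantive obstacle here; the only points requiring mild care are the off-by-one in the $i\vee 1$ truncation (so that the $i=0$ term supplies the leading $SAH$ / $SH$ constant after summing over groups) and using the correct group count ($SAH$ versus $SH$) in the Cauchy--Schwarz step so that the stated constants come out exactly.
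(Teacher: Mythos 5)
Your proof is correct and follows essentially the same route as the paper's: regroup the double sum by $(s,a,h)$ (resp.\ $(s,h)$), observe the pre-visit counts enumerate $0,1,\dots,N-1$, apply the standard partial-sum bounds, and finish with Cauchy--Schwarz/Jensen against $\sum_{s,a}n_h^K(s,a)=K$. No gaps.
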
    
\begin{proof}
    Recall that every time a state (or state-action) is visited, its visitation-count is increased by $1$, up to $n^{K-1}_h(s,a)$ at the last episode. therefore, we can write
    \begin{align*}
        \sum_{k=1}^K\sum_{h=1}^H \frac{1}{\sqrt{n^{k-1}_{h}(s_h^k,a_h^k)\vee 1}}
        &= \sum_{h=1}^H \sum_{s\in\Scal}\sum_{a\in\Acal}\sum_{k=1}^K\frac{\Ind{s_h^k=s,a_h^k=a}}{\sqrt{n^{k-1}_{h}(s,a)\vee 1}}\\
        & = \sum_{h=1}^H \sum_{s\in\Scal}\sum_{a\in\Acal}\sum_{i=0}^{n^{K-1}_h(s,a)}\frac{1}{\sqrt{i\vee 1}} \\
        & \leq \sum_{h=1}^H \sum_{s\in\Scal}\sum_{a\in\Acal}\br*{1+2\sqrt{n^{K-1}_h(s,a)}} \\
        & \leq SAH + 2\sqrt{SAH \sum_{h=1}^H\sum_{s\in\Scal}\sum_{a\in\Acal}n^{K-1}_h(s,a)}\tag{Jensen's inequality } \\
        &\leq SAH + 2\sqrt{SAH^2K}.
    \end{align*}
    where we bounded the total number of visits by the number of steps $HK$. Similarly, we also have
    \begin{align*}
        \sum_{k=1}^K\sum_{h=1}^H \frac{1}{n^{k-1}_{h}(s_h^k,a_h^k)\vee 1}
        & = \sum_{h=1}^H \sum_{s\in\Scal}\sum_{a\in\Acal}\sum_{i=0}^{n^{K-1}_h(s,a)}\frac{1}{i\vee 1} \\
        & \leq \sum_{h=1}^H \sum_{s\in\Scal}\sum_{a\in\Acal}\br*{2+\ln\br*{n^{K-1}_h(s,a)\vee 1}} 
        \leq SAH\br*{2 + \ln(K)}.
    \end{align*}
    We can likewise prove the inequalities for the state counts as follows:
    \begin{align*}
        \sum_{k=1}^K\sum_{h=1}^H \frac{1}{\sqrt{n^{k-1}_{h}(s_h^k)\vee 1}}
        &= \sum_{h=1}^H \sum_{s\in\Scal}\sum_{k=1}^K\frac{\Ind{s_h^k=s}}{\sqrt{n^{k-1}_{h}(s)\vee 1}}\\
        & = \sum_{h=1}^H \sum_{s\in\Scal}\sum_{i=0}^{n^{K-1}_h(s)}\frac{1}{\sqrt{i\vee 1}} \\
        & \leq \sum_{h=1}^H \sum_{s\in\Scal}\br*{1+2\sqrt{n^{K-1}_h(s)}} \\
        & \leq SH + 2\sqrt{SH \sum_{h=1}^H\sum_{s\in\Scal}n^{K-1}_h(s)}\tag{Jensen's inequality } \\
        &\leq SH + 2\sqrt{SH^2K},
    \end{align*}
    and
    \begin{align*}
        \sum_{k=1}^K\sum_{h=1}^H \frac{1}{n^{k-1}_{h}(s_h^k)\vee 1}
        & = \sum_{h=1}^H \sum_{s\in\Scal}\sum_{i=0}^{n^{K-1}_h(s)}\frac{1}{i\vee 1} 
         \leq \sum_{h=1}^H \sum_{s\in\Scal}\br*{2+\ln\br*{n^{K-1}_h(s)\vee 1}} 
         \leq SH\br*{2 + \ln(K)}.
    \end{align*}
\end{proof}

\clearpage
\subsection{Analysis of Variance terms}
\begin{lemma}
\label{lemma: variance difference bound}
    Let $P$ be a distribution over a finite set $\X$ and let $X\sim P$. Also, let $V_1,V_2:\X\mapsto[0,C]$ for some $C>0$ such that $V_1(x)\leq V_2(x)$ for all $x\in\X$. Then, for any $\alpha,n>0$, it holds that 
    \begin{align*}
        \frac{\sqrt{\VAR_{P}(V_2(X)) }}{\sqrt{n}}
        \leq \frac{\sqrt{\VAR_{P}(V_1(X)) }}{\sqrt{n}} + \frac{1}{\alpha}\E_P\brs*{V_2(X)-V_1(X)} + \frac{C\alpha}{4n}
    \end{align*}
\end{lemma}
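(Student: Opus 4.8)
The plan is to exploit the fact that $\sqrt{\VAR_P(\cdot)}$ is the $L^2(P)$-norm of the centered random variable, so it obeys a triangle inequality: writing $D(X)=V_2(X)-V_1(X)\ge 0$, one has $V_2(X)-\E[V_2(X)] = \big(V_1(X)-\E[V_1(X)]\big) + \big(D(X)-\E[D(X)]\big)$, and Minkowski's inequality in $L^2(P)$ gives $\sqrt{\VAR_P(V_2(X))}\le \sqrt{\VAR_P(V_1(X))}+\sqrt{\VAR_P(D(X))}$. The remaining work is to bound $\VAR_P(D(X))$ using that $D$ takes values in $[0,C]$, and then to trade off the resulting $\sqrt{\cdot}$ term against the linear term via a Young-type inequality with parameter $\alpha$.

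\textbf{Key steps, in order.} First I would establish the triangle inequality for the standard deviation as above (a one-line application of Minkowski), obtaining
\[
\sqrt{\VAR_P(V_2(X))}\;\le\;\sqrt{\VAR_P(V_1(X))}+\sqrt{\VAR_P(D(X))}.
\]
Second, since $0\le D(X)\le C$ we have $D(X)^2\le C\,D(X)$ pointwise, hence $\VAR_P(D(X))\le \E[D(X)^2]\le C\,\E[D(X)]$, so $\sqrt{\VAR_P(D(X))}\le \sqrt{C\,\E[D(X)]}$. Third, divide through by $\sqrt{n}$ and apply AM--GM in the form $2\sqrt{uv}\le u+v$ with the choice $u=\E[D(X)]/\alpha$ and $v=C\alpha/(4n)$, noting that $4uv = C\,\E[D(X)]/n$, to conclude
\[
\frac{\sqrt{C\,\E[D(X)]}}{\sqrt{n}} \;=\; 2\sqrt{uv}\;\le\; \frac{\E[D(X)]}{\alpha}+\frac{C\alpha}{4n}.
\]
Combining the three displays and recalling $D(X)=V_2(X)-V_1(X)$ yields exactly the claimed bound.

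\textbf{Main obstacle.} There is no serious obstacle here; the only point requiring care is the first step — recognizing that $\sqrt{\VAR_P(\cdot)}$ is a seminorm (the $L^2(P)$-norm after centering) and therefore enjoys a triangle inequality, which is what lets us isolate the ``difference variance'' $\VAR_P(V_2-V_1)$. The second mildly delicate point is that the bound $\VAR_P(D)\le C\,\E[D]$ genuinely uses \emph{both} $D\ge 0$ (so that $\E[D]=\E[|D|]$ and the hypothesis $V_1\le V_2$ is essential) and $D\le C$; losing either makes the estimate fail. Everything else is routine.
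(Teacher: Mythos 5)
Your proof is correct and follows essentially the same route as the paper: the triangle inequality for standard deviations (your Minkowski step is exactly the paper's Lemma~\ref{lemma:std diff}), the pointwise bound $\VAR_P(V_2-V_1)\le C\,\E[V_2-V_1]$ using $0\le V_2-V_1\le C$, and a Young/AM--GM split with parameter $\alpha$. No gaps.
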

\begin{proof}
    By \Cref{lemma:std diff}, we have
    \begin{align*}
        \sqrt{\VAR_{P}(V_2(X)) } - \sqrt{\VAR_{P}(V_1(X)) }
        &\leq \sqrt{\VAR_{P}(V_2(X)-V_1(X)) }\\
        & \leq \sqrt{\E_{P}\brs*{(V_2(X)-V_1(X))^2} }\\
        & \leq \sqrt{C\E_{P}\brs*{V_2(X)-V_1(X)}}
    \end{align*}
    where the last inequality is by the boundedness and since $V_1(x)\leq V_2(x)$. Thus, we can bound 
    \begin{align*}
        \frac{\sqrt{\VAR_{P}(V_2(X)) } - \sqrt{\VAR_{P}(V_1(X)) }}{\sqrt{n}}
        &\leq \frac{\sqrt{C\E_{P}\brs*{V_2(X)-V_1(X)}}}{\sqrt{n}}\\
        & = \sqrt{\E_{P}\brs*{V_2(X)-V_1(X)}}\cdot \sqrt{\frac{C}{n}} \\
        & \leq \frac{1}{\alpha}\E_{P}\brs*{V_2(X)-V_1(X)} + \frac{C\alpha}{4n},
    \end{align*}
    where last inequality is due to Young's inequality ($ab\leq \frac{1}{\alpha}a^2 + \frac{\alpha}{4}b^2$ for all $\alpha>0$).
\end{proof}

\begin{lemma}
\label{lemma: variance difference bound with different measures}
    Let $P,P'$ be distributions over a finite set $\X$ and let $X\sim P$. Also, let $V_1,V_2,V_3:\X\mapsto[0,C]$ for some $C>0$ such that $V_1(x)\leq V_2(x)\leq V_3(x)$ for all $x\in\X$. Finally, assume that 
    \begin{align*}
        \abs*{\sqrt{\VAR_{P}(V_2(X))} - \sqrt{\VAR_{P'}(V_2(X)) }} \leq \beta
    \end{align*}
    for some $\beta>0$. Then, for any $\alpha,n>0$, it holds that 
    \begin{align*}
        \frac{\sqrt{\VAR_{P'}(V_3(X)) }}{\sqrt{n}}
        &\leq \frac{\sqrt{\VAR_{P}(V_1(X)) }}{\sqrt{n}} + \frac{1}{\alpha}\E_{P'}\brs*{V_3(X)-V_2(X)}  
         + \frac{1}{\alpha}\E_{P}\brs*{V_2(X)-V_1(X)} 
         + \frac{C\alpha}{2n} + \frac{\beta}{\sqrt{n}} \\
         &\leq \frac{\sqrt{\VAR_{P}(V_1(X)) }}{\sqrt{n}} + \frac{1}{\alpha}\E_{P'}\brs*{V_3(X)-V_1(X)}  
         + \frac{1}{\alpha}\E_{P}\brs*{V_3(X)-V_1(X)} 
         + \frac{C\alpha}{2n} + \frac{\beta}{\sqrt{n}}.
    \end{align*}
\end{lemma}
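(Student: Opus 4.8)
The plan is to chain two applications of \Cref{lemma: variance difference bound} together with the hypothesized closeness of the standard deviations of $V_2$ under $P$ and $P'$, and then deduce the weaker second inequality by monotonicity.

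First I would apply \Cref{lemma: variance difference bound} to the distribution $P'$ with the ordered pair $V_2 \le V_3$ playing the roles of $(V_1,V_2)$ in that lemma; since $V_3$ is $[0,C]$-valued this gives
\[
\frac{\sqrt{\VAR_{P'}(V_3(X))}}{\sqrt n} \le \frac{\sqrt{\VAR_{P'}(V_2(X))}}{\sqrt n} + \frac{1}{\alpha}\E_{P'}\brs*{V_3(X)-V_2(X)} + \frac{C\alpha}{4n}.
\]
Next I would invoke the assumption $\abs*{\sqrt{\VAR_{P}(V_2(X))} - \sqrt{\VAR_{P'}(V_2(X))}} \le \beta$ in the direction $\sqrt{\VAR_{P'}(V_2(X))} \le \sqrt{\VAR_{P}(V_2(X))} + \beta$, divided by $\sqrt n$, to replace the $P'$-standard-deviation of $V_2$ by its $P$-counterpart at the cost of an additive $\beta/\sqrt n$. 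Then I would apply \Cref{lemma: variance difference bound} a second time, now to $P$ with the pair $V_1 \le V_2$, obtaining
\[
\frac{\sqrt{\VAR_{P}(V_2(X))}}{\sqrt n} \le \frac{\sqrt{\VAR_{P}(V_1(X))}}{\sqrt n} + \frac{1}{\alpha}\E_{P}\brs*{V_2(X)-V_1(X)} + \frac{C\alpha}{4n}.
\]
Combining the three displays and adding the two $\tfrac{C\alpha}{4n}$ terms into $\tfrac{C\alpha}{2n}$ yields the first claimed bound.

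The second, weaker, inequality follows immediately: since $V_2 \ge V_1$ we have $\E_{P'}\brs*{V_3(X)-V_2(X)} \le \E_{P'}\brs*{V_3(X)-V_1(X)}$, and since $V_3 \ge V_2$ we have $\E_{P}\brs*{V_2(X)-V_1(X)} \le \E_{P}\brs*{V_3(X)-V_1(X)}$; substituting these into the first bound gives the claim.

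There is no substantive obstacle here — every step is a direct invocation of an already-established lemma or of the chain $V_1 \le V_2 \le V_3$. The only thing demanding care is the bookkeeping of which distribution and which pair of functions occupy which slot in each call to \Cref{lemma: variance difference bound}, and noting that it is the boundedness of $V_3$ (resp.\ $V_2$) by $C$ that licenses the respective $C\alpha/(4n)$ error terms.
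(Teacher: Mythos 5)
Your proposal is correct and follows essentially the same route as the paper: the paper telescopes $\sqrt{\VAR_{P'}(V_3)}$ through $\sqrt{\VAR_{P'}(V_2)}$, $\sqrt{\VAR_P(V_2)}$, and $\sqrt{\VAR_P(V_1)}$, bounding the outer two gaps via \Cref{lemma: variance difference bound} and the middle one via the $\beta$ assumption, exactly as you do. The final monotonicity step for the weaker bound also matches.
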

\begin{proof}
    We decompose the l.h.s. as follows
    \begin{align*}
        \frac{\sqrt{\VAR_{P'}(V_3(X)) }}{\sqrt{n}} 
        &= \frac{\sqrt{\VAR_{P'}(V_3(X))} - \sqrt{\VAR_{P'}(V_2(X))}}{\sqrt{n}} 
         + \frac{\sqrt{\VAR_{P'}(V_2(X))} - \sqrt{\VAR_{P}(V_2(X))}}{\sqrt{n}} \\
        & \quad + \frac{\sqrt{\VAR_{P}(V_2(X))} - \sqrt{\VAR_{P}(V_1(X))}}{\sqrt{n}} 
        + \frac{\sqrt{\VAR_{P}(V_1(X))}}{\sqrt{n}}
    \end{align*}
    We bound the first and third terms using \Cref{lemma: variance difference bound} and bound the second term with the assumption and get 
    \begin{align*}
        \frac{\sqrt{\VAR_{P'}(V_3(X)) }}{\sqrt{n}} 
        &\leq  \frac{1}{\alpha}\E_{P'}\brs*{V_3(X)-V_2(X)}+\frac{C\alpha}{4n}  
         + \frac{\beta}{\sqrt{n}} \\
        & \quad + \frac{1}{\alpha}\E_{P}\brs*{V_2(X)-V_1(X)}+\frac{C\alpha}{4n}  
         + \frac{\sqrt{\VAR_{P}(V_1(X))}}{\sqrt{n}} \\
        &= \frac{\sqrt{\VAR_{P}(V_1(X)) }}{\sqrt{n}} + \frac{1}{\alpha}\E_{P'}\brs*{V_3(X)-V_2(X)} 
         + \frac{1}{\alpha}\E_{P}\brs*{V_2(X)-V_1(X)} 
         + \frac{C\alpha}{2n} + \frac{\beta}{\sqrt{n}} \\
         &\leq \frac{\sqrt{\VAR_{P}(V_1(X)) }}{\sqrt{n}} + \frac{1}{\alpha}\E_{P'}\brs*{V_3(X)-V_1(X)}  
         + \frac{1}{\alpha}\E_{P}\brs*{V_3(X)-V_1(X)} 
         + \frac{C\alpha}{2n} + \frac{\beta}{\sqrt{n}},
    \end{align*}
    where the last inequality uses the fact that $V_1(x)\leq V_2(x)\leq V_3(x)$ for all $x\in\X$. The last two bounds are the desired results.
\end{proof}

\clearpage


\section{Existing Results}
\begin{lemma}[Monotonic Bonuses,\citep{zhang2023settling}, Appendix C.1]
\label{lemma: bonus monotonicity}
For any $p\in\Delta^S$, $v\in\R_+^S$ s.t. $\norm*{v}_\infty\le H$, $\delta'\in(0,1)$ and positive integer $n$, define the function
\begin{align*}
    f(p,v,n) = p^Tv + \max\brc*{\frac{20}{3}\sqrt{\frac{\VAR_p(v)\ln\frac{1}{\delta'}}{n}},\frac{400}{9}\frac{H\ln\frac{1}{\delta'}}{n}}.
\end{align*}
    Then, the function $f(p,v,n)$ is non-decreasing in each entry of $v$.
\end{lemma}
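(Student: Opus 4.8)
The plan is to fix $p\in\Delta^S$, the integer $n$ and $\delta'\in(0,1)$, and to show that $f(p,\cdot,n)$ is non-decreasing in a single coordinate $v_i$ with the remaining coordinates held fixed; the general statement then follows by changing one coordinate at a time. Abbreviate $c:=\frac{400}{9}\frac{H\ln\frac1{\delta'}}{n}$ and $h(v):=\frac{20}{3}\sqrt{\frac{\VAR_p(v)\ln\frac1{\delta'}}{n}}$, so $f(p,v,n)=p^Tv+\max\{h(v),c\}$. Using $\VAR_p(v)=\sum_j p_jv_j^2-(p^Tv)^2$, viewed as a function of $v_i\in[0,H]$ alone this is a non-negative quadratic with leading coefficient $p_i(1-p_i)\ge0$; hence $v_i\mapsto\sqrt{\VAR_p(v)}$ is continuous and absolutely continuous on $[0,H]$ (it is $C^1$ except possibly at a single point where the quadratic vanishes, near which it behaves like $\mathrm{const}\cdot|v_i-r|$), and therefore so is $v_i\mapsto f(p,v,n)$. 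It thus suffices to prove $\partial f/\partial v_i\ge0$ at every point of differentiability.

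If $c>h(v)$ at such a point then $f=p^Tv+c$ locally, so $\partial f/\partial v_i=p_i\ge0$. Otherwise we are at a point where $h(v)\ge c$ (so in particular $\VAR_p(v)>0$, since $\VAR_p(v)=0$ would force $h(v)=0<c$ because $\ln\frac1{\delta'}>0$), where $f=p^Tv+\frac{20}{3}\sqrt{\frac{\ln\frac1{\delta'}}{n}}\sqrt{\VAR_p(v)}$ locally. Since $\partial_{v_i}\VAR_p(v)=2p_iv_i-2p_i\,p^Tv=2p_i(v_i-p^Tv)$, the chain rule gives
\begin{align*}
\frac{\partial f}{\partial v_i}=p_i+\frac{20}{3}\sqrt{\frac{\ln\frac1{\delta'}}{n}}\cdot\frac{p_i\,(v_i-p^Tv)}{\sqrt{\VAR_p(v)}}.
\end{align*}
This is $0$ when $p_i=0$ and at least $p_i\ge0$ when $v_i\ge p^Tv$, so it remains to treat the case $p_i>0$, $v_i<p^Tv$, where the inequality to be proved reduces to $\frac{20}{3}\sqrt{\frac{\ln\frac1{\delta'}}{n}}\,(p^Tv-v_i)\le\sqrt{\VAR_p(v)}$.

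For the numerator, $p^Tv-v_i\le p^Tv\le\|v\|_\infty\le H$ (using $v_i\ge0$). For the denominator, the region condition $\frac{20}{3}\sqrt{\frac{\VAR_p(v)\ln\frac1{\delta'}}{n}}\ge\frac{400}{9}\frac{H\ln\frac1{\delta'}}{n}$, after dividing by $\frac{20}{3}\sqrt{\ln\frac1{\delta'}/n}>0$, becomes $\sqrt{\VAR_p(v)}\ge\frac{20}{3}H\sqrt{\frac{\ln\frac1{\delta'}}{n}}$. Chaining the two bounds gives $\frac{20}{3}\sqrt{\frac{\ln\frac1{\delta'}}{n}}\,(p^Tv-v_i)\le\frac{20}{3}\sqrt{\frac{\ln\frac1{\delta'}}{n}}\,H\le\sqrt{\VAR_p(v)}$, which is exactly what is needed, so $\partial f/\partial v_i\ge0$ wherever it is defined and the lemma follows. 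The only genuinely delicate point is the passage from ``derivative non-negative almost everywhere'' to ``non-decreasing'', which is legitimate precisely because $v_i\mapsto\sqrt{\VAR_p(v)}$ is absolutely continuous; the remaining steps are the one-line derivative computation and the elementary rearrangement of the region inequality.
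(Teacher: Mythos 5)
Your proof is correct. The paper itself gives no proof of this lemma --- it is imported verbatim from \citet{zhang2023settling} (Appendix C.1) as an existing result --- so there is nothing internal to compare against; your argument is a valid self-contained verification and follows the same standard route as the cited reference: reduce to a single coordinate, note that $v_i\mapsto\sqrt{\VAR_p(v)}$ is Lipschitz (a nonnegative quadratic with leading coefficient $p_i(1-p_i)\ge 0$), and check the derivative case-by-case, using that on the branch where the variance bonus is active the region condition yields $\sqrt{\VAR_p(v)}\ge\frac{20}{3}H\sqrt{\ln(1/\delta')/n}$, which together with $p^Tv-v_i\le H$ makes the negative contribution of the $\sqrt{\VAR_p(v)}$ term no larger than $p_i$. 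The constants $20/3$ and $400/9$ are chosen precisely so that this cancellation is exact, and your handling of the almost-everywhere-derivative-to-monotonicity step via absolute continuity is the right way to close the argument.
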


\begin{lemma}[\citealt{efroni2021confidence}, Lemma 28]\label{lemma: transition different to next state expectation}
Let $Y\in \mathbb{R}^{S}$ be a  vector such that $0\leq Y(s) \leq H$ for all $s\in \Scal$. Let $P_1$ and $P_2$ be two transition models and $n\in \mathbb{R}^{SA}_+$. If  
\begin{align*}
        \brc*{\forall (s,a,s')\in \Scal\times\Acal\times \Scal, h\in [H]:\ |P_{2,h} (s'| s,a) - P_{1,h} (s'| s,a)| \le \sqrt{\frac{ C_1 L^k_{\delta} P_{1,h}(s'|s,a) }{n(s,a) \vee 1}} + \frac{C_2 L^k_{\delta}}{n(s,a)\vee 1}},
\end{align*}
for some $C_1,C_2>0$, then, for any $\alpha>0$,
$$
\abs*{\br*{P_{1,h} - P_{2,h}}Y(s,a)}\leq \frac{1}{\alpha} \E_{s'\sim P_{1,h}(\cdot|s,a)}\brs*{ Y(s')} + \frac{H S L^k_{\delta}(C_2+ \alpha C_1/4)}{n(s,a)\vee 1},
$$
\end{lemma}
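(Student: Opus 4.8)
The plan is to fix a state-action pair $(s,a)$ and timestep $h$, expand the inner product as $\br*{P_{1,h}-P_{2,h}}Y(s,a) = \sum_{s'\in\Scal}\br*{P_{1,h}(s'|s,a)-P_{2,h}(s'|s,a)}Y(s')$, and bound it termwise using the assumed pointwise concentration inequality together with the nonnegativity of $Y$. Concretely, by the triangle inequality and the hypothesis,
\begin{align*}
    \abs*{\br*{P_{1,h}-P_{2,h}}Y(s,a)}
    &\le \sum_{s'\in\Scal}\abs*{P_{1,h}(s'|s,a)-P_{2,h}(s'|s,a)}Y(s') \\
    &\le \sum_{s'\in\Scal}\br*{\sqrt{\frac{C_1L^k_{\delta} P_{1,h}(s'|s,a)}{n(s,a)\vee 1}} + \frac{C_2L^k_{\delta}}{n(s,a)\vee 1}}Y(s').
\end{align*}
The ``$C_2$'' sum is immediate: since $0\le Y(s')\le H$ and $\abs{\Scal}=S$, it is at most $\frac{C_2L^k_{\delta}SH}{n(s,a)\vee 1}$.

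For the remaining sum, I would apply Young's inequality in the form $xy\le \frac1\alpha x^2 + \frac\alpha4 y^2$ (the same one used elsewhere in the paper) to each summand. Writing the $s'$-term as $\sqrt{P_{1,h}(s'|s,a)Y(s')}\cdot\br*{\sqrt{Y(s')}\sqrt{\tfrac{C_1L^k_{\delta}}{n(s,a)\vee 1}}}$ and taking $x=\sqrt{P_{1,h}(s'|s,a)Y(s')}$, $y=\sqrt{Y(s')}\sqrt{\tfrac{C_1L^k_{\delta}}{n(s,a)\vee 1}}$, each summand is bounded by $\frac1\alpha P_{1,h}(s'|s,a)Y(s') + \frac{\alpha C_1L^k_{\delta}}{4(n(s,a)\vee 1)}Y(s')$. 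Summing over $s'\in\Scal$, the first piece assembles into $\frac1\alpha\E_{s'\sim P_{1,h}(\cdot|s,a)}\brs*{Y(s')}$, while the second piece is again crudely bounded via $\sum_{s'}Y(s')\le SH$, giving $\frac{\alpha C_1L^k_{\delta}SH}{4(n(s,a)\vee 1)}$.

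Adding the two contributions yields the claimed inequality with error term $\frac{HSL^k_{\delta}(C_2+\alpha C_1/4)}{n(s,a)\vee 1}$; since $(s,a,h)$ was arbitrary this holds for all of them on the event assumed in the hypothesis. There is no genuine obstacle here — the only point requiring a moment's care is the algebraic split inside Young's inequality, namely keeping one copy of $Y(s')$ paired with $P_{1,h}(s'|s,a)$ (so it becomes a $P_{1,h}$-expectation) while pushing the other copy of $Y(s')$, together with the full $C_1L^k_{\delta}/(n\vee 1)$ factor, into the term that gets bounded by $SH$ after summation. This is precisely Lemma 28 of \citep{efroni2021confidence}, which I would reproduce here for completeness rather than invoke any new machinery.
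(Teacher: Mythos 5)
Your proof is correct, and it is the standard argument for this lemma: the paper itself does not reprove it but imports it verbatim from \citet{efroni2021confidence} (Lemma 28), whose proof proceeds exactly as you describe — triangle inequality, the assumed elementwise bound, a crude $SH$ bound on the $C_2$ and residual $C_1$ pieces, and Young's inequality $xy\le \frac{1}{\alpha}x^2+\frac{\alpha}{4}y^2$ with the split that keeps one factor of $Y(s')$ attached to $P_{1,h}(s'|s,a)$ so it reassembles into the $P_{1,h}$-expectation. Nothing is missing.
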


\begin{lemma}[\citealt{efroni2021confidence}, Lemma 27]\label{lemma: consequences of optimism and freedman's inequality}
Let $\brc{Y_t}_{t\geq 1}$ be a real-valued sequence of random variables adapted to a filtration $\brc*{F_t}_{t\geq 0}$. Assume that for all $t\geq 1$ it holds that $0\leq Y_{t}\leq C$ a.s., and let $T\in \mathbb{N}$. Then each of the following inequalities holds with probability greater than $1-\delta$.
\begin{align*}
   &\sum_{t=1}^T \E[Y_t|F_{t-1}]\leq \br*{1+\frac{1}{2C}} \sum_{t=1}^T Y_t + 2(2C+1)^2 \ln\frac{1}{\delta},\\
   &\sum_{t=1}^T Y_t \leq 2\sum_{t=1}^T \E[Y_t|F_{t-1}] + 4C\ln\frac{1}{\delta}.
\end{align*}
\end{lemma}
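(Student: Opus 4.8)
The plan is to reduce both inequalities to a single Bennett/Freedman-type exponential-supermartingale argument applied to the martingale difference sequence $X_t \eqdef Y_t - \E[Y_t\mid F_{t-1}]$. This sequence is adapted to $\brc{F_t}$, satisfies $\abs{X_t}\le C$ almost surely, and — crucially — its predictable quadratic variation is controlled by the quantity $A \eqdef \sum_{t=1}^T\E[Y_t\mid F_{t-1}]$ that appears on the right-hand side of the first claim:
\[
  \sum_{t=1}^T\E[X_t^2\mid F_{t-1}] = \sum_{t=1}^T \VAR\br{Y_t\mid F_{t-1}} \le \sum_{t=1}^T\E[Y_t^2\mid F_{t-1}] \le C\sum_{t=1}^T\E[Y_t\mid F_{t-1}] = CA,
\]
where the last inequality uses $0\le Y_t\le C$, hence $Y_t^2\le CY_t$.

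For the mechanics, I would first recall the elementary bound $e^x\le 1+x+(e-2)x^2$ for $x\le 1$, which (applied conditionally with $x=\lambda X_t$, $\lambda\le 1/C$) shows that $\exp\br*{\lambda\sum_{t\le s}X_t - (e-2)\lambda^2\sum_{t\le s}\E[X_t^2\mid F_{t-1}]}$ is a supermartingale in $s$. Markov's inequality then gives, for any fixed $\lambda\in[0,1/C]$, with probability at least $1-\delta$, $\sum_{t=1}^T X_t \le (e-2)\lambda\,CA + \ln(1/\delta)/\lambda$, and the same applied to $-X_t$ yields $-\sum_{t=1}^T X_t \le (e-2)\lambda\,CA + \ln(1/\delta)/\lambda$. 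For the second inequality I would take $\lambda=1/C$ in the first bound: $\sum_t Y_t - A \le (e-2)A + C\ln(1/\delta)\le A + C\ln(1/\delta)$ since $e-2<1$, which rearranges (with slack) to $\sum_t Y_t \le 2A + 4C\ln(1/\delta)$. For the first inequality I would take $\lambda$ proportional to $\tfrac{1}{C(2C+1)}$ (capped at $1/C$) in the "$-X_t$" bound so that $(e-2)\lambda C\le \tfrac{1}{2C+1}$; then $A-\sum_t Y_t \le \tfrac{1}{2C+1}A + \ln(1/\delta)/\lambda$, and solving for $A$ gives $A\le \br*{1+\tfrac{1}{2C}}\br*{\sum_t Y_t + (e-2)C(2C+1)\ln(1/\delta)}$, whose additive term is at most $2(2C+1)^2\ln(1/\delta)$. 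The two conclusions are asserted separately, so no union bound between them is needed.

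There is no genuine obstacle here — this is a standard strengthened-Freedman bound — but the delicate points are: (i) the variance budget in the exponential inequality is the \emph{predictable} conditional variance, which must be bounded by $CA$ with $A$ itself sitting on the right-hand side, so the multiplicative form only emerges after rearranging; (ii) one must verify $\lambda\le 1/C$ for the chosen $\lambda$ across all regimes of $C$, which needs a short case split according to whether $C$ lies above or below a small absolute constant (in the small-$C$ regime the factor $1+\tfrac{1}{2C}$ is itself large, which is exactly what makes the cruder choice $\lambda=1/C$ still suffice); and (iii) checking the numerics $e-2<1$ and $(e-2)/2<2$ so the stated constants $2(2C+1)^2$ and $4C$ come out with room to spare. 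Everything beyond this is elementary algebra.
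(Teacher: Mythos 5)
Your proposal is correct. Note that the paper does not prove this lemma at all -- it is imported verbatim as Lemma 27 of \citet{efroni2021confidence} in the ``Existing Results'' appendix -- so there is no in-paper proof to compare against; your argument is the standard one behind such bounds (an exponential supermartingale built from $e^{x}\le 1+x+(e-2)x^{2}$, the predictable variance bounded by $C\sum_t\E[Y_t\mid F_{t-1}]$ via $Y_t^2\le CY_t$, then rearranging the one-sided deviation bounds), and your constant bookkeeping checks out: with $\lambda=\min\{1/C,\,((e-2)C(2C+1))^{-1}\}$ one indeed has $(e-2)\lambda C\le (2C+1)^{-1}$ in both regimes (when the cap $\lambda=1/C$ binds one has $C<1/(e-2)-1$ wait, more precisely $2C+1\le 1/(e-2)$, so $(e-2)\le(2C+1)^{-1}$ still holds), and $(1+\tfrac{1}{2C})\cdot\lambda^{-1}\ln\tfrac1\delta\le\tfrac{(2C+1)^2}{2}\ln\tfrac1\delta\le 2(2C+1)^2\ln\tfrac1\delta$, while $\lambda=1/C$ gives the second inequality with $(1+(e-2))<2$ and $C<4C$ to spare.
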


\begin{lemma}[Standard Deviation Differences, e.g., \citealt{zanette2019tighter}, lines 48-51]
\label{lemma:std diff}
    Let $P\in\Delta_d$ be some distribution over $[d]$ and let $V_1,V_2\in\R^{d}$. Then, it holds that $$\sqrt{\VAR_P(V_1)}-\sqrt{\VAR_P(V_2)}\leq \sqrt{\VAR_P(V_1-V_2)}.$$
\end{lemma}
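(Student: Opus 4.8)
The plan is to recognize $V\mapsto\sqrt{\VAR_P(V)}$ as a seminorm on $\R^d$ and then invoke the triangle inequality. Writing $\bar V_P\triangleq\sum_{i=1}^d p_iV_i$ for the $P$-mean of $V$ and $\onevec$ for the all-ones vector, the definition of variance gives the identity $\VAR_P(V)=\sum_{i=1}^d p_i\br*{V_i-\bar V_P}^2=\norm*{V-\bar V_P\onevec}_{L^2(P)}^2$, where $\norm*{X}_{L^2(P)}\triangleq\br*{\sum_{i=1}^d p_iX_i^2}^{1/2}$. Hence $\sqrt{\VAR_P(V)}=\norm*{V-\bar V_P\onevec}_{L^2(P)}$ is precisely the $L^2(P)$-norm of the centered vector $V-\bar V_P\onevec$.

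First I would use that the centering map $V\mapsto V-\bar V_P\onevec$ is linear: writing $W\triangleq V_1-V_2$, so that $\bar W_P=\bar V_{1,P}-\bar V_{2,P}$, one has $V_1-\bar V_{1,P}\onevec=(V_2-\bar V_{2,P}\onevec)+(W-\bar W_P\onevec)$. Applying Minkowski's inequality for the $L^2(P)$-norm (itself immediate from Cauchy--Schwarz) to this decomposition gives $\norm*{V_1-\bar V_{1,P}\onevec}_{L^2(P)}\le\norm*{V_2-\bar V_{2,P}\onevec}_{L^2(P)}+\norm*{W-\bar W_P\onevec}_{L^2(P)}$, i.e. $\sqrt{\VAR_P(V_1)}\le\sqrt{\VAR_P(V_2)}+\sqrt{\VAR_P(V_1-V_2)}$. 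Rearranging yields the claimed inequality.

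There is essentially no obstacle here; the one point deserving care is that $\sqrt{\VAR_P(\cdot)}$ is not a norm — it vanishes on constant vectors — so the triangle inequality must be applied to the centered vectors rather than to $V_1,V_2$ directly. An equivalent elementary route avoids naming the $L^2(P)$-norm altogether: expand $\VAR_P(V_1)=\VAR_P\!\br*{V_2+(V_1-V_2)}=\VAR_P(V_2)+\VAR_P(V_1-V_2)+2\,\mathrm{Cov}_P(V_2,V_1-V_2)$, bound $\mathrm{Cov}_P(V_2,V_1-V_2)\le\sqrt{\VAR_P(V_2)\VAR_P(V_1-V_2)}$ by Cauchy--Schwarz, and recognize the resulting right-hand side as $\br*{\sqrt{\VAR_P(V_2)}+\sqrt{\VAR_P(V_1-V_2)}}^2$ before taking square roots. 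Either way the argument is only a few lines, and it is a standard fact (cf. \citealt{zanette2019tighter}).
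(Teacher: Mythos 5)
Your proof is correct, and both routes you sketch (Minkowski's inequality applied to the centered vectors, or the covariance expansion plus Cauchy--Schwarz) are the standard arguments for this fact. The paper itself states this lemma under ``Existing Results'' with only a citation to \citet{zanette2019tighter} and gives no proof, so there is nothing to compare against beyond noting that your argument is a valid self-contained justification; your remark that the triangle inequality must be applied to the centered vectors (since $\sqrt{\VAR_P(\cdot)}$ is only a seminorm) is exactly the right point of care.
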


\begin{lemma}[Law of Total Variance, e.g.,~\citealt{zanette2019tighter}, Lemma 15]\label{lemma: ltv no-lookahead} 
For any no-lookahead policy $\pi$, it holds that
\begin{align*}
    \E\brs*{\sum_{h=1}^H\VAR(V^\pi_{h+1}(s_{h+1})\vert s_h ) |\pi,s_1} = \E\brs*{\br*{\sum_{h=1}^H r_h(s_h,a_h) - V_1^\pi(s_1) }^2|\pi,s_1},
\end{align*}
where $\VAR(V^\pi_{h+1}(s_{h+1})\vert s_h )$ is the variance of the value at step $s_{h+1}$ given state $s_h$ and under the policy $\pi$, due to the policy randomization and next-state transition probabilities.
\end{lemma}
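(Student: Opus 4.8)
The plan is to prove the identity with the classical value-process martingale. Fix $\pi$ and $s_1$, work on the probability space of trajectories $(s_1,a_1,\dots,s_H,a_H)$ generated by running $\pi$ from $s_1$, and let $\F_h=\sigma(s_1,a_1,\dots,s_{h-1},a_{h-1},s_h)$ be the filtration that reveals everything up to and including $s_h$ but not $a_h$. Define
\[
    M_h \;\eqdef\; \sum_{t=1}^{h-1} r_t(s_t,a_t) \;+\; V_h^{\pi}(s_h), \qquad h=1,\dots,H+1 ,
\]
so that $M_1=V_1^{\pi}(s_1)$ and, since $V_{H+1}^{\pi}\equiv 0$, $M_{H+1}=\sum_{h=1}^H r_h(s_h,a_h)$; hence $M_{H+1}-M_1=\sum_{h=1}^H r_h(s_h,a_h)-V_1^{\pi}(s_1)$ is exactly the random variable squared on the right-hand side of the claim.

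First I would check that $(M_h)$ is an $(\F_h)$-martingale: its increment is $D_h\eqdef M_{h+1}-M_h=r_h(s_h,a_h)+V_{h+1}^{\pi}(s_{h+1})-V_h^{\pi}(s_h)$, and because $V_h^{\pi}(s_h)$ is $\F_h$-measurable while, by the Markov property together with the Bellman equation $V_h^{\pi}(s)=\sum_a\pi(a|s)\br*{r_h(s,a)+P_hV_{h+1}^{\pi}(s,a)}$, one has $\E\brs*{r_h(s_h,a_h)+V_{h+1}^{\pi}(s_{h+1})\mid\F_h}=V_h^{\pi}(s_h)$, it follows that $\E[D_h\mid\F_h]=0$. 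Since the $D_h$ are orthogonal martingale differences ($\E[D_hD_{h'}]=\E\brs*{D_h\,\E[D_{h'}\mid\F_{h'}]}=0$ for $h<h'$), expanding the square gives $\E\brs*{(M_{H+1}-M_1)^2\mid\pi,s_1}=\sum_{h=1}^H\E\brs*{D_h^2\mid\pi,s_1}$; and since $\E[D_h\mid\F_h]=0$ with $V_h^{\pi}(s_h)$ being $\F_h$-measurable, $\E[D_h^2\mid\F_h]=\VAR(D_h\mid\F_h)=\VAR\br*{r_h(s_h,a_h)+V_{h+1}^{\pi}(s_{h+1})\mid s_h}$ (the conditioning reduces to $s_h$ by Markovianity). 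Summing yields the identity with $r_h(s_h,a_h)+V_{h+1}^{\pi}(s_{h+1})$ inside the variance; in the extended MDPs $\Mcal^R,\Mcal^T$ of \Cref{appendix: extended MDP rewards} and \Cref{appendix: extended MDP transitions} where this lemma is invoked, the (expected) reward is a deterministic function of the current state under the deterministic policies considered, so the reward term drops out of the conditional variance and $\VAR(D_h\mid s_h)=\VAR(V_{h+1}^{\pi}(s_{h+1})\mid s_h)$, which is the form used to deduce \Cref{lemma: ltv reward-lookahead} and \Cref{lemma: ltv transition-lookahead}.

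The martingale property, the orthogonality of the increments and the telescoping of $M_h$ are all routine. The only step requiring care — and really the only ``obstacle'' — is the bookkeeping: choosing $\F_h$ to reveal all of $s_h$ but not $a_h$, and tracking which sources of randomness (action randomization, transition noise, and, if one does not pass to expected rewards, reward noise) the conditional variance $\VAR(D_h\mid\F_h)$ absorbs, so that the right-hand side lines up with the exact wording of the statement.
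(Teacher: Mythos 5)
Your proof is correct, and it is the standard Doob-decomposition argument for this identity. Note that the paper does not actually prove this lemma: it is placed in the ``Existing Results'' appendix and justified only by a citation to \citet{zanette2019tighter}, so your write-up supplies a self-contained derivation where the paper relies on an external reference. The martingale $M_h=\sum_{t<h}r_t(s_t,a_t)+V_h^\pi(s_h)$, the Bellman-equation verification of $\E[D_h\mid\F_h]=0$, the orthogonality of increments, and the telescoping are all exactly what one would expect and are carried out correctly.

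You also correctly isolate the one genuine subtlety: the conditional variance of the increment is $\VAR\br*{r_h(s_h,a_h)+V_{h+1}^\pi(s_{h+1})\mid s_h}$, which matches the stated $\VAR\br*{V_{h+1}^\pi(s_{h+1})\mid s_h}$ only when $r_h(s_h,a_h)$ is $\sigma(s_h)$-measurable --- i.e., for a deterministic policy or state-only rewards. For a genuinely randomized policy with action-dependent expected rewards, the identity as literally written (with only the next-state value inside the variance, yet with ``policy randomization'' listed as a source of variance) is not exact; the reward term must stay inside the conditional variance. Your observation that this discrepancy is immaterial in the two places the paper invokes the lemma --- the extended MDPs $\Mcal^R$ and $\Mcal^T$, where the policies are deterministic functions of the extended state and the relevant steps carry deterministic (or expected) rewards --- is the right resolution and arguably makes your proof more careful than the statement it proves.
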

\clearpage

\section*{NeurIPS Paper Checklist}

\begin{enumerate}

\item {\bf Claims}
    \item[] Question: Do the main claims made in the abstract and introduction accurately reflect the paper's contributions and scope?
    \item[] Answer: \answerYes{} 
    \item[] Justification: \textit{In the abstract, we accurately present the setting and its motivation, as well as a summary of the results, all of which are proved in the appendix.}
    \item[] Guidelines:
    \begin{itemize}
        \item The answer NA means that the abstract and introduction do not include the claims made in the paper.
        \item The abstract and/or introduction should clearly state the claims made, including the contributions made in the paper and important assumptions and limitations. A No or NA answer to this question will not be perceived well by the reviewers. 
        \item The claims made should match theoretical and experimental results, and reflect how much the results can be expected to generalize to other settings. 
        \item It is fine to include aspirational goals as motivation as long as it is clear that these goals are not attained by the paper. 
    \end{itemize}

\item {\bf Limitations}
    \item[] Question: Does the paper discuss the limitations of the work performed by the authors?
    \item[] Answer: \answerYes{} 
    \item[] Justification: \emph{The main limitations in this work are a result of the studied setup -- some possible extensions and improvement are discussed in the future work section.}
    \item[] Guidelines:
    \begin{itemize}
        \item The answer NA means that the paper has no limitation while the answer No means that the paper has limitations, but those are not discussed in the paper. 
        \item The authors are encouraged to create a separate "Limitations" section in their paper.
        \item The paper should point out any strong assumptions and how robust the results are to violations of these assumptions (e.g., independence assumptions, noiseless settings, model well-specification, asymptotic approximations only holding locally). The authors should reflect on how these assumptions might be violated in practice and what the implications would be.
        \item The authors should reflect on the scope of the claims made, e.g., if the approach was only tested on a few datasets or with a few runs. In general, empirical results often depend on implicit assumptions, which should be articulated.
        \item The authors should reflect on the factors that influence the performance of the approach. For example, a facial recognition algorithm may perform poorly when image resolution is low or images are taken in low lighting. Or a speech-to-text system might not be used reliably to provide closed captions for online lectures because it fails to handle technical jargon.
        \item The authors should discuss the computational efficiency of the proposed algorithms and how they scale with dataset size.
        \item If applicable, the authors should discuss possible limitations of their approach to address problems of privacy and fairness.
        \item While the authors might fear that complete honesty about limitations might be used by reviewers as grounds for rejection, a worse outcome might be that reviewers discover limitations that aren't acknowledged in the paper. The authors should use their best judgment and recognize that individual actions in favor of transparency play an important role in developing norms that preserve the integrity of the community. Reviewers will be specifically instructed to not penalize honesty concerning limitations.
    \end{itemize}

\item {\bf Theory Assumptions and Proofs}
    \item[] Question: For each theoretical result, does the paper provide the full set of assumptions and a complete (and correct) proof?
    \item[] Answer: \answerYes{} 
    \item[] Justification: \textit{Proofs for all the stated results are provided in the appendix.}
    \item[] Guidelines:
    \begin{itemize}
        \item The answer NA means that the paper does not include theoretical results. 
        \item All the theorems, formulas, and proofs in the paper should be numbered and cross-referenced.
        \item All assumptions should be clearly stated or referenced in the statement of any theorems.
        \item The proofs can either appear in the main paper or the supplemental material, but if they appear in the supplemental material, the authors are encouraged to provide a short proof sketch to provide intuition. 
        \item Inversely, any informal proof provided in the core of the paper should be complemented by formal proofs provided in appendix or supplemental material.
        \item Theorems and Lemmas that the proof relies upon should be properly referenced. 
    \end{itemize}

    \item {\bf Experimental Result Reproducibility}
    \item[] Question: Does the paper fully disclose all the information needed to reproduce the main experimental results of the paper to the extent that it affects the main claims and/or conclusions of the paper (regardless of whether the code and data are provided or not)?
    \item[] Answer: \answerNA{} 
    \item[] Justification: \textit{The paper does not include experiments.}
    \item[] Guidelines:
    \begin{itemize}
        \item The answer NA means that the paper does not include experiments.
        \item If the paper includes experiments, a No answer to this question will not be perceived well by the reviewers: Making the paper reproducible is important, regardless of whether the code and data are provided or not.
        \item If the contribution is a dataset and/or model, the authors should describe the steps taken to make their results reproducible or verifiable. 
        \item Depending on the contribution, reproducibility can be accomplished in various ways. For example, if the contribution is a novel architecture, describing the architecture fully might suffice, or if the contribution is a specific model and empirical evaluation, it may be necessary to either make it possible for others to replicate the model with the same dataset, or provide access to the model. In general. releasing code and data is often one good way to accomplish this, but reproducibility can also be provided via detailed instructions for how to replicate the results, access to a hosted model (e.g., in the case of a large language model), releasing of a model checkpoint, or other means that are appropriate to the research performed.
        \item While NeurIPS does not require releasing code, the conference does require all submissions to provide some reasonable avenue for reproducibility, which may depend on the nature of the contribution. For example
        \begin{enumerate}
            \item If the contribution is primarily a new algorithm, the paper should make it clear how to reproduce that algorithm.
            \item If the contribution is primarily a new model architecture, the paper should describe the architecture clearly and fully.
            \item If the contribution is a new model (e.g., a large language model), then there should either be a way to access this model for reproducing the results or a way to reproduce the model (e.g., with an open-source dataset or instructions for how to construct the dataset).
            \item We recognize that reproducibility may be tricky in some cases, in which case authors are welcome to describe the particular way they provide for reproducibility. In the case of closed-source models, it may be that access to the model is limited in some way (e.g., to registered users), but it should be possible for other researchers to have some path to reproducing or verifying the results.
        \end{enumerate}
    \end{itemize}

\item {\bf Open access to data and code}
    \item[] Question: Does the paper provide open access to the data and code, with sufficient instructions to faithfully reproduce the main experimental results, as described in supplemental material?
    \item[] Answer: \answerNA{} 
    \item[] Justification: \textit{The paper does not include experiments.}
    \item[] Guidelines:
    \begin{itemize}
        \item The answer NA means that paper does not include experiments requiring code.
        \item Please see the NeurIPS code and data submission guidelines (\url{https://nips.cc/public/guides/CodeSubmissionPolicy}) for more details.
        \item While we encourage the release of code and data, we understand that this might not be possible, so “No” is an acceptable answer. Papers cannot be rejected simply for not including code, unless this is central to the contribution (e.g., for a new open-source benchmark).
        \item The instructions should contain the exact command and environment needed to run to reproduce the results. See the NeurIPS code and data submission guidelines (\url{https://nips.cc/public/guides/CodeSubmissionPolicy}) for more details.
        \item The authors should provide instructions on data access and preparation, including how to access the raw data, preprocessed data, intermediate data, and generated data, etc.
        \item The authors should provide scripts to reproduce all experimental results for the new proposed method and baselines. If only a subset of experiments are reproducible, they should state which ones are omitted from the script and why.
        \item At submission time, to preserve anonymity, the authors should release anonymized versions (if applicable).
        \item Providing as much information as possible in supplemental material (appended to the paper) is recommended, but including URLs to data and code is permitted.
    \end{itemize}

\item {\bf Experimental Setting/Details}
    \item[] Question: Does the paper specify all the training and test details (e.g., data splits, hyperparameters, how they were chosen, type of optimizer, etc.) necessary to understand the results?
    \item[] Answer: \answerNA{} 
    \item[] Justification: \textit{The paper does not include experiments.}
    \item[] Guidelines:
    \begin{itemize}
        \item The answer NA means that the paper does not include experiments.
        \item The experimental setting should be presented in the core of the paper to a level of detail that is necessary to appreciate the results and make sense of them.
        \item The full details can be provided either with the code, in appendix, or as supplemental material.
    \end{itemize}

\item {\bf Experiment Statistical Significance}
    \item[] Question: Does the paper report error bars suitably and correctly defined or other appropriate information about the statistical significance of the experiments?
    \item[] Answer: \answerNA{} 
    \item[] Justification: \textit{The paper does not include experiments.}
    \item[] Guidelines:
    \begin{itemize}
        \item The answer NA means that the paper does not include experiments.
        \item The authors should answer "Yes" if the results are accompanied by error bars, confidence intervals, or statistical significance tests, at least for the experiments that support the main claims of the paper.
        \item The factors of variability that the error bars are capturing should be clearly stated (for example, train/test split, initialization, random drawing of some parameter, or overall run with given experimental conditions).
        \item The method for calculating the error bars should be explained (closed form formula, call to a library function, bootstrap, etc.)
        \item The assumptions made should be given (e.g., Normally distributed errors).
        \item It should be clear whether the error bar is the standard deviation or the standard error of the mean.
        \item It is OK to report 1-sigma error bars, but one should state it. The authors should preferably report a 2-sigma error bar than state that they have a 96\% CI, if the hypothesis of Normality of errors is not verified.
        \item For asymmetric distributions, the authors should be careful not to show in tables or figures symmetric error bars that would yield results that are out of range (e.g. negative error rates).
        \item If error bars are reported in tables or plots, The authors should explain in the text how they were calculated and reference the corresponding figures or tables in the text.
    \end{itemize}

\item {\bf Experiments Compute Resources}
    \item[] Question: For each experiment, does the paper provide sufficient information on the computer resources (type of compute workers, memory, time of execution) needed to reproduce the experiments?
    \item[] Answer: \answerNA{} 
    \item[] Justification: \textit{The paper does not include experiments.}
    \item[] Guidelines:
    \begin{itemize}
        \item The answer NA means that the paper does not include experiments.
        \item The paper should indicate the type of compute workers CPU or GPU, internal cluster, or cloud provider, including relevant memory and storage.
        \item The paper should provide the amount of compute required for each of the individual experimental runs as well as estimate the total compute. 
        \item The paper should disclose whether the full research project required more compute than the experiments reported in the paper (e.g., preliminary or failed experiments that didn't make it into the paper). 
    \end{itemize}
    
\item {\bf Code Of Ethics}
    \item[] Question: Does the research conducted in the paper conform, in every respect, with the NeurIPS Code of Ethics \url{https://neurips.cc/public/EthicsGuidelines}?
    \item[] Answer: \answerYes{} 
    \item[] Justification: \textit{The paper is purely theoretical and studies a fundamental decision-making model; any ethical issue that might arise would be a core issue in the ethics of applying machine learning, and not tied specifically to this work.}
    \item[] Guidelines:
    \begin{itemize}
        \item The answer NA means that the authors have not reviewed the NeurIPS Code of Ethics.
        \item If the authors answer No, they should explain the special circumstances that require a deviation from the Code of Ethics.
        \item The authors should make sure to preserve anonymity (e.g., if there is a special consideration due to laws or regulations in their jurisdiction).
    \end{itemize}

\item {\bf Broader Impacts}
    \item[] Question: Does the paper discuss both potential positive societal impacts and negative societal impacts of the work performed?
    \item[] Answer: \answerNA{} 
    \item[] Justification: \textit{Due to the theoretical nature of the paper and the generality of the model, it is no direct societal impact.}
    \item[] Guidelines:
    \begin{itemize}
        \item The answer NA means that there is no societal impact of the work performed.
        \item If the authors answer NA or No, they should explain why their work has no societal impact or why the paper does not address societal impact.
        \item Examples of negative societal impacts include potential malicious or unintended uses (e.g., disinformation, generating fake profiles, surveillance), fairness considerations (e.g., deployment of technologies that could make decisions that unfairly impact specific groups), privacy considerations, and security considerations.
        \item The conference expects that many papers will be foundational research and not tied to particular applications, let alone deployments. However, if there is a direct path to any negative applications, the authors should point it out. For example, it is legitimate to point out that an improvement in the quality of generative models could be used to generate deepfakes for disinformation. On the other hand, it is not needed to point out that a generic algorithm for optimizing neural networks could enable people to train models that generate Deepfakes faster.
        \item The authors should consider possible harms that could arise when the technology is being used as intended and functioning correctly, harms that could arise when the technology is being used as intended but gives incorrect results, and harms following from (intentional or unintentional) misuse of the technology.
        \item If there are negative societal impacts, the authors could also discuss possible mitigation strategies (e.g., gated release of models, providing defenses in addition to attacks, mechanisms for monitoring misuse, mechanisms to monitor how a system learns from feedback over time, improving the efficiency and accessibility of ML).
    \end{itemize}
    
\item {\bf Safeguards}
    \item[] Question: Does the paper describe safeguards that have been put in place for responsible release of data or models that have a high risk for misuse (e.g., pretrained language models, image generators, or scraped datasets)?
    \item[] Answer: \answerNA{} 
    \item[] Justification: \textit{No data or models are released with this paper.}
    \item[] Guidelines:
    \begin{itemize}
        \item The answer NA means that the paper poses no such risks.
        \item Released models that have a high risk for misuse or dual-use should be released with necessary safeguards to allow for controlled use of the model, for example by requiring that users adhere to usage guidelines or restrictions to access the model or implementing safety filters. 
        \item Datasets that have been scraped from the Internet could pose safety risks. The authors should describe how they avoided releasing unsafe images.
        \item We recognize that providing effective safeguards is challenging, and many papers do not require this, but we encourage authors to take this into account and make a best faith effort.
    \end{itemize}

\item {\bf Licenses for existing assets}
    \item[] Question: Are the creators or original owners of assets (e.g., code, data, models), used in the paper, properly credited and are the license and terms of use explicitly mentioned and properly respected?
    \item[] Answer: \answerNA{} 
    \item[] Justification: \textit{The paper does not use existing assets.}
    \item[] Guidelines:
    \begin{itemize}
        \item The answer NA means that the paper does not use existing assets.
        \item The authors should cite the original paper that produced the code package or dataset.
        \item The authors should state which version of the asset is used and, if possible, include a URL.
        \item The name of the license (e.g., CC-BY 4.0) should be included for each asset.
        \item For scraped data from a particular source (e.g., website), the copyright and terms of service of that source should be provided.
        \item If assets are released, the license, copyright information, and terms of use in the package should be provided. For popular datasets, \url{paperswithcode.com/datasets} has curated licenses for some datasets. Their licensing guide can help determine the license of a dataset.
        \item For existing datasets that are re-packaged, both the original license and the license of the derived asset (if it has changed) should be provided.
        \item If this information is not available online, the authors are encouraged to reach out to the asset's creators.
    \end{itemize}

\item {\bf New Assets}
    \item[] Question: Are new assets introduced in the paper well documented and is the documentation provided alongside the assets?
    \item[] Answer: \answerNA{} 
    \item[] Justification: \textit{The paper does not release new assets.}
    \item[] Guidelines:
    \begin{itemize}
        \item The answer NA means that the paper does not release new assets.
        \item Researchers should communicate the details of the dataset/code/model as part of their submissions via structured templates. This includes details about training, license, limitations, etc. 
        \item The paper should discuss whether and how consent was obtained from people whose asset is used.
        \item At submission time, remember to anonymize your assets (if applicable). You can either create an anonymized URL or include an anonymized zip file.
    \end{itemize}

\item {\bf Crowdsourcing and Research with Human Subjects}
    \item[] Question: For crowdsourcing experiments and research with human subjects, does the paper include the full text of instructions given to participants and screenshots, if applicable, as well as details about compensation (if any)? 
    \item[] Answer: \answerNA{} 
    \item[] Justification: \textit{The paper does not involve crowdsourcing nor research with human subjects.}
    \item[] Guidelines:
    \begin{itemize}
        \item The answer NA means that the paper does not involve crowdsourcing nor research with human subjects.
        \item Including this information in the supplemental material is fine, but if the main contribution of the paper involves human subjects, then as much detail as possible should be included in the main paper. 
        \item According to the NeurIPS Code of Ethics, workers involved in data collection, curation, or other labor should be paid at least the minimum wage in the country of the data collector. 
    \end{itemize}

\item {\bf Institutional Review Board (IRB) Approvals or Equivalent for Research with Human Subjects}
    \item[] Question: Does the paper describe potential risks incurred by study participants, whether such risks were disclosed to the subjects, and whether Institutional Review Board (IRB) approvals (or an equivalent approval/review based on the requirements of your country or institution) were obtained?
    \item[] Answer: \answerNA{} 
    \item[] Justification: \textit{The paper does not involve crowdsourcing nor research with human subjects.}
    \item[] Guidelines:
    \begin{itemize}
        \item The answer NA means that the paper does not involve crowdsourcing nor research with human subjects.
        \item Depending on the country in which research is conducted, IRB approval (or equivalent) may be required for any human subjects research. If you obtained IRB approval, you should clearly state this in the paper. 
        \item We recognize that the procedures for this may vary significantly between institutions and locations, and we expect authors to adhere to the NeurIPS Code of Ethics and the guidelines for their institution. 
        \item For initial submissions, do not include any information that would break anonymity (if applicable), such as the institution conducting the review.
    \end{itemize}

\end{enumerate}
\end{document}